
\documentclass{article} 
\usepackage{iclr2026_conference,times}
\iclrfinalcopy

\usepackage[utf8]{inputenc} 
\usepackage[T1]{fontenc}    
\usepackage{hyperref}       
\usepackage{url}            
\usepackage{booktabs}       
\usepackage{amsfonts}       
\usepackage{nicefrac}       
\usepackage{microtype}      
\usepackage{xcolor}         


\usepackage{mathtools}
\usepackage{booktabs}
\usepackage{multirow}
\usepackage{graphicx}
\usepackage{textcomp}
\usepackage{xcolor}
\usepackage{bm}
\usepackage{adjustbox}
\usepackage{enumitem}
\usepackage{subfigure}
\usepackage{subcaption}
\usepackage{url}
\usepackage{bbm}

\usepackage{amsthm}
\usepackage{amsmath}
\usepackage{amssymb}  

\usepackage{amsmath,amssymb,mathtools}
\usepackage{algorithm}
\usepackage{algpseudocode} 

\usepackage[table]{xcolor}
\usepackage{tabularx}
\usepackage{siunitx}
\sisetup{
  group-separator = {,},      
  group-minimum-digits = 4,
  scientific-notation = true, 
  round-mode = places,
  round-precision = 0
}

\newtheorem{assumption}{Assumption}
\newtheorem{proposition}{Proposition}
\newtheorem{definition}{Definition}
\newtheorem{lemma}{Lemma}
\newtheorem{theorem}{Theorem}
\newtheorem{corollary}{Corollary}

\newcommand{\ie}{\textit{i}.\textit{e}.}

\newcommand{\ba}{\mathbf{a}}
\newcommand{\bs}{\mathbf{s}}

\newcommand{\cmmnt}[1]{}
\newcommand{\tr}[1]{}

\newcommand{\Jcal}{\mathcal{J}}

\newcommand{\Ucal}{\mathcal{U}}

\newcommand{\rhomax}{\bar\rho}
\newcommand{\EE}{\mathbb{E}}

\setlength{\dbltextfloatsep}{5pt plus 2.0pt minus 2.0pt}
\setlength{\dblfloatsep}{5pt plus 2.0pt minus 2.0pt}
\setlength{\floatsep}{5pt plus 2.0pt minus 2.0pt}
\setlength{\intextsep}{5pt plus 2.0pt minus 2.0pt}
\setlength{\abovecaptionskip}{1.5pt plus 2.0pt minus 2.0pt}
\setlength{\belowcaptionskip}{1.5pt plus 2.0pt minus 2.0pt}

\newboolean{showcomments}
\setboolean{showcomments}{true}
\newcommand{\comment}[1]{\ifthenelse{\boolean{showcomments}}
	{\textcolor{red}{(Comment: #1)}}
	{}
}
\newcommand{\red}[1]{\ifthenelse{\boolean{showcomments}}
	{\textcolor{red}{#1}}
	{}
}

\newcommand{\blue}[1]{\ifthenelse{\boolean{showcomments}}
	{\textcolor{blue}{#1}}
	{}
}

\newboolean{showanswers}
\setboolean{showanswers}{true}
\newcommand{\answer}[1]{\ifthenelse{\boolean{showanswers}}
	{\textcolor{blue}{(Answer: #1)}}
	{}
}

\newtheorem{remark}{Remark} 
\title{Less is More: Clustered Cross-Covariance Control for Offline RL}

\author{
  \textbf{Nan Qiao}$^{1,3}$\thanks{Part of this work was done when Nan Qiao and Shuning Wang worked at Tsinghua University.}, 
  \textbf{Sheng Yue}$^{2}$\thanks{Corresponding to: \texttt{yuesh5@mail.sysu.edu.cn}}, 
  \textbf{Shuning Wang}$^{1,3}$, 
  \textbf{Yongheng Deng}$^{3}$ \& 
  \textbf{Ju Ren}$^{3}$ \\
  $^1$Central South University, $^2$Sun Yat-sen University, $^3$Tsinghua University
}
\begin{document}

\maketitle
\begin{abstract}
A fundamental challenge in offline reinforcement learning is distributional shift. Scarce data or datasets dominated by out-of-distribution (OOD) areas exacerbate this issue.
Our theoretical analysis and experiments show that the standard squared error objective induces a harmful TD cross covariance. This effect amplifies in OOD areas, biasing optimization and degrading policy learning.
To counteract this mechanism, we develop two complementary strategies: partitioned buffer sampling that restricts updates to localized replay partitions, attenuates irregular covariance effects, and aligns update directions, yielding a scheme that is easy to integrate with existing implementations, namely Clustered Cross-Covariance Control for TD ($C^4$). We also introduce an explicit gradient-based corrective penalty that cancels the covariance induced bias within each update.
We prove that buffer partitioning preserves the lower bound property of the maximization objective, and that these constraints mitigate excessive conservatism in extreme OOD areas without altering the core behavior of policy constrained offline reinforcement learning.
Empirically, our method showcases higher stability and up to $30\%$ improvement in returns over prior methods, especially with small datasets and splits that emphasize OOD areas.
The implementation code is available at \url{https://github.com/NanMuZ/C4}.
\end{abstract}

\section{Introduction}

Offline reinforcement learning learns policies from fixed datasets without further interaction, which is essential when exploration is risky or expensive \citep{sutton2018reinforcement}. Although large benchmarks with millions of transitions report strong performance \citep{agarwal2019reinforcement,fujimoto2021minimalist,kumar2020conservative,chen2023conservative}, real deployments usually offer much smaller datasets with narrow state and action coverage \citep{nguyen2023sample,cheng2023look}. Limited coverage enlarges out-of-distribution (OOD) areas and stresses standard training pipelines \citep{tkachuk2024trajectory,foster2021offline,jia2024offline}.

To mitigate distribution shift and provide safety margins, conservative objectives have been widely adopted \citep{fujimoto2021minimalist,an2021uncertainty,lyu2022mildly,kostrikov2021iql,peng2019advantage}. These methods implicitly assume that the dataset covers the relevant parts of the space \citep{kumar2020conservative,chen2024deep}. Under weak coverage they can become overly cautious exactly where improvement is needed, and in extreme cases, this overconservatism can destabilize policy learning \citep{cheng2023look,li2022data}.


A second and less analyzed failure mode arises from the value fitting objective itself. Under distribution shift and limited data, the procedure of temporal difference (TD) learning induces detrimental bias and feature co-adaptation that can culminate in training collapse, as noted by prior work \citep{kumar2022dr3,yue2023understanding}. 
We identify the core cause:  TD learning that minimizes the second moment of the residual, $\mathbb{E}[\delta^2]$, generates a \textit{harmful} cross-time covariance of gradient features, which becomes dominant under severe OOD area. 
Specifically, our theory and experiments show that in OOD areas, TD updates induce three implicit regularizers. Two are beneficial for generalization, akin to the beneficial implicit regularization produced by noise in supervised learning \citep{mulayoff2020unique,damian2021label}. The third is a cross-time covariance of gradient features, and acts against the intended optimization objective and, under severe OOD, causes pronounced gradient interference and instability.

We address this challenge with two complementary strategies that operate locally on the geometry of the replay data. First, we partition the buffer by gradient features and train with single-cluster mini-batches, which removes the between-partition mean covariance. Second, we add an explicit gradient-based corrective penalty with a tunable coefficient that mitigates the covariance-driven bias within each update. To prevent conservative objectives from becoming over-restrictive in extreme out-of-distribution areas, we include a lightweight divergence-based term that is neutral on distribution and activates only in OOD areas, which reduces unnecessary suppression while preserving the core behavior of existing conservative methods.


Putting these pieces together, our contributions are threefold. First, we identify a data-limited failure mode in which the squared TD objective induces a harmful implicit regularizer that degrades generalization and can trigger training collapse. Second, we propose $C^4$, which constrains cross-region covariance to significantly curb this effect, and we introduce a gradient-based corrective penalty that further cancels within-cluster covariance. 
Third, while $C^4$ calls for small adjustments to sampling and loss in practice, it remains effectively ``plug-and-play'' for numerous offline RL algorithms with the optimization goals preserved.
In experiments on small datasets and OOD-emphasized splits, $C^4$ delivers substantial and stable gains, with improvements exceeding 30\% on several benchmarks.

\section{Related Work}
\textbf{Reinforcement learning with small static datasets.}\quad 
Traditional RL suffers from poor sample efficiency, and offline RL aims to address this issue by learning policies from fixed, pre-collected datasets without any interaction with the environment \citep{lillicrap2015continuous,lu2022provable}. 
Under this offline learning paradigm, conventional off-policy RL approaches are prone to substantial value overestimation when there is a large deviation between the policy and data distributions \citep{kumar2020cql,qiao2025fova}. 
Existing offline RL methods address this issue by following several directions, such as constraining the learned policy to be ``close'' to the behavior policy~\citep{fujimoto2019off, kumar2019stabilizing, fujimoto2021minimalist}, regularizing value function on OOD samples~\citep{kumar2020conservative, kostrikov2021offline}, enforcing strict in-sample learning~\citep{brandfonbrener2021offline, iql}, and performing pessimistic policy learning with uncertainty-based reward or value penalties~\citep{yu2020mopo,an2021uncertainty}. Most existing offline RL methods adopt the pessimism principle and avoid policy evaluation on OOD samples \citep{fujimoto2019off,yu2021conservative}. 
This approach curbs error accumulation, but on small or weakly covered datasets, it can become overly conservative and cause large performance drops \citep{li2022data}. This suggests a renewed bottleneck in sample efficiency. Recent work, such as DOGE and TSRL, mitigates the issue by admitting carefully chosen out-of-distribution samples, for example, those within a convex hull or those that are dynamics-explainable \citep{li2022data,cheng2023look}. However, these methods operate at the level of data selection rather than RL itself.

\paragraph{Implicit regularization in deep reinforcement learning.}
Deep RL, driven by the deadly triad, often exhibits overestimation, out of distribution representation coupling, and value divergence, effects that intensify when data are scarce and noisy \citep{sutton2018reinforcement, baird1995residual, tsitsiklis1996deadly_tirad, van2018deadly_triad, yue2023vcr, li2022data}. 
Classical stabilizers such as target networks, Double Q, TD3, and normalization, together with linear and dynamical analyses, mostly mitigate symptoms without addressing the mechanism that fuels self excitation and OOD coupling \citep{mnih2015human, Double-Q, td3, bhatt2019crossnorm, BN, achiam2019charact}. In the offline regime, policy constraint and pessimistic approaches regulate what is learned to reduce OOD evaluation errors and error accumulation \citep{fujimoto2019off, kumar2019stabilizing, fujimoto2021minimalist, peng2019advantage, kumar2020cql, an2021uncertainty}. 
DR3 instead acts on feature geometry by penalizing the inner product over two features, which reveals and counters an implicit regularizer implicated in these instabilities \citep{lb-sac, kang2023improving, scaled-ql, kumar2022dr3}. LayerNorm provides consistent stabilization with NTK and spectral contraction explanations and with scale decoupling that further weakens this effect, aligning with observations on representation stability and implicit bias \citep{ghosh2020representations, kumar2021implicit, durugkar2018td,yue2023understanding}. Current work has not yet recognized that this effect can also be suppressed at the sampling level.

\paragraph{Clustering-based reinforcement learning}
Some recent work partitions heterogeneous offline datasets into interpretable behavior clusters to enable stable learning in local in-distribution regions \citep{mao2024stylized,wang2024datasetclusteringimprovedoffline,hu2025policy,wang2022diffusion}.
SORL alternates trajectory clustering and policy updates in an expectation maximization manner to reveal diverse high-quality behaviors \citep{mao2024stylized}.
Behavior-aware deep clustering extracts near single-peaked subsets and improves stability and returns \citep{wang2024datasetclusteringimprovedoffline}.
Probabilistic approaches model latent behavior policies with Gaussian mixtures and derive closed-form improvement operators for the implicit clustering \citep{li2023offlinereinforcementlearningclosedform}.
Diffusion QL fits multi-peaked behavior policy distributions with diffusion models and mitigates mode mixing bias \citep{wang2023diffusion}.
Online skill discovery shows the value of learning separable skills in a latent space and informs the design of clustering offline \citep{achiam2018variational}.
However, most recent efforts focus on diverse policy training and have not yet connected to offline reinforcement learning under small datasets.
\section{Preliminary}
\paragraph{Offline reinforcement learning.}
We consider the standard Markov decision process (MDP) $\mathcal{M}=(\mathcal{S},\mathcal{A},T,r,d_0,\gamma)$, with state space $\mathcal{S}$, action space $\mathcal{A}$, transition dynamics 
$T : \mathcal{S} \times \mathcal{A} \to \mathcal{P}(\mathcal{S})$, 
reward function $R : \mathcal{S} \times \mathcal{A} \to [0,1]$, 
and initial state distribution $\mu : \mathcal{S} \to \mathcal{P}(\mathcal{S})$, 
where $\mathcal{P}(\mathcal{S})$ represents the set of distributions over $\mathcal{S}$. 
Subsequently, a policy $\pi(\mathbf{a}\mid\mathbf{s})$ induces the discounted occupancy
\(
d^\pi(\mathbf{s},\mathbf{a})=(1-\gamma)\sum_{t=0}^{\infty}\gamma^t\Pr(\mathbf{s}_t=\mathbf{s},\mathbf{a}_t=\mathbf{a}\mid\pi)
\)
and maximizes the return $\mathbb{E}\big[\sum_{t=0}^{\infty}\gamma^t r(\mathbf{s}_t,\mathbf{a}_t)\big]$ with $\mathbf{s}_{t+1}\sim T(\cdot\mid \mathbf{s}_t,\mathbf{a}_t)$. In the offline setting a fixed dataset $\mathcal{D}=\{(\mathbf{s},\mathbf{a},r,\mathbf{s}',\mathbf{a}')\}_{i=1}^{|\mathcal{D}|}$ is collected by a behavior policy $\pi_\beta$. Its empirical distribution $\hat d_\beta$ approximates $d^{\pi_\beta}$ and serves as the notion of data support. The key challenge is extrapolation error which tends to assign spuriously high values to actions outside the support of $\hat d_\beta$. This motivates us to design both policy evaluation and policy improvement to remain near the behavior distribution.

\paragraph{Policy evaluation in offline reinforcement learning.}
In offline RL, value functions are estimated from a fixed dataset. The $Q_\phi(\mathbf{s},\mathbf{a})$ is obtained by solving the temporal-difference regression problem
\begin{align}
\label{eq:td-loss}
\min_{\phi}
\mathcal{L}_{\mathrm{TD}}(\phi)
=
\mathbb{E}_{(\mathbf{s},\mathbf{a},r,\mathbf{s}') \sim \mathcal{D}}
\Big(
Q_{\phi}(\mathbf{s},\mathbf{a})
-
\big[
r(\mathbf{s},\mathbf{a})
+
\gamma\, \mathbb{E}_{\mathbf{a}' \sim \pi_{}(\cdot \mid \mathbf{s}')}\,
Q_{\phi'}(\mathbf{s}',\mathbf{a}')
\big]
\Big)^2 .
\end{align}
where \(Q_{\phi'}\) denotes the target critic corresponding to \(Q_{\phi}\).
We have the temporal difference residual
$\delta \equiv r(\mathbf{s},\mathbf{a})
+ \gamma\,\mathbb{E}_{\mathbf{a}'\sim \pi_{}(\cdot\mid \mathbf{s}')}
Q_{\phi'}(\mathbf{s}',\mathbf{a}')
- Q_\phi(\mathbf{s},\mathbf{a}) .$
Thus, the Problem \eqref{eq:td-loss} is equivalent to minimizing the second moment of the temporal difference residual \(\delta\) under the dataset distribution,
\ie 
$ \min
\mathbb{E}_{}
\big[\delta^2\big] .$
To make the evaluation robust near the dataset support, we reason about small perturbations of the critic through its layer features. 
For a unit direction \(\mathbf{w}\) and a small magnitude \(k\ge 0\), the first order expansion of the head gives 
\begin{align}
Q_{\psi}(x+k\mathbf{w}) \,\approx\, Q_{\psi}(x) \,+\, k\,\langle \mathbf{w}, \nabla_{x} Q_{\psi}(x)\rangle
\quad \text{for } \psi\in\{\phi,\phi'\}.
\label{eq:taylor-approx}
\end{align}

\paragraph{Policy improvement in offline reinforcement learning.}
Offline RL improves the actor on the dataset states by maximizing expected value under the learned critic,
\(
\max_{\pi}\ \mathbb{E}_{\mathbf{s}\sim\mathcal{D}}\,
\mathbb{E}_{\mathbf{a}\sim\pi(\cdot\mid\mathbf{s})}\big[ Q_\phi(\mathbf{s},\mathbf{a}) \big] .
\)
To mitigate extrapolation error, we add an explicit proximity regularizer that keeps action selection close to the behavior policy, which yields the generic objective
\begin{align}
\label{eq:offlineRL-PI}
\max_{\pi}\ \mathbb{E}_{\mathbf{s}\sim\mathcal{D}}
\Big[
\mathbb{E}_{\mathbf{a}\sim\pi(\cdot\mid \mathbf{s})} Q_\phi(\mathbf{s},\mathbf{a})
-\alpha\, D\!\big(\pi(\cdot\mid \mathbf{s}),\,\pi_\beta(\cdot\mid \mathbf{s})\big)
\Big] ,
\end{align}
where \(\alpha>0\) controls the regularization strength and \(D(\cdot,\cdot)\) measures divergence or distance between action distributions. Different algorithms instantiate \(D\) with KL or Rényi divergences and use MSE or MMD as practical proximity surrogates \citep{wu2019behavior,jaques2019way,metelli2020importance,fujimoto2021minimalist,kumar2019stabilizing}. In addition, some methods adopt implicit behavior regularization \citep{kumar2020cql,yu2021combo,lyu2022mildly,an2021uncertainty}. We provide details of these behavior regularizers in the Appendix \ref{sec:policy-constraints}.

\begin{figure}[ht]
\centering
\begin{minipage}{0.3\textwidth}
    \centering
    \subfigure[Var dominated grad updates]{\includegraphics[width=\linewidth]{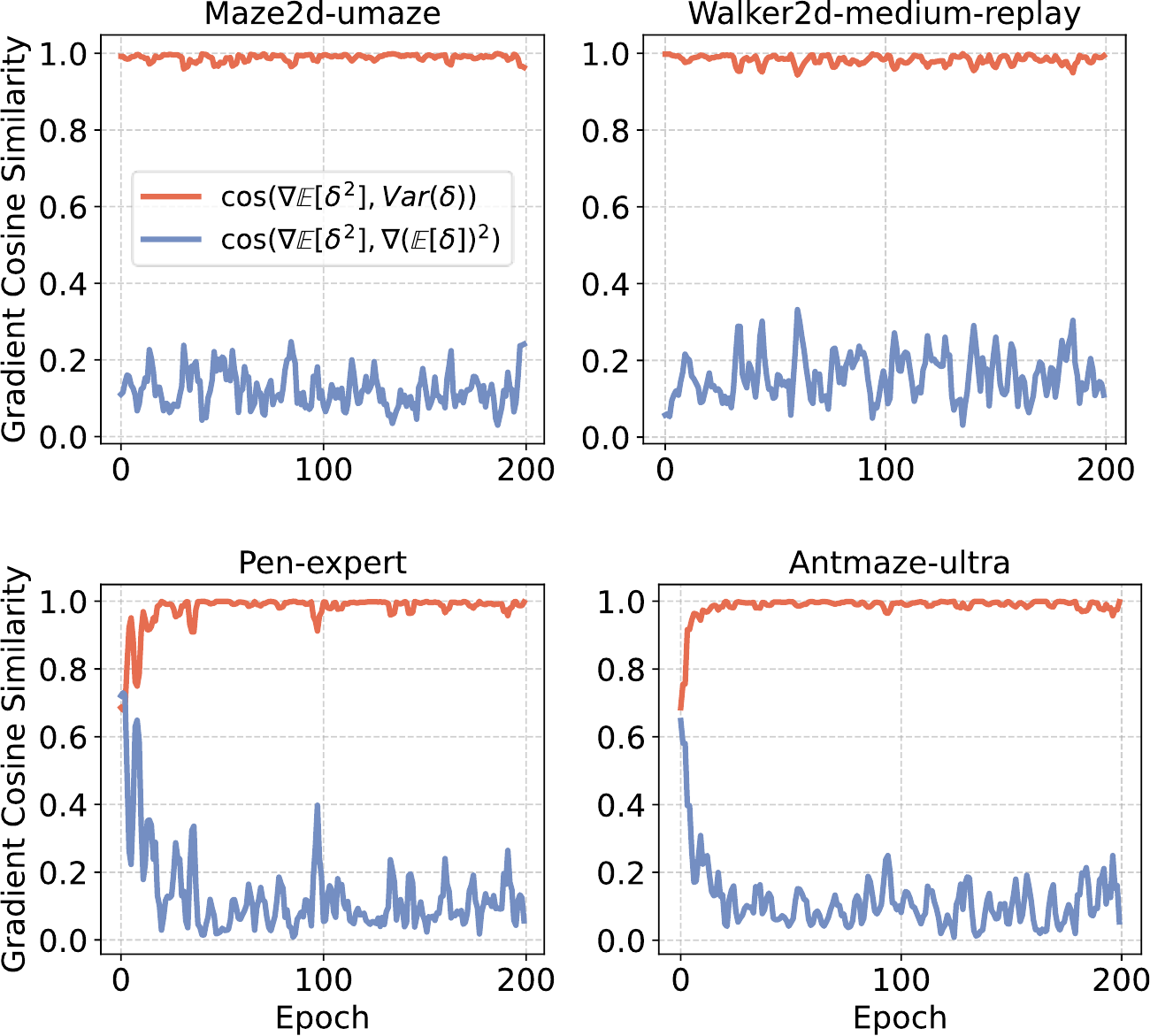}\label{fig:insight1}}
\end{minipage}
\hfill
\begin{minipage}{0.65\textwidth}
    \centering
    \subfigure[Implicit regularization beneficial and harmful]{\includegraphics[width=\linewidth]{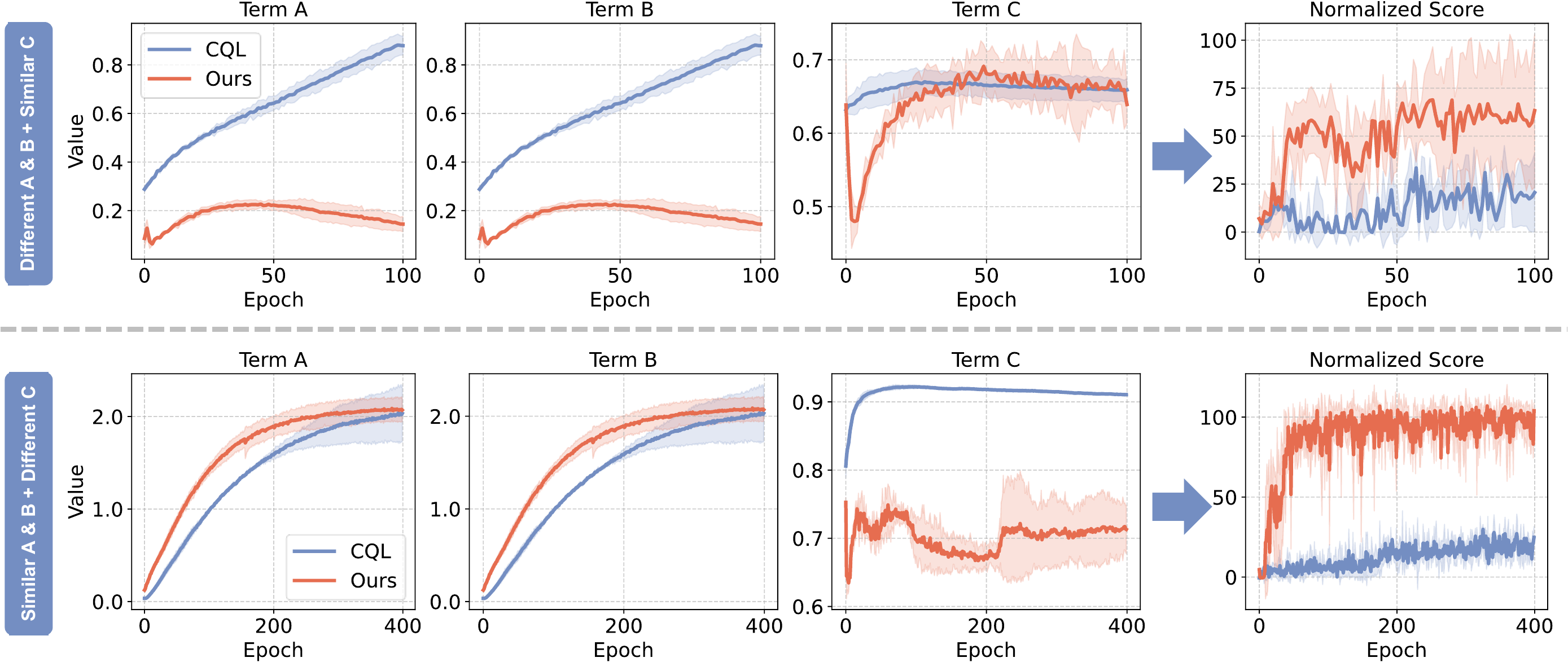}\label{fig:insight2}}
    \label{fig:insight}
\end{minipage}
\caption{\small Left four panels report cosine similarities between $\nabla \mathbb{E}[\delta^{2}]$ and $\nabla \mathrm{Var}[\delta]$ versus $\nabla (\mathbb{E}[\delta])^{2}$ under $\mathbb{E}[\delta^{2}] = (\mathbb{E}[\delta])^{2} + \mathrm{Var}[\delta]$, showing that the variance term dominates across benchmarks. Right four panels track $\mathrm{Var}[\delta] \approx \gamma^2(k')^2 A + k^2 B - 2kk'C$ and the score, where larger $A, B$ and smaller $C$ correlate with better performance, indicating $A$ and $B$ act as beneficial implicit regularizers while $C$ is harmful.}
\vspace{-15pt}
\end{figure}

\section{Cross Covariance Effects in the TD second moment}
In this section, we turn our attention to a conventional technique from reinforcement learning, temporal difference learning, and analyze how its second moment changes when the evaluation point moves slightly in the feature space toward the OOD area, following two observations.


\textbf{Observation 1: Variance of TD residual plays an important role in Problem \eqref{eq:td-loss}.}\\
To connect the TD loss in \eqref{eq:td-loss} with a feature space view, we start from the identity
\begin{align}
\mathbb{E}[\delta_{}^2] \,=\, \big(\mathbb{E}[\delta_{}]\big)^2 \,+\, \mathrm{Var}[\delta_{}],
\label{eq:second-moment-identity}
\end{align}
which reduces the task to understanding how the variance changes under small displacements.
As indicated by the left four panels of Fig.~\ref{fig:insight1}, our analysis focuses on how $\mathrm{Var}[\delta]$ responds to small perturbations in feature space. To probe out of distribution directions while keeping the input domain of $Q_\phi$ fixed, we consider directional displacements $x \mapsto x + k\,\mathbf{w}$, which sets up the subsequent Taylor analysis.
To probe out of distribution directions without changing the input domain of \(Q_\phi\), we evaluate the heads at displaced features \(x+k_{}\mathbf{w}_{}\).
Then, using the first-order Taylor approximation, the
sample variance of the Q-values at an OOD area,
along $\mathbf{w}$
can be represented as
\begin{align}
\mathrm{Var}
\big(Q_{\psi}(x + k\mathbf{w})\big)
&\approx \mathrm{Var}\!\big(Q_{\psi}(x) + k\langle \mathbf{w}, \nabla_x Q_{\psi}(x)\rangle\big) \nonumber\\
&= \mathrm{Var}\!\big(Q(x) + k\langle \mathbf{w}, \nabla_x Q_{\psi}(x)\rangle\big) \nonumber\\
&= k^{2}\,\mathrm{Var}\!\big(\langle \mathbf{w}, \nabla_x Q_{\psi}(x)\rangle\big). 
\label{eq:var-q}
\end{align}
Similarly,
we have 
$\mathrm{Var}\big(Q_{\phi'}(\bs',\pi(\bs'))\big)=(k'_{})^{2}\mathrm{Var}\big(\langle\mathbf{w'_{}},\nabla_{x'} Q_{\phi'}(x')\rangle\big)$
where $x'=(\bs',\ba')$ denotes the next state action pairs of $x=(\bs,\ba)$ come from the dataset (detailed proof in Appendix \ref{sec:appendix-variance}).
This keeps the analysis attached to the empirical support while we probe into support behavior virtually.

\textbf{Observation 2: Implicit regularization of covariance
should be well controlled.}\\
We now state the main result that decomposes the variance change into a supervised style part and a term that is unique to temporal difference learning.


\begin{theorem}\label{thm:td-variance}
All expectations, variances, and covariances below are taken over \(k_{},k'_{},\mathbf{w}_{},\mathbf{w}'_{}\).
With the first order approximation for \(Q_\phi\) in feature space, the variance satisfies
\begin{align}
\mathrm{Var}_{\mathcal{}}[\delta]
\,\approx\,
\underbrace{\gamma^2 (k')^2\,\mathrm{Var}_{\mathcal{}}\!\big(\langle \mathbf{w}'_{}, \nabla_{x'} Q_{\phi'}(x')\rangle\big)
\,+\, k_{}^2\,\mathrm{Var}_{\mathcal{}}\!\big(\langle \mathbf{w}_{}, \nabla_{x_{}} Q_{\phi}(x_{})\rangle\big)}_{\text{implicit regularizer in noisy supervised learning, denote as Term(A) and Term (B)}}\nonumber\\
\;-\;
\underbrace{2\gamma k_{} k'^{}\,\mathrm{Cov}_{\mathcal{}}\!\big(\langle \mathbf{w}'_{}, \nabla_{x'} Q_{\phi'}(x')\rangle,\langle \mathbf{w}_{}, \nabla_{x_{}} Q_{\phi}(x_{})\rangle\big)}_{\text{additional cross term unique to TD learning, denote as Term(C)}}.
\label{eq:var-clean}
\end{align}
where $x$ and $x'$ are drawn from $\mathcal{D}$, with $x'$ being the next state action pair that follows $x$.
By \eqref{eq:second-moment-identity}, this variance decomposition directly controls the TD second moment minimized by \eqref{eq:td-loss}.
\end{theorem}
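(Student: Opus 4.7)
The plan is to plug the first-order Taylor expansion \eqref{eq:taylor-approx} into the TD residual and then exploit the bilinearity of variance and covariance. Concretely, I would first write
\begin{align*}
\delta \;=\; r(\mathbf{s},\mathbf{a}) \,+\, \gamma\, Q_{\phi'}(x' + k'\mathbf{w}') \,-\, Q_{\phi}(x + k\mathbf{w}),
\end{align*}
and then substitute the two head expansions from \eqref{eq:taylor-approx} to obtain
\begin{align*}
\delta \;\approx\; \underbrace{r(\mathbf{s},\mathbf{a}) + \gamma\, Q_{\phi'}(x') - Q_{\phi}(x)}_{\text{constant in }(k,k',\mathbf{w},\mathbf{w}')} \;+\; \gamma k'\,\langle \mathbf{w}', \nabla_{x'} Q_{\phi'}(x')\rangle \;-\; k\,\langle \mathbf{w}, \nabla_{x} Q_{\phi}(x)\rangle.
\end{align*}

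Since the theorem explicitly declares that variances and covariances are taken over the perturbation variables $(k,k',\mathbf{w},\mathbf{w}')$ only, the whole underbraced zeroth-order block has variance zero and drops out. I would then invoke the identity $\mathrm{Var}(U - V) = \mathrm{Var}(U) + \mathrm{Var}(V) - 2\,\mathrm{Cov}(U,V)$ with $U = \gamma k' \langle \mathbf{w}', \nabla_{x'} Q_{\phi'}(x')\rangle$ and $V = k\,\langle \mathbf{w}, \nabla_{x} Q_{\phi}(x)\rangle$. Treating the magnitudes $k,k'$ as deterministic (or at least independent of the direction variables, as is standard when $k,k'$ control the probe radius while $\mathbf{w},\mathbf{w}'$ control the direction), the scalar factors pull outside, giving precisely Term (A), Term (B), and the negative cross term (C) in \eqref{eq:var-clean}. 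Combined with the identity \eqref{eq:second-moment-identity}, this delivers the claimed decomposition of the TD second moment.

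The main obstacle, I expect, is not the algebra but being careful about the probabilistic scaffolding. In particular, the statement implicitly requires (i) that $x,x'$ are held fixed (or at least that the inner randomness over $(k,k',\mathbf{w},\mathbf{w}')$ factors cleanly from the data randomness), so the reward and unperturbed heads act as constants; (ii) that the magnitudes $k,k'$ are independent of the directions $\mathbf{w},\mathbf{w}'$, so the squares $k^2,(k')^2$ and the product $kk'$ can be extracted from $\mathrm{Var}$ and $\mathrm{Cov}$ without leaving higher-moment residuals; and (iii) that we only keep first-order Taylor terms, so that the approximation error is of order $O(k^2\|\mathbf{w}\|^2 + (k')^2\|\mathbf{w}'\|^2)$ inside each $Q$-evaluation and therefore of the same order as the retained terms when we square. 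I would state these as mild regularity conditions up front and then present the algebra in a compact three-line display, mirroring the derivation of \eqref{eq:var-q}, before citing Appendix~\ref{sec:appendix-variance} for full bookkeeping of the cross terms.
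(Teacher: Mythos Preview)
Your proposal is correct and follows essentially the same route as the paper: linearize the perturbed residual via \eqref{eq:taylor-approx}, observe that the unperturbed base term $r+\gamma Q_{\phi'}(x')-Q_\phi(x)$ is constant in the perturbation variables so it drops from the variance, and then expand $\mathrm{Var}(U-V)$ by bilinearity with $k,k'$ treated as fixed scalars. The paper's appendix proof makes exactly these moves (with an additional remark on how the next-state perturbation $k'\mathbf{w}'$ absorbs the OOD action displacement), so your identified assumptions (i)--(iii) match the scaffolding used there.
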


\textit{Sketch of proof}. Expand \(Q_{\phi'}\) at \(x'\) and \(Q_{\phi}\) at \(x_{}\) using the first order rule in feature space, separate the zero displacement part and the linear part, apply variance rules and bilinearity of covariance, then drop higher order terms. Full details are given in the Appendix \ref{sec:appendix-variance}.

Equation \eqref{eq:var-clean} has a natural interpretation. The first bracket penalizes large feature gradients and recovers the implicit regularizer from noisy supervised learning, which reduces $\mathrm{Var}[Q]$ in out-of-distribution regions per Eq.~\eqref{eq:var-q} and improves generalization. 
Same as supervised learning, noise induces beneficial implicit regularization, and our effect is analogous\citep{mulayoff2020unique,damian2021label}.
The second bracket is TD-specific because it couples feature gradients across a transition. This cross-term is misaligned with optimization and, since it enters \eqref{eq:var-clean} with a negative sign while the TD loss is minimized, updates tend to increase it, turning it into a harmful implicit regularizer that can drive collapse in pronounced OOD regimes. Fig.~\ref{fig:insight2} corroborates this by showing that the first two terms act beneficially while the cross term grows under TD minimization, with $A$, $B$, and $C$ approximated by traces of denoised gradient covariance.

Although \eqref{eq:var-clean} may look close to the conclusion of \citep{kumar2022dr3,yue2023understanding}, our analysis and takeaways are different. We derive the result directly from the TD loss and the second moment identity rather than from Lyapunov style or gradient stability arguments, and we do not assume optimizer specific behavior or noise alignment. Our decomposition retains two next state contributions that are missing in prior work, namely \(\mathrm{Var}\!\big(\langle \mathbf{w}'_{},\nabla_{x'} Q_{\phi'}(x')\rangle\big)\) and its scaling by \(\gamma^2 k'^2\), which clarify when sensitivity to the future head dominates even if current state terms are controlled. We also pair \((x,x')\) from the dataset instead of policy rollouts so the analysis stays on empirical support while out of distribution effects are introduced by virtual feature displacements inside the head. In experiments, this pairing choice and the explicit treatment of the next state variance and the TD cross covariance improve stability and returns.

\section{$C^4$: Clustered Cross-Covariance Control for TD}\label{sec:algo}
This section turns the TD-variance model into two control objectives and develops an EM-style procedure that clusters gradient pairs and samples within a single cluster per update. Subsection~\ref{subsec:matrix-target} derives a matrix target from TD variance and sets the size and sign control objectives. Subsection~\ref{subsec:clustering} introduces clustering of stacked gradient pairs and shows why single-cluster sampling removes the between-cluster driver while within-cluster alignment controls the sign. Subsection~\ref{subsec:mixture-training} specifies a mixture-regularized objective, minibatch estimators, and the overall training procedure. All formal proofs are deferred to Appendix~\ref{app:proofs-clustered}.

\subsection{Problem formulation: from TD variance to a matrix target}\label{subsec:matrix-target}
For a transition with feature-space gradients \(g'_{}=\nabla_{x'_{}}Q_{\phi'}(x'_{})\) and \(g_{}=\nabla_{x_i}Q_{\phi}(x_{})\), the one–step variance admits
\begin{align}
\mathrm{Var}[\delta_i]
\,\approx\,
\underbrace{\gamma^{2}{k'}^{2}\,\mathrm{Var}\!\big(\langle \mathbf{w}',g'_{}\rangle\big)
+ k^{2}\,\mathrm{Var}\!\big(\langle \mathbf{w},g_{}\rangle\big)}_{\text{implicit regularizer as in noisy supervised learning}}
-
\underbrace{2\gamma kk'\,\mathrm{Cov}\!\big(\langle \mathbf{w}',g'_{}\rangle,\langle \mathbf{w},g_{}\rangle\big)}_{\text{TD cross term}},
\label{eq:sec-tdvar}
\end{align}
and, under a minibatch sampling law,
\begin{align}
\mathrm{Cov}\big(\langle \mathbf{w}',g'\rangle,\langle \mathbf{w},g\rangle\big)
\le \|C\|_2\ \le\ \|C\|_F,
\qquad
C=\mathrm{Cov}(g',g)\in\mathbb{R}^{m\times m}, 
\label{eq:sec-scalar-cross}
\end{align}
Thus, to make the TD cross term harmless, we (i) shrink a size proxy of \(C\) (trace/spectral norm).
, and (ii) adding a penalty to offset the covariance $-2\gamma kk'\mathrm{Cov}\big(\langle \mathbf{w}',g'\rangle,\langle \mathbf{w},g\rangle\big).$
\vspace{5pt}
\begin{figure}[h]
\centering
\includegraphics[width=0.99\textwidth]{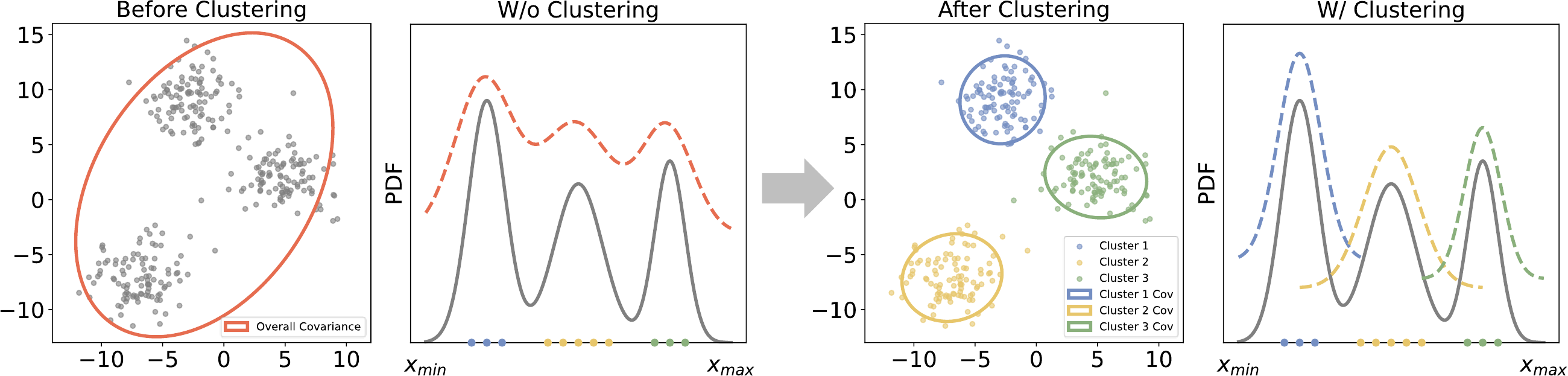}
\vspace{5pt}
\caption{
\small
Intuition behind clustering and TD covariance. The left panel shows that without clustering, the overall covariance ellipse mixes within-cluster spread and between-cluster offsets, so the TD cross term couples unrelated modes and its sign can drift. Right panel clusters the stacked gradients $y=[g',g]$ and samples each minibatch from a single cluster, which removes the between-cluster driver and leaves updates governed by local within-cluster covariance $C_z$. The result is more local TD updates, weaker spurious coupling across modes, and improved stability in OOD directions.}
\label{fig:clustering_cov}
\end{figure}

\subsection{Clustering the stacked gradient pairs}
\label{subsec:clustering}
Building on the matrix–target formulation in Section \ref{subsec:matrix-target}, where the TD variance decomposes as in Eq.~\eqref{eq:sec-tdvar} and the cross term is controlled by the matrix \(C\) via Eq.~\eqref{eq:sec-scalar-cross}, Fig.~\ref{fig:clustering_cov} shows that treating the whole dataset as a single cloud mixes within–cluster spread and between–cluster offsets, which makes the TD cross term unstable. This motivates clustering the stacked gradient pairs and sampling single–cluster minibatches so that updates are governed by local within–cluster statistics.

Let \(y_i=[g'_{},g_{}]\in\mathbb{R}^{2m}\) and partition the dataset into \(K\) clusters in \(y\)-space. For cluster \(z\), define means \(\mu'_z,\mu_z\), variances \(\Sigma'_z,\Sigma_z\), and cross covariance \(C_z=\mathrm{Cov}(g',g\mid Z=z)\).

\begin{theorem}[Single-cluster sampling removes the between-cluster driver]\label{thm:within-only}
With cluster label \(Z\), the cross covariance decomposes as
\begin{align}
C
&= \mathbb{E}\!\big[\,C_{Z}\,\big]\ +\ \mathrm{Cov}\!\big(\mu'_Z,\mu_Z\big).
\label{eq:sec-totalcov}
\end{align}
If each minibatch is drawn from a single cluster \(z\), the between-cluster term in Eq.~\eqref{eq:sec-totalcov} vanishes in that batch and, for any unit \(\mathbf{w}',\mathbf{w}\),
\begin{align}
\big| -2\gamma kk'\mathrm{Cov}\big(\langle \mathbf{w}',g'\rangle,\langle \mathbf{w},g\rangle\big) \big|
&\le\ 2\gamma kk'\,\|C_z\|_2
\ \le\
2\gamma kk'\,\sqrt{\mathrm{tr}\,\Sigma'_z}\,\sqrt{\mathrm{tr}\,\Sigma_z}.
\label{eq:sec-withinbound}
\end{align}
\textit{Sketch of proof}. Apply the law of total covariance to obtain Eq.~\eqref{eq:sec-totalcov}. Then use \(|a^\top M b|\le\|M\|_2\) and the operator Cauchy–Schwarz inequality.
\end{theorem}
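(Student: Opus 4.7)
The plan is to split the argument into the two claims and treat each with a standard identity. First I would establish the decomposition in Eq.~\eqref{eq:sec-totalcov} by invoking the law of total covariance (equivalently, the tower identity for cross–covariance): for jointly distributed $g',g$ with discrete conditioning variable $Z$ taking values in $\{1,\dots,K\}$, one has $\mathrm{Cov}(g',g)=\mathbb{E}[\mathrm{Cov}(g',g\mid Z)]+\mathrm{Cov}(\mathbb{E}[g'\mid Z],\mathbb{E}[g\mid Z])$. Plugging in the notation $C_Z=\mathrm{Cov}(g',g\mid Z)$, $\mu'_Z=\mathbb{E}[g'\mid Z]$, and $\mu_Z=\mathbb{E}[g\mid Z]$ reproduces Eq.~\eqref{eq:sec-totalcov} exactly. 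This is a pure second–moment identity and needs no distributional assumption on $(g',g)$ beyond existence of second moments.

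Second, I would argue that single–cluster sampling sets the between–cluster driver to zero. Conditional on $Z=z$, the variables $\mu'_Z$ and $\mu_Z$ are constants, so $\mathrm{Cov}(\mu'_Z,\mu_Z\mid Z=z)=0$ and the decomposition collapses to $\mathrm{Cov}(g',g\mid Z=z)=C_z$. Consequently, for any fixed unit directions $\mathbf{w}',\mathbf{w}$, bilinearity of covariance gives $\mathrm{Cov}(\langle\mathbf{w}',g'\rangle,\langle\mathbf{w},g\rangle\mid Z=z)=\mathbf{w}'^{\top}C_z\mathbf{w}$. Taking absolute values and using the variational definition of the spectral norm, $|\mathbf{w}'^{\top}C_z\mathbf{w}|\le\|C_z\|_2$ whenever $\|\mathbf{w}'\|=\|\mathbf{w}\|=1$, which yields the first inequality in Eq.~\eqref{eq:sec-withinbound} after multiplying by the scalar $2\gamma kk'$.

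Third, for the second inequality I would invoke an operator Cauchy–Schwarz bound on the cross–covariance. For any unit $\mathbf{u},\mathbf{v}$ and centered $g',g$ conditional on $Z=z$, $(\mathbf{u}^{\top}C_z\mathbf{v})^2\le(\mathbf{u}^{\top}\Sigma'_z\mathbf{u})(\mathbf{v}^{\top}\Sigma_z\mathbf{v})$ by the ordinary Cauchy–Schwarz inequality applied to the inner product $\mathbb{E}[(\mathbf{u}^{\top}g')(\mathbf{v}^{\top}g)\mid Z=z]$. Maximizing over $\mathbf{u},\mathbf{v}$ gives $\|C_z\|_2\le\sqrt{\|\Sigma'_z\|_2\,\|\Sigma_z\|_2}$, and the trivial bound $\|M\|_2\le\mathrm{tr}\,M$ for positive semidefinite $M$ then delivers the stated $\sqrt{\mathrm{tr}\,\Sigma'_z}\sqrt{\mathrm{tr}\,\Sigma_z}$ upper bound.

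The main obstacle I anticipate is not a deep analytic one but a question of what "vanishes in that batch" is meant to assert: for a finite minibatch drawn from a single cluster, the sample analog of $\mathrm{Cov}(\mu'_Z,\mu_Z)$ is exactly zero because the conditional means are fixed constants, so the statement is interpreted at the population-within-$z$ level and carries over to the empirical minibatch estimator without bias. I would therefore be careful to state the result in terms of the conditional distribution given $Z=z$ (so that the decomposition is exact) and then note that the standard plug-in sample covariance over a single-cluster minibatch is an unbiased estimator of $C_z$, which keeps the bound in Eq.~\eqref{eq:sec-withinbound} valid in expectation for any realized minibatch.
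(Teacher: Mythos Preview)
Your proposal is correct and matches the paper's own proof essentially step for step: the law of total covariance gives Eq.~\eqref{eq:sec-totalcov}, the spectral-norm bound $|\mathbf{w}'^{\top}C_z\mathbf{w}|\le\|C_z\|_2$ gives the first inequality, and the operator Cauchy--Schwarz estimate $\|C_z\|_2\le\|\Sigma'_z\|_2^{1/2}\|\Sigma_z\|_2^{1/2}$ followed by $\|\Sigma\|_2\le\mathrm{tr}\,\Sigma$ for PSD $\Sigma$ gives the second. Your extra care about the finite-minibatch interpretation is a reasonable clarification but not required by the paper's argument.
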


The theorem splits the source of the cross covariance into a within–cluster component and a between–cluster component. If each minibatch is drawn from a single cluster, the between–cluster term vanishes in that update, so the TD cross term is fully determined by the within–cluster statistics $C_z$. This yields the batch–level bound
$ 2\gamma kk'\,\|C_z\|_2
\le 2\gamma kk'\,\sqrt{\mathrm{tr}\,\Sigma'_z}\,\sqrt{\mathrm{tr}\,\Sigma_z}.
$
Thus single–cluster sampling together with a penalty on $\|C_z\|$ stabilizes TD updates by suppressing the harmful cross term to a scale controlled by within–cluster variances.

\subsection{Mixture-regularized objective and training procedure}
\label{subsec:mixture-training}
We fit a \(K\)–component Gaussian mixture on \(\{y_i\}\) with parameters
\begin{align}
p(y)
&=\sum_{z=1}^{K}{p}_z\,\mathcal{N}\!\big(y\mid \mu_z,\Omega_z\big),
\qquad
\mu_z=\begin{bmatrix}\mu'_z\\ \mu_z\end{bmatrix},\ \
\Omega_z=
\begin{bmatrix}
\Sigma'_z & C_z\\
C_z^\top & \Sigma_z
\end{bmatrix}.
\label{eq:sec-gmm}
\end{align}
Coupling TD fitting with a spectral proxy gives
\begin{align}
\min_{\phi,\{{p}_z,\mu_z,\Omega_z\}}
\quad \mathcal{L}_{\mathrm{TD}}(\phi)
+
\lambda \sum_{z=1}^{K}{p}_z\,\|C_z\|_F^{2}
\label{eq:sec-globalobj}
\quad\text{s.t.}\quad
{p}_z\ge 0,\ \sum_{z}{p}_z=1,\ \Omega_z\succ 0, 
\end{align}
where \(\|C_z\|_F^{2}=\mathrm{tr}(C_zC_z^\top)\) upper bounds \(\|C_z\|_2^2\) and is easy to estimate per minibatch.

Given a single-cluster minibatch \(B\subset z\), we estimate
\begin{align}
\widehat{C}_z(B)
&=\mathrm{Cov}_{B}(g',g),
\qquad
\widehat{\mathcal{R}}_{\mathrm{cross}}(B)
=\|\widehat{C}_z(B)\|_F^2+\beta\,\big(\mathrm{tr}\,\widehat{C}_z(B)\big)^2,
\label{eq:sec-batch-pen}\\
\mathcal{J}(\phi,B)
&=\mathcal{L}_{\mathrm{TD}}(\phi,B)\ +\ \lambda\,\widehat{\mathcal{R}}_{\mathrm{cross}}(B),
\label{eq:sec-batch-obj}
\end{align}
which implements Eq.~\eqref{eq:sec-globalobj} stochastically and, by Eq.~\eqref{eq:sec-withinbound}, controls the harmful term in each update.

\begin{algorithm}[t]
\caption{$C^4$: Single-Cluster Offline Update for TD (EM-style)}
\label{alg:c4}
\small
\textbf{Input}~~offline dataset $\mathcal{D}$, number of clusters $K$, regularizers $\lambda,\beta$, iterations $T$.\\
\textbf{Initialize}~~mixture $\{{p}_z,\mu_z,\Omega_z\}_{z=1}^{K}$, critic $\phi$.\\[2pt]
\textbf{for} $t=1,\dots,T$ \textbf{do}\\
\quad \textit{Compute gradients:} for each sample $i$, form $g'_{},\,g_{}$ and stack $y_i=[g'_{},g_{}]$.\\
\quad \textit{E--step:} $r_{iz}\ \propto\ {p}_z\,\mathcal{N}(y_i\mid \mu_z,\Omega_z)$, normalize $\sum_z r_{iz}=1$.\\
\quad \textit{M--step:} ${p}_z \leftarrow \tfrac{1}{n}\sum_i r_{iz}$,\quad
$\mu_z \leftarrow \tfrac{1}{N_z}\sum_i r_{iz}y_i$,\quad
$\Omega_z \leftarrow \tfrac{1}{N_z}\sum_i r_{iz}(y_i-\mu_z)(y_i-\mu_z)^\top+\epsilon I$, extract $C_z$.\\
\quad \textit{Critic minibatch:} sample cluster $z \sim \mathrm{Cat}(\{{p}_z\})$, draw minibatch $B$ with weights $r_{iz}$.\\
\quad \textit{Penalty and update:} compute $\widehat{C}_z(B)=\mathrm{Cov}_{B}(g',g)$, minimize batch loss
$\mathcal{J}(\phi,B)$ in Eq.~\eqref{eq:sec-batch-obj}.\\
\textbf{end for}\\
\textbf{Output}~~trained critic $\phi$ and mixture $\{{p}_z,\mu_z,\Omega_z\}$.
\end{algorithm}

$C^4$ repeatedly clusters stacked gradient pairs to estimate within-cluster cross covariance \(C_z\) and then performs critic updates using single-cluster minibatches. Each update minimizes the TD loss plus a Frobenius-style penalty on \(\widehat{C}_z\). By Theorem~\ref{thm:within-only}, this bounds the harmful cross-term batch-wise.
The result is a minibatch distribution tailored to reduce \(\|C\|\) and stabilize TD in OOD directions.
\section{Training on Clustered Buffers}
The last section examines periodic dataset clustering to control cross-covariance. The clustering design remains an open challenge. In offline RL, TD updates occur during evaluation, and policy improvement imposes policy constraints. Periodic clustering can reshape data geometry and shift support across clusters, which may compromise these constraints during improvement. Designing clustering that preserves them is an important direction.

To this end, this section explains why training with \textit{clustered buffers} (single-cluster minibatches as in Algorithm~\ref{alg:c4}) has limited impact on the policy improvement objective and, in fact, provably \textit{optimizes a computable lower bound} of the canonical mixture objective. We specialize the discussion to the CQL-style improvement surrogate and connect each step to Appendix~\ref{app:proofs-policy}.

\paragraph{CQL improvement target and a global lower bound.}
For a policy $\pi$ and state $s$, consider the transformed CQL target
\begin{align}
\Ucal_{\mathrm{CQL}}(\pi\,;s)
:=V^\pi(s)-\alpha\,\big((I-\gamma P^\pi)^{-1}\,\chi^2(\pi\|\pi_\beta)\big)(s),
\label{eq:cql-target}
\end{align}
where $\chi^2$ is the Pearson divergence (Definition~\ref{def:f-divergence}).
Lemma~\ref{lem:global-freeze} in the Appendix implies the \textit{statewise} lower bound
\begin{align}
\Ucal_{\mathrm{CQL}}(\pi\,;s)
\ge
\EE_{\pi}[Q(s,a)]
-\rhomax\,\alpha\sup_{s'}\chi^2\big(\pi\|\pi_\beta\big)(s'),
\qquad \rhomax=\frac{1}{1-\gamma},
\label{eq:cql-global-lb}
\end{align}
and the same form holds when a KL penalty or its nonnegative combination is used. Thus, within a short local window where $V^\pi$ is linearized by $\EE_\pi[Q]$, policy updates that increase the r.h.s. of Eq.~\eqref{eq:cql-global-lb} improve a \textit{global} lower bound to Eq.~\eqref{eq:cql-target}. This shows that replacing the long-horizon transformed penalty by a tractable per-state divergence primarily tightens the bound and does not qualitatively alter the improvement direction.

\paragraph{Mixtures, clustering, and why per-cluster training is safe.}
Let the behavior be a mixture $\nu=\sum_{m=1}^M w_m\,\nu_m$ (e.g., a Gaussian mixture induced by clustered replay). Lemma~\ref{lem:convexity-mixture} in the Appendix shows $f$-divergences are convex in the \textit{second} argument:
\begin{align}
D_f(\pi\|\nu)\ \le\ \sum_{m=1}^M w_m\,D_f(\pi\|\nu_m),
\label{eq:f-mixture-upper}
\end{align}
with the two special cases (Pearson, KL) holding verbatim. Plugging Eq.~\eqref{eq:f-mixture-upper} into Eq.~\eqref{eq:cql-global-lb} yields the \textit{cluster-decomposed} lower bound
\begin{align}
\EE_{\pi}[Q(s,a)]
-\rhomax\!\left(\alpha \sum_m w_m \chi^2(\pi\|\nu_m)(s)+\beta \sum_m w_m \mathrm{KL}(\pi\|\nu_m)(s)\right)
\label{eq:cluster-lb}
\end{align}
for any nonnegative combination of Pearson and KL.
Consequently, optimizing the per-cluster surrogate
\begin{align}
\Jcal_z(\pi)
:=\EE_{\pi}[Q(s,a)]
-\rhomax\!\left(\alpha \chi^2(\pi\|\nu_z)(s)+\beta \mathrm{KL}(\pi\|\nu_z)(s)\right)
\label{eq:per-cluster-obj}
\end{align}
and sampling $z\sim w$ gives an \textit{unbiased} stochastic gradient of the weighted sum $\sum_m w_m \Jcal_m(\pi)$ (Lemma~\ref{lem:convexity-mixture} in Appendix), which is a computable lower bound to the mixture objective with $\nu$ inside each divergence. Thus, \textit{partitioning the buffer and training per cluster does not bias the direction}: it maximizes a principled lower bound to the original improvement target, with the gap controlled by divergence convexity.
Additionally, the $f$-divergence case is illustrative rather than exclusive. Any policy constraint satisfying
$
D(\pi\|\textstyle\sum_z w_z\,\nu_z)\le \sum_z w_z\,D(\pi\|\nu_z)
$
can be used in our algorithm. See Appendix \ref{sec:policy-constraints} for admissible $D$ choices.


\paragraph{Stability from adding KL and quantitative caps.}
Proposition~\ref{prop:kl-stabilization} in the Appendix proves that adding a KL term yields strong concavity of the local surrogate and quantitative step caps:
\begin{align}
\text{near }\theta_\beta:\quad \text{strong concavity}\ \ge m\beta\rhomax,
\qquad
\|\theta^\star-\theta_\beta\|\ \le\ \frac{\|\nabla_\theta \EE_\pi[Q]\big|_{\theta_\beta}\|}{m\beta\rhomax}.
\label{eq:kl-strong}
\end{align}
For the Gaussian-mean case,
\begin{align}
\mu^\star=\mu_\beta+\kappa^\star\,\Sigma_\beta g,
\qquad
(2\alpha\rhomax\,e^{\kappa^{\star 2} R}+\beta\rhomax)\,\kappa^\star=1,
\qquad
\kappa^\star\le \frac{1}{\beta\rhomax}=\frac{1-\gamma}{\beta},
\label{eq:gauss-cap}
\end{align}
and Pearson inflation is bounded by $\chi^2(\pi_{\mu^\star}\|\pi_\beta)\le \beta/(2\alpha)-1$ when $\alpha>0$.
By convexity in the second argument (Lemma~\ref{lem:convexity-mixture}), the same caps hold \textit{per cluster} and therefore under cluster sampling in expectation. This shows that partitioning the buffer \textit{does not} compromise the known CQL stabilizing effects. If anything, it makes the constants local and often tighter.

\paragraph{Takeaway for practice.}
Equations ~\eqref{eq:cql-global-lb}--\eqref{eq:cluster-lb} show that training with clustered buffers maximizes a certified lower bound to the original mixture objective (mixture placed inside the divergence), with unbiased gradients under random cluster selection. 
KL caps Eq.~\eqref{eq:kl-strong}--\eqref{eq:gauss-cap} carry over per cluster, so the induced change to the policy improvement direction is small (same direction, cluster-adaptive step). In the CQL special case, the $\rhomax$-weighted divergence still upper-bounds the propagated penalty, and convexity guarantees let us replace the mixture by a sum over cluster penalties without loosening control. Therefore, splitting the buffer into clusters has a limited effect on the improvement rule while \textit{improving} its computability and stability, exactly the properties exploited by $C^4$ in Algorithm~\ref{alg:c4}.

\paragraph{Pointers to Appendix.}
Formal statements and proofs are in Appendix~\ref{app:proofs-policy}: Lemma~\ref{lem:global-freeze} (global operator freeze), Lemma~\ref{lem:convexity-mixture} (convexity in the second argument and unbiased cluster gradients), Proposition~\ref{prop:isotropic-penalty} (isotropic penalties and local update direction), and Proposition~\ref{prop:kl-stabilization} (KL stabilization and caps). Remark~\ref{rem:poor-coverage} quantifies poor-coverage regimes and explains why adding KL prevents runaway steps. The same reasoning applies per cluster.
\section{Experimental Results}

In this section, we empirically evaluate our proposed method \(C^4\).
The experiments are organized to address the following questions:
\begin{enumerate}[label=Q\arabic*.]
\item How does it perform on offline RL relative to existing approaches across standard benchmarks, particularly under reduced-data regimes?
\item How is performance influenced by factors such as the number of initial clusters and the quality of the data?
\item Can the plug-and-play method \(C^4\) adapt to different types of algorithms?
\end{enumerate}

\subsection{Implement}
For a fair and comprehensive evaluation, we compare our method against Behavior Cloning (BC) and a broad set of state-of-the-art offline reinforcement learning algorithms. For standard offline RL backbones, we include CQL~\citep{kumar2020conservative}, TD3+BC (or TD3BC)~\citep{fujimoto2021minimalist}, and IQL~\citep{iql}. To directly probe performance in reduced-data regimes (Q1), we further consider data-efficient algorithms such as DOGE~\citep{li2022data} and TSRL~\citep{cheng2023look}. DOGE is selected for its strong out-of-distribution generalization through state-conditioned distance functions, while TSRL exploits temporal symmetry in system dynamics to improve sample efficiency. In addition, we include recent high-performing methods, BPPO~\citep{zhuang2023behavior}, which leverages PPO-style clipping for monotonic improvement, and A2PR~\citep{liu2024adaptive}, which employs adaptive regularization with a VAE-augmented policy.

We further study the plug-and-play nature of \(C^4\) and its relation to other regularization techniques by comparing against methods that act as generic regularizers or share similar design principles. In particular, to mitigate gradient collapse in sparse data regimes, we evaluate DR3~\citep{kumar2022dr3} and LN~\citep{yue2023understanding}. We additionally consider SORL~\citep{mao2024stylized}, which is closely related in motivation as it performs data clustering, but does so at the trajectory level rather than in the gradient space as \(C^4\) does. Across all comparisons, \(C^4\) is instantiated as a plug-in module on top of existing backbones, allowing us to isolate its effect on performance.

\subsection{Main Results}
To answer Q1, we focus our primary evaluation on a \emph{data-scarce regime}. Specifically, we restrict each D4RL MuJoCo locomotion task to only \textbf{10k} state-action pairs (approximately 1\% of the full dataset). This setting places all methods in a challenging low-data regime, thereby highlighting their generalization capabilities. As summarized in Table~\ref{table:locomotion}, we benchmark a wide spectrum of algorithms, including standard backbones (TD3+BC, CQL, IQL), data-efficient methods (DOGE, TSRL), and stronger OOD-aware baselines (BPPO, A2PR).
Table~\ref{table:locomotion} primarily reports the normalized scores on the locomotion benchmarks and summarizes the main results on AntMaze, Maze2D, and Adroit, while detailed per-task results for these three domains are deferred to Tables~\ref{tab:adroit_results}, \ref{tab:maze2d_results}, and \ref{tab:antmaze_results} in Appendix~\ref{app:expDetails}.
To complement the tabular comparison, Figure~\ref{fig:locomotion-radar} provides a holistic visualization by plotting normalized scores across tasks. On the MuJoCo locomotion benchmarks, 
despite the extreme data sparsity, our approach recovers nearly 75\% of expert performance on average and consistently outperforms all competing baselines, achieving an average improvement of more than \textbf{30\%} over the best alternative. This demonstrates that \(C^4\) substantially enhances data efficiency in offline RL.

\begin{table}[t]
\centering
\renewcommand{\arraystretch}{1.1}
\caption{\small
Normalized scores on MuJoCo locomotion tasks using reduced-size datasets (10k samples). Abbreviations fr, mr, and me denote full-replay, medium-replay, and medium-expert, respectively.
}

\resizebox{\textwidth}{!}{
\begin{tabular}{lcccccccc}
\toprule
Task & TD3+BC & CQL & IQL & DOGE & BPPO & TSRL & A2PR & Ours \\
\midrule
Ant-me         & 52.0$\pm$18.2 & 74.0$\pm$25.0 & 66.0$\pm$10.5 & 82.0$\pm$16.4 & 85.5$\pm$13.7  & 83.6$\pm$12.4  & 66.7$\pm$10.2  & \textbf{100.9$\pm$5.0} \\
Ant-m          & 46.0$\pm$17.4 & 62.0$\pm$22.1 & 56.0$\pm$9.3  & 69.0$\pm$15.2 & 78.0$\pm$11.9  & 72.2$\pm$10.6  & 64.0$\pm$9.3   & \textbf{84.5$\pm$6.1} \\
Ant-mr         & 31.0$\pm$15.5 & 36.0$\pm$16.3 & 41.0$\pm$10.8 & 46.0$\pm$12.9 & 52.0$\pm$10.8  & 49.4$\pm$13.1  & 44.0$\pm$9.1   & \textbf{65.8$\pm$6.9} \\
Ant-e          & 72.0$\pm$25.0 & 94.0$\pm$29.4 & 82.0$\pm$15.0 & 98.0$\pm$20.3 & 103.0$\pm$11.2 & 100.7$\pm$12.6 & 88.0$\pm$10.6  & \textbf{109.6$\pm$2.7} \\
Ant-fr         & 70.0$\pm$24.0 & 92.0$\pm$26.5 & 80.0$\pm$15.0 & 96.0$\pm$20.0 & 102.0$\pm$11.7 & 99.8$\pm$12.0  & 86.0$\pm$10.4  & \textbf{107.6$\pm$3.2} \\
\midrule
Hopper-m       & 30.7$\pm$13.2 & 50.1$\pm$22.3 & 61.0$\pm$6.2  & 55.6$\pm$8.3  & 55.0$\pm$7.8   & 60.9$\pm$4.1   & 55.9$\pm$8.4   & \textbf{69.2$\pm$12.7} \\
Hopper-mr      & 11.3$\pm$4.7  & 13.2$\pm$2.0  & 16.2$\pm$3.0  & 19.1$\pm$3.3  & 45.1$\pm$8.7   & 23.5$\pm$8.8   & 12.5$\pm$5.9   & \textbf{45.9$\pm$8.4} \\
Hopper-me      & 22.6$\pm$13.9 & 43.2$\pm$6.9  & 51.7$\pm$7.0  & 36.8$\pm$34.5 & 27.9$\pm$15.2  & 56.6$\pm$13.9  & 49.7$\pm$10.7  & \textbf{81.3$\pm$6.0} \\
Hopper-e       & 53.6$\pm$17.1 & 56.1$\pm$26.4 & 60.9$\pm$9.6  & 62.2$\pm$21.7 & 85.0$\pm$17.9  & 76.7$\pm$20.4  & 80.0$\pm$16.8  & \textbf{107.0$\pm$2.8} \\
Hopper-fr      & 32.0$\pm$13.5 & 45.0$\pm$22.0 & 56.0$\pm$6.3  & 54.0$\pm$8.4  & 60.0$\pm$9.7   & 53.4$\pm$11.3  & 55.0$\pm$8.9   & \textbf{65.3$\pm$9.4} \\
\midrule
Walker2d-m     & 11.2$\pm$19.2 & 54.1$\pm$15.5 & 34.2$\pm$5.2  & 53.7$\pm$12.6 & 54.7$\pm$11.4  & 47.3$\pm$10.1  & 5.9$\pm$5.2    & \textbf{65.9$\pm$7.8} \\
Walker2d-mr    & 9.3$\pm$6.6   & 13.8$\pm$5.3  & 17.7$\pm$8.9  & 15.5$\pm$9.2  & 29.5$\pm$8.7   & 27.6$\pm$12.4  & 34.4$\pm$8.9   & \textbf{55.4$\pm$5.9} \\
Walker2d-me    & 12.4$\pm$15.7 & 26.0$\pm$14.0 & 38.0$\pm$12.2 & 42.5$\pm$11.4 & 61.3$\pm$12.2  & 50.9$\pm$26.4  & 56.5$\pm$11.5  & \textbf{96.3$\pm$10.4} \\
Walker2d-e     & 29.5$\pm$23.5 & 56.0$\pm$29.4 & 16.2$\pm$3.2  & 81.2$\pm$18.6 & 102.0$\pm$9.7  & 104.9$\pm$10.6 & 98.0$\pm$7.9   & \textbf{109.5$\pm$0.3} \\
Walker2d-fr    & 14.2$\pm$19.5 & 55.0$\pm$16.0 & 36.0$\pm$5.6  & 55.5$\pm$12.3 & 52.0$\pm$10.9  & 44.3$\pm$10.4  & 48.0$\pm$9.6   & \textbf{77.3$\pm$7.1} \\
\midrule
Halfcheetah-m  & 25.9$\pm$8.4  & 41.7$\pm$2.2  & 35.6$\pm$2.9  & 42.8$\pm$2.9  & 28.5$\pm$3.2   & 43.3$\pm$2.8   & 37.1$\pm$2.7   & \textbf{46.3$\pm$3.1} \\
Halfcheetah-mr & 29.1$\pm$8.3  & 16.3$\pm$4.9  & 34.1$\pm$6.3  & 26.3$\pm$3.1  & 34.4$\pm$4.2   & 27.7$\pm$3.8   & 23.6$\pm$4.7   & \textbf{43.1$\pm$5.3} \\
Halfcheetah-me & 23.5$\pm$13.6 & 39.7$\pm$6.4  & 14.3$\pm$7.3  & 33.1$\pm$8.8  & 22.3$\pm$9.6   & 37.2$\pm$14.9  & 32.4$\pm$8.3   & \textbf{46.0$\pm$3.5} \\
Halfcheetah-e  & 26.4$\pm$4.2  & 5.8$\pm$1.3   & -1.1$\pm$3.8  & 1.4$\pm$3.1   & 6.5$\pm$3.4    & 42.0$\pm$26.4  & 36.0$\pm$7.8   & \textbf{75.8$\pm$5.2} \\
Halfcheetah-fr & 28.0$\pm$8.6  & 45.0$\pm$2.4  & 33.0$\pm$3.0  & 43.0$\pm$3.1  & 37.0$\pm$3.6   & 41.0$\pm$3.0   & 39.0$\pm$4.1   & \textbf{58.1$\pm$3.4} \\
\midrule
Locomotion-Avg.        & 31.5          & 46.0          & 41.4          & 50.7          & 56.1          & 57.2           & 50.6           & \textbf{75.7} \\
\midrule
\midrule
{AntMaze-Avg.}    & 6.3  & 10.7 & 21.4 & 20.5 & 16.5 & 22.0 & 16.2 & \textbf{27.0} \\
{Maze2D-Avg.} & 49.7 & 57.4 & 103.6 & 106.1 & 120.6 & 115.7 & 111.4 & \textbf{126.9} \\
{Adroit-Avg.} & 1.4  & 7.3  & 15.2 & 8.4  & \textbf{23.1}  & 15.7 & -0.1 & 21.6 \\
\bottomrule
\end{tabular}}
\label{table:locomotion}
\vspace{-15pt}
\end{table}

\begin{figure}[h]
  \centering
  \includegraphics[width=0.99\linewidth]{./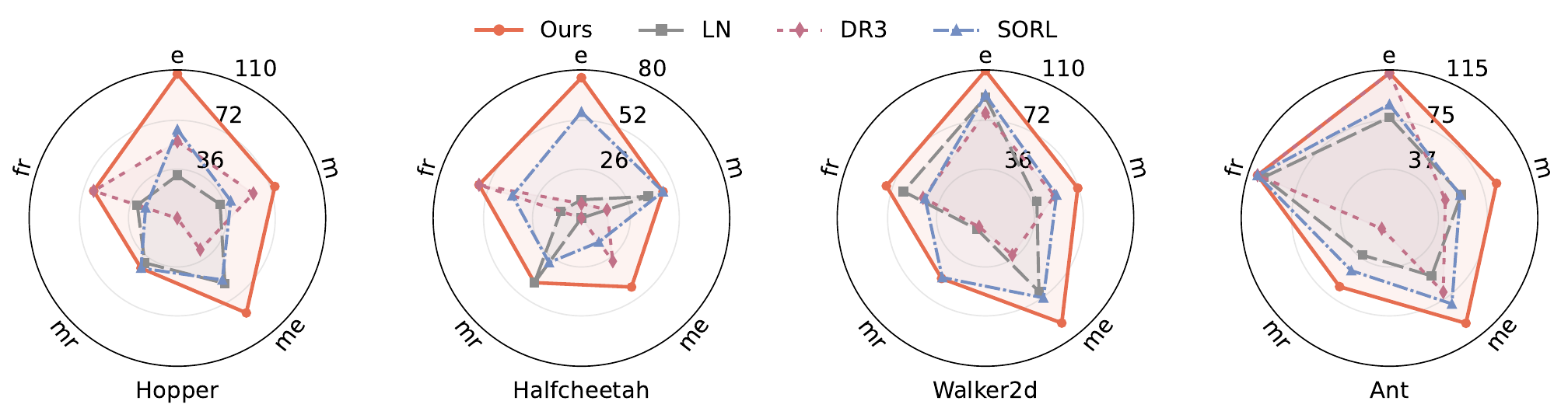}
  \caption{\small Radar charts comparing normalized scores on D4RL MuJoCo locomotion tasks (10k samples).}
  \label{fig:locomotion-radar}
  \vspace{-3pt}
\end{figure}

To examine Q2, we move beyond raw performance and analyze computational efficiency and sensitivity to key hyperparameters. Figure~\ref{fig:runtime} reports wall-clock training time over 300K optimization steps, comparing both full algorithms and plug-in regularizers. Incorporating \(C^4\) into standard backbones such as CQL and TD3+BC introduces only moderate overhead, yielding a runtime profile comparable to other lightweight regularizers (LN, DR3, SORL) and substantially more efficient than complex data-efficient baselines such as TSRL and DOGE. We also study the sensitivity of \(C^4\) to its regularization strength \(\lambda\) and the base algorithm coefficient \(\alpha\). Performance remains stable over a broad range of these values, indicating that \(C^4\) does not require delicate tuning. Additional ablations in Appendix~\ref{app:expDetails} further show that reasonable variations in the number of initial clusters and the quality/coverage of the dataset lead to smooth changes in performance, supporting the robustness of the gradient-space clustering mechanism.

\begin{figure}[t]
\centering
\subfigure{\includegraphics[width=0.22\textwidth]{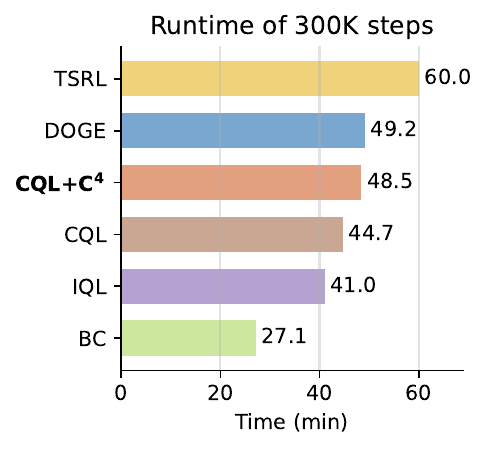}}
\subfigure{\includegraphics[width=0.22\textwidth]{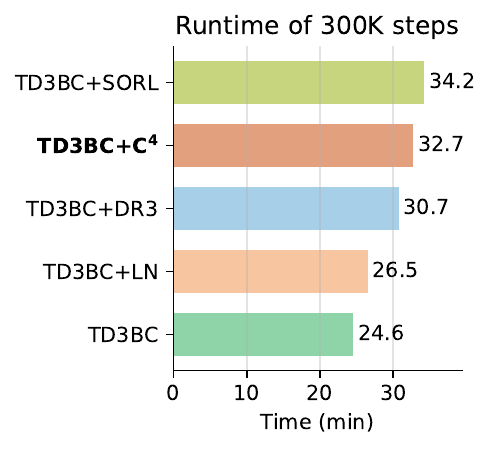}}
\subfigure{\includegraphics[width=0.23\textwidth]{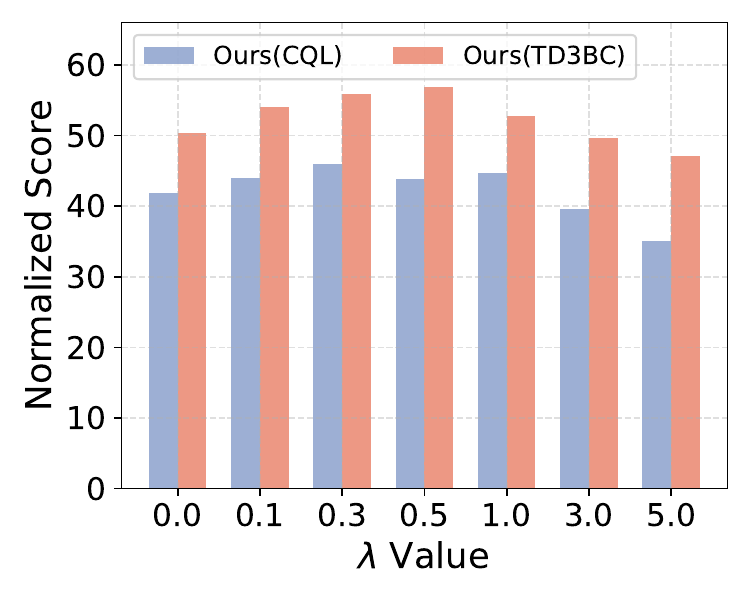}}
\subfigure{\includegraphics[width=0.23\textwidth]{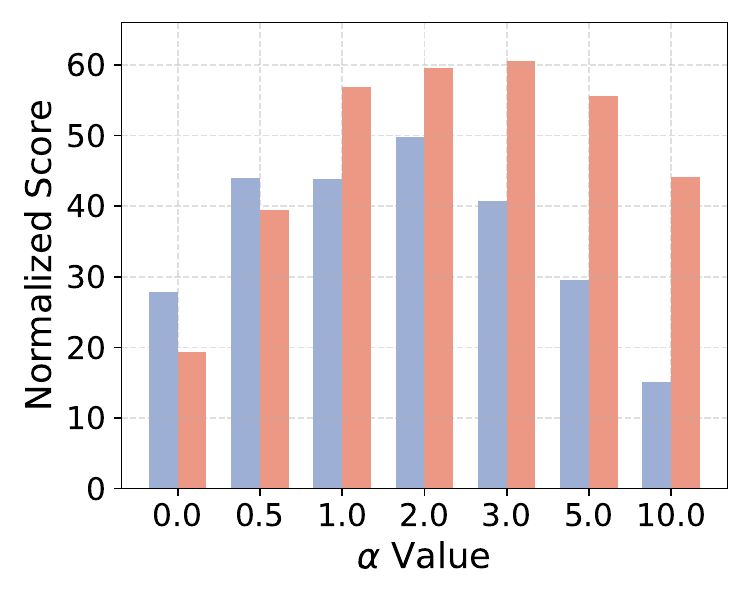}}
\vspace{-5pt}
\caption{
\small Wall-clock runtime comparisons, and performance sensitivity to hyperparameters $\lambda$ and $\alpha$.}
\label{fig:runtime}
\vspace{-15pt}
\end{figure}

Finally, to directly address Q3 regarding the plug-and-play property of \(C^4\), we evaluate it as an add-on regularizer for both CQL and TD3+BC. Using identical training protocols, we compare each backbone to its variants augmented with LN, DR3, and \(C^4\) (Figures~\ref{fig:pdf_ablations_CQL} and~\ref{fig:pdf_ablations_TD3BC}). Across both algorithm families, incorporating \(C^4\) yields the most consistent and substantial improvements throughout training, whereas LN and DR3 provide only moderate or task-dependent gains. This indicates that \(C^4\) effectively targets the variance components in gradient space that limit offline RL performance, while preserving the inductive biases of the underlying algorithm. In practice, \(C^4\) can therefore be treated as a drop-in module that robustly enhances a variety of existing offline RL methods.

\begin{figure}[ht]
\centering
\subfigure{\includegraphics[width=0.234\textwidth]{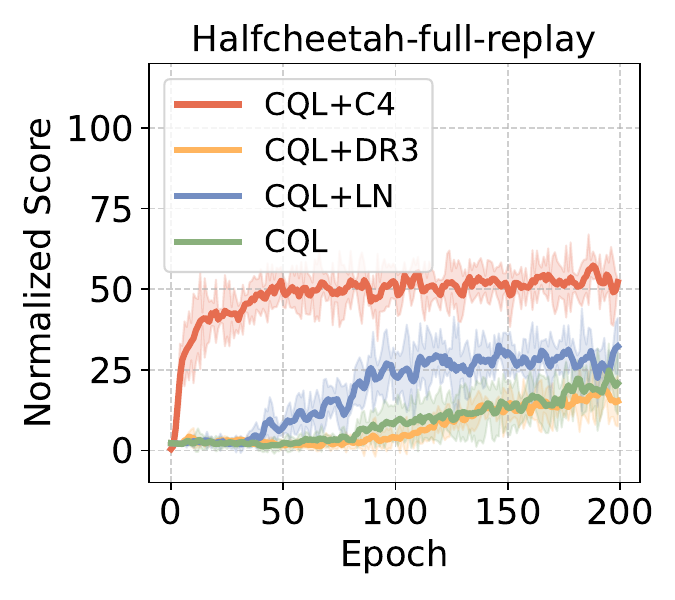}}
\subfigure{\includegraphics[width=0.234\textwidth]{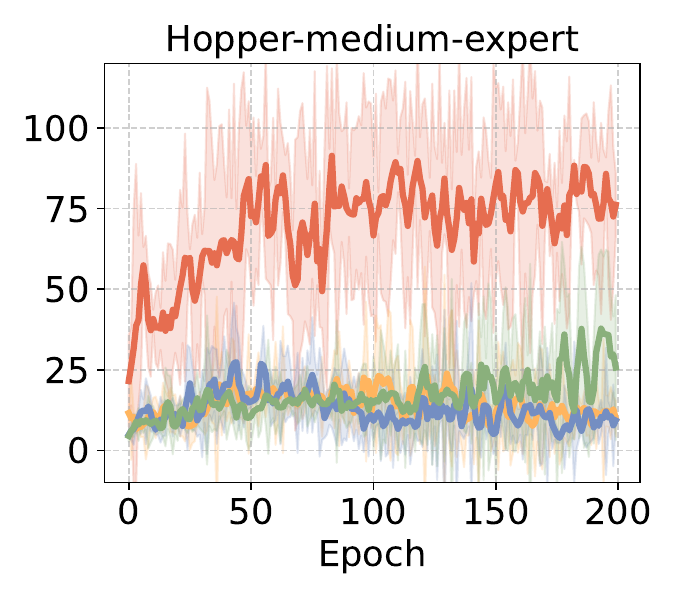}}
\subfigure{\includegraphics[width=0.234\textwidth]{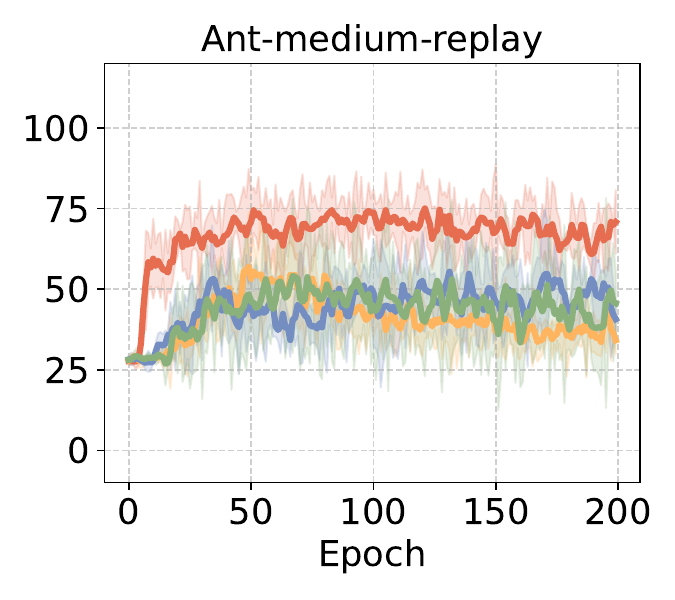}}
\subfigure{\includegraphics[width=0.234\textwidth]{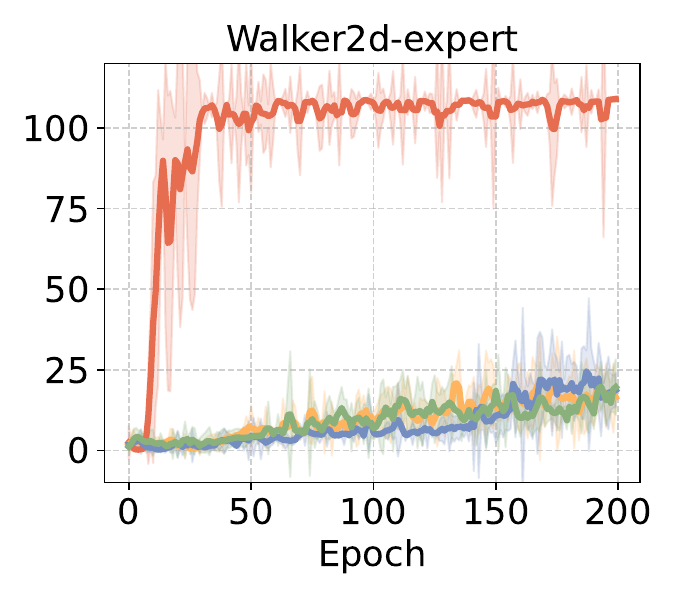}}
\vspace{-7pt}
\caption{
\small  Performance comparison on CQL vs. (+LN), (+DR3) and (+$C^4$).}
\label{fig:pdf_ablations_CQL}
\vspace{-4pt}
\end{figure}

\begin{figure}[ht]
\centering
\subfigure{\includegraphics[width=0.234\textwidth]{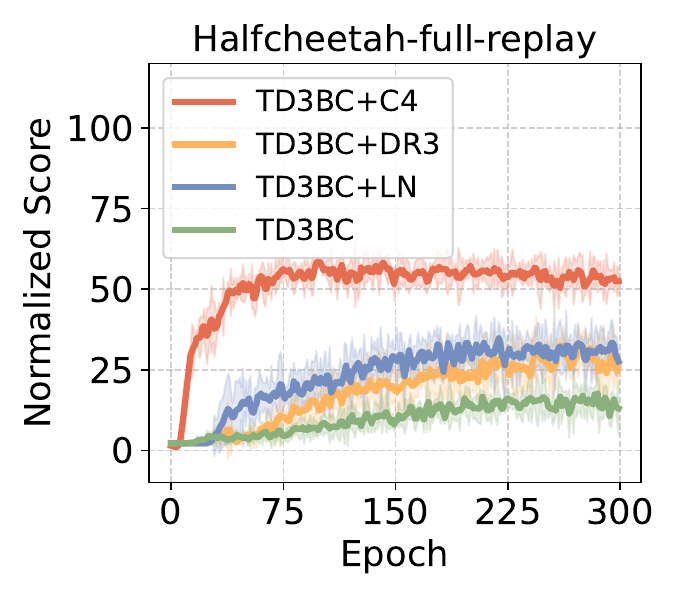}}
\subfigure{\includegraphics[width=0.234\textwidth]{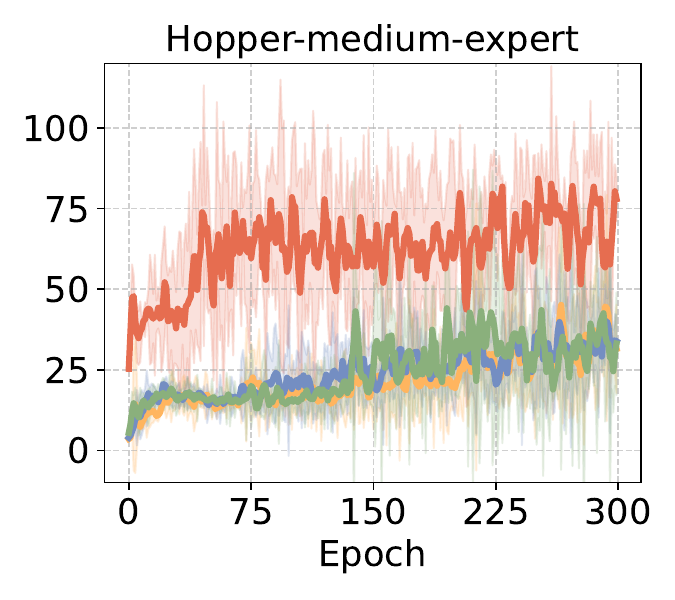}}
\subfigure{\includegraphics[width=0.234\textwidth]{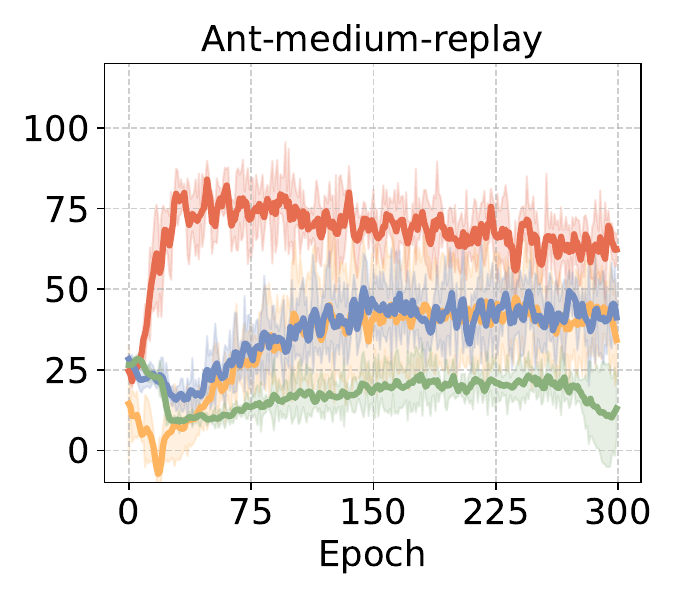}}
\subfigure{\includegraphics[width=0.234\textwidth]{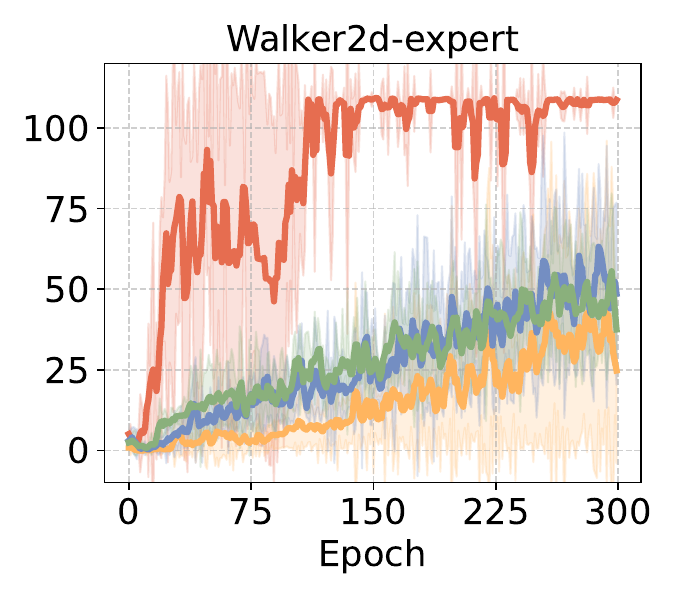}}
\vspace{-6pt}
\caption{
\small  Performance comparison on TD3BC vs. (+LN), (+DR3) and (+$C^4$).}
\label{fig:pdf_ablations_TD3BC}
\vspace{-0pt}
\end{figure}


Additionally, extended implementation, benchmark descriptions, experimental details, complete results, and ablations are provided in Appendix \ref{app:expDetails} due to space constraints.
\section{Conclusion}\label{sec:conclusion}
This work identifies harmful TD cross covariance as a key driver of instability under weak coverage in offline RL. $C^4$ counters this effect with partitioned buffer sampling that localizes updates and with an explicit gradient-based penalty that offsets bias while preserving the objective’s lower bound. 
The method reduces excessive conservatism, improves stability, and integrates with standard pipelines without heavy tuning. 
Experiments on small data and splits that emphasize out-of-distribution states show consistent gains in return, with improvements up to about 30\% and smoother learning dynamics. 
These results indicate that clustered cross-covariance control for TD is a practical and effective approach for achieving robust offline RL under weak coverage.


\newpage

\section{Reproducibility Statement}
We release the full codebase in supplementary files.
The repository contains scripts for end-to-end runs, environment setup instructions, and exact configurations. The paper provides the algorithm pseudo-code and a complete list of hyperparameters. We will include data preparation steps, evaluation scripts, random seed control, and instructions to regenerate all main tables and figures. 

\section{Ethics Statement}
This work targets decision-making under distribution shift in offline reinforcement learning. The method can improve reliability, yet misuse may create harm. Potential risks include job displacement, unsafe behaviors in autonomous systems, privacy exposure when training on sensitive data, and misleading outputs from generative components. We recommend careful auditing, limited deployment in controlled settings, and human oversight. Use should focus on supporting decisions rather than replacing human judgment. Privacy protection, transparency, continuous monitoring, and rollback procedures are necessary. We will provide a usage checklist to encourage responsible application.


\bibliographystyle{iclr}
\bibliography{reference}

\begin{thebibliography}{67}
\providecommand{\natexlab}[1]{#1}
\providecommand{\url}[1]{\texttt{#1}}
\expandafter\ifx\csname urlstyle\endcsname\relax
  \providecommand{\doi}[1]{doi: #1}\else
  \providecommand{\doi}{doi: \begingroup \urlstyle{rm}\Url}\fi

\bibitem[Achiam et~al.(2018)Achiam, Edwards, Amodei, and
  Abbeel]{achiam2018variational}
Joshua Achiam, Harrison Edwards, Dario Amodei, and Pieter Abbeel.
\newblock Variational option discovery algorithms.
\newblock \emph{arXiv preprint arXiv:1807.10299}, 2018.

\bibitem[Achiam et~al.(2019)Achiam, Knight, and Abbeel]{achiam2019charact}
Joshua Achiam, Ethan Knight, and Pieter Abbeel.
\newblock Towards characterizing divergence in deep q-learning.
\newblock \emph{arXiv preprint arXiv:1903.08894}, 2019.

\bibitem[Agarwal et~al.(2019)Agarwal, Jiang, Kakade, and
  Sun]{agarwal2019reinforcement}
Alekh Agarwal, Nan Jiang, Sham~M Kakade, and Wen Sun.
\newblock Reinforcement learning: Theory and algorithms.
\newblock \emph{CS Dept., UW Seattle, Seattle, WA, USA, Tech. Rep},
  32:\penalty0 96, 2019.

\bibitem[An et~al.(2021)An, Moon, Kim, and Song]{an2021uncertainty}
Gaon An, Seungyong Moon, Jang-Hyun Kim, and Hyun~Oh Song.
\newblock Uncertainty-based offline reinforcement learning with diversified
  q-ensemble.
\newblock \emph{Advances in neural information processing systems},
  34:\penalty0 7436--7447, 2021.

\bibitem[Baird(1995)]{baird1995residual}
Leemon~C. Baird.
\newblock Residual algorithms: Reinforcement learning with function
  approximation.
\newblock In Armand Prieditis and Stuart~J. Russell (eds.), \emph{Proceedings
  of the Twelfth International Conference on Machine Learning (ICML 1995)},
  pp.\  30--37, Tahoe City, California, USA, 1995. Morgan Kaufmann.
\newblock ISBN 1-55860-377-8.
\newblock \doi{10.1016/B978-1-55860-377-6.50013-X}.

\bibitem[Bhatt et~al.(2019)Bhatt, Argus, Amiranashvili, and
  Brox]{bhatt2019crossnorm}
Aditya Bhatt, Max Argus, Artemij Amiranashvili, and Thomas Brox.
\newblock Crossnorm: Normalization for off-policy td reinforcement learning.
\newblock \emph{arXiv preprint arXiv:1902.05605}, 2019.

\bibitem[Bishop \& Nasrabadi(2006)Bishop and Nasrabadi]{bishop2006pattern}
Christopher~M Bishop and Nasser~M Nasrabadi.
\newblock \emph{Pattern recognition and machine learning}, volume~4.
\newblock Springer, 2006.

\bibitem[Brandfonbrener et~al.(2021)Brandfonbrener, Whitney, Ranganath, and
  Bruna]{brandfonbrener2021offline}
David Brandfonbrener, Will Whitney, Rajesh Ranganath, and Joan Bruna.
\newblock Offline rl without off-policy evaluation.
\newblock \emph{Advances in Neural Information Processing Systems},
  34:\penalty0 4933--4946, 2021.

\bibitem[Chen et~al.(2024)Chen, Ganguly, Xu, Mei, Lan, and
  Aggarwal]{chen2024deep}
Jiayu Chen, Bhargav Ganguly, Yang Xu, Yongsheng Mei, Tian Lan, and Vaneet
  Aggarwal.
\newblock Deep generative models for offline policy learning: Tutorial, survey,
  and perspectives on future directions.
\newblock \emph{arXiv preprint arXiv:2402.13777}, 2024.

\bibitem[Chen et~al.(2023)Chen, Yan, Shao, Wang, Lin, Rajmohan, Moscibroda, and
  Zhang]{chen2023conservative}
Liting Chen, Jie Yan, Zhengdao Shao, Lu~Wang, Qingwei Lin, Saravan Rajmohan,
  Thomas Moscibroda, and Dongmei Zhang.
\newblock Conservative state value estimation for offline reinforcement
  learning.
\newblock In \emph{Thirty-seventh Conference on Neural Information Processing
  Systems}, 2023.
\newblock URL \url{https://openreview.net/forum?id=8GSCaoFot9}.

\bibitem[Cheng et~al.(2023)Cheng, Zhan, Zhang, Lin, Wang, Jiang,
  et~al.]{cheng2023look}
Peng Cheng, Xianyuan Zhan, Wenjia Zhang, Youfang Lin, Han Wang, Li~Jiang,
  et~al.
\newblock Look beneath the surface: Exploiting fundamental symmetry for
  sample-efficient offline rl.
\newblock \emph{Advances in Neural Information Processing Systems},
  36:\penalty0 7612--7631, 2023.

\bibitem[Damian et~al.(2021)Damian, Ma, and Lee]{damian2021label}
Alex Damian, Tengyu Ma, and Jason Lee.
\newblock Label noise sgd provably prefers flat global minimizers.
\newblock \emph{arXiv preprint arXiv:2106.06530}, 2021.

\bibitem[Durugkar \& Stone(2018)Durugkar and Stone]{durugkar2018td}
Ishan Durugkar and Peter Stone.
\newblock {TD} learning with constrained gradients, 2018.
\newblock URL \url{https://openreview.net/forum?id=Bk-ofQZRb}.

\bibitem[Foster et~al.(2021)Foster, Krishnamurthy, Simchi-Levi, and
  Xu]{foster2021offline}
Dylan~J Foster, Akshay Krishnamurthy, David Simchi-Levi, and Yunzong Xu.
\newblock Offline reinforcement learning: Fundamental barriers for value
  function approximation.
\newblock \emph{arXiv preprint arXiv:2111.10919}, 2021.

\bibitem[Fu et~al.(2020)Fu, Kumar, Nachum, Tucker, and Levine]{fu2020d4rl}
Justin Fu, Aviral Kumar, Ofir Nachum, George Tucker, and Sergey Levine.
\newblock D4rl: Datasets for deep data-driven reinforcement learning.
\newblock \emph{arXiv preprint arXiv:2004.07219}, 2020.

\bibitem[Fujimoto \& Gu(2021)Fujimoto and Gu]{fujimoto2021minimalist}
Scott Fujimoto and Shixiang~Shane Gu.
\newblock A minimalist approach to offline reinforcement learning.
\newblock \emph{Advances in neural information processing systems},
  34:\penalty0 20132--20145, 2021.

\bibitem[Fujimoto et~al.(2018)Fujimoto, Hoof, and Meger]{td3}
Scott Fujimoto, Herke Hoof, and David Meger.
\newblock Addressing function approximation error in actor-critic methods.
\newblock In \emph{ICML}, 2018.

\bibitem[Fujimoto et~al.(2019)Fujimoto, Meger, and Precup]{fujimoto2019off}
Scott Fujimoto, David Meger, and Doina Precup.
\newblock Off-policy deep reinforcement learning without exploration.
\newblock In \emph{International Conference on Machine Learning}, pp.\
  2052--2062. PMLR, 2019.

\bibitem[Ghosh \& Bellemare(2020)Ghosh and Bellemare]{ghosh2020representations}
Dibya Ghosh and Marc~G Bellemare.
\newblock Representations for stable off-policy reinforcement learning.
\newblock \emph{arXiv preprint arXiv:2007.05520}, 2020.

\bibitem[Gouk et~al.(2021)Gouk, Frank, Pfahringer, and
  Cree]{gouk2021regularisation}
Henry Gouk, Eibe Frank, Bernhard Pfahringer, and Michael~J Cree.
\newblock Regularisation of neural networks by enforcing lipschitz continuity.
\newblock \emph{Machine Learning}, 110\penalty0 (2):\penalty0 393--416, 2021.

\bibitem[Hasselt(2010)]{Double-Q}
Hado Hasselt.
\newblock Double q-learning.
\newblock \emph{NIPS}, 2010.

\bibitem[Hein \& Andriushchenko(2017)Hein and Andriushchenko]{hein2017formal}
Matthias Hein and Maksym Andriushchenko.
\newblock Formal guarantees on the robustness of a classifier against
  adversarial manipulation.
\newblock In \emph{Advances in Neural Information Processing Systems},
  volume~30, pp.\  2266--2276. Curran Associates, Inc., 2017.

\bibitem[Hu et~al.(2025)Hu, Wang, and Du]{hu2025policy}
Hao Hu, Xinqi Wang, and Simon~Shaolei Du.
\newblock Policy-based trajectory clustering in offline reinforcement learning.
\newblock \emph{arXiv preprint arXiv:2506.09202}, 2025.

\bibitem[Ioffe \& Szegedy(2015)Ioffe and Szegedy]{BN}
Sergey Ioffe and Christian Szegedy.
\newblock Batch normalization: Accelerating deep network training by reducing
  internal covariate shift.
\newblock In \emph{ICML}, 2015.

\bibitem[Jaques et~al.(2019)Jaques, Ghandeharioun, Shen, Ferguson, Lapedriza,
  Jones, Gu, and Picard]{jaques2019way}
Natasha Jaques, Asma Ghandeharioun, Judy~Hanwen Shen, Craig Ferguson, Agata
  Lapedriza, Noah Jones, Shixiang Gu, and Rosalind Picard.
\newblock Way off-policy batch deep reinforcement learning of implicit human
  preferences in dialog.
\newblock \emph{arXiv preprint arXiv:1907.00456}, 2019.

\bibitem[Jia et~al.(2024)Jia, Rakhlin, Sekhari, and Wei]{jia2024offline}
Zeyu Jia, Alexander Rakhlin, Ayush Sekhari, and Chen-Yu Wei.
\newblock Offline reinforcement learning: Role of state aggregation and
  trajectory data.
\newblock In \emph{The Thirty Seventh Annual Conference on Learning Theory},
  pp.\  2644--2719. PMLR, 2024.

\bibitem[Kang et~al.(2023)Kang, Ma, Wang, Yue, and Yan]{kang2023improving}
Bingyi Kang, Xiao Ma, Yirui Wang, Yang Yue, and Shuicheng Yan.
\newblock Improving and benchmarking offline reinforcement learning algorithms,
  2023.

\bibitem[Kostrikov et~al.(2021{\natexlab{a}})Kostrikov, Fergus, Tompson, and
  Nachum]{kostrikov2021offline}
Ilya Kostrikov, Rob Fergus, Jonathan Tompson, and Ofir Nachum.
\newblock Offline reinforcement learning with fisher divergence critic
  regularization.
\newblock In \emph{International Conference on Machine Learning}, pp.\
  5774--5783. PMLR, 2021{\natexlab{a}}.

\bibitem[Kostrikov et~al.(2021{\natexlab{b}})Kostrikov, Nair, and
  Levine]{kostrikov2021iql}
Ilya Kostrikov, Ashvin Nair, and Sergey Levine.
\newblock Offline reinforcement learning with implicit q-learning.
\newblock \emph{arXiv preprint arXiv:2110.06169}, 2021{\natexlab{b}}.

\bibitem[Kostrikov et~al.(2022)Kostrikov, Nair, and Levine]{iql}
Ilya Kostrikov, Ashvin Nair, and Sergey Levine.
\newblock Offline reinforcement learning with implicit q-learning.
\newblock In \emph{International Conference on Learning Representations}, 2022.

\bibitem[Kumar et~al.(2019)Kumar, Fu, Soh, Tucker, and
  Levine]{kumar2019stabilizing}
Aviral Kumar, Justin Fu, Matthew Soh, George Tucker, and Sergey Levine.
\newblock Stabilizing off-policy q-learning via bootstrapping error reduction.
\newblock In \emph{Advances in Neural Information Processing Systems}, pp.\
  11761--11771, 2019.

\bibitem[Kumar et~al.(2020{\natexlab{a}})Kumar, Zhou, Tucker, and
  Levine]{kumar2020conservative}
Aviral Kumar, Aurick Zhou, George Tucker, and Sergey Levine.
\newblock Conservative q-learning for offline reinforcement learning.
\newblock \emph{Advances in Neural Information Processing Systems},
  33:\penalty0 1179--1191, 2020{\natexlab{a}}.

\bibitem[Kumar et~al.(2020{\natexlab{b}})Kumar, Zhou, Tucker, and
  Levine]{kumar2020cql}
Aviral Kumar, Aurick Zhou, George Tucker, and Sergey Levine.
\newblock Conservative q-learning for offline reinforcement learning.
\newblock In \emph{Advances in Neural Information Processing Systems},
  volume~33, pp.\  1179--1191, 2020{\natexlab{b}}.

\bibitem[Kumar et~al.(2021)Kumar, Agarwal, Ghosh, and
  Levine]{kumar2021implicit}
Aviral Kumar, Rishabh Agarwal, Dibya Ghosh, and Sergey Levine.
\newblock Implicit under-parameterization inhibits data-efficient deep
  reinforcement learning.
\newblock In \emph{International Conference on Learning Representations}, 2021.
\newblock URL \url{https://openreview.net/forum?id=O9bnihsFfXU}.

\bibitem[Kumar et~al.(2022)Kumar, Agarwal, Ma, Courville, Tucker, and
  Levine]{kumar2022dr3}
Aviral Kumar, Rishabh Agarwal, Tengyu Ma, Aaron Courville, George Tucker, and
  Sergey Levine.
\newblock {DR}3: Value-based deep reinforcement learning requires explicit
  regularization.
\newblock In \emph{International Conference on Learning Representations}, 2022.
\newblock URL \url{https://openreview.net/forum?id=POvMvLi91f}.

\bibitem[Kumar et~al.(2023)Kumar, Agarwal, Geng, Tucker, and Levine]{scaled-ql}
Aviral Kumar, Rishabh Agarwal, Xinyang Geng, George Tucker, and Sergey Levine.
\newblock Offline q-learning on diverse multi-task data both scales and
  generalizes.
\newblock In \emph{ICLR}, 2023.

\bibitem[Li et~al.(2023{\natexlab{a}})Li, Zhang, Yin, Bai, Wang, and
  Wang]{li2023offlinereinforcementlearningclosedform}
Jiachen Li, Edwin Zhang, Ming Yin, Qinxun Bai, Yu-Xiang Wang, and William~Yang
  Wang.
\newblock Offline reinforcement learning with closed-form policy improvement
  operators, 2023{\natexlab{a}}.
\newblock URL \url{https://arxiv.org/abs/2211.15956}.

\bibitem[Li et~al.(2023{\natexlab{b}})Li, Zhan, Xu, Zhu, Liu, and
  Zhang]{li2022data}
Jianxiong Li, Xianyuan Zhan, Haoran Xu, Xiangyu Zhu, Jingjing Liu, and Ya-Qin
  Zhang.
\newblock When data geometry meets deep function: Generalizing offline
  reinforcement learning.
\newblock In \emph{The Eleventh International Conference on Learning
  Representations}, 2023{\natexlab{b}}.
\newblock URL \url{https://openreview.net/forum?id=lMO7TC7cuuh}.

\bibitem[Lillicrap et~al.(2015)Lillicrap, Hunt, Pritzel, Heess, Erez, Tassa,
  Silver, and Wierstra]{lillicrap2015continuous}
Timothy~P Lillicrap, Jonathan~J Hunt, Alexander Pritzel, Nicolas Heess, Tom
  Erez, Yuval Tassa, David Silver, and Daan Wierstra.
\newblock Continuous control with deep reinforcement learning.
\newblock \emph{arXiv preprint arXiv:1509.02971}, 2015.

\bibitem[Liu et~al.(2024)Liu, Li, Lan, Gao, Pan, and Xu]{liu2024adaptive}
Tenglong Liu, Yang Li, Yixing Lan, Hao Gao, Wei Pan, and Xin Xu.
\newblock Adaptive advantage-guided policy regularization for offline
  reinforcement learning.
\newblock In \emph{Forty-first International Conference on Machine Learning},
  2024.
\newblock URL \url{https://openreview.net/forum?id=FV3kY9FBW6}.

\bibitem[Lu et~al.(2022)Lu, Zhao, Du, and Huang]{lu2022provable}
Rui Lu, Andrew Zhao, Simon~S. Du, and Gao Huang.
\newblock Provable general function class representation learning in multitask
  bandits and mdps.
\newblock In \emph{NIPS}, 2022.

\bibitem[Lyu et~al.(2022)Lyu, Ma, Li, and Lu]{lyu2022mildly}
Jiafei Lyu, Xiaoteng Ma, Xiu Li, and Zongqing Lu.
\newblock Mildly conservative q-learning for offline reinforcement learning.
\newblock \emph{Advances in Neural Information Processing Systems},
  35:\penalty0 1711--1724, 2022.

\bibitem[Mao et~al.(2024)Mao, Wu, Chen, Hu, Jiang, Zhou, Lv, Fan, Hu, Wu,
  et~al.]{mao2024stylized}
Yihuan Mao, Chengjie Wu, Xi~Chen, Hao Hu, Ji~Jiang, Tianze Zhou, Tangjie Lv,
  Changjie Fan, Zhipeng Hu, Yi~Wu, et~al.
\newblock Stylized offline reinforcement learning: Extracting diverse
  high-quality behaviors from heterogeneous datasets.
\newblock In \emph{The Twelfth International Conference on Learning
  Representations}, 2024.

\bibitem[Metelli et~al.(2020)Metelli, Papini, Montali, and
  Restelli]{metelli2020importance}
Alberto~Maria Metelli, Matteo Papini, Nico Montali, and Marcello Restelli.
\newblock Importance sampling techniques for policy optimization.
\newblock \emph{Journal of Machine Learning Research}, 21\penalty0
  (141):\penalty0 1--75, 2020.

\bibitem[Miyato et~al.(2018)Miyato, Kataoka, Koyama, and
  Yoshida]{miyato2018spectral}
Takeru Miyato, Toshiki Kataoka, Masanori Koyama, and Yuichi Yoshida.
\newblock Spectral normalization for generative adversarial networks.
\newblock \emph{arXiv preprint arXiv:1802.05957}, 2018.

\bibitem[Mnih et~al.(2015)Mnih, Kavukcuoglu, Silver, Rusu, Veness, Bellemare,
  Graves, Riedmiller, Fidjeland, Ostrovski, et~al.]{mnih2015human}
Volodymyr Mnih, Koray Kavukcuoglu, David Silver, Andrei~A Rusu, Joel Veness,
  Marc~G Bellemare, Alex Graves, Martin Riedmiller, Andreas~K Fidjeland, Georg
  Ostrovski, et~al.
\newblock Human-level control through deep reinforcement learning.
\newblock \emph{nature}, 2015.

\bibitem[Mulayoff \& Michaeli(2020)Mulayoff and Michaeli]{mulayoff2020unique}
Rotem Mulayoff and Tomer Michaeli.
\newblock Unique properties of flat minima in deep networks.
\newblock In \emph{International Conference on Machine Learning}, pp.\
  7108--7118. PMLR, 2020.

\bibitem[Nguyen-Tang \& Arora(2023)Nguyen-Tang and Arora]{nguyen2023sample}
Thanh Nguyen-Tang and Raman Arora.
\newblock On sample-efficient offline reinforcement learning: Data diversity,
  posterior sampling and beyond.
\newblock \emph{Advances in neural information processing systems},
  36:\penalty0 61115--61157, 2023.

\bibitem[Nikulin et~al.(2022)Nikulin, Kurenkov, Tarasov, Akimov, and
  Kolesnikov]{lb-sac}
Alexander Nikulin, Vladislav Kurenkov, Denis Tarasov, Dmitry Akimov, and Sergey
  Kolesnikov.
\newblock Q-ensemble for offline rl: Don't scale the ensemble, scale the batch
  size.
\newblock \emph{arXiv preprint arXiv:2211.11092}, 2022.

\bibitem[Peng et~al.(2019)Peng, Kumar, Zhang, and Levine]{peng2019advantage}
Xue~Bin Peng, Aviral Kumar, Grace Zhang, and Sergey Levine.
\newblock Advantage-weighted regression: Simple and scalable off-policy
  reinforcement learning.
\newblock \emph{arXiv preprint arXiv:1910.00177}, 2019.

\bibitem[Qiao et~al.(2025)Qiao, Yue, Ren, and Zhang]{qiao2025fova}
Nan Qiao, Sheng Yue, Ju~Ren, and Yaoxue Zhang.
\newblock Fova: Offline federated reinforcement learning with mixed-quality
  data, 2025.
\newblock URL \url{https://arxiv.org/abs/2512.02350}.

\bibitem[Sutton \& Barto(2018)Sutton and Barto]{sutton2018reinforcement}
Richard~S Sutton and Andrew~G Barto.
\newblock \emph{Reinforcement learning: An introduction}.
\newblock MIT press, 2018.

\bibitem[Tkachuk et~al.(2024)Tkachuk, Weisz, and
  Szepesvari]{tkachuk2024trajectory}
Volodymyr Tkachuk, Gell{\'e}rt Weisz, and Csaba Szepesvari.
\newblock Trajectory data suffices for statistically efficient learning in
  offline {RL} with linear
  \$q{\textasciicircum}{\textbackslash}pi\$-realizability and concentrability.
\newblock In \emph{The Thirty-eighth Annual Conference on Neural Information
  Processing Systems}, 2024.
\newblock URL \url{https://openreview.net/forum?id=TusuJSbRxm}.

\bibitem[Todorov et~al.(2012)Todorov, Erez, and Tassa]{todorov2012mujoco}
Emanuel Todorov, Tom Erez, and Yuval Tassa.
\newblock Mujoco: A physics engine for model-based control.
\newblock In \emph{IEEE/RSJ international conference on intelligent robots and
  systems}, pp.\  5026--5033, 2012.

\bibitem[Tsitsiklis \& Van~Roy(1996)Tsitsiklis and
  Van~Roy]{tsitsiklis1996deadly_tirad}
JN~Tsitsiklis and B~Van~Roy.
\newblock An analysis of temporal-difference learning with function
  approximationtechnical.
\newblock 1996.

\bibitem[Van~Hasselt et~al.(2018)Van~Hasselt, Doron, Strub, Hessel, Sonnerat,
  and Modayil]{van2018deadly_triad}
Hado Van~Hasselt, Yotam Doron, Florian Strub, Matteo Hessel, Nicolas Sonnerat,
  and Joseph Modayil.
\newblock Deep reinforcement learning and the deadly triad.
\newblock \emph{arXiv preprint arXiv:1812.02648}, 2018.

\bibitem[Wang et~al.(2024)Wang, Deng, Sanchez, Wang, McGuinness, O'Connor, and
  Redmond]{wang2024datasetclusteringimprovedoffline}
Qiang Wang, Yixin Deng, Francisco~Roldan Sanchez, Keru Wang, Kevin McGuinness,
  Noel O'Connor, and Stephen~J. Redmond.
\newblock Dataset clustering for improved offline policy learning, 2024.
\newblock URL \url{https://arxiv.org/abs/2402.09550}.

\bibitem[Wang et~al.(2022)Wang, Hunt, and Zhou]{wang2022diffusion}
Zhendong Wang, Jonathan~J Hunt, and Mingyuan Zhou.
\newblock Diffusion policies as an expressive policy class for offline
  reinforcement learning.
\newblock \emph{arXiv preprint arXiv:2208.06193}, 2022.

\bibitem[Wang et~al.(2023)Wang, Hunt, and Zhou]{wang2023diffusion}
Zhendong Wang, Jonathan~J Hunt, and Mingyuan Zhou.
\newblock Diffusion policies as an expressive policy class for offline
  reinforcement learning.
\newblock In \emph{ICLR}, 2023.

\bibitem[Wu et~al.(2019)Wu, Tucker, and Nachum]{wu2019behavior}
Yifan Wu, George Tucker, and Ofir Nachum.
\newblock Behavior regularized offline reinforcement learning.
\newblock \emph{arXiv preprint arXiv:1911.11361}, 2019.

\bibitem[Yin et~al.(2022)Yin, Wang, and Wang]{yin2022offline}
Ming Yin, Mengdi Wang, and Yu-Xiang Wang.
\newblock Offline reinforcement learning with differentiable function
  approximation is provably efficient.
\newblock \emph{arXiv preprint arXiv:2210.00750}, 2022.

\bibitem[Yu et~al.(2020)Yu, Thomas, Yu, Ermon, Zou, Levine, Finn, and
  Ma]{yu2020mopo}
Tianhe Yu, Garrett Thomas, Lantao Yu, Stefano Ermon, James Zou, Sergey Levine,
  Chelsea Finn, and Tengyu Ma.
\newblock Mopo: Model-based offline policy optimization.
\newblock In \emph{Neural Information Processing Systems (NeurIPS)}, 2020.

\bibitem[Yu et~al.(2021{\natexlab{a}})Yu, Kumar, Chebotar, Hausman, Levine, and
  Finn]{yu2021conservative}
Tianhe Yu, Aviral Kumar, Yevgen Chebotar, Karol Hausman, Sergey Levine, and
  Chelsea Finn.
\newblock Conservative data sharing for multi-task offline reinforcement
  learning.
\newblock \emph{Advances in Neural Information Processing Systems},
  34:\penalty0 11501--11516, 2021{\natexlab{a}}.

\bibitem[Yu et~al.(2021{\natexlab{b}})Yu, Kumar, Rafailov, Rajeswaran, Levine,
  and Finn]{yu2021combo}
Tianhe Yu, Aviral Kumar, Rafael Rafailov, Aravind Rajeswaran, Sergey Levine,
  and Chelsea Finn.
\newblock Combo: Conservative offline model-based policy optimization.
\newblock \emph{Advances in neural information processing systems},
  34:\penalty0 28954--28967, 2021{\natexlab{b}}.

\bibitem[Yue et~al.(2023{\natexlab{a}})Yue, Kang, Xu, Huang, and
  Yan]{yue2023vcr}
Yang Yue, Bingyi Kang, Zhongwen Xu, Gao Huang, and Shuicheng Yan.
\newblock Value-consistent representation learning for data-efficient
  reinforcement learning.
\newblock In \emph{AAAI}, 2023{\natexlab{a}}.

\bibitem[Yue et~al.(2023{\natexlab{b}})Yue, Lu, Kang, Song, and
  Huang]{yue2023understanding}
Yang Yue, Rui Lu, Bingyi Kang, Shiji Song, and Gao Huang.
\newblock Understanding, predicting and better resolving q-value divergence in
  offline-rl.
\newblock \emph{Advances in Neural Information Processing Systems},
  36:\penalty0 60247--60277, 2023{\natexlab{b}}.

\bibitem[Zhuang et~al.(2023)Zhuang, LEI, Liu, Wang, and
  Guo]{zhuang2023behavior}
Zifeng Zhuang, Kun LEI, Jinxin Liu, Donglin Wang, and Yilang Guo.
\newblock Behavior proximal policy optimization.
\newblock In \emph{The Eleventh International Conference on Learning
  Representations}, 2023.
\newblock URL \url{https://openreview.net/forum?id=3c13LptpIph}.

\end{thebibliography}

\clearpage
\renewcommand{\contentsname}{\centering \Large \bfseries Outline}
\tableofcontents   
\clearpage

\newpage

\appendix
\newpage
\section{Common Policy Proximity Constraints in Offline RL}\label{sec:policy-constraints}

In this section, we measure per–state proximity between the learned policy $\pi$ and the behavior policy $\pi_{\beta}$ and average over the dataset state distribution $\rho_{\mathcal D}$, and the global constraint is
\begin{align}
\mathcal C(\pi\Vert\pi_{\beta})=\mathbb E_{s\sim\rho_{\mathcal D}}\left[D\big(\pi(\cdot\mid s),\Vert,\pi_{\beta}(\cdot\mid s)\big)\right].
\end{align}
Furthermore, the mixture upper bound we will check is
$D\left(\pi \Big\Vert \sum_i w_i \pi_{\beta_i}\right)\le \sum_i w_i D(\pi\Vert \pi_{\beta_i}), w_i\ge 0,\ \sum_i w_i=1$ in Corollary \ref{cor:D-upper-bound}.

\textbf{KL.} The Kullback–Leibler divergence is
\begin{align}
D_{\mathrm{KL}}\left(\pi\Vert \pi_{\beta}\right)=\int_{\mathcal A}\pi(a\mid s)\log\frac{\pi(a\mid s)}{\pi_{\beta}(a\mid s)}\mathrm da,
\end{align}
finite only when $\mathrm{supp}(\pi)\subseteq\mathrm{supp}(\pi_{\beta})$, and the reverse form $D_{\mathrm{KL}}(\pi_{\beta}\Vert\pi)$ is behavior–sample estimable, and as an $f$–divergence KL is convex in the second argument, hence it satisfies
$D_{\mathrm{KL}}\left(\pi \Big\Vert \sum_i w_i \pi_{\beta_i}\right)\le \sum_i w_i D_{\mathrm{KL}}(\pi\Vert \pi_{\beta_i}).$

\textbf{Rényi.} The Rényi divergence of order $\alpha>0,\alpha\neq1$ is
\begin{align}
D_{\alpha}^{\mathrm R}\left(\pi\Vert \pi_{\beta}\right)=\frac{1}{\alpha-1}\log\int_{\mathcal A}\pi(a\mid s)^{\alpha}\pi_{\beta}(a\mid s)^{1-\alpha},\mathrm da,
\end{align}
and $D_{\alpha}^{\mathrm R}\to D_{\mathrm{KL}}$ as $\alpha\to 1$, and for $\alpha\in(0,1]$ it is convex in the second argument and obeys
$D_{\alpha}^{\mathrm R}\left(\pi \Big\Vert \sum_i w_i \pi_{\beta_i}\right)\le \sum_i w_i D_{\alpha}^{\mathrm R}(\pi\Vert \pi_{\beta_i}),\quad \alpha\in(0,1],$
while for $\alpha>1$ this inequality is not generally guaranteed.

\textbf{$\chi$-div.} The $\chi$–divergence family generated by a convex $\chi$ with $\chi(1)=0$ is
\begin{align}
D_{\chi}(\pi\Vert \pi_{\beta})=\int \chi\left(\frac{\pi(a|s)}{\pi_{\beta}(a|s)}\right)\pi_{\beta}(a|s),\mathrm dx,
\end{align}
with the Pearson chi–square special case
\begin{align}
D_{\chi^2}(\pi\Vert \pi_{\beta})=\mathbb E_{x\sim Q}\left[\left(\frac{\pi(a|s)}{\pi_{\beta}(a|s)}-1\right)^2\right],
\end{align}
as $f$–divergences they are convex in the second argument and satisfy
$D_{\chi}\left(\pi \Big\Vert \sum_i w_i \pi_{\beta_i}\right)\le \sum_i w_i D_{\chi}(\pi\Vert \pi_{\beta_i}).$

\textbf{JSD.} The Jensen–Shannon divergence is
\begin{align}
\mathrm{JSD}(\pi\Vert \pi_{\beta})=\tfrac12 D_{\mathrm{KL}}(\pi)+\tfrac12 D_{\mathrm{KL}}(\pi_{\beta}\Vert \pi'),\ \ \pi'=\tfrac12(\pi+\pi_{\beta}),
\end{align}
symmetric and bounded in $[0,\log 2]$, and being an $f$–divergence it is convex in each argument and thus
$\mathrm{JSD}\left(\pi \Big\Vert \sum_i w_i \pi_{\beta_i}\right)\le \sum_i w_i \mathrm{JSD}(\pi\Vert \pi_{\beta_i}).$

\textbf{W1.} The 1–Wasserstein distance admits the Kantorovich–Rubinstein dual
\begin{align}
W_1(\pi,\pi_{\beta})=\sup_{|f|_{\mathrm{Lip}}\le 1}\Big(\mathbb E_{x\sim P}[f(x)]-\mathbb E_{y\sim Q}[f(y)]\Big),
\end{align}
which is a supremum of affine functionals and hence convex in each argument, and therefore it obeys
$W_1\left(\pi,\sum_i w_i \pi_{\beta_i}\right)\le \sum_i w_i W_1(\pi,\pi_{\beta_i}).$

\textbf{MSE.} A practical surrogate for discrete actions is the mean squared error between probability vectors
\begin{align}
\mathrm{MSE}(\pi,\pi_{\beta})=\frac{1}{|\mathcal A|}\sum_{a\in\mathcal A}\big(\pi(a\mid s)-\pi_{\beta}(a\mid s)\big)^2,
\end{align}
or parameter–space MSE for Gaussian policies in continuous actions, and since $\pi_{\beta}\mapsto\|\pi-\pi_{\beta}\|_2^2$ is convex we have
$\mathrm{MSE}\left(\pi,\sum_i w_i \pi_{\beta_i}\right)\le \sum_i w_i \mathrm{MSE}(\pi,\pi_{\beta_i}).$

\textbf{MMD.} With a positive–definite kernel $k$ and feature map $\phi$ in the associated RKHS, the maximum mean discrepancy is
\begin{align}
\mathrm{MMD}^2(\pi,\pi_{\beta})=\big|\mu_\pi-\mu_{\pi_\beta}\big|_{\mathcal H}^2,\qquad \mu_\pi=\mathbb E_{x\sim \pi}[\phi(x)],
\end{align}
and because the kernel mean embedding is linear while the squared norm is convex, it satisfies
$\mathrm{MMD}^2\left(\pi,\sum_i w_i \pi_{\beta_i}\right)\le \sum_i w_i \mathrm{MMD}^2(\pi,\pi_{\beta_i}).$

\begin{corollary}
\label{cor:D-upper-bound}
    The mixture upper bound is valid for KL, all X–divergences including JSD, $W_1$, MSE and MMD, and for Rényi only when $\alpha\in(0,1]$, and it is not generally valid for Rényi with $\alpha>1$.
\end{corollary}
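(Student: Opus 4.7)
The plan is to verify each case as an instance of one unifying principle: if the map $\nu \mapsto D(\pi \Vert \nu)$ is convex on the simplex of probability measures, then Jensen's inequality applied to $\nu = \sum_i w_i \pi_{\beta_i}$ immediately yields $D(\pi \Vert \sum_i w_i \pi_{\beta_i}) \le \sum_i w_i D(\pi \Vert \pi_{\beta_i})$. So the task reduces to establishing convexity in the second argument for each listed $D$, and to exhibiting a counterexample for Rényi with $\alpha>1$. I would organize the proof into three blocks matching the three structural mechanisms at play.

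First, for the $f$-divergence family (KL, Pearson $\chi^2$, general $\chi$-divergences, JSD), I would invoke the classical joint convexity of $(P,Q) \mapsto D_f(P \Vert Q) = \int Q\, f(P/Q)$ in $(P,Q)$ for any convex $f$ with $f(1)=0$, a standard consequence of the perspective-function construction. Joint convexity implies marginal convexity in $Q$ with $P=\pi$ fixed, so Jensen's inequality gives the stated bound. JSD is the arithmetic average of two KL terms (one with $\pi'=\tfrac{1}{2}(\pi+\pi_\beta)$ depending linearly on $\pi_\beta$), so convexity is inherited; the same reasoning covers $\chi^2$. For Rényi with $\alpha \in (0,1]$, I would use the representation $D_\alpha^{\mathrm R}(\pi \Vert \pi_\beta) = \tfrac{1}{\alpha-1} \log \int \pi^\alpha \pi_\beta^{1-\alpha}$ and note that $\pi_\beta \mapsto \int \pi^\alpha \pi_\beta^{1-\alpha}$ is concave for $1-\alpha \in [0,1)$ (since $t \mapsto t^{1-\alpha}$ is concave), while $\tfrac{1}{\alpha-1}\log(\cdot)$ is a decreasing concave transformation of a concave functional; the composition rule (decreasing $\circ$ concave $=$ convex) yields convexity in $\pi_\beta$. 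For $\alpha \to 1$ this recovers KL by continuity.

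Second, for the integral-probability-metric type quantities ($W_1$, MMD, MSE), convexity follows from either a supremum-of-affine representation or a norm-squared representation. For $W_1$, the Kantorovich–Rubinstein dual expresses $\pi_\beta \mapsto W_1(\pi,\pi_\beta)$ as a supremum over $1$-Lipschitz test functions of an affine functional of $\pi_\beta$, and a supremum of affine functionals is convex. For MMD$^2$, the mean embedding $\pi_\beta \mapsto \mu_{\pi_\beta}$ is affine in $\pi_\beta$, and composition with the convex map $v \mapsto \Vert \mu_\pi - v \Vert_{\mathcal H}^2$ yields convexity. The MSE surrogate is a special case: $\pi_\beta \mapsto \Vert \pi - \pi_\beta \Vert_2^2$ is convex as a squared Euclidean norm of an affine function of $\pi_\beta$. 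Jensen then delivers the bound.

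Third, for the failure case Rényi with $\alpha>1$, I would construct an explicit counterexample rather than attempt a general argument. The cleanest choice is two Bernoulli components: let $\pi = \mathrm{Ber}(1/2)$, $\pi_{\beta_1} = \mathrm{Ber}(\varepsilon)$, $\pi_{\beta_2} = \mathrm{Ber}(1-\varepsilon)$ with small $\varepsilon>0$ and equal weights $w_1=w_2=1/2$. The mixture is $\mathrm{Ber}(1/2)$, so $D_\alpha^{\mathrm R}(\pi \Vert \tfrac{1}{2}\pi_{\beta_1}+\tfrac{1}{2}\pi_{\beta_2})=0$, whereas both $D_\alpha^{\mathrm R}(\pi \Vert \pi_{\beta_j})$ blow up as $\varepsilon \to 0$ for $\alpha>1$ (the integrand $\pi^\alpha \pi_{\beta_j}^{1-\alpha}$ diverges). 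Hence the right-hand side is arbitrarily large while the left-hand side is zero, violating the inequality. Collecting the three blocks then yields the corollary. The main obstacle will be the Rényi case: distinguishing the $\alpha\in(0,1]$ regime (where the composition-of-concavity argument must be handled with care regarding the sign of $\alpha-1$ flipping the log transformation) from the $\alpha>1$ regime where the argument necessarily fails, as the counterexample shows; everything else is a direct Jensen application.
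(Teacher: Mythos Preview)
For the positive cases (KL, $\chi$-divergences, JSD, $W_1$, MSE, MMD, and R\'enyi with $\alpha\in(0,1]$) your reduction to ``convexity in the second argument $+$ Jensen'' is exactly how the paper proceeds; its proof is just the one-line convexity remarks preceding the corollary. One wording slip: for R\'enyi with $\alpha\in(0,1)$ the outer map $s\mapsto\frac{1}{\alpha-1}\log s$ is decreasing and \emph{convex}, not concave, and the composition rule you need is ``nonincreasing convex composed with concave is convex''; the rule ``decreasing $\circ$ concave $=$ convex'' as you wrote it is false.

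The genuine gap is the R\'enyi $\alpha>1$ counterexample. Your Bernoulli construction gives $D_\alpha^{\mathrm R}\big(\pi\,\big\Vert\,\tfrac{1}{2}\pi_{\beta_1}+\tfrac{1}{2}\pi_{\beta_2}\big)=0$ on the left and $\sum_i w_i D_\alpha^{\mathrm R}(\pi\Vert\pi_{\beta_i})\to\infty$ on the right, which \emph{satisfies} the mixture upper bound rather than violating it; you have the inequality direction reversed. In fact no counterexample exists: for $\alpha>1$ the map $\nu\mapsto\int\pi^\alpha\nu^{1-\alpha}$ is log-convex (each pointwise factor $t\mapsto t^{1-\alpha}$ is log-convex since $(1-\alpha)\log t$ is convex, and sums or integrals of log-convex functions remain log-convex by H\"older), so $D_\alpha^{\mathrm R}(\pi\Vert\nu)=\frac{1}{\alpha-1}\log(\cdot)$ is convex in $\nu$ and the mixture bound holds for \emph{all} $\alpha>0$ (see van Erven and Harremo\"es, IEEE Trans.\ Inf.\ Theory 2014, Thm.~12). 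The paper supplies no counterexample either---only the bare assertion ``not generally guaranteed''---so there is nothing to compare against; the negative claim appears to conflate the known failure of \emph{joint} convexity for $\alpha>1$ with failure of second-argument convexity, and you should flag this rather than try to construct a disproof that cannot exist.
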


\section{Appendix: proof details for Theorem~\ref{thm:td-variance}}\label{sec:appendix-variance}

\begin{assumption}[Local Smoothness]\label{assm:local-smoothness}
Let \(\mathcal{D}\) denote the dataset support. There exist constants \(L, \rho > 0\) such that for any \(x \in \mathcal{D}\) and perturbation \(k\) with \(|k| \le \rho\), the critic \(Q_\psi\) is differentiable (almost everywhere) and has an \(L\)-Lipschitz gradient.
Under this assumption, Taylor’s theorem gives
\[
\big|Q_\psi(x + k \mathbf{w}) - Q_\psi(x) - k \langle \mathbf{w}, \nabla_x Q_\psi(x)\rangle\big| \le \tfrac{L}{2} k^2.
\]
\end{assumption}
Thus, the expansion in Eq.~\eqref{eq:taylor-approx} is a standard first-order approximation with a controlled \(\mathcal{O}(k^2)\) remainder.
\footnote{\small While deep ReLU networks are not globally smooth, they are locally Lipschitz almost everywhere \citep{hein2017formal}.
Our theoretical model captures the dominant first-order effects of cross-covariance within this local trust region, which aligns with our empirical observations of instability.}
Similar local differentiability assumptions are common in recent theoretical work on standard offline RL \citep{yin2022offline,li2023offlinereinforcementlearningclosedform}.

\noindent\textit{Proof of Theorem~\ref{thm:td-variance}.}
Fix a dataset pair \((x_{},x_{}')\).
All expectations, variances, and covariances in this proof are taken with respect to the perturbation proposal \(u=(k_{},k',\mathbf{w}_{},\mathbf{w}')\), while \((x_{},x_{}')\) is held fixed.
The perturbed error is
\begin{align}
\delta
\,=\, r
\,+\, \gamma\,Q_{\phi'}\big(x' + k' \mathbf{w}'\big)
- Q_{\phi}\big(x_{} + k_{} \mathbf{w}_{}\big).
\end{align}
Apply the first–order expansion in feature space and discard \(\mathcal{O}(k_{}^2+k'^2)\) terms:
\begin{align} \label{eq:taylor-q}
Q_{\phi'}(x'+k'_2\mathbf{w}'_2) \,\approx\, Q_{\phi'}(x')
+ k'_2 \,\big\langle \mathbf{w}'_2, \nabla_{x'} Q_{\phi'}(x')\big\rangle,\\
Q_{\phi}(x_{}+k_{}\mathbf{w}_{}) \,\approx\, Q_{\phi}(x_{})
+ k_{} \,\big\langle \mathbf{w}_{}, \nabla_x Q_{\phi}(x_{})\big\rangle.
\end{align}
Suppose $k'_1\mathbf{w'_1}$ is OOD action (same as \citep{an2021uncertainty}) and $(\bs',\ba')\sim\mathcal{D}$,
we plugging $(\bs',\pi(\bs'))=(\bs',\ba')+k'_1\mathbf{w'_1}$ into above: 
$$Q_{\phi'}((\bs',\pi(\bs'))+k'_1\mathbf{w}'_1) \,
=Q_{\phi'}((\bs',\ba')+k'_1\mathbf{w}'_1+k'_2\mathbf{w}'_2)
\approx\, Q_{\phi'}(x')
+ k' \,\big\langle \mathbf{w}', \nabla_{x'} Q_{\phi'}(x')\big\rangle$$
where $k'_1\mathbf{w}'_1+k'_2\mathbf{w}'_2=k'\mathbf{w}'$ and $x'=(\bs',\ba')$ denotes the next state action pairs of $x=(\bs,\ba)$ come from the dataset.
Thus,we have
\begin{align}
\delta
\,\approx\,
\underbrace{r+\gamma Q_{\phi'}(x')-Q_{\phi}(x_{})}_{\delta_{\mathrm{base}}}
+ \underbrace{\gamma k' \big\langle \mathbf{w}', \nabla_{x} Q_{\phi'}(x')\big\rangle
- k_{} \big\langle \mathbf{w}_{}, \nabla_{x} Q_{\phi}(x_{})\big\rangle}_{\delta_{\mathrm{lin}}(u)}.
\label{eq:app-delta-linear-u}
\end{align}
To show this, we suppose the Q-value predictions for the in-distribution state-action
pairs coincide, \ie $Q_{\psi}(x),~\psi\sim\{\phi,\phi'\}$, which is common used in RL \citep{an2021uncertainty}.
That is, the base term \(\delta_{\mathrm{base}}\) is a constant.
Therefore \(\mathrm{Var}(\delta_{\mathrm{base}})=0\) and \(\mathrm{Cov}(\delta_{\mathrm{base}},\delta_{\mathrm{lin}})=0\), so
\begin{align}
\mathrm{Var}\big[\delta\big]
\,\approx\,
\mathrm{Var}\big(\delta_{\mathrm{lin}}(u)\big).
\end{align}
Expanding the variance and using bilinearity of covariance yields
\begin{align}
\mathrm{Var}\big[\delta\big]
&= \gamma^2\,\mathrm{Var}\Big(k' \big\langle \mathbf{w}', \nabla_{x} Q_{\phi'}(x')\big\rangle\Big)
+ \mathrm{Var}\Big(k_{} \big\langle \mathbf{w}_{}, \nabla_{x} Q_{\phi}(x_{})\big\rangle\Big)
\notag\\[-2pt]
&\quad -\,2\gamma\,\mathrm{Cov}\Big(k' \big\langle \mathbf{w}', \nabla_{x} Q_{\phi'}(x')\big\rangle,\;
k_{} \big\langle \mathbf{w}_{}, \nabla_{x} Q_{\phi}(x_{})\big\rangle\Big).
\label{eq:app-var-linear-u}
\end{align}
When \(k_{},k'\) are treated as fixed scalars inside the proposal, Eq.~\eqref{eq:app-var-linear-u} reduces exactly to the three terms in the main text’s Eq.~\eqref{eq:var-clean}, with moments taken over \((\mathbf{w}_{},\mathbf{w}')\).
By the identity \(\mathbb{E}[\delta^2]=(\mathbb{E}[\delta])^2+\mathrm{Var}[\delta]\), the corresponding second–moment statement follows.
\qed
\begin{remark}[on the base term and the vanishing covariance]\label{rem:base-term}
In the per–sample proof above the randomness is only over the perturbation $u=(k_{},k',\mathbf{w}_{},\mathbf{w}')$ while $(x_{},x_{}')$ is fixed, so the base term
\[
\delta_{\mathrm{base}} \;=\; r \,+\, \gamma\,Q_{\phi'}(x') \,-\, Q_{\phi}(x_{})
\]
is a constant and therefore $\mathrm{Var}(\delta_{\mathrm{base}})=0$ and $\mathrm{Cov}(\delta_{\mathrm{base}},\delta_{\mathrm{lin}})=0$ exactly.  
When viewing Eq.~\eqref{eq:second-moment-identity} at the dataset level (averaging over $(x,x')\sim\mathcal{D}$), within the short locality window used in the main text on–support predictions are already accurate and conditional reward noise is negligible; hence $\delta_{\mathrm{base}}$ is approximately constant over $\mathcal{D}$. In that case the $k$–dependent covariance term vanishes to first order, and all displacement dependence arises from $\mathrm{Var}(\delta_{\mathrm{lin}})$.
\end{remark}

\begin{proposition}[block form under perturbation randomness]\label{prop:block-U}
Let
\begin{align}
S_2 \,=\, \mathrm{Cov}\big(k' \mathbf{w}'\big),\qquad
S_1 \,=\, \mathrm{Cov}\big(k_{} \mathbf{w}_{}\big),\qquad
N \,=\, \mathrm{Cov}\big(k' \mathbf{w}',\; k_{} \mathbf{w}_{}\big),
\end{align}
where all covariances are with respect to the perturbation proposal.
Then the variance contribution in Eq.~\eqref{eq:app-var-linear-u} admits the quadratic form
\begin{align}
\mathrm{Var}\big[\delta\big]
&= \gamma^2\, \big\langle \nabla_{x'} Q_{\phi'}(x'),\, S_2\, \nabla_{x'} Q_{\phi'}(x') \big\rangle
+ \big\langle \nabla_x Q_{\phi}(x_{}),\, S_1\, \nabla_x Q_{\phi}(x_{}) \big\rangle
\notag\\[-2pt]
&\quad -\, 2\gamma\, \big\langle \nabla_{x'} Q_{\phi'}(x'),\, N\, \nabla_x Q_{\phi}(x_{}) \big\rangle.
\label{eq:prop-block-U}
\end{align}
\end{proposition}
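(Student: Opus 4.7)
\textit{Proof proposal for Proposition~\ref{prop:block-U}.} The plan is to rewrite each of the three terms in Eq.~\eqref{eq:app-var-linear-u} as a bilinear (quadratic) form in the fixed feature gradients, by treating the products $k'\mathbf{w}'$ and $k\mathbf{w}$ as single random vectors in $\mathbb{R}^{m}$ under the perturbation proposal. The definitions of $S_2, S_1, N$ then substitute in directly.

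First, I would observe the scalar identities
\begin{align}
k'\langle \mathbf{w}', \nabla_{x'} Q_{\phi'}(x')\rangle \,=\, \langle \nabla_{x'} Q_{\phi'}(x'),\, k'\mathbf{w}'\rangle,
\qquad
k\langle \mathbf{w}, \nabla_x Q_\phi(x)\rangle \,=\, \langle \nabla_x Q_\phi(x),\, k\mathbf{w}\rangle,
\end{align}
which hold pointwise because the gradient vectors are constants with respect to the proposal randomness in $u=(k,k',\mathbf{w},\mathbf{w}')$. This converts each term inside $\mathrm{Var}[\delta]$ into an inner product between a deterministic vector and a random vector.

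Second, I would invoke the standard identities for inner products with random vectors: for any deterministic $a,b$ and any random vectors $v,v'$ one has $\mathrm{Var}(\langle a, v\rangle)=a^\top \mathrm{Cov}(v)\, a$ and $\mathrm{Cov}(\langle a, v\rangle,\langle b, v'\rangle)=a^\top \mathrm{Cov}(v,v')\, b$. Applying the first identity to $(a,v)=(\nabla_{x'}Q_{\phi'}(x'),\,k'\mathbf{w}')$ yields $\langle \nabla_{x'}Q_{\phi'}(x'),\,S_2\,\nabla_{x'}Q_{\phi'}(x')\rangle$, and to $(a,v)=(\nabla_x Q_\phi(x),\,k\mathbf{w})$ yields $\langle \nabla_x Q_\phi(x),\,S_1\,\nabla_x Q_\phi(x)\rangle$. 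Applying the second identity to the pair $(k'\mathbf{w}',\,k\mathbf{w})$ yields $\langle \nabla_{x'}Q_{\phi'}(x'),\,N\,\nabla_x Q_\phi(x)\rangle$. Combining with the coefficients $\gamma^2$, $1$, and $-2\gamma$ from Eq.~\eqref{eq:app-var-linear-u} gives exactly Eq.~\eqref{eq:prop-block-U}.

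Neither step requires new assumptions: Assumption~\ref{assm:local-smoothness} has already been used to legitimize the first-order expansion that produced $\delta_{\mathrm{lin}}$, and the remainder of the argument is purely algebraic manipulation of second moments. If anything counts as an obstacle, it is the notational care needed to treat the products $k\mathbf{w}$ and $k'\mathbf{w}'$ as random vectors rather than as separate scalar-vector pairs, since the scalar magnitudes and unit directions may be correlated in the proposal; but since $S_1, S_2, N$ are defined directly as covariances of these product vectors, correlations between magnitudes and directions are absorbed into the matrices and no independence assumption on $(k,\mathbf{w})$ or $(k',\mathbf{w}')$ is needed. The proof is therefore a short bookkeeping argument that tightens the scalar decomposition into its matrix form.
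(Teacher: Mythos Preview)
Your proposal is correct and follows exactly the approach the paper intends: the paper states Proposition~\ref{prop:block-U} without an explicit proof, treating it as an immediate algebraic reformulation of Eq.~\eqref{eq:app-var-linear-u}, and your two steps (absorbing the scalars into the random vectors $k\mathbf{w},k'\mathbf{w}'$, then applying the standard identities $\mathrm{Var}(\langle a,v\rangle)=a^\top\mathrm{Cov}(v)\,a$ and $\mathrm{Cov}(\langle a,v\rangle,\langle b,v'\rangle)=a^\top\mathrm{Cov}(v,v')\,b$) are precisely that reformulation. Your remark that no independence between magnitude and direction is required is a useful clarification and consistent with the paper's setup.
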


\begin{corollary}[equal–direction simplification]\label{cor:equal-dir-U}
If \(k_{}=k'=k\) and \(\mathbf{w}_{}=\mathbf{w}'=\mathbf{w}\), and \(\Omega=\mathrm{Cov}(\mathbf{w})\), then
\begin{align}
\mathrm{Var}\big[\delta\big]
\,=\, k^2\, \big\langle \gamma \nabla_{x'} Q_{\phi'}(x') - \nabla_x Q_{\phi}(x_{}),\;
\Omega\,
\big(\gamma \nabla_{x'} Q_{\phi'}(x') - \nabla_x Q_{\phi}(x_{})\big)
\big\rangle.
\end{align}
In particular, if \(\mathbf{w}\) is isotropic on \(\mathbb{S}^{m-1}\) so that \(\Omega=\tfrac{1}{m}I\), then
\begin{align}
\mathrm{Var}\big[\delta\big] \,=\, \frac{k^2}{m}\,\big\|\gamma \nabla_{x'} Q_{\phi'}(x') - \nabla_x Q_{\phi}(x_{})\big\|_2^2.
\end{align}
\end{corollary}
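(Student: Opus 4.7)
The plan is to specialize Proposition~\ref{prop:block-U} to the equal-direction setting by first computing the three covariance matrices $S_{1}, S_{2}, N$ under the substitution, and then recognizing the resulting expression as a single quadratic form in the gradient difference $\gamma\nabla_{x'}Q_{\phi'}(x')-\nabla_{x}Q_{\phi}(x)$.

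First I would observe that with $k_{}=k'=k$ treated as a fixed scalar and $\mathbf{w}_{}=\mathbf{w}'=\mathbf{w}$ a single random vector with $\mathrm{Cov}(\mathbf{w})=\Omega$, each of the three objects appearing in Proposition~\ref{prop:block-U} collapses to the same matrix. Specifically, $S_{2}=\mathrm{Cov}(k\mathbf{w})=k^{2}\Omega$, $S_{1}=\mathrm{Cov}(k\mathbf{w})=k^{2}\Omega$, and $N=\mathrm{Cov}(k\mathbf{w},k\mathbf{w})=k^{2}\mathrm{Cov}(\mathbf{w},\mathbf{w})=k^{2}\Omega$. The only mildly subtle step here is recognizing that the cross-covariance of $\mathbf{w}$ with itself is precisely its covariance matrix $\Omega$, so $N$ is not zero in this degenerate case; on the contrary, $N$ matches $S_{1}$ and $S_{2}$, and it is this coupling that enables the clean squared-norm structure at the end.

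Next I would substitute these three matrices into Eq.~\eqref{eq:prop-block-U}. Abbreviating $a:=\gamma\nabla_{x'}Q_{\phi'}(x')$ and $b:=\nabla_{x}Q_{\phi}(x)$ for readability, the variance reads $\mathrm{Var}[\delta] = k^{2}\langle a,\Omega a\rangle + k^{2}\langle b,\Omega b\rangle - 2k^{2}\langle a,\Omega b\rangle$, and since $\Omega$ is symmetric positive semidefinite, this is exactly the expansion of the quadratic form $k^{2}\langle a-b,\,\Omega(a-b)\rangle$, which yields the first displayed identity of the corollary after restoring the original notation.

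Finally, for the isotropic case $\Omega=\tfrac{1}{m}I$, I would use the elementary identity $\langle v,\Omega v\rangle=\tfrac{1}{m}\|v\|_{2}^{2}$ with $v=\gamma\nabla_{x'}Q_{\phi'}(x')-\nabla_{x}Q_{\phi}(x)$ to get the stated squared-norm expression. The argument is entirely algebraic and poses no real obstacle; the only thing to be careful about is the degenerate covariance computation for $N$ highlighted above, because treating $\mathbf{w}$ and $\mathbf{w}'$ as independent copies (despite the shared symbol) would spuriously kill the cross term and destroy the difference-of-gradients form.
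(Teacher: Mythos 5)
Your proposal is correct and follows exactly the route the paper intends: specialize Proposition~\ref{prop:block-U} with $S_1=S_2=N=k^2\Omega$, absorb the $\gamma$ factors into the first gradient, and complete the quadratic form using the symmetry of $\Omega$, with the isotropic case then immediate. Your remark that $N=\mathrm{Cov}(k\mathbf{w},k\mathbf{w})=k^2\Omega$ (rather than vanishing or being treated as an independent copy) is precisely the step that makes the difference-of-gradients form work, and the argument is complete.
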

\section{Appendix: Proofs for Section \ref{sec:algo}}
\label{app:proofs-clustered}
\begin{proof}[Proof of Theorem~\ref{thm:within-only}]\label{app:thm-within-only-proof}
Let \(C=\mathrm{Cov}(g',g)\). With cluster label \(Z\in\{1,\dots,K\}\),
\begin{align}
\mathrm{Cov}(g',g)
&= \mathbb{E}\big[(g'-\mathbb{E}g')(g-\mathbb{E}g)^\top\big]
= \mathbb{E}\Big[\ \mathbb{E}\big[(g'-\mathbb{E}g')(g-\mathbb{E}g)^\top\mid Z\big]\ \Big] \nonumber\\
&= \mathbb{E}\Big[\ \mathbb{E}\big[(g'-\mu'_Z+\mu'_Z-\mathbb{E}g')(g-\mu_Z+\mu_Z-\mathbb{E}g)^\top\mid Z\big]\ \Big] \nonumber\\
&= \mathbb{E}[\,\mathrm{Cov}(g',g\mid Z)\,] + \mathrm{Cov}(\mu'_Z,\mu_Z),
\label{eq:app-totalcov}
\end{align}
which is Eq.~\eqref{eq:sec-totalcov}. If a minibatch is drawn from a fixed \(z\), then the effective covariance is \(C_z\). For any unit \(\mathbf{w}',\mathbf{w}\),
\(|\mathbf{w}'^{\!\top} C_z \mathbf{w}|\le \|C_z\|_2\) by the definition of the operator norm. Moreover, \(\|C_z\|_2\le \|\Sigma'_z\|_2^{1/2}\|\Sigma_z\|_2^{1/2}\) by the operator Cauchy-Schwarz inequality, and \(\|\Sigma\|_2\le \mathrm{tr}(\Sigma)\) for PSD \(\Sigma\), yielding Eq.~\eqref{eq:sec-withinbound}.
\end{proof}

\begin{lemma}[Alignment inside a cluster controls sign and magnitude]\label{lem:align}
Let the SVD of \(C_z\) be \(C_z=U\Sigma V^\top\) with singular values \(\sigma_1\ge\sigma_2\ge\cdots\ge 0\). For unit \(\mathbf{w}',\mathbf{w}\),
\begin{align}
\mathbf{w}'^{\!\top} C_z \mathbf{w}
&\ \ge\
\sigma_1\,\cos\theta'\,\cos\theta
-
\sigma_2\,\sin\theta'\,\sin\theta,
\label{eq:sec-svdlower}\\[-2pt]
\theta'&=\angle(\mathbf{w}',u_1),\qquad
\theta=\angle(\mathbf{w},v_1). \nonumber
\end{align}
Choosing \(\mathbf{w}'=u_1\) and \(\mathbf{w}=v_1\) yields \(\mathbf{w}'^{\!\top}C_z\mathbf{w}=\sigma_1\ge 0\). The cross term in Eq.~\eqref{eq:sec-scalar-cross} is then nonpositive.
\textit{Sketch of proof}. Expand in the singular bases and bound the residual with \(\sigma_2\) via Cauchy-Schwarz.
\end{lemma}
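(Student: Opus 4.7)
My plan is to expand $\mathbf{w}'$ and $\mathbf{w}$ in the left and right singular bases of $C_z$, separate the rank-one contribution aligned with $(u_1, v_1)$, and bound the residual using a single Cauchy-Schwarz step that exploits $\sigma_i \le \sigma_2$ for all $i \ge 2$.

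First I would write $C_z = \sum_i \sigma_i u_i v_i^\top$ and introduce coordinates $a_i := \langle \mathbf{w}', u_i\rangle$ and $b_i := \langle v_i, \mathbf{w}\rangle$. This decomposes the quadratic form as a dominant rank-one term $\sigma_1 a_1 b_1$ plus a tail $\sum_{i \ge 2} \sigma_i a_i b_i$. The dominant term equals $\sigma_1 \cos\theta' \cos\theta$ directly from the definition of the angles $\theta', \theta$. Next, completing $\{u_i\}, \{v_i\}$ to orthonormal bases of the ambient space, I would invoke Parseval to identify $\sum_{i \ge 2} a_i^2 = \sin^2\theta'$ and $\sum_{i \ge 2} b_i^2 = \sin^2\theta$. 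Replacing each $\sigma_i$ with its upper bound $\sigma_2$ and applying Cauchy-Schwarz would then yield $|\sum_{i \ge 2} \sigma_i a_i b_i| \le \sigma_2 \sin\theta' \sin\theta$, which combines with the dominant term to deliver the lower bound in Eq.~\eqref{eq:sec-svdlower}.

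For the second claim I would simply substitute $\mathbf{w}' = u_1$ and $\mathbf{w} = v_1$: these are unit vectors by construction, both angles vanish, and the bound saturates at $\sigma_1$; equivalently, $u_1^\top C_z v_1 = \sigma_1$ by direct inspection of the SVD. Since $\sigma_1 \ge 0$, the scalar quantity $-2\gamma k k'\, \mathbf{w}'^{\top} C_z \mathbf{w}$ appearing in Eq.~\eqref{eq:sec-scalar-cross} becomes $-2\gamma k k' \sigma_1 \le 0$ (under the usual sign convention $\gamma, k, k' > 0$), establishing nonpositivity of the cross term.

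The argument is essentially routine, so no serious obstacle arises; the only point requiring attention is the Cauchy-Schwarz tail bound, where one must separate the singular-value weighting from the coordinate sums before invoking Parseval, and then track the sign carefully so that the absolute-value argument produces a lower bound on the inner product rather than merely an upper bound on its magnitude.
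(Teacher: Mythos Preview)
Your proposal is correct and follows essentially the same route as the paper's own proof: expand $\mathbf{w}'^{\top} C_z \mathbf{w}$ in the singular bases, isolate the $\sigma_1 \cos\theta'\cos\theta$ term, and bound the tail $\sum_{j\ge 2}\sigma_j(u_j^\top\mathbf{w}')(v_j^\top\mathbf{w})$ below by $-\sigma_2\sin\theta'\sin\theta$ via Cauchy--Schwarz together with $\sum_{j\ge 2}(u_j^\top\mathbf{w}')^2=\sin^2\theta'$ and the analogous identity for $\mathbf{w}$. Your explicit remark about completing $\{u_i\},\{v_i\}$ to full orthonormal bases before invoking Parseval is a useful clarification that the paper leaves implicit, but otherwise the arguments coincide.
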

\begin{proof}
Let \(C_z=U\Sigma V^\top\) with \(\Sigma=\mathrm{diag}(\sigma_1,\sigma_2,\dots)\). For unit \(\mathbf{w}',\mathbf{w}\),
\begin{align}
\mathbf{w}'^{\!\top} C_z \mathbf{w}
= (U^\top \mathbf{w}')^\top \Sigma (V^\top \mathbf{w})
= \sum_j \sigma_j (u_j^\top \mathbf{w}') (v_j^\top \mathbf{w}).
\label{eq:app-svdexpand}
\end{align}
Let \(\cos\theta'=u_1^\top \mathbf{w}'\) and \(\cos\theta=v_1^\top \mathbf{w}\). By Cauchy-Schwarz and \(\sum_{j\ge 2}(u_j^\top \mathbf{w}')^2=\sin^2\theta'\), \(\sum_{j\ge 2}(v_j^\top \mathbf{w})^2=\sin^2\theta\), we obtain
\begin{align}
\sum_{j\ge 2} \sigma_j (u_j^\top \mathbf{w}') (v_j^\top \mathbf{w})
\ \ge\ -\,\sigma_2 \sqrt{\sum_{j\ge 2}(u_j^\top \mathbf{w}')^2}\,\sqrt{\sum_{j\ge 2}(v_j^\top \mathbf{w})^2}
\ =\ -\,\sigma_2 \sin\theta'\sin\theta.
\label{eq:app-rearrange}
\end{align}
Combining Eq.~\eqref{eq:app-svdexpand} and Eq.~\eqref{eq:app-rearrange} gives Eq.~\eqref{eq:sec-svdlower}. Choosing \(\mathbf{w}'=u_1\), \(\mathbf{w}=v_1\) yields \(\mathbf{w}'^{\!\top}C_z\mathbf{w}=\sigma_1\ge 0\).
Thus, the proof is completed.
\end{proof}
\section{Mixture of Policies}\label{app:proofs-policy}

\begin{definition}[f divergence and special cases]
\label{def:f-divergence}
Let $f:(0,\infty)\to\mathbb{R}$ be a proper lower semicontinuous convex function with $f(1)=0$.
Let $\pi$ and $\nu$ be probability measures on the action space that admit densities with respect to a common reference measure.
Assume $\pi\ll \nu$ so that $\frac{\pi}{\nu}$ is well defined $\nu$ almost everywhere.
The $f$ divergence is
\[
D_f(\pi\|\nu)=\int \nu(a)\, f\!\left(\frac{\pi(a)}{\nu(a)}\right)\,da .
\]
It satisfies $D_f(\pi\|\nu)\ge 0$ and $D_f(\pi\|\nu)=0$ if and only if $\pi=\nu$ almost everywhere whenever $f$ is strictly convex at $1$.
\textit{Special cases:} are
\[
\chi^2(\pi\|\nu)=D_{(t-1)^2}(\pi\|\nu)=\int \frac{\pi(a)^2}{\nu(a)}\,da-1
\]
and
\[
\mathrm{KL}(\pi\|\nu)=D_{t\log t}(\pi\|\nu)=\int \pi(a)\log\frac{\pi(a)}{\nu(a)}\,da .
\]
\end{definition}

\begin{lemma}[Global operator freeze and lower bound]
\label{lem:global-freeze}
For any nonnegative measurable function $h$ and any policy $\pi$,
\[
(I-\gamma P^\pi)^{-1} h
=\sum_{t=0}^{\infty}\gamma^t (P^\pi)^t h
\le \frac{1}{1-\gamma}\,\|h\|_\infty .
\]
Let $\rhomax:=\tfrac{1}{1-\gamma}$.
For the transformed CQL improvement target at a state $s$
\[
V^\pi(s)-\alpha\big((I-\gamma P^\pi)^{-1}\chi^2(\pi\|\pi_\beta)\big)(s)
\]
one has the global statewise lower bound
\[
V^\pi(s)-\alpha\big((I-\gamma P^\pi)^{-1}\chi^2(\pi\|\pi_\beta)\big)(s)
\;\ge\;
\EE_{\pi}[Q(s,a)]-\alpha\,\rhomax\,\sup_{s'}\chi^2(\pi\|\pi_\beta)(s') .
\]
The same bound holds with $\mathrm{KL}$ or any nonnegative combination of $\chi^2$ and $\mathrm{KL}$.
\end{lemma}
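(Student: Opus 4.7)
The plan is to split the proof cleanly into two pieces that correspond to the two displayed assertions. First, I will establish the pointwise operator bound on $(I-\gamma P^\pi)^{-1}$ for any nonnegative $h$ via the Neumann series expansion and the fact that $P^\pi$ is a Markov (expectation) operator. Second, I will substitute $h=\chi^2(\pi\|\pi_\beta)$, use $V^\pi(s)=\EE_\pi[Q(s,a)]$ at the evaluation state, and flip the sign of the resulting inequality by multiplication by $-\alpha\le 0$ to obtain the stated lower bound. The extension to $\mathrm{KL}$ or a nonnegative combination is then immediate since only pointwise nonnegativity of $h$ is used.

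\textbf{Step 1: Neumann series and pointwise domination.} Since $\gamma\in[0,1)$ and $P^\pi$ is a bounded linear operator on $L^\infty$ with operator norm $1$, the series $\sum_{t\ge 0}\gamma^t (P^\pi)^t$ converges in operator norm and equals $(I-\gamma P^\pi)^{-1}$. For any nonnegative measurable $h$, the probabilistic interpretation $(P^\pi)^t h(s)=\EE_\pi[\,h(s_t)\mid s_0=s\,]$ gives the pointwise bound $(P^\pi)^t h(s)\le \|h\|_\infty$ for every $t\ge 0$ and every $s$. Summing the geometric series termwise yields
\[
(I-\gamma P^\pi)^{-1} h(s)
=\sum_{t=0}^{\infty}\gamma^t (P^\pi)^t h(s)
\;\le\;\sum_{t=0}^{\infty}\gamma^t\|h\|_\infty
=\frac{1}{1-\gamma}\|h\|_\infty=\rhomax\|h\|_\infty,
\]
which is the first displayed inequality. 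The same argument applies with $\|h\|_\infty$ replaced by $\sup_{s'}h(s')$ when we wish to track the supremum rather than the essential supremum, which is what the second displayed bound needs.

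\textbf{Step 2: the CQL-target lower bound.} For the transformed target, write $V^\pi(s)=\EE_\pi[Q(s,a)]$ at the evaluation state (this is the defining identity of the action value function when $Q=Q^\pi$, and is the linearization used in the main text). Apply Step 1 with $h(s')=\chi^2(\pi\|\pi_\beta)(s')\ge 0$ to obtain
\[
\bigl((I-\gamma P^\pi)^{-1}\chi^2(\pi\|\pi_\beta)\bigr)(s)
\;\le\;\rhomax\,\sup_{s'}\chi^2(\pi\|\pi_\beta)(s').
\]
Multiplying by $-\alpha\le 0$ reverses the inequality and adding $V^\pi(s)=\EE_\pi[Q(s,a)]$ on both sides gives
\[
V^\pi(s)-\alpha\bigl((I-\gamma P^\pi)^{-1}\chi^2(\pi\|\pi_\beta)\bigr)(s)
\;\ge\;\EE_\pi[Q(s,a)]-\alpha\,\rhomax\,\sup_{s'}\chi^2(\pi\|\pi_\beta)(s'),
\]
which is the desired lower bound.

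\textbf{Step 3: extension and caveats.} Because Step 1 only used nonnegativity of $h$, the same bound holds verbatim with $\chi^2$ replaced by $\mathrm{KL}(\pi\|\pi_\beta)(\cdot)\ge 0$, and by linearity of the resolvent it also holds for any nonnegative combination $\alpha_1\chi^2+\alpha_2\mathrm{KL}$ with the coefficients absorbed into the supremum term. The only nontrivial conceptual point to guard is the substitution $V^\pi(s)=\EE_\pi[Q(s,a)]$; within the short local window assumed in the main text this is exact for $Q=Q^\pi$, and so no additional approximation error enters. I do not expect a serious obstacle: the whole argument is essentially the nonnegativity/contractivity of $\gamma P^\pi$ applied to a nonnegative penalty, combined with a sign flip. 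The only bookkeeping item is to be explicit that the operator inequality is pointwise in $s$, so that the subsequent subtraction in the CQL target preserves the statewise form.
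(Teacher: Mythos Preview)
Your proposal is correct and follows essentially the same approach as the paper's proof: the Neumann series bound $(P^\pi)^t h \le \|h\|_\infty$ summed geometrically, substitution of $h=\chi^2(\pi\|\pi_\beta)$, and the linearization $V^\pi(s)=\EE_\pi[Q(s,a)]$ at the working point, with the extension to $\mathrm{KL}$ or nonnegative combinations via nonnegativity of the penalties. You are simply more explicit about the sign flip and the pointwise nature of the inequality than the paper's two-line sketch.
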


\begin{proof}
Monotonicity and the geometric series bound yield
\[
(I-\gamma P^\pi)^{-1} h \le \sum_{t=0}^{\infty}\gamma^t \|h\|_\infty
=\frac{1}{1-\gamma}\|h\|_\infty .
\]
Set $h(\cdot)=\chi^2(\pi\|\pi_\beta)(\cdot)$, then linearize $V^\pi(s)$ to $\EE_\pi[Q(s,a)]$ at the working point.
Both penalties are nonnegative, hence the inequality is preserved.
\end{proof}

\begin{lemma}[Convexity in the second argument and clusterwise training]
\label{lem:convexity-mixture}
Let $f$ satisfy the definition above.
Let $\nu=\sum_{m=1}^M w_m \nu_m$ with $w_m\ge 0$ and $\sum_m w_m=1$.
Assume $\pi\ll \nu_m$ for every $m$ with $w_m>0$.
Then
\[
D_f(\pi\|\nu)\le \sum_{m=1}^M w_m D_f(\pi\|\nu_m) .
\]
Consequently the surrogate
\[
\mathcal{J}(\pi)
=\EE_{\pi}[Q]-\rhomax\left(\alpha \sum_m w_m \chi^2(\pi\|\nu_m)+\beta \sum_m w_m \mathrm{KL}(\pi\|\nu_m)\right)
\]
is a computable lower bound to the same expression with the mixture $\nu$ placed inside each divergence.
Sampling a cluster index $M\sim w$ and updating with the per cluster objective
\[
\EE_{\pi}[Q]-\rhomax\left(\alpha \chi^2(\pi\|\nu_M)+\beta \mathrm{KL}(\pi\|\nu_M)\right)
\]
gives an unbiased estimator of $\nabla \mathcal{J}(\pi)$.

\textit{Special cases:}
For Pearson and for Kullback Leibler the same inequality holds, therefore the same lower bound and the same stochastic cluster training rule apply when $\nu_m$ are Gaussian components of a mixture.
\end{lemma}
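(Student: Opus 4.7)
The plan is to prove the three claims in sequence: (i) convexity of $D_f(\pi\|\cdot)$ in the second argument, which yields the mixture inequality; (ii) the resulting computable lower bound for $\mathcal{J}(\pi)$; (iii) unbiasedness of the cluster-sampled gradient. Parts (ii) and (iii) reduce to (i) plus elementary expectation algebra, so the only analytic work is in the first step.

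First, I would establish convexity via the perspective of $f$. Define $\tilde f(p,q):=q\,f(p/q)$ on $(0,\infty)^2$ with the standard extensions $\tilde f(0,0)=0$ and $\tilde f(p,0)=p\lim_{t\to\infty}f(t)/t$. A classical convex-analysis result says the perspective of a proper lower semicontinuous convex function is jointly convex in $(p,q)$. For fixed $\pi(a)$ the map $q\mapsto \tilde f(\pi(a),q)$ is therefore convex, so pointwise Jensen applied to $\nu(a)=\sum_m w_m\nu_m(a)$ gives
\[
\tilde f\!\big(\pi(a),\nu(a)\big)\ \le\ \sum_m w_m\,\tilde f\!\big(\pi(a),\nu_m(a)\big).
\]
Integrating in $a$, and invoking the absolute continuity assumption $\pi\ll\nu_m$ to ensure each summand is finite, delivers the stated mixture bound $D_f(\pi\|\nu)\le \sum_m w_m D_f(\pi\|\nu_m)$.

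Second, substitute the mixture inequality separately into Pearson (with $f(t)=(t-1)^2$) and Kullback–Leibler (with $f(t)=t\log t$), and combine with Lemma~\ref{lem:global-freeze}. Since $\rhomax,\alpha,\beta\ge 0$, the direction of all inequalities is preserved, so $\mathcal{J}(\pi)$ is a lower bound to the same expression with the mixture $\nu$ placed inside each divergence. For unbiasedness, rewrite the penalty as a discrete expectation
\[
\mathcal{J}(\pi)=\EE_\pi[Q]-\rhomax\,\EE_{M\sim w}\!\big[\alpha\,\chi^2(\pi\|\nu_M)+\beta\,\mathrm{KL}(\pi\|\nu_M)\big],
\]
then exchange $\nabla$ with $\EE_{M\sim w}$, which is legitimate because $w$ has finite support and each divergence is smooth in the policy parameters under standard regularity. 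The per-cluster gradient evaluated at a sampled $M$ is then by definition an unbiased estimator of $\nabla\mathcal{J}(\pi)$.

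The main obstacle will be the rigorous handling of the perspective at density zeros, i.e., points where some $\nu_m(a)=0$ while $\pi(a)>0$. This is precisely the failure mode ruled out by $\pi\ll\nu_m$, which makes each $D_f(\pi\|\nu_m)$ finite and legitimizes the pointwise Jensen step. The Pearson and KL specializations require no additional work because $(t-1)^2$ and $t\log t$ are strictly convex on $(0,\infty)$ with $f(1)=0$, so the generic argument applies verbatim and the bound becomes equality only when $\pi=\nu_m$ for every $m$ with $w_m>0$.
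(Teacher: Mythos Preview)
Your proposal is correct and follows essentially the same route as the paper: both define the perspective $g(u,v)=v\,f(u/v)$, invoke its joint convexity to obtain the pointwise Jensen inequality, integrate to get the mixture bound, and then read off the lower-bound and unbiased-gradient claims from linearity of expectation. Your write-up is in fact slightly more careful (handling density zeros via the $\pi\ll\nu_m$ hypothesis and justifying the $\nabla$--$\mathbb{E}$ exchange via finite support), but no additional idea beyond the perspective construction is needed, and your reference to Lemma~\ref{lem:global-freeze} in step (ii) is superfluous since the lower-bound direction follows directly from the sign of the penalty.
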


\begin{proof}
Define the perspective $g(u,v)=v\,f(u/v)$ for $u>0$ and $v>0$.
Since $f$ is convex, $g$ is jointly convex.
For each $a$,
\[
g\!\Big(\pi(a),\sum_m w_m \nu_m(a)\Big)\le \sum_m w_m g\big(\pi(a),\nu_m(a)\big) .
\]
Integrate over $a$ to obtain
\[
D_f(\pi\|\nu) \le \sum_m w_m D_f(\pi\|\nu_m) .
\]
Taking expectation with respect to $M\sim w$ proves that the per cluster gradient is an unbiased estimator of the gradient of the weighted sum objective.
\end{proof}

\begin{proposition}[General isotropic penalty and update direction]
\label{prop:isotropic-penalty}
Fix a state $s$ and let the policy class be $\pi_\mu$ with a mean parameter $\mu$.
Let $\Sigma$ be positive definite and assume the penalty has the isotropic form
\[
\Psi(\mu-\mu_\beta)=\psi\!\left((\mu-\mu_\beta)^\top \Sigma^{-1}(\mu-\mu_\beta)\right)
\]
with $\psi$ differentiable and strictly increasing on $[0,\infty)$.
Let $Q(s,a)$ be linearized at $a=\mu_\beta$ with gradient $g(s)$.
Consider
\[
\max_{\mu}
\qquad \lambda:=\rhomax\,c,\ c\ge 0 .
\]
Any maximizer satisfies
\[
\mu^\star(s)=\mu_\beta+\kappa^\star(s)\,\Sigma\,g(s)
\]
where $\kappa^\star(s)>0$ is the unique solution of
\[
2\lambda\,\psi'\!\big(\kappa^2 R\big)\,\kappa=1,
\qquad R:=g(s)^\top \Sigma g(s) .
\]

\textit{Special cases:}
If $\psi(r)=e^{r}-1$ which corresponds to Pearson under equal covariances then
\[
\kappa^\star(s)=\sqrt{\frac{1}{2R}\,W\!\Big(\frac{R}{2\lambda^2}\Big)}
\]
where $W$ is the Lambert function given by $W(z)e^{W(z)}=z$.
The argument $R/(2\lambda^2)$ is positive when $R>0$ and $\lambda>0$, therefore the principal branch gives a positive $\kappa^\star$.
If $\psi(r)=\tfrac{r}{2}$ which corresponds to Kullback Leibler under equal covariances then
\[
\kappa^\star(s)=\frac{1}{\lambda} .
\]
For a Gaussian mixture behavior one applies the convexity lemma to replace the mixture penalty by the weighted sum of per cluster penalties and solves the same scalar equation per cluster with the corresponding $\Sigma$ and $\mu_\beta$.
\end{proposition}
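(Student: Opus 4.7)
}
The plan is to attack the problem via first-order optimality on a concave surrogate, exploit the isotropy of $\Psi$ to collapse the vector equation into a scalar one, and then solve that scalar equation explicitly in the two special cases. First I would write the maximization objective as $F(\mu):=\langle g(s),\mu-\mu_\beta\rangle - \lambda\,\psi\big((\mu-\mu_\beta)^\top\Sigma^{-1}(\mu-\mu_\beta)\big)$, where the linear part comes from linearizing $Q(s,\cdot)$ at $\mu_\beta$ and the penalty is the stated isotropic form. Because $\psi$ is nondecreasing and the quadratic form $(\mu-\mu_\beta)^\top\Sigma^{-1}(\mu-\mu_\beta)$ is convex in $\mu$, and because the special cases of interest have $\psi$ convex, the map $\mu\mapsto -\lambda\psi(\cdot)$ is concave, so any stationary point of $F$ is a global maximizer.

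Next I would compute $\nabla_\mu F(\mu)=g(s)-2\lambda\,\psi'(r)\,\Sigma^{-1}(\mu-\mu_\beta)$ with $r=(\mu-\mu_\beta)^\top\Sigma^{-1}(\mu-\mu_\beta)$, and set it to zero to get $\mu-\mu_\beta=\tfrac{1}{2\lambda\psi'(r)}\,\Sigma\,g(s)$. This forces the ansatz $\mu^\star-\mu_\beta=\kappa^\star\Sigma g(s)$ for some scalar $\kappa^\star>0$. Plugging back, the quadratic form becomes $r=\kappa^{\star 2}\,g(s)^\top \Sigma\,\Sigma^{-1}\,\Sigma\,g(s)=\kappa^{\star 2}R$, and the stationarity condition reduces to the scalar equation
\begin{align*}
2\lambda\,\psi'(\kappa^{\star 2} R)\,\kappa^\star \;=\; 1 .
\end{align*}
For existence and uniqueness I would argue that the map $\kappa\mapsto 2\lambda\,\psi'(\kappa^2 R)\,\kappa$ is continuous, vanishes at $\kappa=0$ (since $\psi'(0)$ is finite in both special cases), and is strictly increasing on $(0,\infty)$ whenever $\psi'$ is nondecreasing, so it crosses the level $1$ exactly once.

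For the special cases I would instantiate $\psi$ and solve directly. With $\psi(r)=r/2$ we get $\psi'\equiv 1/2$, so the scalar equation collapses to $\lambda\kappa^\star=1$, yielding $\kappa^\star=1/\lambda$. With $\psi(r)=e^r-1$ we get $\psi'(r)=e^r$ and the equation $2\lambda\,\kappa^\star e^{\kappa^{\star 2}R}=1$. Setting $u=\kappa^{\star 2}R$, squaring both sides, and rearranging gives $(2u)\,e^{2u}=R/(2\lambda^2)$, which is exactly the defining relation of the Lambert $W$: $2u=W\!\big(R/(2\lambda^2)\big)$, hence $\kappa^\star=\sqrt{W\!\big(R/(2\lambda^2)\big)/(2R)}$. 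Positivity of the argument ($R>0$, $\lambda>0$) selects the principal branch $W_0\ge 0$, so $\kappa^\star>0$ as required. The main obstacle I anticipate is the algebra that reduces the exponential fixed-point equation to the canonical Lambert $W$ form $ve^v=z$ without introducing a spurious root; this requires the squaring step and a careful branch selection, but once the substitution $v=2\kappa^{\star 2}R$ is made the rest is bookkeeping. Finally, for the mixture extension I would invoke Lemma~\ref{lem:convexity-mixture} to replace the mixture penalty by a weighted sum of per-cluster penalties, then apply the single-cluster analysis above with cluster-specific $\Sigma$ and $\mu_\beta$.
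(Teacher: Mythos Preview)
Your proposal is correct and follows essentially the same route as the paper: write the objective in $\Delta=\mu-\mu_\beta$, take the first-order condition to deduce $\Delta\parallel\Sigma g$, substitute $\Delta=\kappa\,\Sigma g$ to obtain the scalar equation $2\lambda\,\psi'(\kappa^2 R)\,\kappa=1$, and then resolve the two special cases (including the same squaring-plus-substitution $v=2\kappa^2 R$ to reach the Lambert $W$ form). Your concavity and uniqueness remarks are slightly more explicit than the paper's, but the overall structure and every key step are identical.
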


\begin{proof}
Let $\Delta:=\mu-\mu_\beta$ and define
\[
F(\Delta)=\Delta^\top g-\lambda\,\psi\!\big(\Delta^\top \Sigma^{-1}\Delta\big) .
\]
The first order stationarity condition is
\[
\nabla_\Delta F(\Delta)=g-2\lambda\,\psi'\!\big(\Delta^\top \Sigma^{-1}\Delta\big)\,\Sigma^{-1}\Delta=0 .
\]
Therefore $\Delta$ is colinear with $\Sigma g$.
Let $\Delta=\kappa \Sigma g$ and $R=g^\top \Sigma g$.
Then
\[
2\lambda\,\psi'(\kappa^2 R)\,\kappa=1 .
\]
Strict increase of $\psi'$ implies a unique positive root.
Substituting $\psi(r)=e^r-1$ gives the Lambert equation
\[
\kappa\,e^{\kappa^2 R}=\frac{1}{2\lambda}
\]
which reduces to the stated form by $y=\kappa^2 R$ and $ye^{2y}=R/(2\lambda^2)$.
Substituting $\psi(r)=r/2$ gives $\kappa=1/\lambda$.
\end{proof}

\begin{proposition}[Adding KL gives stabilization and quantitative bounds in Gaussian cases]
\label{prop:kl-stabilization}
Consider for a fixed state $s$ the surrogate
\[
\EE_{\pi}[Q(s,a)]-\rhomax\Big(\alpha\,\chi^2(\pi\|\pi_\beta)+\beta\,\mathrm{KL}(\pi\|\pi_\beta)\Big),
\qquad \alpha\ge 0,\ \beta>0 .
\]
Assume the policy class admits a local quadratic lower bound on KL around a reference parameter $\theta_\beta$
\[
\mathrm{KL}(\pi_\theta\|\pi_{\theta_\beta})\ \ge\ \frac{m}{2}\,\|\theta-\theta_\beta\|^2
\]
for some constant $m>0$.
Then near $\theta_\beta$ the surrogate is $m\beta\rhomax$ strongly concave in $\theta$ and the unique maximizer satisfies
\[
\|\theta^\star-\theta_\beta\|\ \le\ \frac{\|\nabla_\theta \EE_{\pi}[Q]\big|_{\theta_\beta}\|}{m\beta\rhomax} .
\]

\textit{Special cases:}
If $\pi_\mu$ and $\pi_\beta$ are Gaussians with equal covariance $\Sigma_\beta$ and parameter $\theta=\mu$, then the maximizer has the form
\[
\mu^\star=\mu_\beta+\kappa^\star\,\Sigma_\beta g,
\qquad
(2\alpha\rhomax\,e^{\kappa^{\star 2} R}+\beta\rhomax)\,\kappa^\star=1,
\qquad R=g^\top\Sigma_\beta g .
\]
This implies the step size cap
\[
\kappa^\star\le \frac{1}{\beta\rhomax}=\frac{1-\gamma}{\beta}
\]
and if $\alpha>0$ one also has
\[
\chi^2\big(\pi_{\mu^\star}\|\pi_\beta\big)=e^{\kappa^{\star 2}R}-1\ \le\ \frac{\beta}{2\alpha}-1 .
\]
For a Gaussian mixture behavior policy the convexity lemma yields
\[
\mathrm{KL}\!\left(\pi\Big\|\sum_m w_m \nu_m\right)\le \sum_m w_m \mathrm{KL}(\pi\|\nu_m),
\qquad
\chi^2\!\left(\pi\Big\|\sum_m w_m \nu_m\right)\le \sum_m w_m \chi^2(\pi\|\nu_m),
\]
So the same lower bound structure and per-cluster step control apply.
\end{proposition}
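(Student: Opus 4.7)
The plan is to split the proof into three parts: a general strong-concavity plus norm-bound argument that works for any parametric policy class satisfying the local quadratic KL bound; a Gaussian specialization where closed-form expressions for $\chi^2$ and $\mathrm{KL}$ reduce the problem to the scalar stationarity framework of Proposition~\ref{prop:isotropic-penalty}; and a mixture extension that follows immediately from the convexity lemma.

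For the general result, I would linearize $\EE_\pi[Q]$ at $\theta_\beta$ to write the surrogate as $C + \langle g_\theta, \theta-\theta_\beta\rangle - \rhomax\alpha\,\chi^2(\pi_\theta\|\pi_{\theta_\beta}) - \rhomax\beta\,\mathrm{KL}(\pi_\theta\|\pi_{\theta_\beta})$, where $g_\theta := \nabla_\theta\EE_\pi[Q]\big|_{\theta_\beta}$. The local quadratic KL lower bound gives the Hessian inequality $\nabla^2\big(-\rhomax\beta\,\mathrm{KL}\big)\preceq -m\beta\rhomax I$; the $\chi^2$ term, being an $f$-divergence with convex $f$, is convex in its first argument, so $-\rhomax\alpha\,\chi^2$ is weakly concave and can only strengthen this bound. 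Strong concavity at rate $m\beta\rhomax$ follows, and since both penalty gradients vanish at $\theta_\beta$ (a minimizer of both divergences), the first-order optimality condition at $\theta^\star$ yields $\|g_\theta\|\ge m\beta\rhomax\,\|\theta^\star-\theta_\beta\|$, which is exactly the stated norm bound.

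For the Gaussian specialization with shared covariance $\Sigma_\beta$, standard closed-form expressions give $\chi^2 = e^{\Delta^\top\Sigma_\beta^{-1}\Delta}-1$ and $\mathrm{KL} = \tfrac{1}{2}\Delta^\top\Sigma_\beta^{-1}\Delta$, where $\Delta=\mu-\mu_\beta$. Their nonnegative combination fits the isotropic penalty template of Proposition~\ref{prop:isotropic-penalty} with $\psi(r) = \alpha(e^r-1) + \beta r/2$, so invoking that proposition directly gives $\mu^\star = \mu_\beta + \kappa^\star\Sigma_\beta g$ with the scalar stationarity equation $(2\alpha\rhomax e^{\kappa^{\star 2}R}+\beta\rhomax)\kappa^\star = 1$ where $R = g^\top\Sigma_\beta g$. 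Dropping the nonnegative $\chi^2$ contribution immediately gives the step-size cap $\kappa^\star\le 1/(\beta\rhomax) = (1-\gamma)/\beta$, and rearranging the equation into $e^{\kappa^{\star 2}R} = (1-\beta\rhomax\kappa^\star)/(2\alpha\rhomax\kappa^\star)$ is the starting point for the $\chi^2$ inflation cap.

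For the Gaussian mixture behavior, Lemma~\ref{lem:convexity-mixture} supplies $\chi^2(\pi\|\sum_m w_m\nu_m)\le \sum_m w_m\chi^2(\pi\|\nu_m)$ and the KL analogue, so the mixture surrogate is bounded below by a $w$-weighted sum of per-component surrogates of exactly the same functional form. The preceding arguments then apply clusterwise, yielding per-cluster strong concavity, a per-cluster norm bound, and the two explicit caps on $\kappa^\star$ and $\chi^2$ within each component; this is the sense in which the KL stabilization survives partitioned sampling, and in expectation under cluster sampling as well. The main obstacle I expect is the $\chi^2$ inflation cap: whereas the step-size cap is a one-line nonnegativity argument, extracting $e^{\kappa^{\star 2}R}\le \beta/(2\alpha)$ from the transcendental stationarity equation is delicate and will likely require a monotonicity argument in $\kappa^\star$ or a case split between the gradient-dominated and penalty-dominated regimes, since the relation between $\kappa^\star$ and the cap threshold $1/(2\beta\rhomax)$ is not purely algebraic.
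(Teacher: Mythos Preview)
Your proposal matches the paper's proof essentially line for line: the paper invokes the KL quadratic lower bound for $m\beta\rhomax$ strong concavity and the ``basic inequality'' for the norm bound, writes the Gaussian stationarity condition $g-2\alpha\rhomax e^{\Delta^\top\Sigma_\beta^{-1}\Delta}\Sigma_\beta^{-1}\Delta-\beta\rhomax\Sigma_\beta^{-1}\Delta=0$ and substitutes $\Delta=\kappa\Sigma_\beta g$ (you route this through Proposition~\ref{prop:isotropic-penalty}, which is the same computation), and then appeals to convexity in the second argument for mixtures. Your flag on the $\chi^2$ inflation cap is well placed: the paper's proof dispatches it with nothing more than ``yields the stated bounds,'' so there is no additional argument there for you to reconstruct, and your observation that it reduces to whether $\kappa^\star\ge 1/(2\beta\rhomax)$ is exactly the crux the paper leaves implicit.
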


\begin{proof}
The local quadratic lower bound on KL implies that subtracting $\beta\rhomax \mathrm{KL}$ adds curvature at least $m\beta\rhomax$, which gives strong concavity and uniqueness near $\theta_\beta$.
The parameter step bound follows from the basic inequality for the maximization of a strongly concave function.
In the Gaussian mean case
\[
\mathrm{KL}(\mathcal{N}(\mu,\Sigma_\beta)\|\mathcal{N}(\mu_\beta,\Sigma_\beta))
=\frac{1}{2}(\mu-\mu_\beta)^\top \Sigma_\beta^{-1}(\mu-\mu_\beta)
\]
and the stationarity condition becomes
\[
g-2\alpha\rhomax e^{\Delta^\top\Sigma_\beta^{-1}\Delta}\Sigma_\beta^{-1}\Delta-\beta\rhomax \Sigma_\beta^{-1}\Delta=0 .
\]
With $\Delta=\kappa \Sigma_\beta g$ this reduces to the scalar equation in $\kappa$ and yields the stated bounds.
For mixtures, apply convexity in the second argument to both divergences and optimize per cluster.
\end{proof}

\begin{remark}[Effect under poor coverage and relation to mixture training]
\label{rem:poor-coverage}
If a measurable set $A$ satisfies $\int_A \pi(a\mid s)\,da=\delta>0$ while $\varepsilon_A=\int_A \pi_\beta(a\mid s)\,da$ is small, then
\[
\chi^2(\pi\|\pi_\beta)\ge \frac{\delta^2}{\varepsilon_A}-1 .
\]
This shows the sensitivity of a pure Pearson penalty when coverage is poor.
Adding the KL term enforces curvature and gives the cap
\[
\kappa^\star\le \frac{1-\gamma}{\beta},
\qquad
\chi^2(\pi_{\mu^\star}\|\pi_\beta)\le \frac{\beta}{2\alpha}-1,
\]
which stabilizes the update in low density regions.
When the behavior policy is a Gaussian mixture, convexity in the second argument of both divergences yields additive upper bounds that justify training by data clusters with random cluster selection while increasing a computable lower bound to the true mixture objective.
\end{remark}

\section{Experiment Details}\label{app:expDetails}
\subsection{Extended implementation}
Our method is designed to be plug-and-play and can be readily incorporated into numerous existing offline RL frameworks. In the experiments, we instantiate it on top of both CQL and TD3+BC. Unless explicitly stated, CQL serves as the base algorithm, and our method is indicated with the notation \textit{"Ours"}.
We implement all experiments in PyTorch 2.1.2 on Ubuntu 20.04.4 LTS with four NVIDIA GeForce RTX 3090 GPUs. The actor and critic are ReLU multilayer perceptrons with four hidden layers of width 256. We train with Adam using batch size 256, learning rate $1\times10^{-4}$ for the actor, learning rate $3\times10^{-4}$ for the critic, and discount factor $\gamma=0.99$. Unless stated otherwise, we use default hyperparameters $\alpha$ following CQL \citep{kumar2020conservative}. 
In addition, we use the network’s penultimate layer (the learned features) as a computationally tractable surrogate for $\nabla_{x} Q_{\psi}(x)$.
\footnote{\small To avoid expensive double backpropagation, we use the feature layer as a proxy. This controls input sensitivity via the chain rule, akin to layer-wise Lipschitz constraints \citep{miyato2018spectral, gouk2021regularisation}.}
\footnote{We treat this as a practical engineering approximation. Our code provides both versions, and we recommend the exact $\nabla_{x} Q$-based implementation for theoretical rigor.}
We follow the D4RL protocol for normalized scores.
Given a task score and the corresponding random and expert scores (as reported in Table \ref{table:datasets-overview}), the normalized score is defined as
$$
\mathrm{normalized\ score}
=100\times\frac{\mathrm{score}-\mathrm{random\ score}}{\mathrm{expert\ score}-\mathrm{random\ score}}\, .
$$
Unless noted, we report the mean normalized score over 3 to 5 repetitions per setting.

\subsection{Benchmark}\label{subsec:benchmark}
We evaluate our method on the widely recognized offline RL benchmark D4RL
\citep{fu2020d4rl,todorov2012mujoco}, which encompasses several domains, including Locomotion, Maze2D, AntMaze, Adroit, and Kitchen as illustrated in Figure~\ref{fig:gym}. 
The locomotion domain includes four continuous robotic control tasks ("HalfCheetah", "Hopper", "Walker2d", and "Ant"), each providing four dataset quality levels (expert, medium-expert, medium, and medium-replay). The Maze2D domain requires a 2D agent to navigate to fixed goal positions across three maze sizes ("umaze", "medium", and "large"), featuring both dense and sparse reward variants. In addition, we consider the Adroit suite of high-dimensional robotic manipulation tasks ("pen", "door", "relocate", and "hammer"), using both "human" and "cloned" datasets. We further include the AntMaze benchmark for long-horizon navigation with sparse rewards, covering eight maze configurations ("umaze", "umaze-diverse", "medium-play", "medium-diverse", "large-play", "large-diverse", "ultra-play", and "ultra-diverse"). Finally, we evaluate on the Franka Kitchen environment, a 9-DoF multi-task manipulation benchmark, using the "mixed", "partial", and "complete" settings, which require the agent to accomplish multiple goal-conditioned sub-tasks within a single episode.

\begin{figure}[ht]
	\centering
    \subfigure[Halfcheetah.]{\label{fig:gym-halfcheetah}\includegraphics[width=0.197\textwidth,height=0.197\textwidth]{./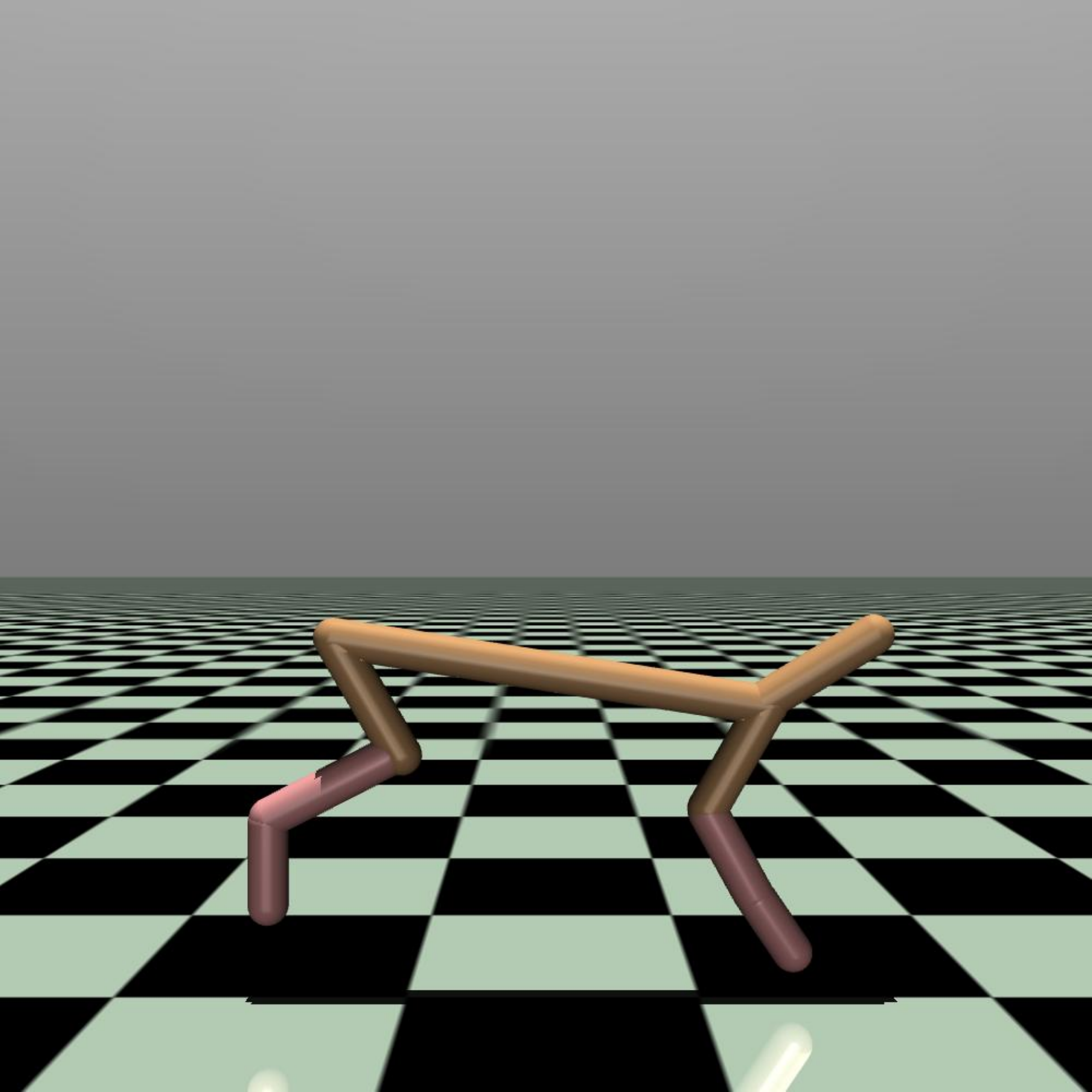}}
    \hspace{0.02\textwidth}
    \subfigure[Hopper.]{\label{fig:gym-hopper}\includegraphics[width=0.197\textwidth,height=0.197\textwidth]{./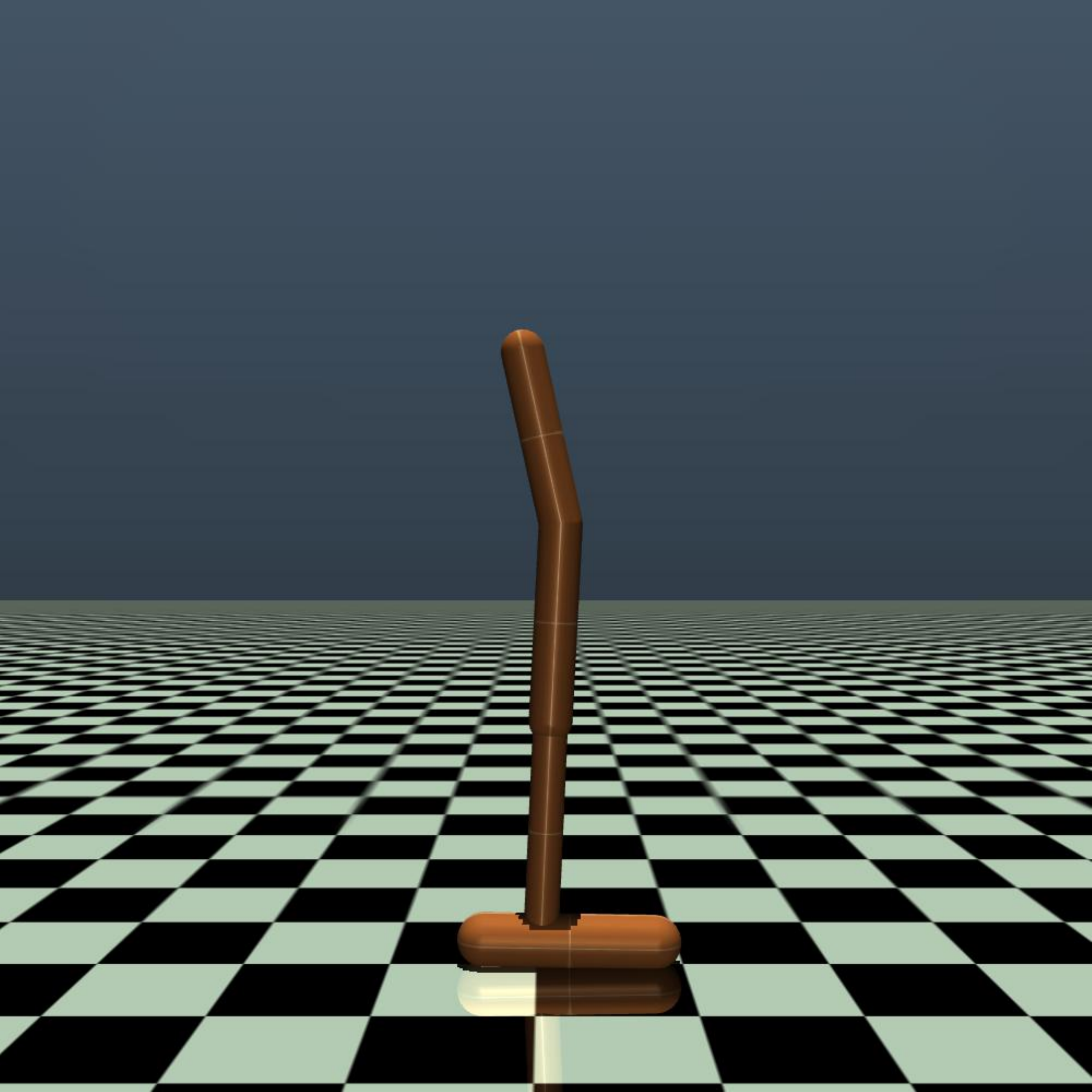}}
    \hspace{0.02\textwidth}
    \subfigure[Walker2d.]{\label{fig:gym-walker2d}\includegraphics[width=0.197\textwidth,height=0.197\textwidth]{./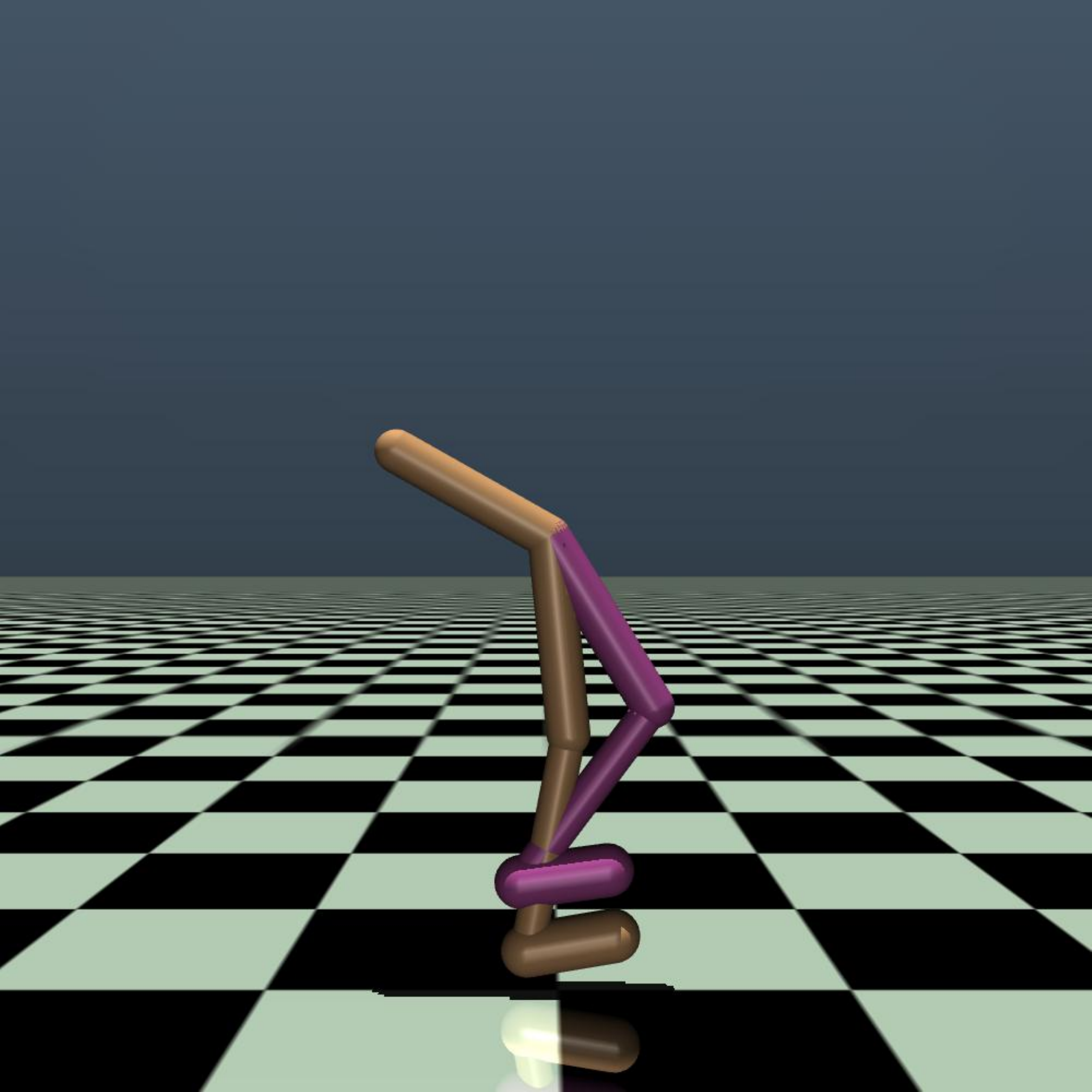}}
    \hspace{0.02\textwidth}
    \subfigure[Ant.]{\label{fig:gym-ant}\includegraphics[width=0.197\textwidth,height=0.197\textwidth]{./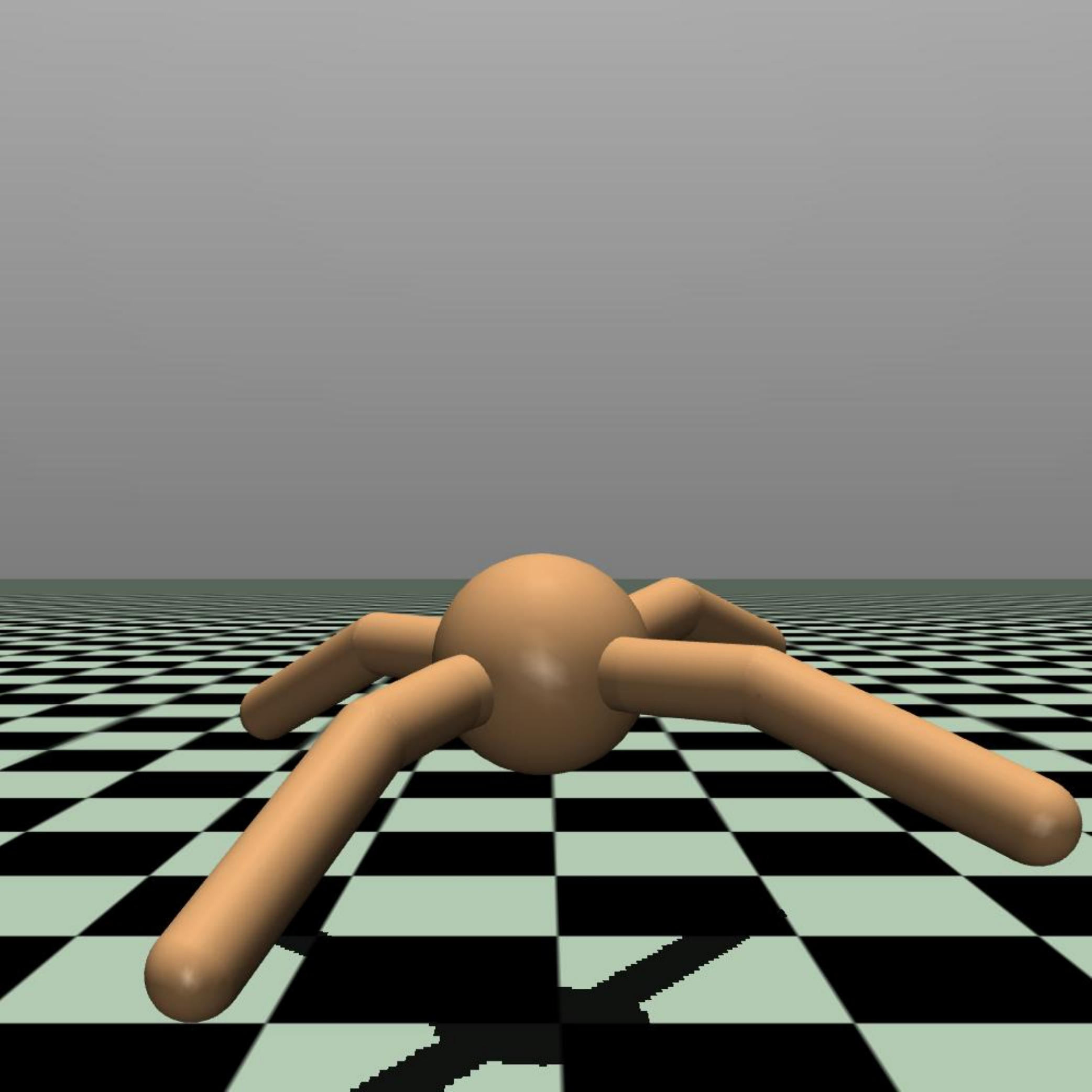}}
    
	\centering
    \subfigure[\small Maze2D-open.]{\label{fig:maze-open}\includegraphics[width=0.197\textwidth]{./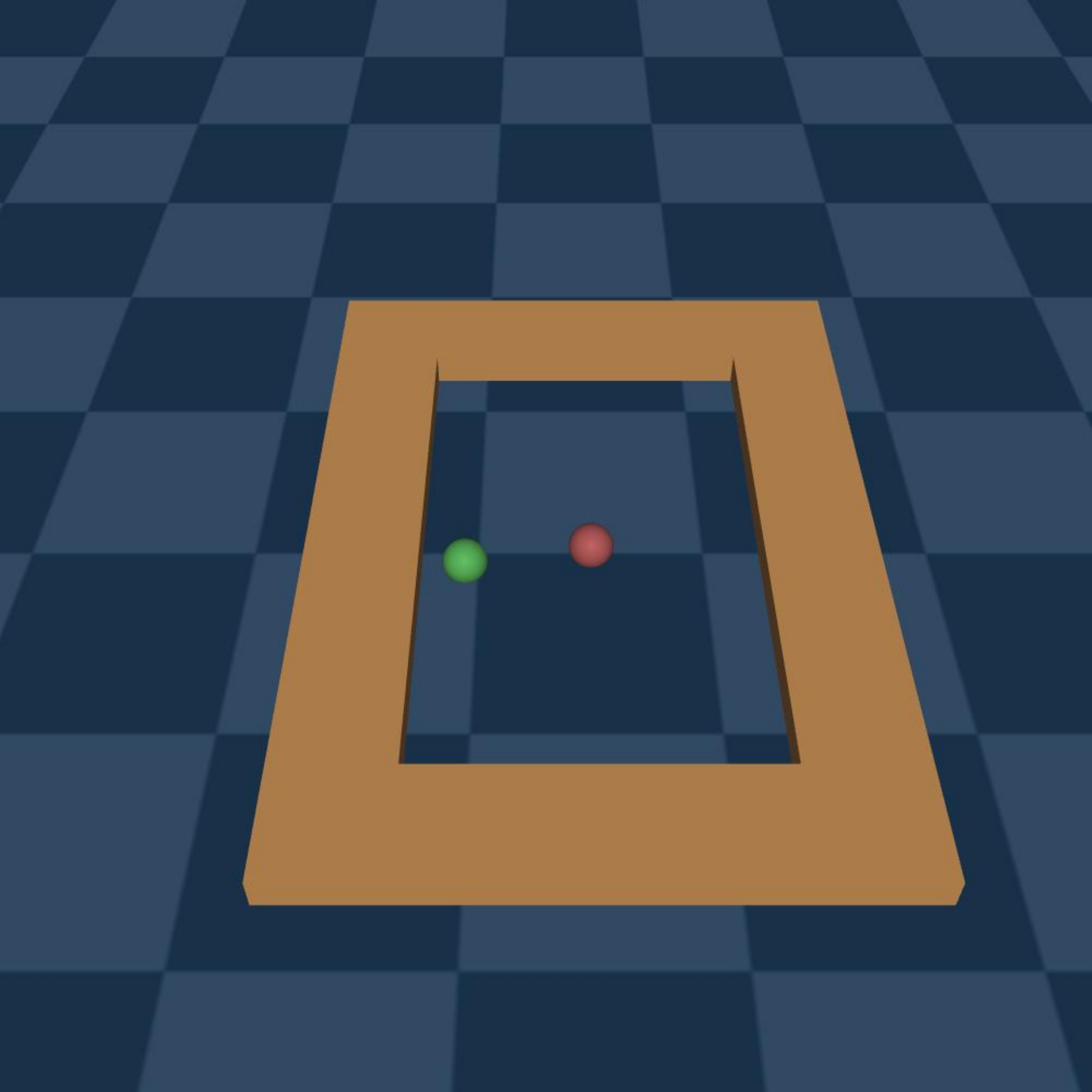}}
    \hspace{0.02\textwidth}
    \subfigure[\small Maze2D-umaze.]{\label{fig:maze-umaze}\includegraphics[width=0.197\textwidth]{./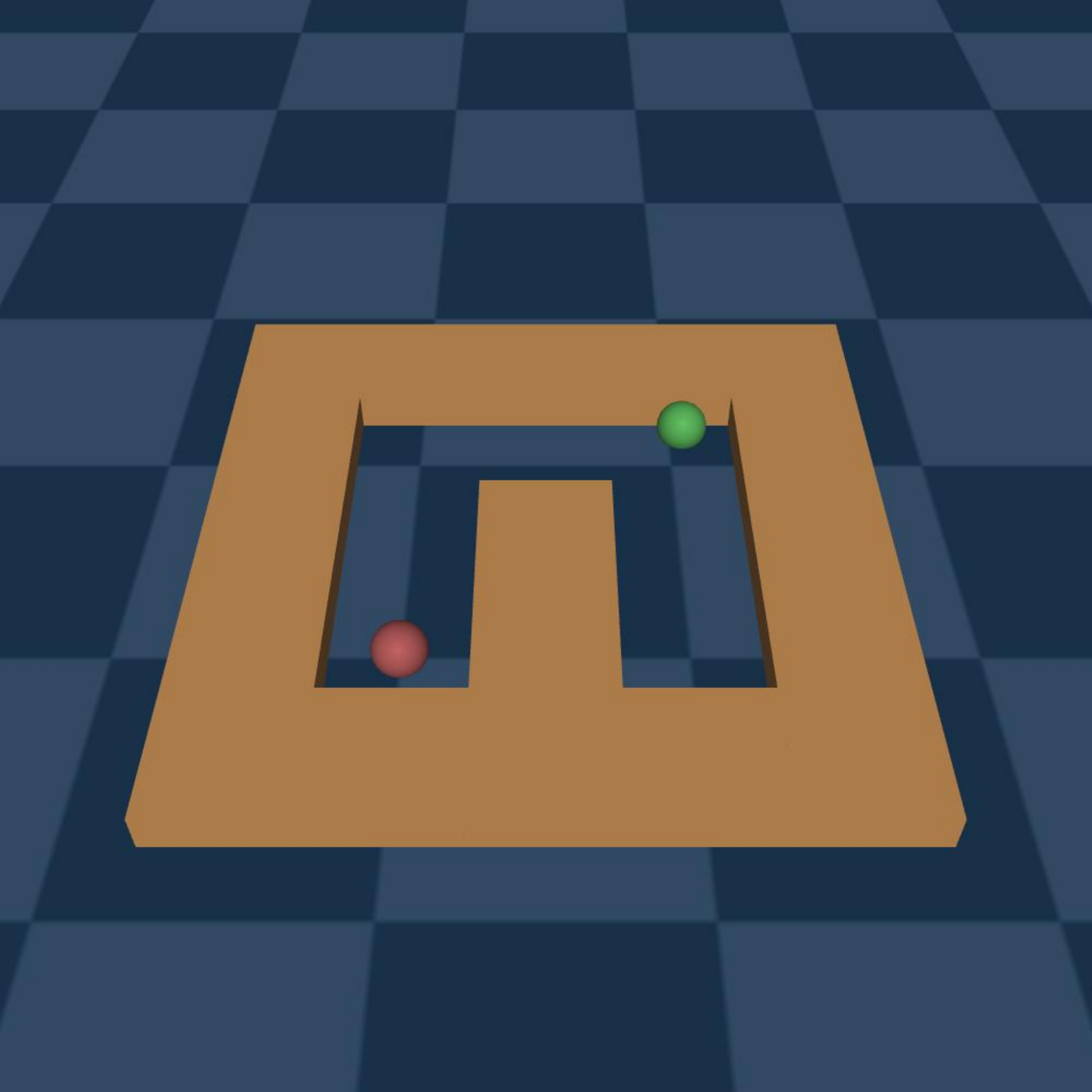}}
    \hspace{0.02\textwidth}
    \subfigure[\small Maze2D-medium]{\label{fig:maze-medium}\includegraphics[width=0.197\textwidth]{./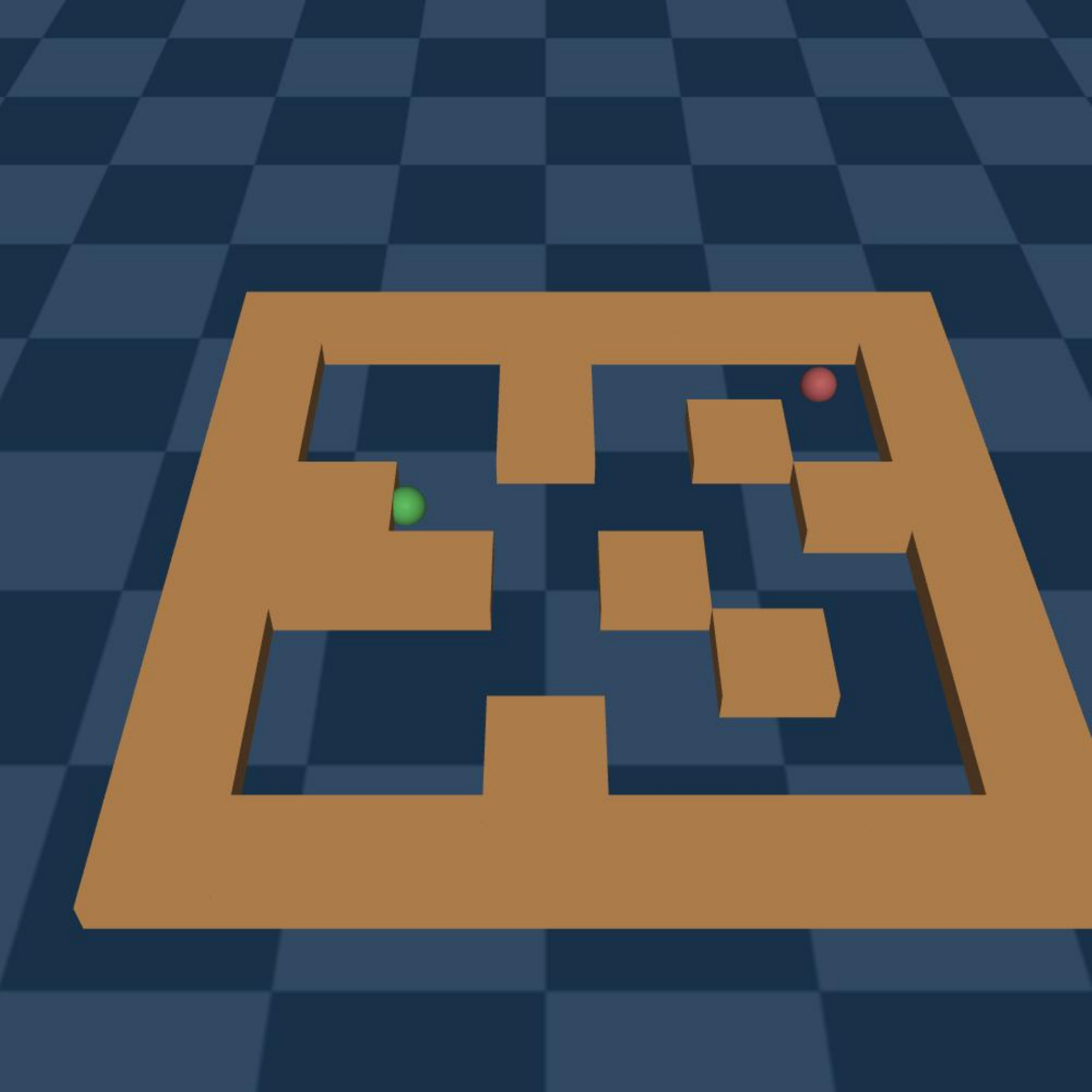}}
    \hspace{0.02\textwidth}
    \subfigure[\small Maze2D-large.]{\label{fig:maze-large}\includegraphics[width=0.197\textwidth]{./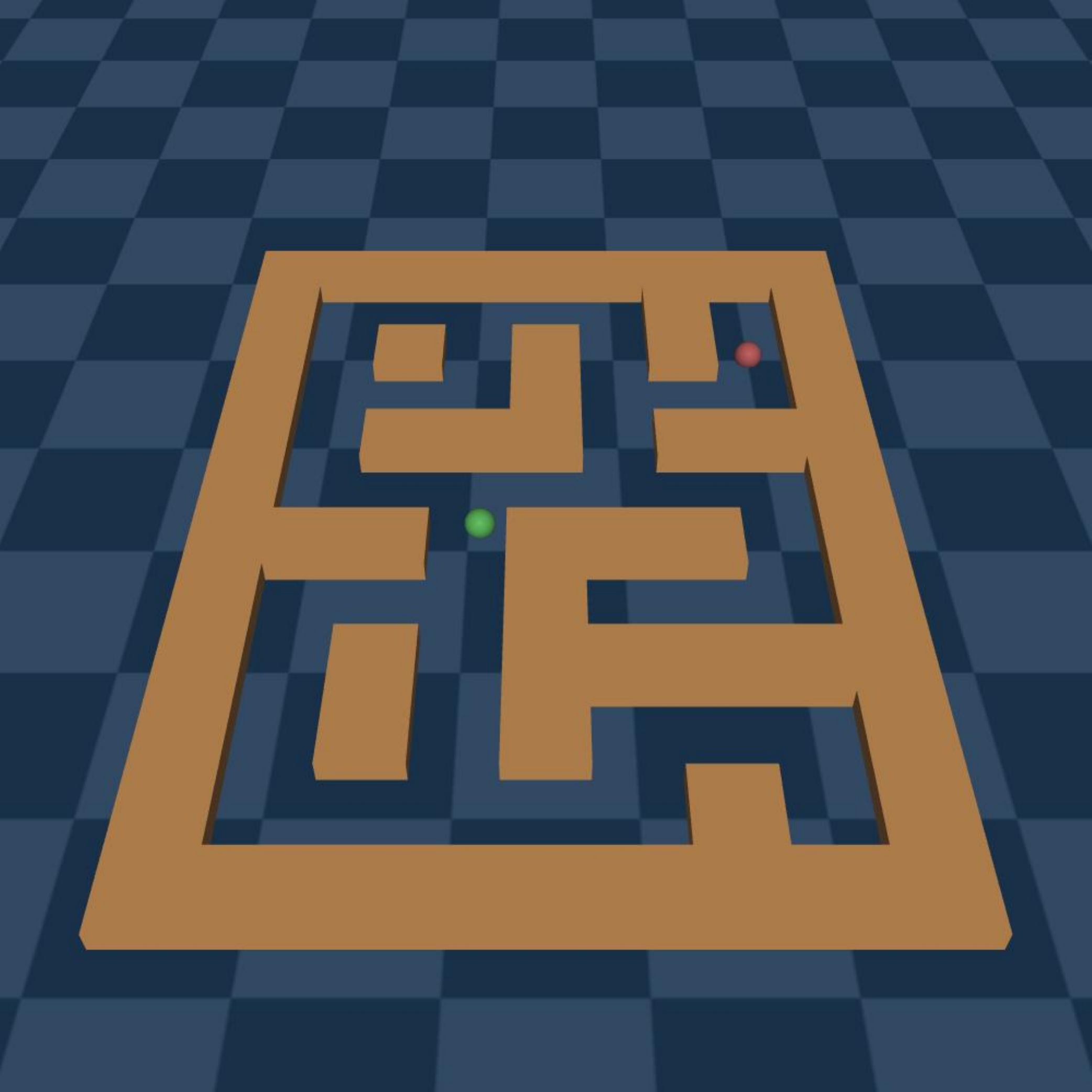}}

    \centering
    \subfigure[\small Antmaze-umaze.]{\label{fig:antmaze-umaze}\includegraphics[width=0.197\textwidth]{./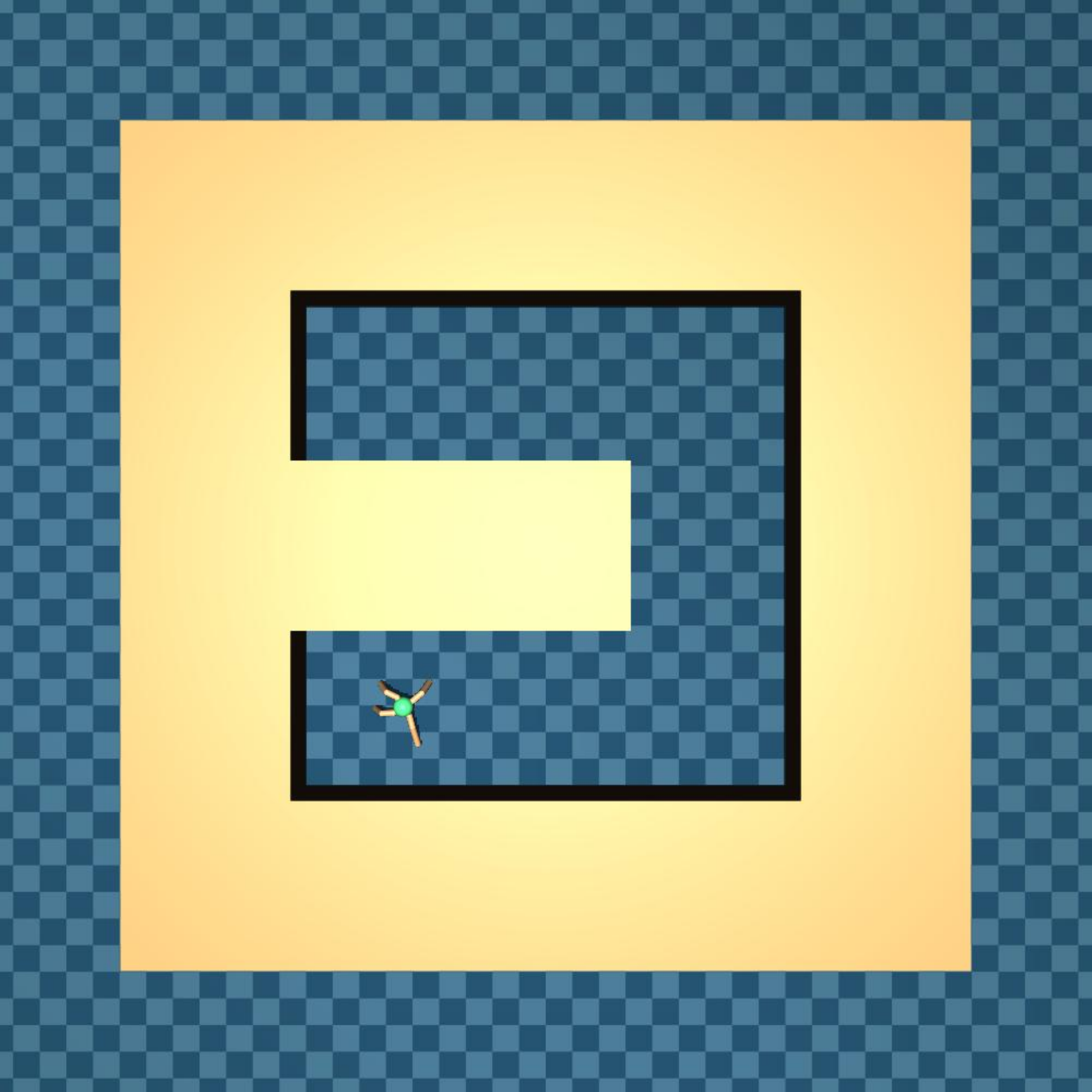}}
    \hspace{0.02\textwidth}
    \subfigure[\small Antmaze-medium]{\label{fig:antmaze-medium}\includegraphics[width=0.197\textwidth]{./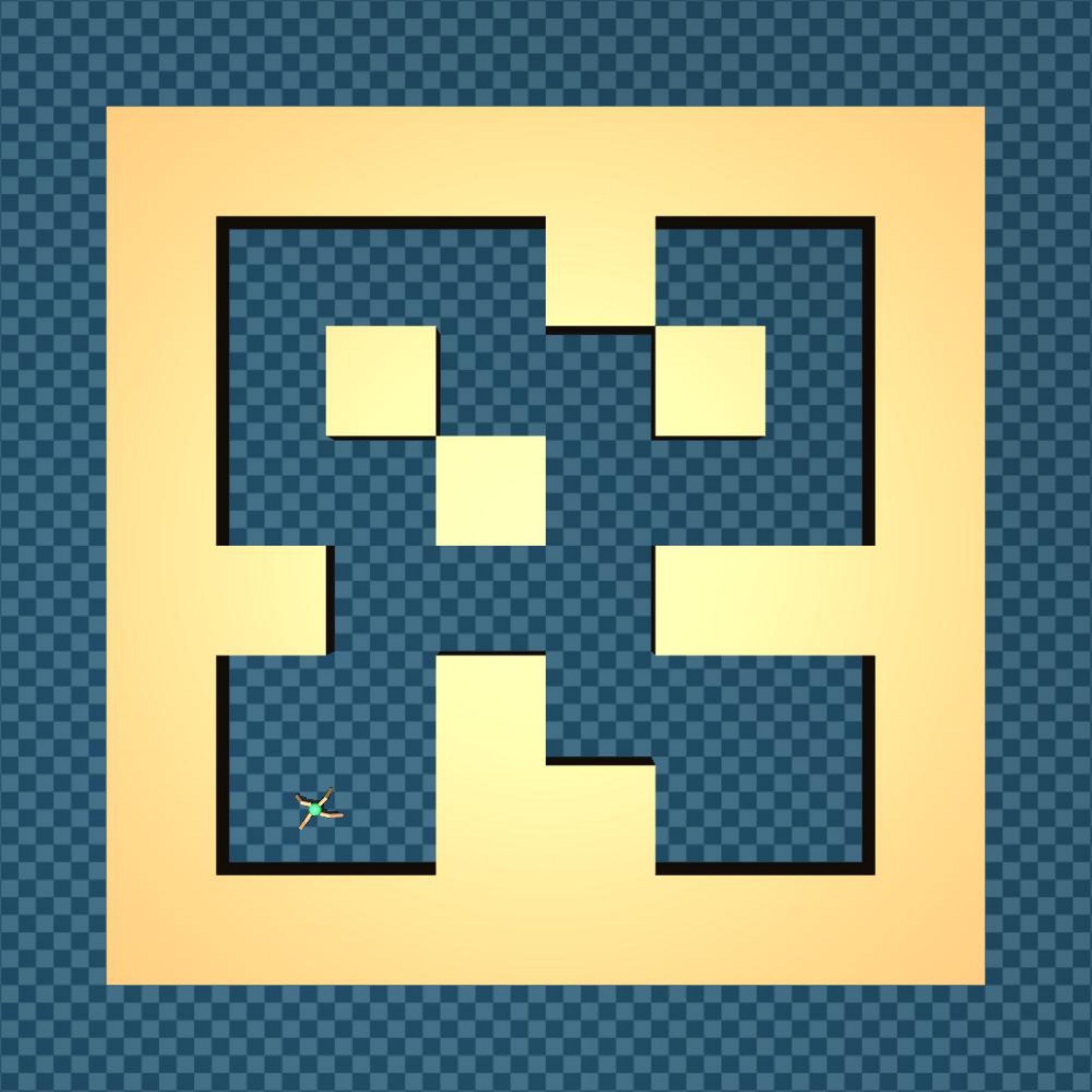}}
    \hspace{0.02\textwidth}
    \subfigure[\small Antmaze-large.]{\label{fig:antmaze-large}\includegraphics[width=0.197\textwidth]{./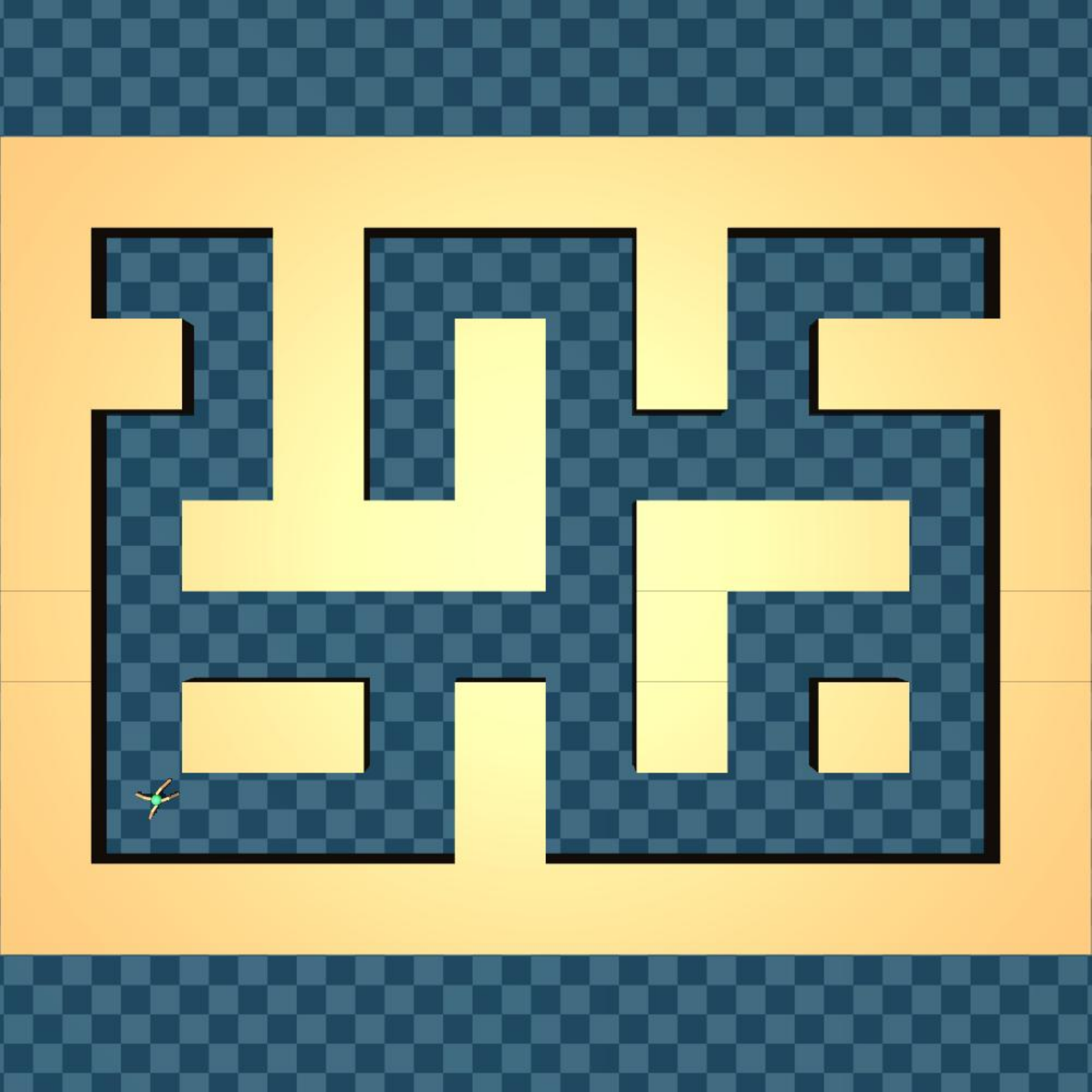}}
    \hspace{0.02\textwidth}
    \subfigure[\small Antmaze-ultra.]{\label{fig:antmaze-open}\includegraphics[width=0.197\textwidth]{./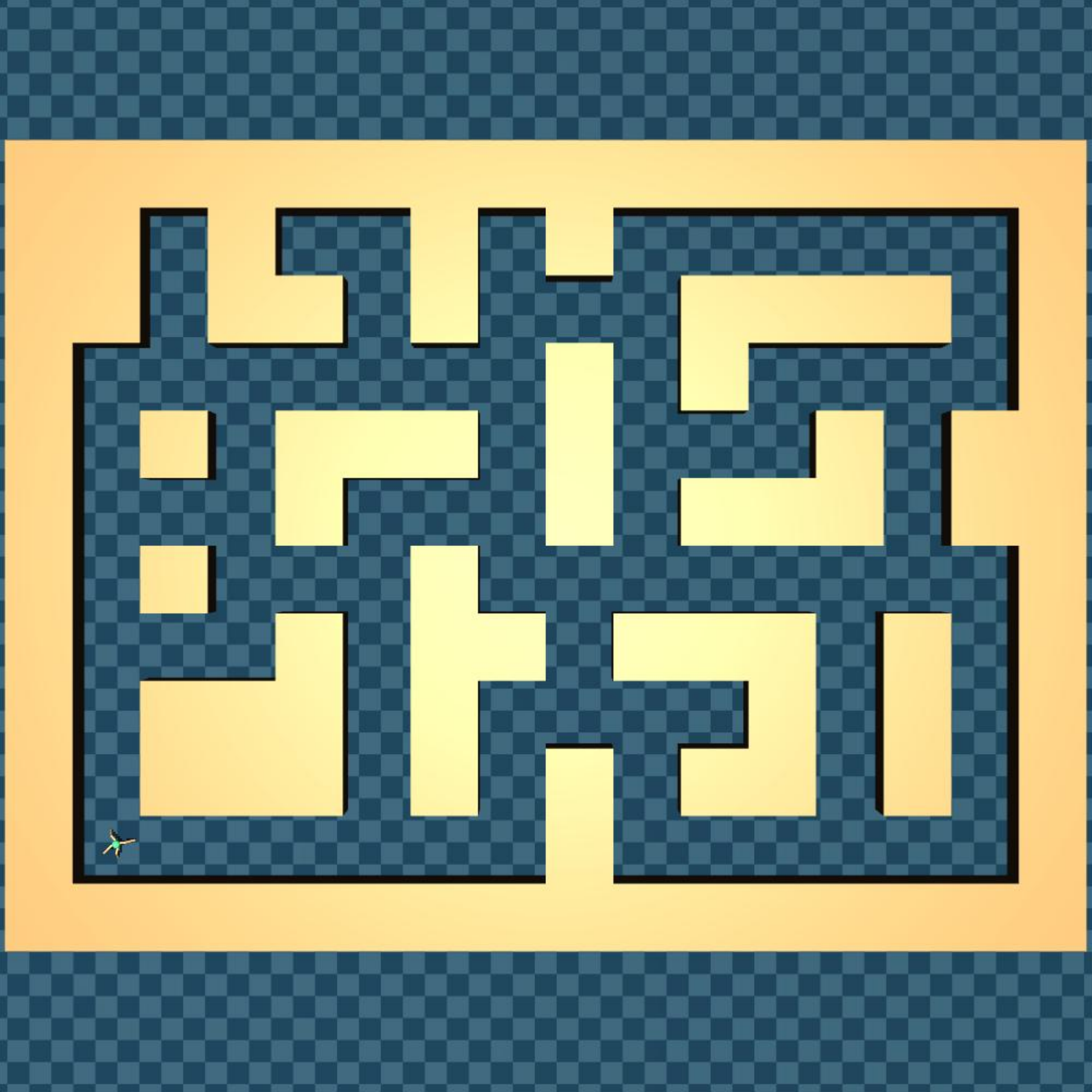}}

    \centering
    \subfigure[Adroit-pen.]{\label{fig:adroit-pen}\includegraphics[width=0.197\textwidth]{./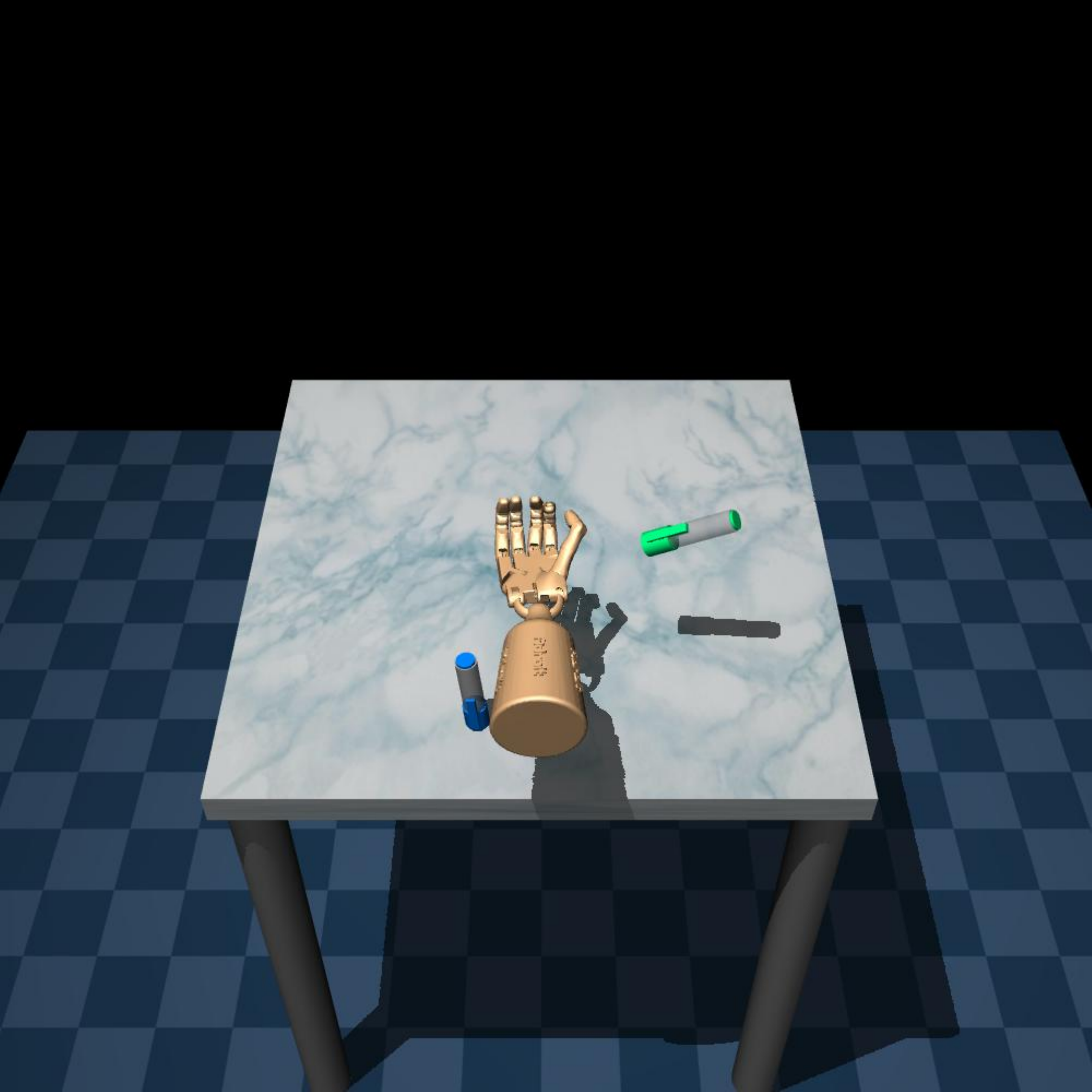}}
    \hspace{0.02\textwidth}
    \subfigure[Adroit-door.]{\label{fig:adroit-door}\includegraphics[width=0.197\textwidth]{./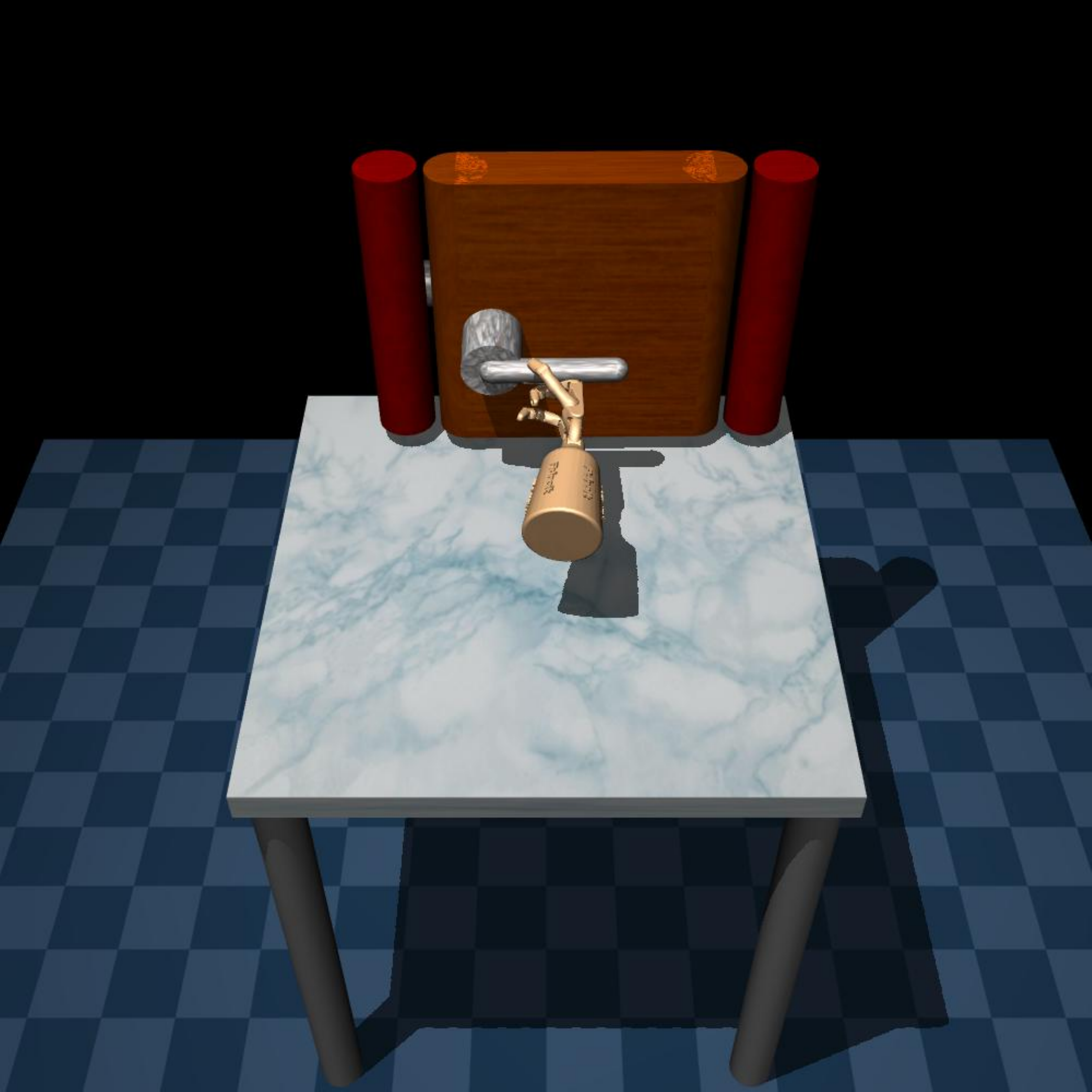}}
    \hspace{0.02\textwidth}
    \subfigure[Adroit-hammer.]{\label{fig:adroit-hammer}\includegraphics[width=0.197\textwidth]{./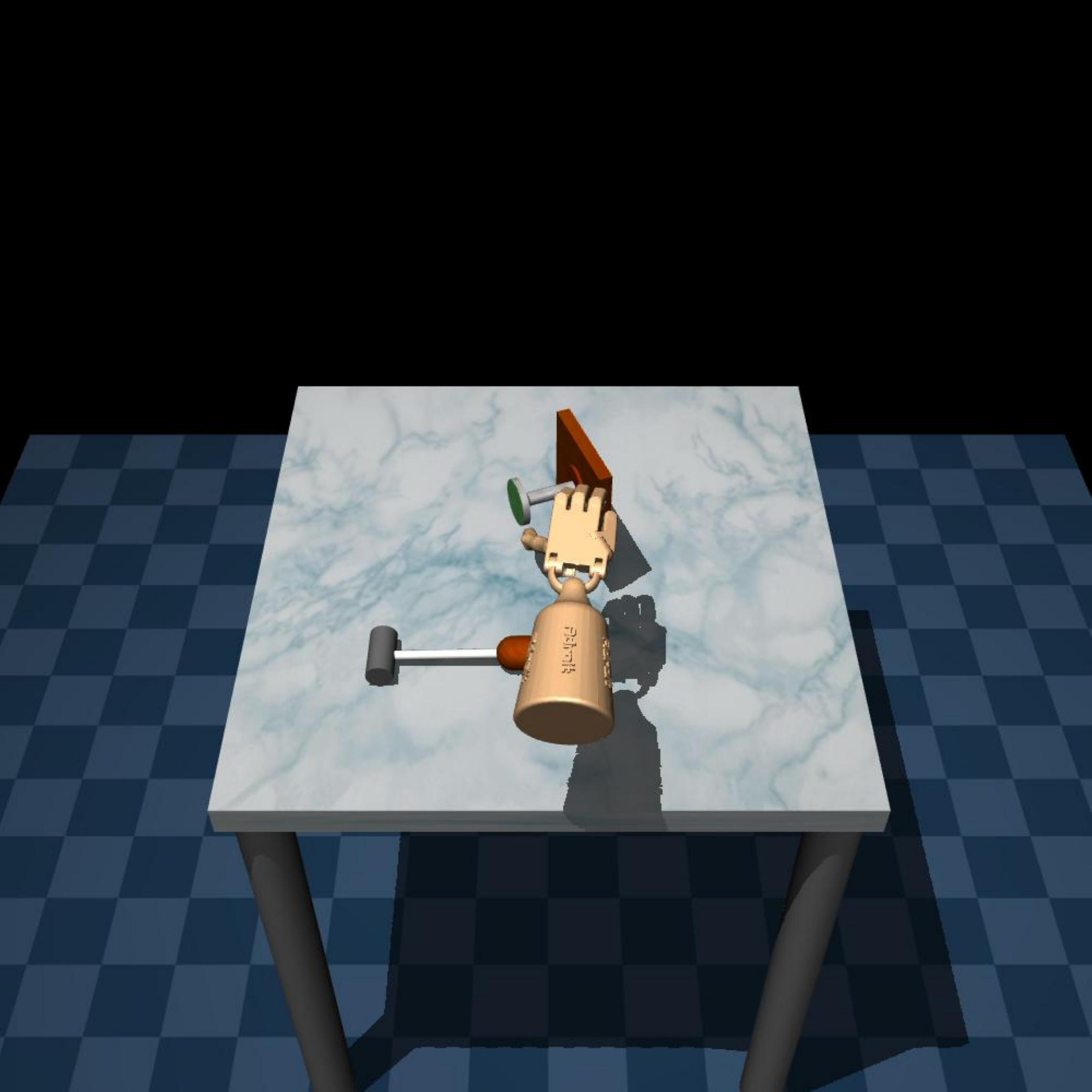}}
    \hspace{0.02\textwidth}
    \subfigure[Adroit-relocate.]{\label{fig:adroit-relocate}\includegraphics[width=0.197\textwidth]{./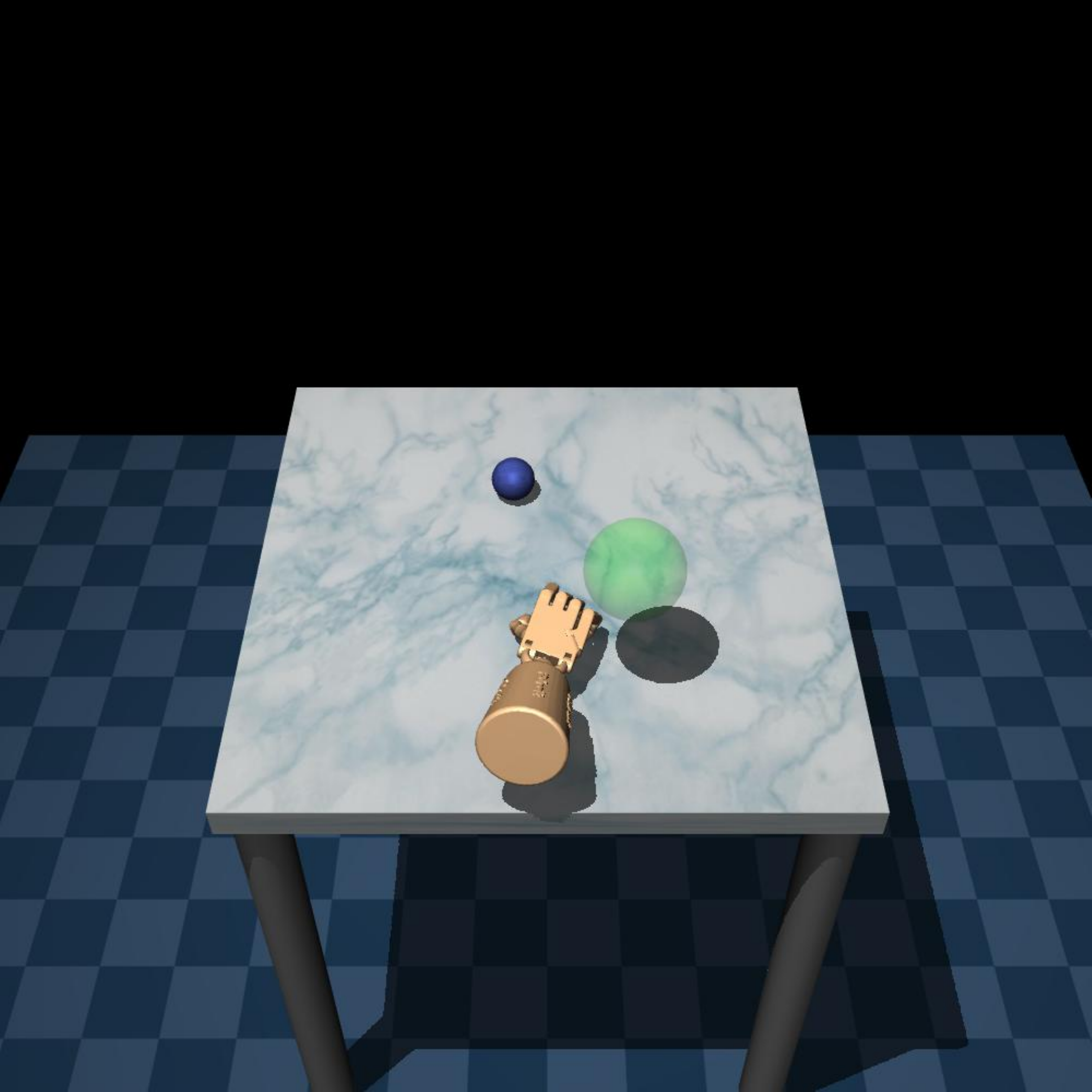}}
\caption{Locomotion, Maze2D, AntMaze and Adroit tasks.} 
\label{fig:gym}
\end{figure}


Subsequently, we evaluate $C^4$ by addressing the following key questions:
\begin{enumerate}
    \item \textbf{Evaluation on more benchmarks with reduced data:} How does $C^4$ compare against recent state-of-the-art offline RL, particularly within the challenging 10k-sample setting?
    \item \textbf{Generalization:} How does our method perform across a broader suite of datasets, especially in extremely small data regimes?
    \item \textbf{Scalability and consistency:} Is there a performance trade-off on large datasets, and does our method consistently outperform baselines across the entire spectrum of data regimes?
    \item \textbf{Covariance control under $C^4$:} Under the $C^4$ intervention, can the covariance be effectively maintained at a low level?
    \item \textbf{Robustness and efficiency:} How sensitive is the method to initialization? What do ablation studies reveal, and does $C^4$ introduce additional computational overhead?
\end{enumerate}

\subsection{Evaluation on more benchmarks with reduced data}

To rigorously evaluate performance in the {small-data regime}, the primary motivation for our approach, Table~\ref{tab:small_data} reports normalized D4RL scores across all tasks containing at most \textbf{10k} transitions. This selection encompasses the sample-reduced locomotion tasks (Hopper, HalfCheetah, Walker2d) as well as naturally sparse datasets: Kitchen-complete, Pen-human, and Door-human. Our method demonstrates superior generalization, achieving the highest average score of \textbf{58.1} and consistently outperforming strong baselines like BPPO and TSRL in these strictly data-constrained environments.

\begin{table}[t]
\footnotesize
\centering
\caption{      Normalized D4RL scores on tasks with $\le$10k samples per dataset. We compare performance on reduced-sample locomotion tasks and inherently small datasets (Kitchen, Pen, Door).}
\label{tab:small_data}
\resizebox{0.9 \linewidth}{!}{
\begin{tabular}{lccccccccc}
\toprule
\textbf{Task ($\le$10k)} & \textbf{BC} & \textbf{TD3BC} & \textbf{CQL} & \textbf{IQL} & \textbf{DOGE} & \textbf{BPPO} & \textbf{TSRL} & \textbf{A2PR} & \textbf{Ours} \\
\midrule
Hopper-m       & 28.8 & 30.7 & 50.1 & 61.0 & 55.6 & 55.0 & 60.9 & 55.9 & \textbf{69.2} \\
Hopper-mr      & 19.7 & 11.3 & 13.2 & 16.2 & 19.1 & 45.1 & 23.5 & 12.5 & \textbf{45.9} \\
Hopper-me      & 38.2 & 22.6 & 43.2 & 51.7 & 36.8 & 27.9 & 56.6 & 49.7 & \textbf{81.3} \\
Halfcheetah-m  & 40.2 & 25.9 & 41.7 & 35.6 & 42.8 & 28.5 & 43.3 & 37.1 & \textbf{46.3} \\
Halfcheetah-mr & 25.2 & 29.1 & 16.3 & 34.1 & 26.3 & 34.4 & 27.7 & 23.6 & \textbf{43.1} \\
Halfcheetah-me & 33.7 & 23.5 & 39.7 & 14.3 & 33.1 & 22.3 & 37.2 & 32.4 & \textbf{46.9} \\
Walker2d-m     & 25.4 & 11.2 & 54.1 & 34.2 & 53.7 & 54.7 & 47.3 & 5.9  & \textbf{65.9} \\
Walker2d-mr    & 2.5  & 9.3  & 13.8 & 17.7 & 15.5 & 29.5 & 27.6 & 34.4 & \textbf{55.4} \\
Walker2d-me    & 35.1 & 12.4 & 26.0 & 38.0 & 42.5 & 61.3 & 50.9 & 56.5 & \textbf{96.3} \\
Kitchen-c      & 33.8 & 0.0  & 31.3 & 51.0 & 10.2 & \textbf{91.5} & 5.7  & 8.3  & 55.5 \\
Pen-h          & 9.5  & 9.5  & 41.2 & 71.5 & 35.8 & \textbf{117.8}& 85.7 & -0.1 & 85.3 \\
Door-h         & 0.6  & 0.6  & 10.7 & 4.3  & -1.1 & \textbf{25.9} & 0.3  & -0.3 & 5.8  \\
\midrule
{Average} & 24.4 & 15.5 & 31.8 & 35.8 & 30.9 & 49.5 & 38.9 & 26.3 & \textbf{58.1} \\
\bottomrule
\end{tabular}%
}
\end{table}

Furthermore, we evaluated additional D4RL datasets beyond locomotion, including Adroit, AntMaze, and Maze2D, with results reported in Fig.~\ref{fig:human} and Tables~\ref{tab:adroit_results}, \ref{tab:maze2d_results}, and \ref{tab:antmaze_results}. The results show that our method delivers consistently strong performance across these diverse benchmarks and achieves remarkable gains in the vast majority of small data regimes.

\begin{figure}[ht]
	\centering
    \subfigure{\label{fig:comparison-gym-halfcheetah}\includegraphics[width=0.23\textwidth]{./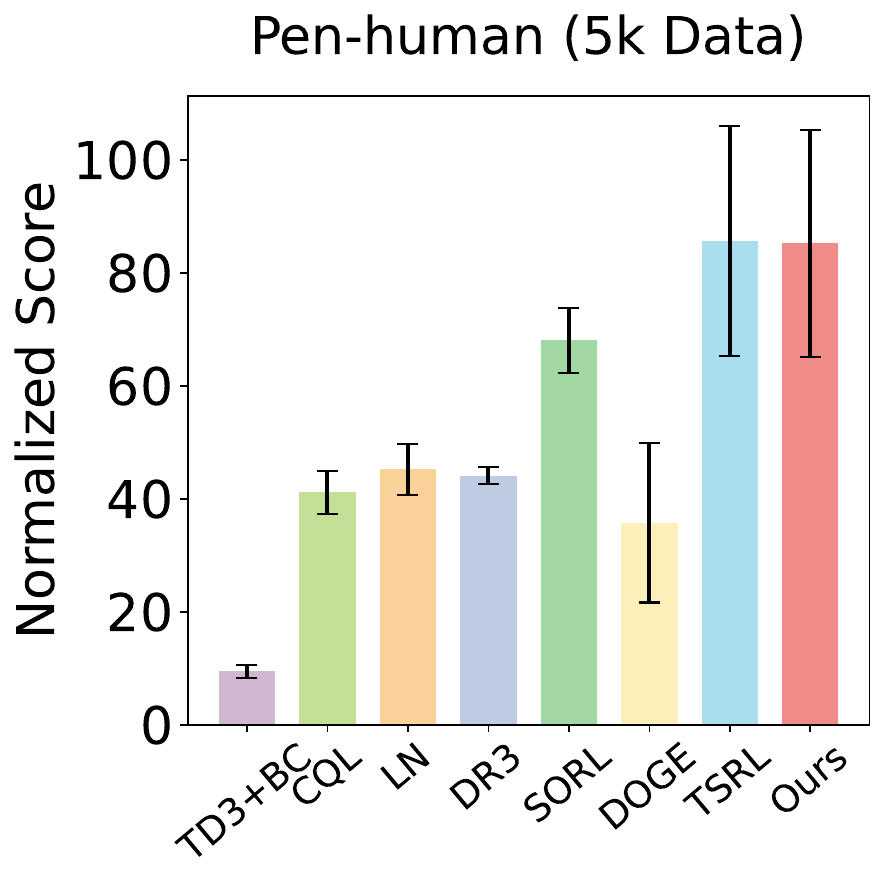}}
    \subfigure{\label{fig:comparison-gym-hopper}\includegraphics[width=0.23\textwidth]{./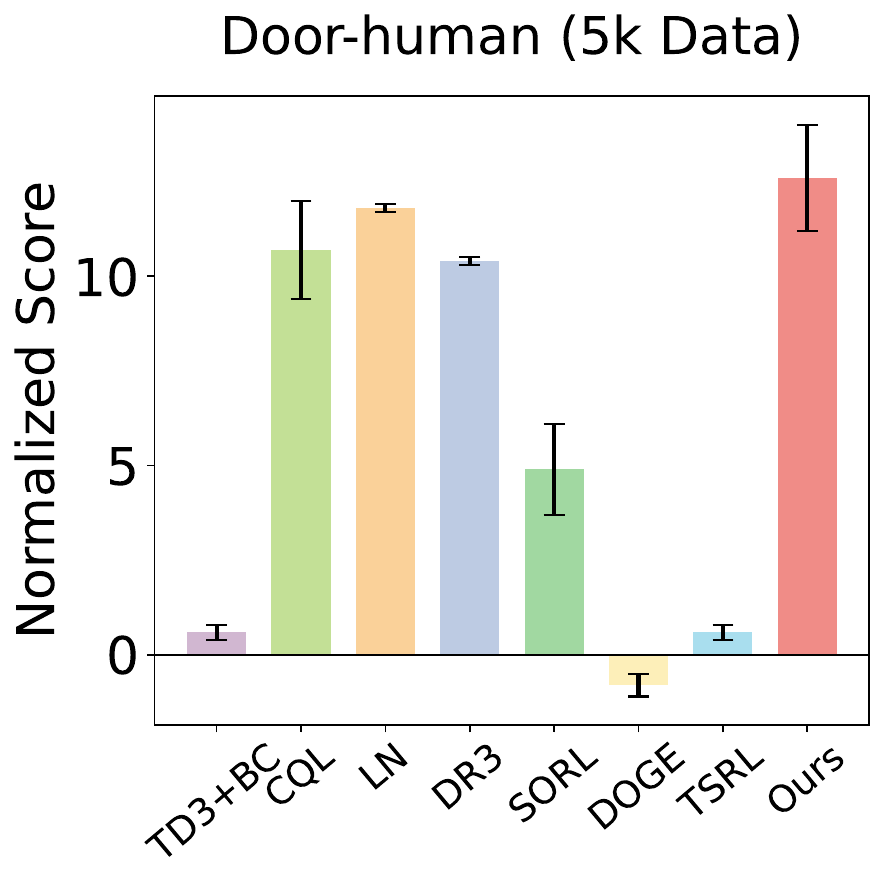}}
    \subfigure{\label{fig:comparison-gym-walker2d}\includegraphics[width=0.23\textwidth]{./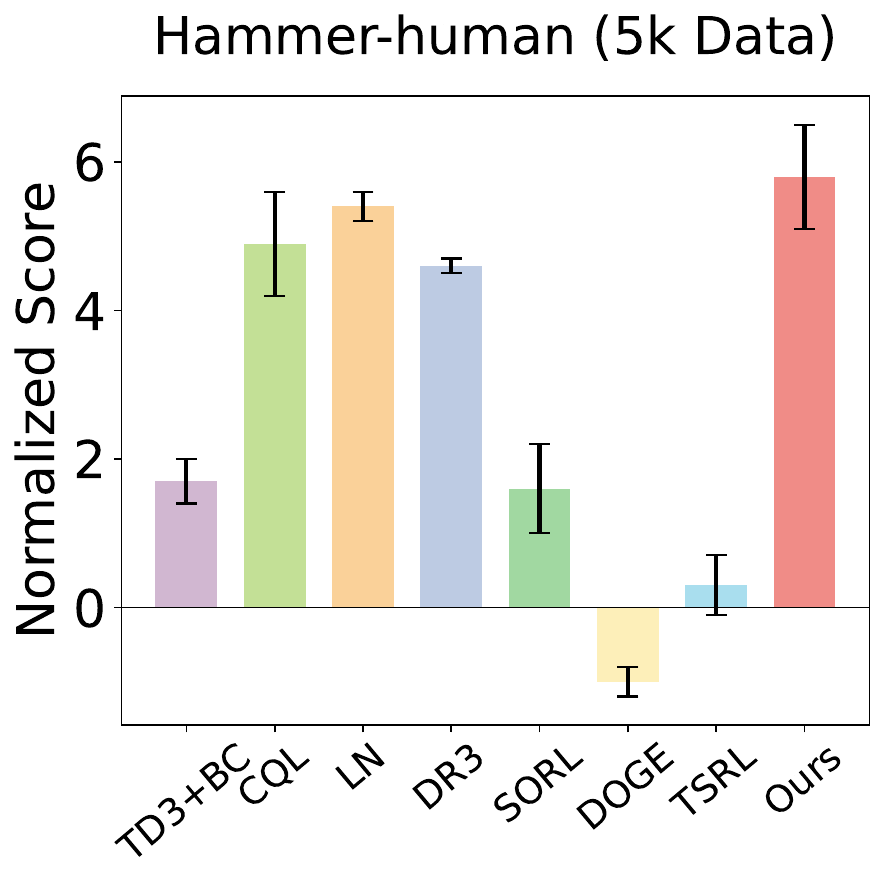}}
    \subfigure{\label{fig:comparison-gym-ant}\includegraphics[width=0.23\textwidth]{./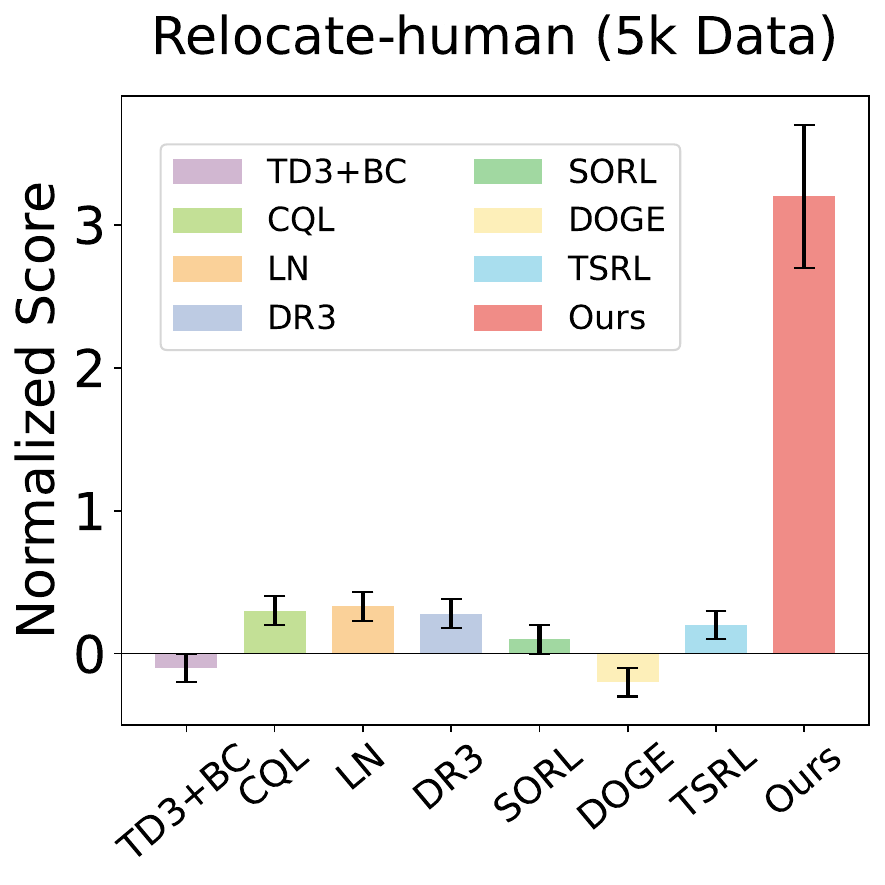}}    
\caption{      Comparison on Adroit tasks over different data sizes.} 
\label{fig:human}
\end{figure}

\begin{table}[ht]
\caption{
Average normalized score on D4RL Adroit tasks with reduced-size datasets ($\le$ 10k samples).}
\scriptsize
\setlength{\tabcolsep}{3pt}
\renewcommand{\arraystretch}{1.2}
\begin{adjustbox}{center}
\resizebox{0.95\textwidth}{!}{
\begin{tabular}{lcccccccccccc}
\toprule
{Task} & {Size} & {TD3BC} & {CQL} & {IQL} & {LN} & {DR3} & {SORL} & {DOGE} & {BPPO} & {TSRL} & {A2PR} & \textbf{Ours} \\ 
\midrule
Pen-human       & 5k  & 9.5  & 41.2 & 71.5 & 45.3 & 44.1 & 68.1 & 35.8 & \textbf{117.8} & 85.7 & -0.1 & 85.3 \\
Hammer-human    & 5k  & 1.7  & 4.9  & 4.3  & 5.4  & 4.6  & 1.6  & -1.0 & 2.4   & 0.3  & -0.3 & \textbf{5.8} \\
Door-human      & 5k  & 0.6  & 10.7 & 2.9  & 11.8 & 10.4 & 4.9  & -0.8 & \textbf{25.9}  & 0.6  & -0.2 & 12.6 \\
Relocate-human  & 5k  & -0.1 & 0.3  & 1.5  & 0.33 & 0.28 & 0.1  & -0.2 & -0.1  & 0.2  & 0.1  & \textbf{3.2} \\
Pen-cloned      & 10k & -0.2 & 1.8  & 35.9 & 32.5 & 2.0  & 30.1 & 33.5 & 30.4  & 38.4 & -0.1 & \textbf{58.2} \\
Hammer-cloned   & 10k & 0.0  & 0.5  & 1.5  & 0.6  & 0.4  & 0.7  & -0.2 & \textbf{8.4}   & 0.4  & -0.3 & 3.5 \\
Door-cloned     & 10k & -0.3 & -0.6 & 2.6  & -0.2 & -0.3 & -0.6 & 0.0  & -0.1  & 0.1  & -0.3 & \textbf{2.8} \\
Relocate-cloned & 10k & -0.2 & -0.3 & \textbf{1.4} & -0.2 & -0.1 & -0.3 & 0.0  & 0.2   & -0.2 & 0.2  & \textbf{1.4} \\
\midrule
{Average}   & -   & 1.4  & 7.3  & 15.2 & 12.0 & 7.7  & 13.1 & 8.4  & \textbf{23.1}  & 15.7 & -0.1 & 21.6 \\
\bottomrule
\end{tabular}
}
\end{adjustbox}
\label{tab:adroit_results}
\end{table}

\begin{table}[t]
\centering
\caption{
Average score on D4RL Maze2D tasks with reduced-size datasets (10k samples).}
\scriptsize
\setlength{\tabcolsep}{3pt}
\renewcommand{\arraystretch}{1.3}
\begin{adjustbox}{center}
\resizebox{0.95\textwidth}{!}{
\begin{tabular}{lccccccccccc}
\toprule
{Task} & {TD3BC} & {CQL} & {IQL} & {LN} & {DR3} & {SORL} & {DOGE} & {BPPO} & {TSRL} & {A2PR} & \textbf{Ours} \\ 
\midrule
Maze2D-open         & 12.3 & 52.3 & 63.6 & 22.1 & 37.0 & 58.0 & 70.1 & 75.4 & 57.0 & 35.7 & \textbf{83.0} \\
Maze2D-umaze-dense  & 104.5 & 74.1 & 103.6 & 86.5 & 117.6 & 63.4 & 115.1 & 51.4 & \textbf{138.0} & 125.5 & 130.3 \\
Maze2D-umaze        & 13.5 & 35.9 & 68.6 & 17.9 & 76.9 & 72.5 & 52.8 & 58.9 & 76.9 & \textbf{100.6} & 87.8 \\
Maze2D-medium-dense & 20.8 & 38.0 & 100.1 & 68.8 & 85.7 & 119.4 & 100.9 & 81.2 & 119.9 & {108.7} & \textbf{145.9} \\
Maze2D-medium       & 76.6 & 80.1 & 104.4 & 107.3 & 93.2 & 96.4 & 127.7 & 132.0 & 118.0 & \textbf{156.0} & 122.3 \\
Maze2D-large-dense  & 91.9 & 78.5 & 181.7 & 105.8 & 133.5 & 115.4 & 149.3 & \textbf{217.2} & 160.0 & 155.9 & 166.1 \\
Maze2D-large        & 28.5 & 42.7 & 103.2 & 115.4 & 82.2 & 58.3 & 126.5 & \textbf{228.1} & 140.0 & 97.3 & 152.9 \\
\midrule
{Average}    & 49.7 & 57.4 & 103.6 & 74.8 & 89.4 & 83.3 & 106.1 & 120.6 & 115.7 & 111.4 & \textbf{126.9} \\
\bottomrule
\end{tabular}
}
\end{adjustbox}
\label{tab:maze2d_results}
\end{table}

\begin{table}[t]
\caption{Normalized score on D4RL AntMaze tasks with reduced-size datasets (10k samples).}
\small
\setlength{\tabcolsep}{3pt}
\renewcommand{\arraystretch}{1.4}
\begin{adjustbox}{center}
\resizebox{0.9\textwidth}{!}{
\begin{tabular}{lcccccccccccc}
\toprule
{Task} & {TD3BC} & {CQL} & {IQL} & {LN} & {DR3} & {SORL} & {DOGE} & {BPPO} & {TSRL} & {A2PR} & {\textbf{Ours}} \\ 
\midrule
AntMaze-umaze           & 20.9 & 21.1 & 69.8 & 48.1 & 41.1 & 65.3 & \textbf{78.9} & 53.3 & 74.3 & 69.2 & 68.7 \\
AntMaze-umaze-diverse   & 16.8 & 43.3 & 58.1 & 29.7 & 0.5  & 39.1 & 43.8 & 45.1 & 57.6 & 27.5 & \textbf{65.9} \\
AntMaze-medium-diverse  & 0.0  & 0.0  & 0.0  & 0.0  & 0.0  & 0.0  & 0.0  & 0.3  & 0.0  & 0.0  & \textbf{7.2} \\
AntMaze-medium-play     & 0.0  & 0.0  & 0.4  & 0.0  & 0.0  & 0.0  & 0.0  & 0.0  & 0.0  & 0.3  & \textbf{2.9} \\
AntMaze-large-diverse   & 0.0  & 0.0  & 0.0  & 0.0  & 0.0  & 0.0  & 0.0  & 0.0  & 0.0  & 0.0  & \textbf{9.8} \\
AntMaze-large-play      & 0.0  & 0.0  & 0.0  & 0.0  & 0.0  & 0.0  & 0.0  & 0.0  & 0.0  & 0.2  & \textbf{7.3} \\
\midrule
{Average}           & 6.3  & 10.7 & 21.4 & 13.0 & 6.9  & 17.4 & 20.5 & 16.5 & 22.0 & 16.2 & \textbf{27.0} \\
\bottomrule
\end{tabular}
}
\end{adjustbox}
\label{tab:antmaze_results}
\end{table}

\subsection{Generalization and compatibility with SOTA offline
 RL methods}
\label{subsec:generalization}

\paragraph{Improvements on standard baselines.}
We first examine the generalization capability of $C^4$ as a plug-in module for established baselines. 
Table~\ref{tab:cql-full} compares CQL with and without the \(C^4\) module on the full D4RL locomotion datasets. Rather than degrading performance, \(C^4\) increases the total normalized score from 695.5 to 742.1. In tasks where the base algorithm already performs strongly (e.g., Hopper-medium-expert), \(C^4\) maintains competitive performance, while in more challenging settings (e.g., Hopper-medium, Walker2d-medium-replay) it yields notable gains. These observations suggest that as data density increases, the gradient distribution becomes more stable and the \(C^4\) penalty naturally adjusts, avoiding over-regularization and retaining the strengths of the underlying algorithm.
\begin{table}[ht]
    \centering
    \caption{ Comparison of CQL with and without $C^4$ on full-size (see Table~\ref{table:datasets-overview}) D4RL locomotion datasets. The module improves the total score without inducing performance degradation.}
    \label{tab:cql-full}
    \renewcommand{\arraystretch}{1.2}
    \resizebox{\textwidth}{!}{
    \begin{tabular}{lccccccccc}
        \toprule
        {Method} & Hop.-m & Hop.-mr & Hop.-me & Half.-m & Half.-mr & Half.-me & Wal.-m & Wal.-mr & Wal.-me \\
        \midrule
        CQL         & 58.5      & 95.0       & \textbf{105.4} & 41.0          & \textbf{45.5}   & 91.6           & 72.5       & 77.2        & \textbf{108.8}       \\
        CQL + $C^4$ & \textbf{85.9} & \textbf{100.7} & 89.4       & \textbf{48.5} & 44.7            & \textbf{91.6} & \textbf{81.8} & \textbf{90.9}    & 108.6      \\
        \bottomrule
    \end{tabular}
    }
\end{table}

\paragraph{Compatibility with stronger offline RL methods.}
Beyond standard baselines, we investigate whether $C^4$ can further enhance recent state-of-the-art (SOTA) algorithms, specifically BPPO~\citep{zhuang2023behavior} and A2PR~\citep{liu2024adaptive}. While these methods achieve strong performance on standard benchmarks, they remain susceptible to overfitting in strictly limited data regimes (e.g., the 10k-sample setting). Theoretically, both algorithms rely on temporal-difference (TD) learning for value estimation: A2PR utilizes Q-learning updates, while BPPO employs a SARSA-style value warm-up. Consequently, they inherit the vulnerability to high cross-covariance terms in the TD error when data is scarce.
We postulate that $C^4$ serves as a complementary regularizer to these approaches by explicitly controlling the covariance induced by TD updates. To test this, we integrate $C^4$ into BPPO and A2PR without altering their core mechanisms.

The results, summarized in Tables~\ref{tab:bppo_c4_10k} and~\ref{tab:a2pr_c4_10k}, indicate substantial performance gains.
For \textbf{BPPO} (Table~\ref{tab:bppo_c4_10k}), adding $C^4$ consistently improves performance across diverse tasks, raising the average normalized score from $38.3$ to $53.5$—a relative improvement of $39.6\%$.
For \textbf{A2PR} (Table~\ref{tab:a2pr_c4_10k}), the effect is even more pronounced: the method achieves an average score of $57.6$ compared to the baseline of $31.0$, corresponding to an $85.9\%$ relative gain.
These findings suggest that uncontrolled cross-covariance is a fundamental bottleneck even for advanced offline RL methods, and $C^4$ provides a robust, algorithm-agnostic solution to mitigate this issue in small-data regimes.

\begin{table}[t]
    \centering
    \caption{
    Performance of BPPO with and without $C^4$ on 10k-sample datasets (normalized scores).}
    \label{tab:bppo_c4_10k}
    \resizebox{0.99\textwidth}{!}{
    \begin{tabular}{lccccccccc}
        \toprule
        {Method}        & Ant  & Halfcheetah & Hopper & Walker2d & AntMaze & Pen  & Door & Kitchen & Avg. \\
        \midrule
        BPPO          & 85.5 & 22.3        & 27.9   & 61.3     & 63.4    & 30.4 & -0.1 & 15.9    & 38.3 \\
        BPPO + $C^4$  & 87.1 & 36.1        & 56.0   & 85.2     & 66.2    & 74.5 & -0.1 & 23.1    & \textbf{53.5 (+39.6\%)} \\
        \bottomrule
    \end{tabular}
    }
\end{table}

\begin{table}[t]
    \centering
    \caption{
    Performance of A2PR with and without $C^4$ on 10k-sample datasets.}
    \label{tab:a2pr_c4_10k}
    \resizebox{0.99\textwidth}{!}{
    \begin{tabular}{lccccccccc}
        \toprule
        {Method}        & Ant  & Halfcheetah & Hopper & Walker2d & AntMaze & Pen  & Door & Kitchen & Avg. \\
        \midrule
        A2PR          & 66.7 & 32.4        & 49.7   & 56.5     & 42.5    & -0.1 & -0.3 & 0.3     & 31.0 \\
        A2PR + $C^4$  & 92.6 & 49.0        & 83.6   & 101.3    & 75.8    & 47.9 & 0.1  & 10.2    & \textbf{57.6 (+85.9\%)} \\
        \bottomrule
    \end{tabular}
    }
\end{table}

\subsection{Scalability and consistency}
\label{subsec:scalability_consistency}

A central question for any regularization technique designed for data scarcity is whether it hinders learning when data is abundant. In this subsection, we examine whether our method imposes a performance trade-off on large datasets and how consistently it scales across varying data budgets.

\paragraph{Adaptation to large-scale benchmarks.}
To address the first question, we assess how our plug-in approach compares against state-of-the-art methods specifically tuned for general offline RL benchmarks using full replay buffers (containing millions of transitions). Table~\ref{tab:full-locomotion} reports the normalized scores on locomotion tasks.
CQL+$C^4$ demonstrates robust performance, surpassing standard baselines like IQL and TD3BC, and remaining highly competitive with recent strong methods such as BPPO and A2PR. While our primary contribution lies in solving the small-sample dilemma, these results confirm that $C^4$ is a safe and versatile plug-in: it significantly boosts data efficiency in sparse regimes while automatically adapting to large-scale datasets without requiring manual tuning or deactivation.

\paragraph{Consistency across varying data budgets.}
To scrutinize the scaling behavior more granularly, we visualize the performance trends as the training set size grows.
First, we benchmark eight offline RL algorithms on the challenging Adroit tasks (Pen, Hammer, Door, and Relocate) under cloned regimes, as shown in Fig.~\ref{fig:cloned_all_tasks}. Each curve traces the performance across budgets of $\{1,5,10,100,500\}$ thousand state-action pairs. Our method attains the highest returns on most manipulation tasks and scales favorably with data availability, pointing to strong robustness and transfer capability under contact-rich dynamics.

Separately, we assess seven offline RL algorithms on the standard Gym domains (HalfCheetah, Hopper, Walker2d, and Ant) across diverse dataset qualities (medium-expert, expert, medium, medium-replay, and full-replay), as shown in Fig.~\ref{fig:pdf_thumbs}. These curves utilize budgets of $\{3,5,10,50,100\}$ thousand pairs. While all baselines naturally benefit from larger datasets, our approach consistently achieves the top trajectories and exhibits the clearest gains on replay datasets. This suggests that $C^4$ not only adapts to scale but also generalizes better to varied dynamics and distribution shifts throughout the learning spectrum.

\begin{table}[h]
    \centering
    \caption{ Normalized scores on locomotion tasks with full datasets ($\sim 10^6$ samples).}
    \label{tab:full-locomotion}
    \renewcommand{\arraystretch}{1.2}
    \resizebox{\textwidth}{!}{
    \begin{tabular}{lccccccccc}
        \toprule
        Task & BC & TD3BC & IQL & DOGE & TSRL & BPPO & A2PR & CQL & CQL+$C^4$ \\
        \midrule
        Hopper-m       & 52.9 & 59.3 & 66.3 & 98.6          & 86.7 & 93.9          & \textbf{100.8} & 58.5 & 85.9 (+27.4)  \\
        Hopper-mr      & 18.1 & 60.9 & 94.7 & 76.2          & 78.7 & 92.5          & \textbf{101.5} & 95.0 & 100.7 (+5.7)  \\
        Hopper-me      & 52.5 & 98.0 & 91.5 & 102.7         & 95.9 & \textbf{112.8} & 112.1        & 105.4 & 89.4 ($-16.0$) \\
        Hopper-e       & 108.0 & 100.1 & 99.3 & 107.4       & 110.0 & 113.2        & \textbf{115.0} & 98.4 & 110.1 (+11.7) \\
        Halfcheetah-m  & 42.6 & 48.3 & 47.4 & 40.6         & 48.2 & 44.0          & \textbf{68.6}  & 41.0 & 48.5 (+7.5)   \\
        Halfcheetah-mr & 55.2 & 44.6 & 44.0 & 42.8         & 42.2 & 41.0          & \textbf{56.6}  & 45.5 & 44.7 ($-0.8$) \\
        Halfcheetah-me & 55.2 & 90.7 & 86.7 & {78.7} & 92.0 & 92.5        & \textbf{98.3}          & 91.6 & 91.6 (+0.0)   \\
        Halfcheetah-e  & 92.2 & 82.1 & 88.9 & 93.5         & 94.3 & 95.3          & \textbf{103.2} & 95.6 & 93.2 ($-2.4$) \\
        Walker2d-m     & 75.3 & 83.7 & 78.3 & 86.8         & 77.5 & 83.6          & \textbf{89.7}  & 72.5 & 81.8 (+9.3)   \\
        Walker2d-mr    & 26.0 & 81.8 & 73.9 & 87.3         & 66.1 & 77.6          & \textbf{94.4}  & 77.2 & 90.9 (+13.7)  \\
        Walker2d-me    & 107.5 & 110.1 & 109.6 & 110.4     & 106.4 & 113.1        & \textbf{114.7} & 108.8 & 108.6 ($-0.2$) \\
        Walker2d-e     & 107.9 & 108.2 & 109.7 & 107.3     & 110.2 & 113.8        & \textbf{114.2} & 110.3 & 109.7 ($-0.6$) \\
        \midrule
        {Average} & 66.1 & 80.7 & 82.5 & 86.1       & 84.0 & 89.4          & \textbf{97.4}  & 83.3 & 87.9 (+4.6)   \\
        \bottomrule
    \end{tabular}
    }
\end{table}

\begin{figure}[ht]
\centering
\subfigure{\includegraphics[width=0.234\textwidth]{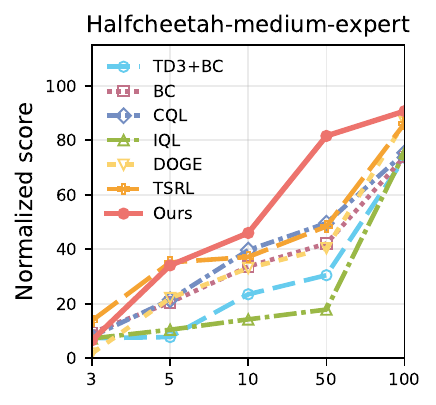}}
\subfigure{\includegraphics[width=0.234\textwidth]{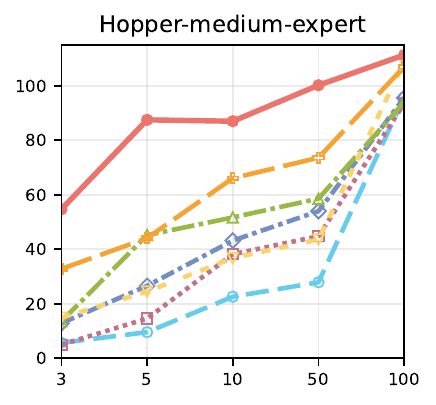}}
\subfigure{\includegraphics[width=0.234\textwidth]{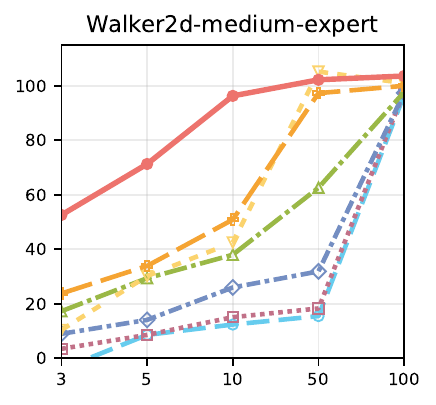}}
\subfigure{\includegraphics[width=0.234\textwidth]{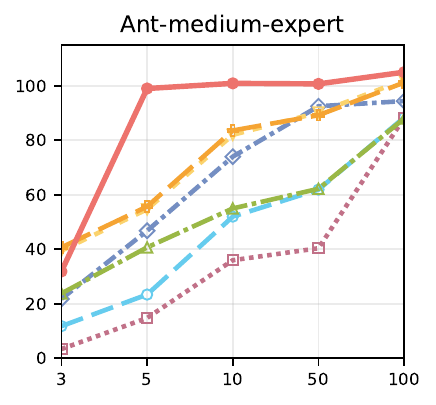}}

\subfigure{\includegraphics[width=0.234\textwidth]{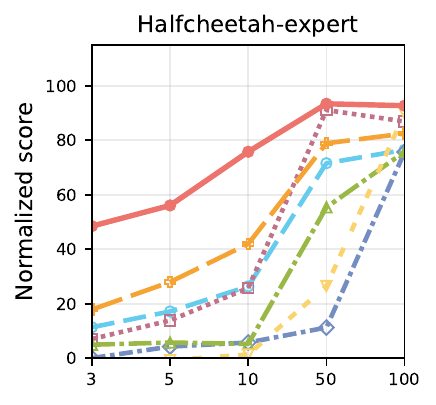}}
\subfigure{\includegraphics[width=0.234\textwidth]{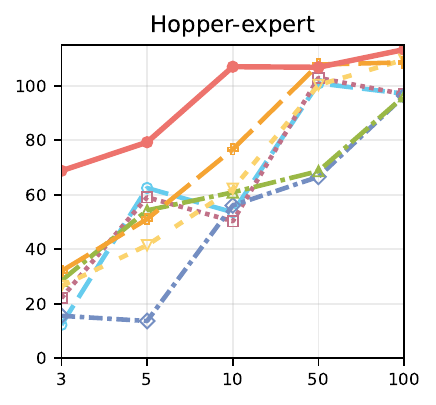}}
\subfigure{\includegraphics[width=0.234\textwidth]{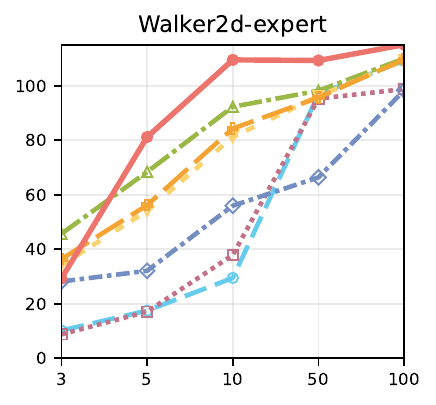}}
\subfigure{\includegraphics[width=0.234\textwidth]{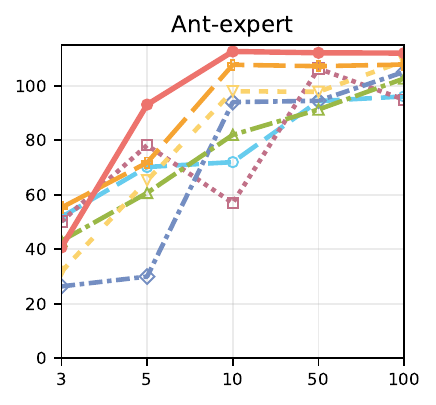}}

\subfigure{\includegraphics[width=0.234\textwidth]{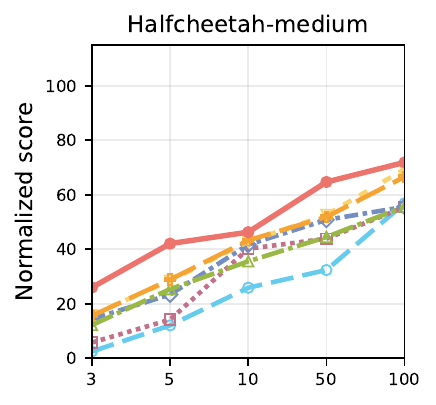}}
\subfigure{\includegraphics[width=0.234\textwidth]{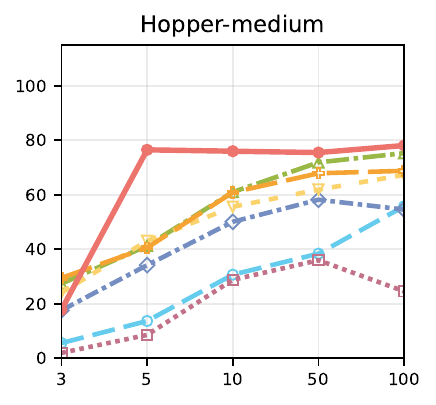}}
\subfigure{\includegraphics[width=0.234\textwidth]{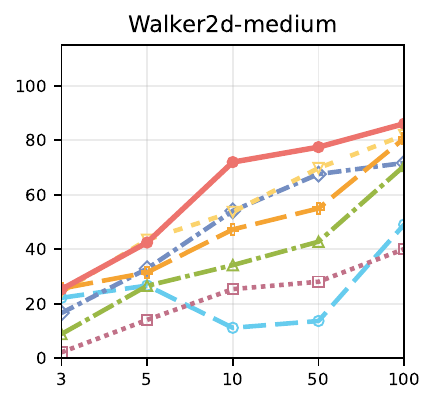}}
\subfigure{\includegraphics[width=0.234\textwidth]{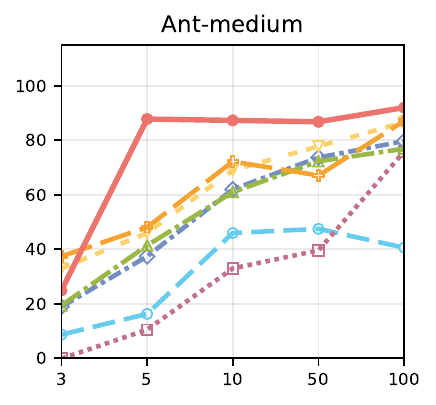}}
\subfigure{\includegraphics[width=0.234\textwidth]{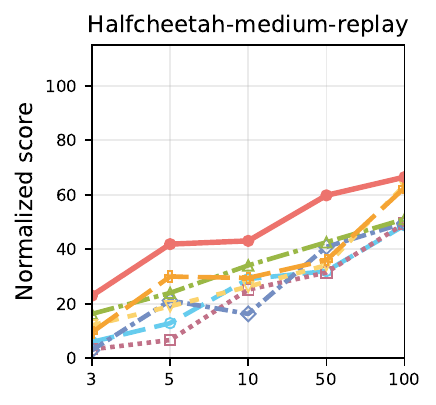}}
\subfigure{\includegraphics[width=0.234\textwidth]{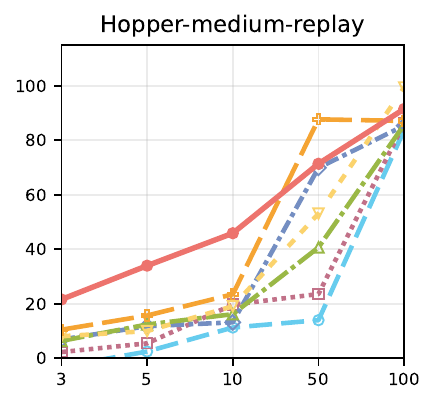}}
\subfigure{\includegraphics[width=0.234\textwidth]{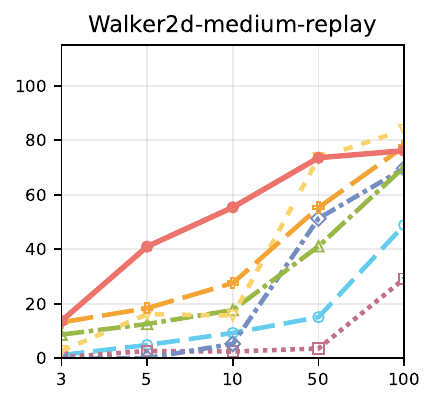}}
\subfigure{\includegraphics[width=0.234\textwidth]{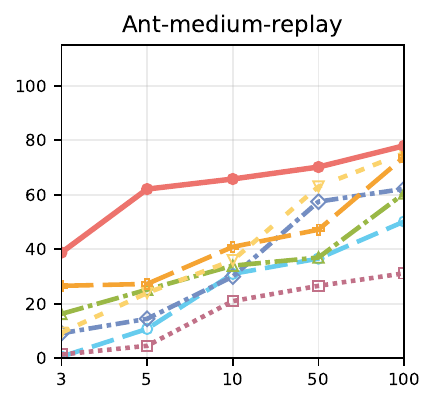}}
\subfigure{\includegraphics[width=0.234\textwidth]{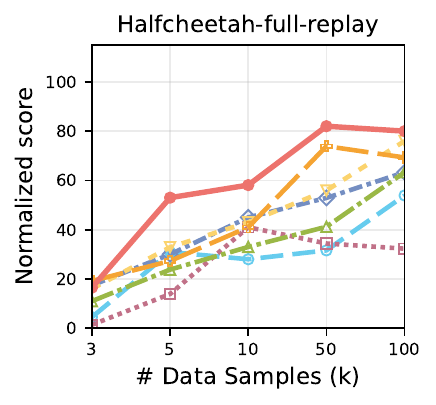}}
\subfigure{\includegraphics[width=0.234\textwidth]{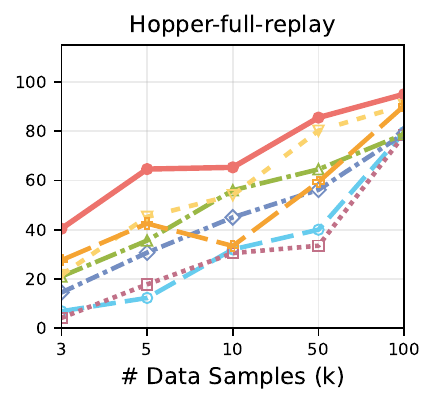}}
\subfigure{\includegraphics[width=0.234\textwidth]{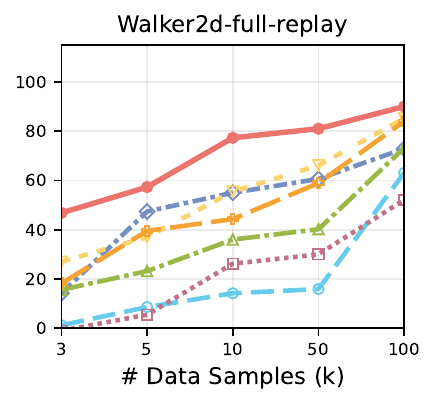}}
\subfigure{\includegraphics[width=0.234\textwidth]{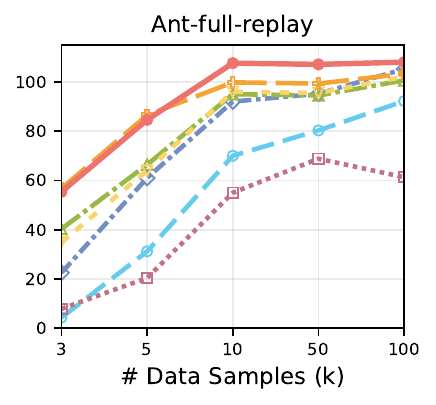}}
\caption{      Comparison on Locomotion tasks over different data size.}
\label{fig:pdf_thumbs}
\end{figure}

\begin{figure}[ht]
  \centering
  \includegraphics[width=0.95\linewidth]{./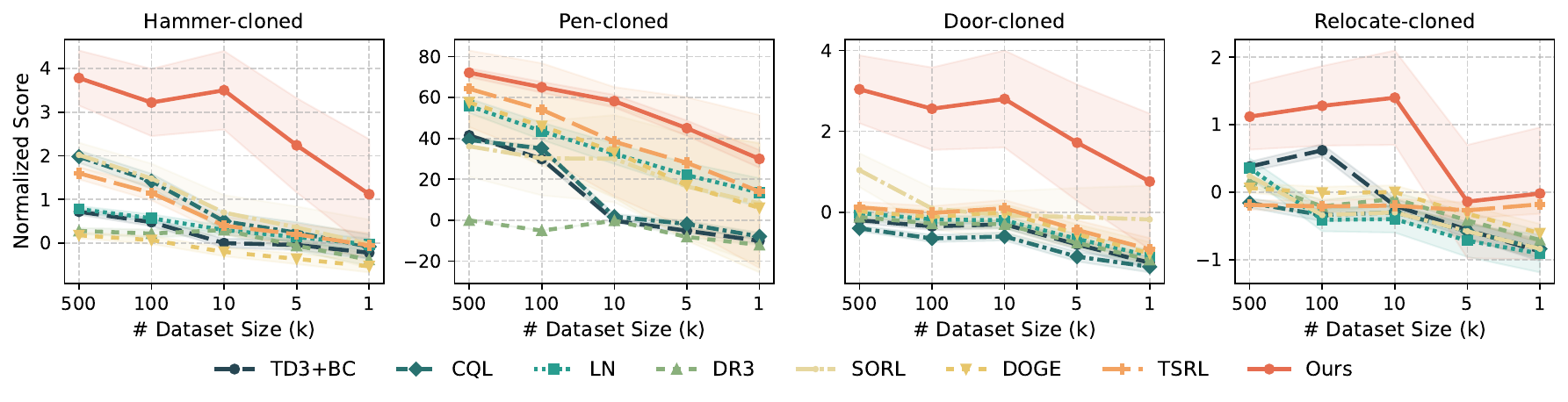}
  \caption{      Comparison of Adroit tasks over different data sizes.}
  \label{fig:cloned_all_tasks}
\end{figure}

\subsection{Covariance control under $C^4$}

We monitor the cross covariance between current and next-state input gradients throughout training. For a mini-batch B, define
$
g_i=\nabla_{x}Q_{\phi}(x_i), g'_i=\nabla_{x'}Q_{\phi'}(x'_i).
$
The empirical cross covariance matrix is
$
\Sigma_{\mathcal{B}}=\frac{1}{|\mathcal{B}|-1}\sum_{i\in \mathcal{B}}\big(g'_i-\bar g'\big)\big(g_i-\bar g\big)^{\top}.
$
We report the dimension-normalized trace
$
\operatorname{tr}_{n}(\Sigma_{\mathcal{B}})=\frac{1}{m}\operatorname{tr}(\Sigma_{\mathcal{B}}),
$
which targets the harmful TD component in the variance decomposition
$
\mathrm{Var}[\delta]\approx A+B-C, 
C=2\gamma k k'\,\mathrm{Cov}\!\Big(\langle \mathbf w',\nabla_{x'}Q_{\phi'}(x')\rangle,\ \langle \mathbf w,\nabla_{x}Q_{\phi}(x)\rangle\Big).
$
In Fig.~\ref{fig:pdf_cov} we visualize $\operatorname{tr}_{n}(\Sigma_{\mathcal{B}})$ over training. Across Locomotion tasks, C\textsuperscript{4} keeps $\operatorname{tr}_{n}(\Sigma_{\mathcal{B}})$ substantially lower than LN, CQL, and DR3 during most of training, with the gap most pronounced on replay datasets. This indicates that C\textsuperscript{4} suppresses the cross covariance while preserving stable learning.

\begin{figure}[ht]
\centering
\subfigure{\includegraphics[width=0.18\textwidth]{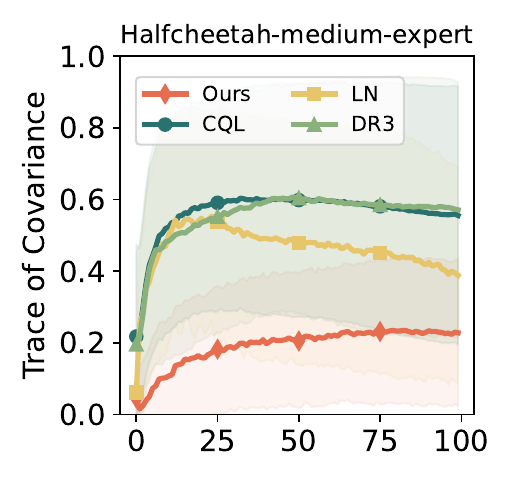}}
\subfigure{\includegraphics[width=0.18\textwidth]{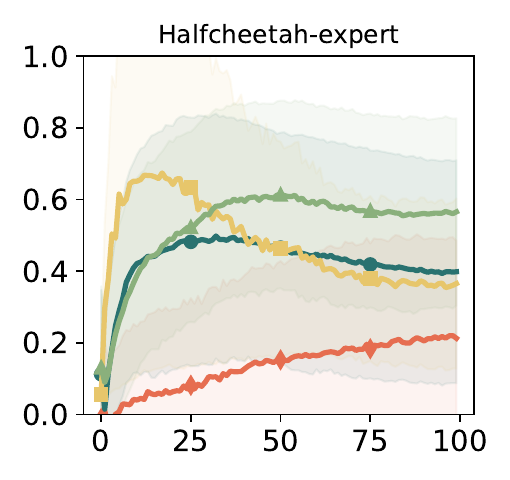}}
\subfigure{\includegraphics[width=0.18\textwidth]{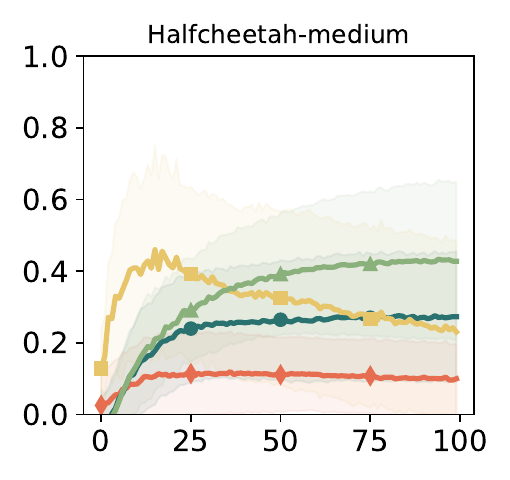}}
\subfigure{\includegraphics[width=0.18\textwidth]{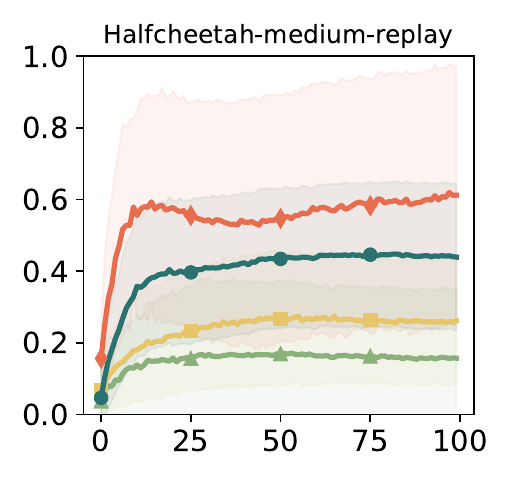}}
\subfigure{\includegraphics[width=0.18\textwidth]{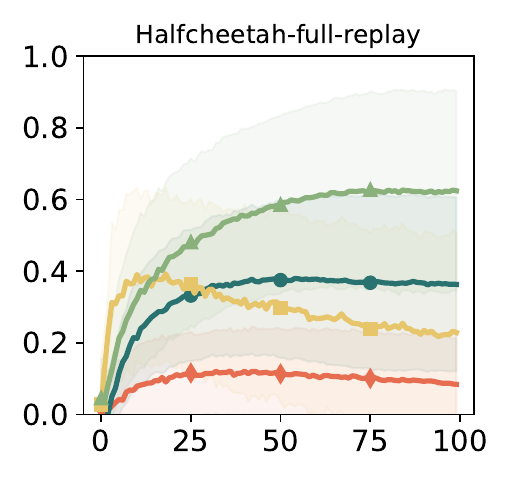}}

\subfigure{\includegraphics[width=0.18\textwidth]{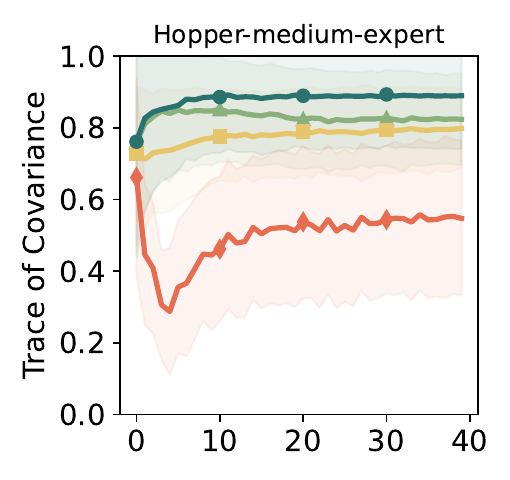}}
\subfigure{\includegraphics[width=0.18\textwidth]{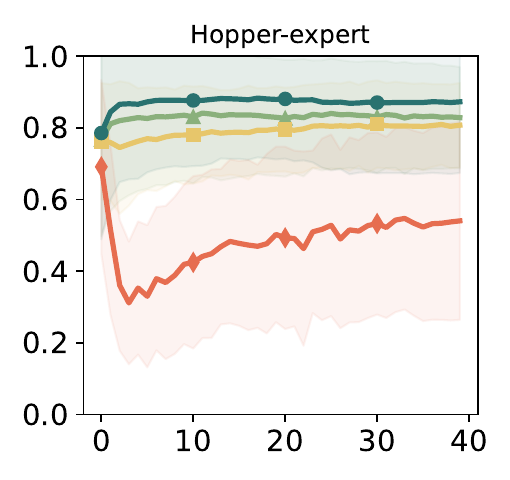}}
\subfigure{\includegraphics[width=0.18\textwidth]{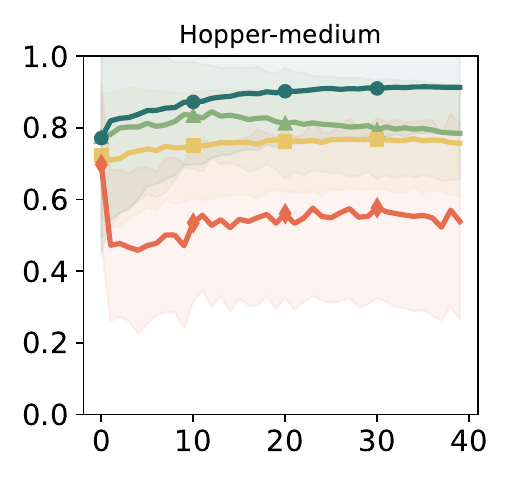}}
\subfigure{\includegraphics[width=0.18\textwidth]{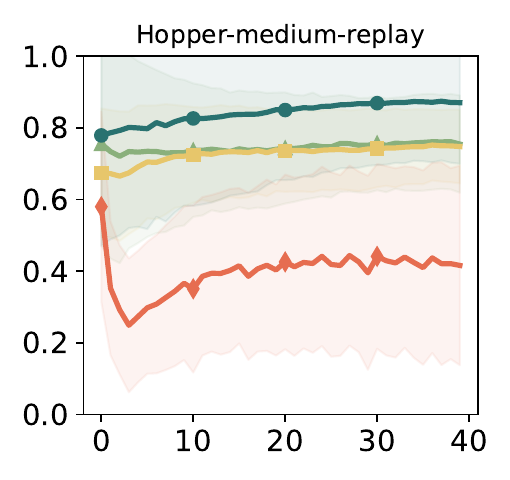}}
\subfigure{\includegraphics[width=0.18\textwidth]{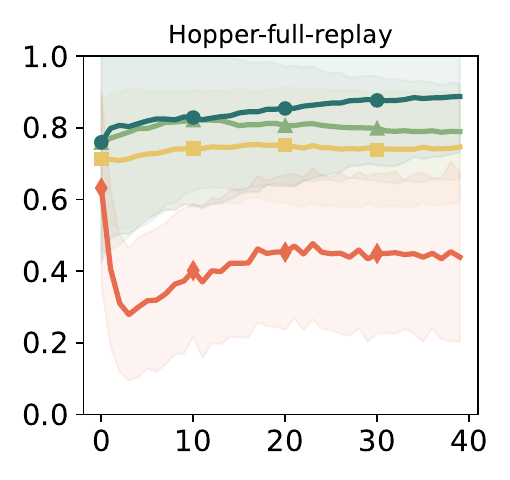}}

\subfigure{\includegraphics[width=0.18\textwidth]{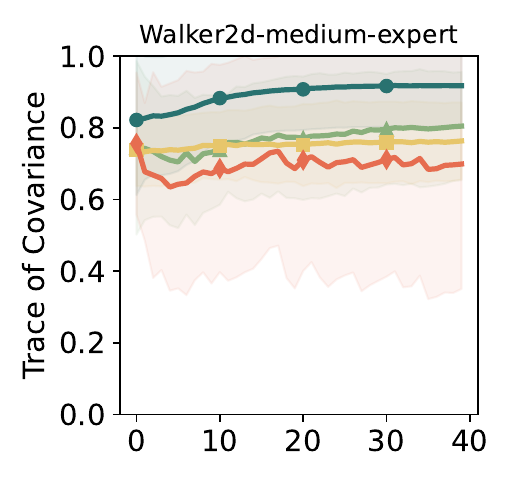}}
\subfigure{\includegraphics[width=0.18\textwidth]{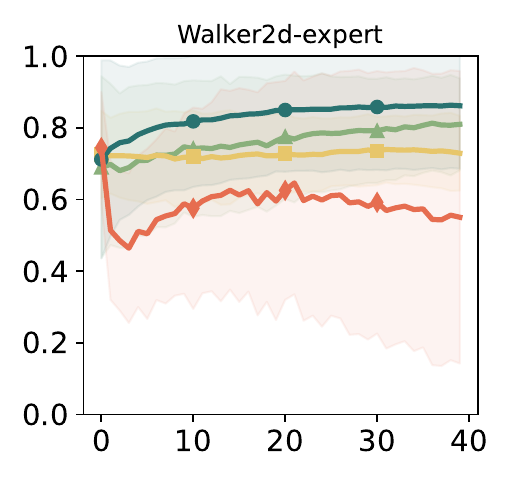}}
\subfigure{\includegraphics[width=0.18\textwidth]{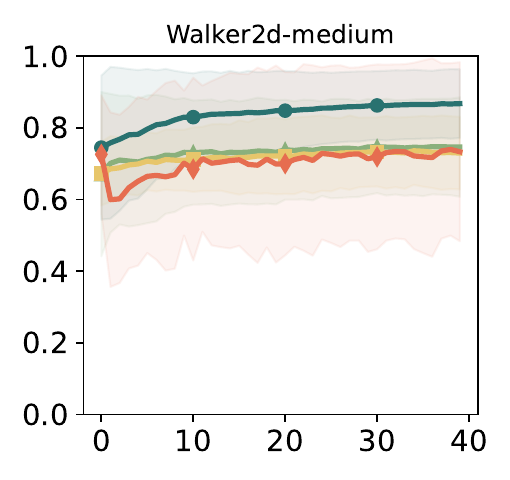}}
\subfigure{\includegraphics[width=0.18\textwidth]{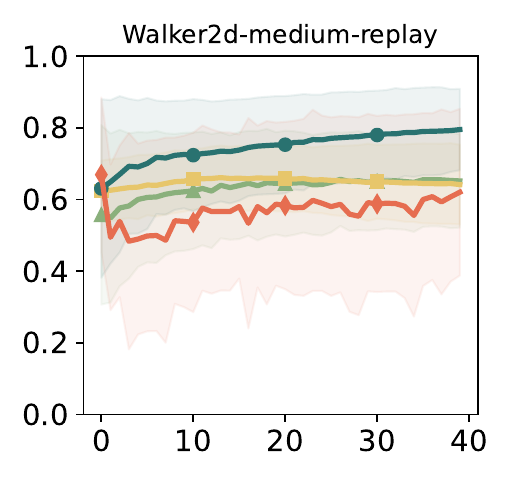}}
\subfigure{\includegraphics[width=0.18\textwidth]{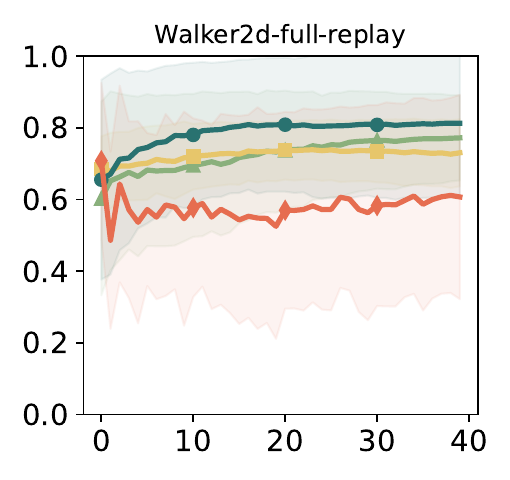}}

\subfigure{\includegraphics[width=0.18\textwidth]{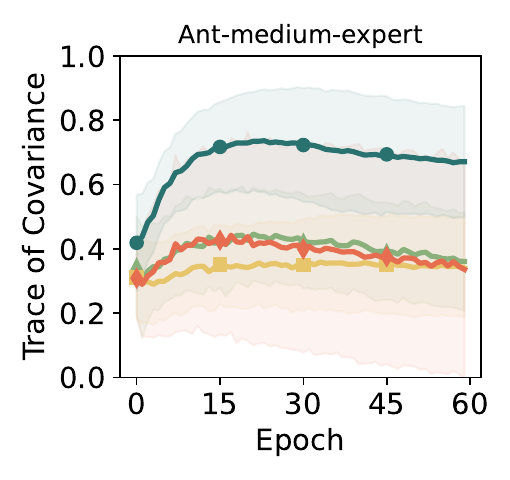}}
\subfigure{\includegraphics[width=0.18\textwidth]{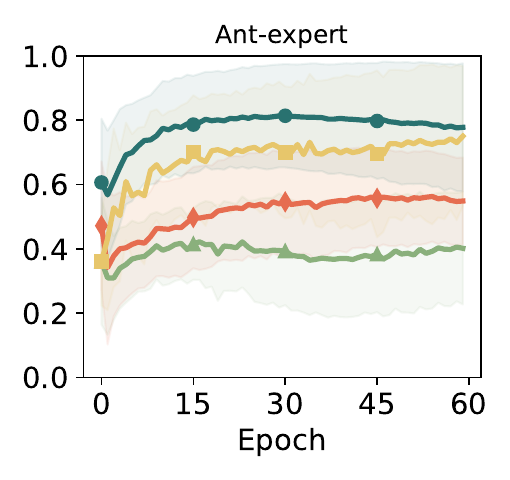}}
\subfigure{\includegraphics[width=0.18\textwidth]{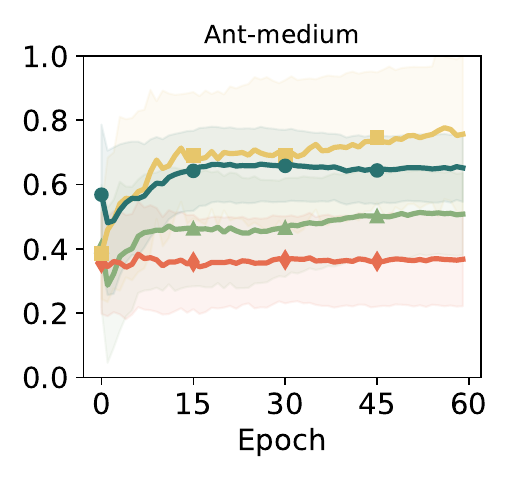}}
\subfigure{\includegraphics[width=0.18\textwidth]{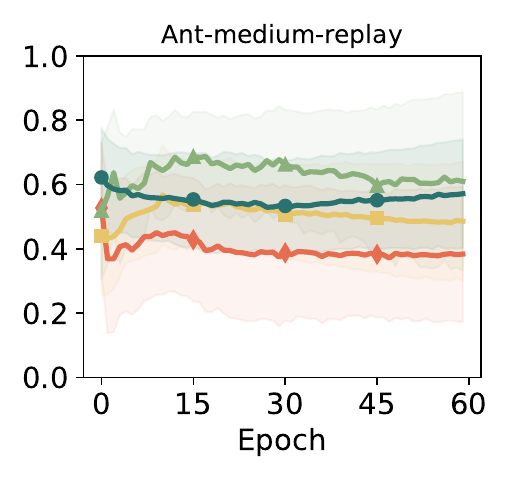}}
\subfigure{\includegraphics[width=0.18\textwidth]{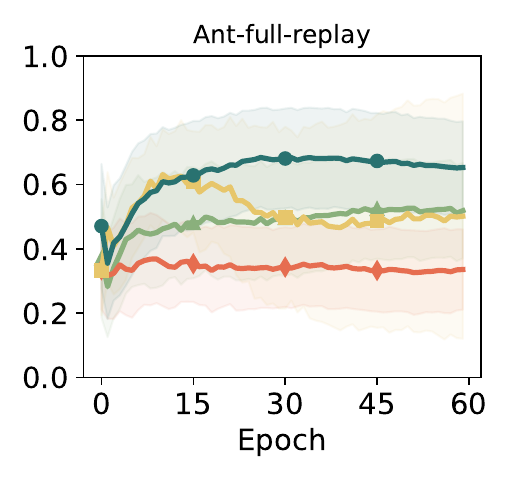}}
\caption{      Trace of cross covariance during training on MuJoCo locomotion. Tasks HalfCheetah Hopper Walker2d Ant. Dataset regimes: medium-expert, expert, medium, medium-replay, full-replay. The x-axis is epoch. \textit{The y axis is the trace of the empirical cross covariance, lower is better}. Methods include Ours (CQL+$C^4$), LN (CQL+LN), CQL, and DR3 (CQL+DR3). 
}
\label{fig:pdf_cov}
\end{figure}


\subsection{Robustness and efficiency}
\label{subsec:robustness-efficiency}

In this subsection, we empirically study the robustness of our method to key hyperparameters through ablation studies and analyze its computational efficiency. We also provide visual insights into the clustering mechanism.

\paragraph{Cluster Visualization.}
To intuitively understand the learned representations, we visualize the clustering structure for CQL+$C^4$ and TD3BC+$C^4$ in Fig.~\ref{fig:pdf_buffer_cql} and Fig.~\ref{fig:pdf_buffer_TD3BC}, respectively. For each method, we sample snapshots across the full training process and plot the stacked gradient pairs after t-SNE reduction to two dimensions, with points colored by their mixture assignments. The figures demonstrate that our method effectively separates distinct gradient modes throughout representative stages of training.

\paragraph{Sensitivity to the number of clusters $K$.}
We first investigate the impact of the number of clusters $K$ on performance. Table~\ref{tab:k_sensitivity} reports a sensitivity analysis on 10k-sample D4RL tasks. We observe that performance improves rapidly as $K$ increases from 1 and quickly plateaus around $K=3$--$5$. Performance remains stable even when $K$ is as large as 20, indicating that the method is not overly sensitive to this choice. Based on these results, we recommend $K=5$ as a robust default for small, low-coverage datasets.

For larger sample regimes, we further examine the \emph{effective} number of clusters used by the algorithm when initialized with $K=20$. As shown in Table~\ref{tab:effective_clusters}, only a handful of clusters are effectively occupied, and this number decreases as the dataset becomes denser (from $10^4$ to $\sim 10^6$ samples). This suggests that for large or high-coverage datasets, $K$ can be safely reduced without sacrificing performance.

\begin{table}[ht]
    \centering
    \renewcommand{\arraystretch}{1.2} 
    \caption{Sensitivity of CQL+$C^4$ to the number of clusters $K$ on 10k-sample D4RL tasks.}
    \label{tab:k_sensitivity}
    \resizebox{0.8 \textwidth}{!}{ 
    \begin{tabular}{lcccccccc}
        \toprule
        Method      & $K$ & Ant   & Halfcheetah & Hopper  & Walker2d & AntMaze & Pen   & Kitchen \\
        \midrule 
        CQL         & --  & 21.0  & 39.7        & 43.2    & 26.0     & 21.1    & 1.8   & 0.6     \\
        CQL+$C^4$   & 1   & 59.3  & 41.3        & 68.3    & 51.2     & 55.1    & 29.8  & 11.7    \\
        CQL+$C^4$   & 2   & 66.5  & \textbf{48.2} & 75.8  & 82.6     & 62.9    & 36.0  & 17.1    \\
        CQL+$C^4$   & 3   & 70.6  & 47.5        & \textbf{90.8} & 87.8 & 67.3    & 51.3  & 19.5    \\
        CQL+$C^4$   & 5   & \textbf{71.3} & 46.0 & 85.3  & \textbf{96.3} & 68.7 & \textbf{58.2} & \textbf{20.9} \\
        CQL+$C^4$   & 7   & 69.6  & 46.4        & 82.1    & 94.9     & \textbf{72.4} & 58.0 & 19.3    \\
        CQL+$C^4$   & 10  & 68.5  & 46.3        & 78.6    & 90.6     & 68.5    & 52.4  & 18.0    \\
        CQL+$C^4$   & 20  & 69.0  & 44.9        & 73.3    & 87.3     & 62.8    & 54.0  & 19.8    \\
        \bottomrule
    \end{tabular}
    }
\end{table}

\begin{table}[ht]
    \centering
    \caption{Average number of effective clusters for CQL+$C^4$ when initialized with $K=20$.}
    \label{tab:effective_clusters}
    \renewcommand{\arraystretch}{1.2}
    \resizebox{0.8 \textwidth}{!}{
    \begin{tabular}{lccccccc}
        \toprule
        Samples                     & Ant & Halfcheetah & Hopper & Walker2d & AntMaze & Pen & Kitchen \\
        \midrule 
        10k                         & 6   & 4           & 4      & 5        & 5       & 5   & 6       \\
        $\sim 10^6$ samples         & 3   & 2           & 2      & 3        & 4       & 3   & 3       \\
        \bottomrule
    \end{tabular}
    }
\end{table}

\paragraph{Sensitivity to the penalty weight $\lambda$.}
We next study the effect of the penalty weight $\lambda$ in the cross-covariance regularizer. Table~\ref{tab:lambda_sensitivity} summarizes the results across medium (``-m''), medium-replay (``-mr''), and medium-expert (``-me'') D4RL tasks. 
We find that the optimal $\lambda$ depends on dataset quality and the dispersion of the feature space. For high-quality datasets (e.g., medium-expert), trajectories tend to form locally dense clusters with small within-cluster covariance, and a relatively small penalty ($\lambda \in [0.05, 0.1]$) suffices. In contrast, replay-style datasets or those with pronounced distribution shift exhibit more scattered features and larger cross-covariances, benefiting from larger penalties (e.g., $\lambda \in [0.3, 0.5]$). A practical heuristic is to choose $\lambda$ such that the weighted penalty term is on the same order of magnitude as the temporal-difference loss.

\begin{table}[ht]
    \centering
    \renewcommand{\arraystretch}{1.2}
    \caption{Sensitivity of CQL+$C^4$ to the penalty weight $\lambda$.}
    \label{tab:lambda_sensitivity}
    \resizebox{0.8\textwidth}{!}{
    \begin{tabular}{lcccccc}
        \toprule
        Task name      & $\lambda=0.0$ & $\lambda=0.05$ & $\lambda=0.1$ & $\lambda=0.3$ & $\lambda=0.5$ & $\lambda=1.0$ \\
        \hline
        Halfcheetah-m  & 30.1 & 38.9 & 44.1 & \textbf{46.3} & 45.0 & 40.3 \\
        Hopper-m       & 55.6 & 68.6 & \textbf{75.0} & 69.2 & 69.6 & 62.8 \\
        Walker2d-m     & 55.3 & 57.9 & 61.1 & \textbf{65.9} & 65.2 & 63.7 \\
        Halfcheetah-mr & 33.6 & 32.3 & 37.8 & 35.6 & \textbf{43.1} & 36.6 \\
        Hopper-mr      & 25.0 & 35.5 & 49.6 & \textbf{51.0} & 45.9 & 50.2 \\
        Walker2d-mr    & 20.1 & 35.3 & 38.1 & 44.7 & \textbf{55.4} & 39.0 \\
        Halfcheetah-me & 39.9 & \textbf{52.2} & 46.9 & 42.1 & 27.4 & 27.0 \\
        Hopper-me      & 65.0 & 74.2 & \textbf{81.3} & 70.8 & 67.6 & 63.5 \\
        Walker2d-me    & 63.6 & 95.9 & \textbf{96.3} & 89.9 & 87.9 & 79.2 \\
        \bottomrule
    \end{tabular}
    }
\end{table}

\paragraph{Computational efficiency of clustering.}
We now analyze the computational profile of the gradient-space clustering procedure. Theoretically, we implement clustering via an Expectation--Maximization algorithm with complexity $O(K N m )$ for $N$ samples, $K$ clusters, feature dimension $m$, and number of EM iterations $t$~\citep{bishop2006pattern}. Importantly, we do \emph{not} re-cluster at every gradient step: we subsample representative points from the replay buffer and perform clustering only periodically. Once clusters are formed, computing the localized covariance penalty during each critic update is linear in the batch size and adds negligible overhead.

Empirically, Table~\ref{tab:cluster_overhead} reports the proportion of total training time spent in clustering across tasks and dataset scales. For our primary target regime of scarce data (e.g., $10^3$--$10^4$ samples), the overhead is about $1\%$ of the total training time, which is effectively negligible. On larger, million-scale datasets, the overhead increases to roughly $15$--$20\%$, which we consider a reasonable trade-off given the substantial stability and performance improvements (up to $\sim 30\%$ gains in return) observed in our main results. Overall, the cost of $C^4$ remains comparable to that of commonly used regularizers.

\begin{table}[ht]
    \centering
    \caption{Proportion of total training time spent in gradient-space clustering for CQL+$C^4$.}
    \label{tab:cluster_overhead}
    \resizebox{\textwidth}{!}{
    \renewcommand{\arraystretch}{1.15}
    \resizebox{0.85 \textwidth}{!}{
    \begin{tabular}{lcccccccc}
        \toprule
        Samples     & Ant   & HalfCheetah & Hopper & Walker2d & AntMaze & Pen   & Door  & Kitchen \\
        \hline
        $10^3$      & 1.0\% & 0.9\%       & 1.2\%  & 1.2\%    & 2.9\%   & 1.7\% & 1.5\% & 2.0\%   \\
        $\sim 10^6$ & 18.7\%& 15.3\%      & 14.4\% & 13.8\%   & 22.6\%  & 17.9\%& 15.1\%& 16.4\%  \\
        \bottomrule
    \end{tabular}
    }
    }
\end{table}

\paragraph{Clustering update frequency.}
Finally, we examine how often clusters need to be updated. Table~\ref{tab:cluster_freq} reports the performance of CQL+$C^4$ when varying the clustering frequency, defined as the number of critic updates between two clustering runs. We observe a clear ``sweet spot'' around 100--200 updates: updating too frequently (e.g., every 20 steps) can destabilize the critic by introducing rapidly changing group assignments, while updating too infrequently (e.g., every 1000 steps) leads to stale cluster statistics and degraded performance.

Consequently, we adopt a moderate default frequency of 200 critic updates, which captures distributional shifts in the replay buffer while maintaining stable learning. Importantly, this robustness to update frequency directly supports our efficiency claims: because optimal performance is achieved with intermittent rather than per-step clustering, the clustering cost can be heavily amortized over many critic updates, keeping the overall runtime overhead low (cf. Table~\ref{tab:cluster_overhead}).

\begin{table}[ht]
    \centering
    \caption{Sensitivity of CQL+$C^4$ to clustering frequency (number of updates between clustering).}
    \label{tab:cluster_freq}
    \renewcommand{\arraystretch}{1.15}
    \resizebox{\textwidth}{!}{
    \begin{tabular}{lcccccc}
        \toprule
        Task name      & Freq.\,$=20$ & Freq.\,$=50$ & Freq.\,$=100$ & Freq.\,$=200$ & Freq.\,$=500$ & Freq.\,$=1000$ \\
        \hline
        Halfcheetah-m  & 39.1 & 36.7 & 41.0 & \textbf{46.3} & 36.6 & 22.7 \\
        Hopper-m       & 66.9 & 65.0 & 64.7 & \textbf{69.2} & 60.8 & 61.4 \\
        Walker2d-m     & 58.9 & 64.1 & \textbf{66.5} & 65.9 & 57.8 & 46.5 \\
        Halfcheetah-mr & 29.4 & 29.1 & 38.5 & \textbf{43.1} & 37.2 & 36.9 \\
        Hopper-mr      & 51.8 & 54.0 & \textbf{54.6} & 45.9 & 36.1 & 37.7 \\
        Walker2d-mr    & 43.5 & 52.9 & 55.2 & \textbf{55.4} & 35.9 & 22.3 \\
        Halfcheetah-me & 27.6 & 55.0 & \textbf{63.6} & 46.9 & 44.3 & 23.9 \\
        Hopper-me      & 45.5 & 79.7 & \textbf{87.7} & 81.3 & 73.0 & 61.6 \\
        Walker2d-me    & 69.3 & 74.1 & 91.7 & \textbf{96.3} & 75.1 & 74.9 \\
        \bottomrule
    \end{tabular}
    }
\end{table}
Overall, these analyses show that $C^4$ is robust to reasonable choices of $K$, $\lambda$, and clustering frequency, and that it introduces only modest computational overhead even on large-scale offline RL benchmarks.

\begin{figure}[t]
\centering
\subfigure{\includegraphics[width=0.18\textwidth]{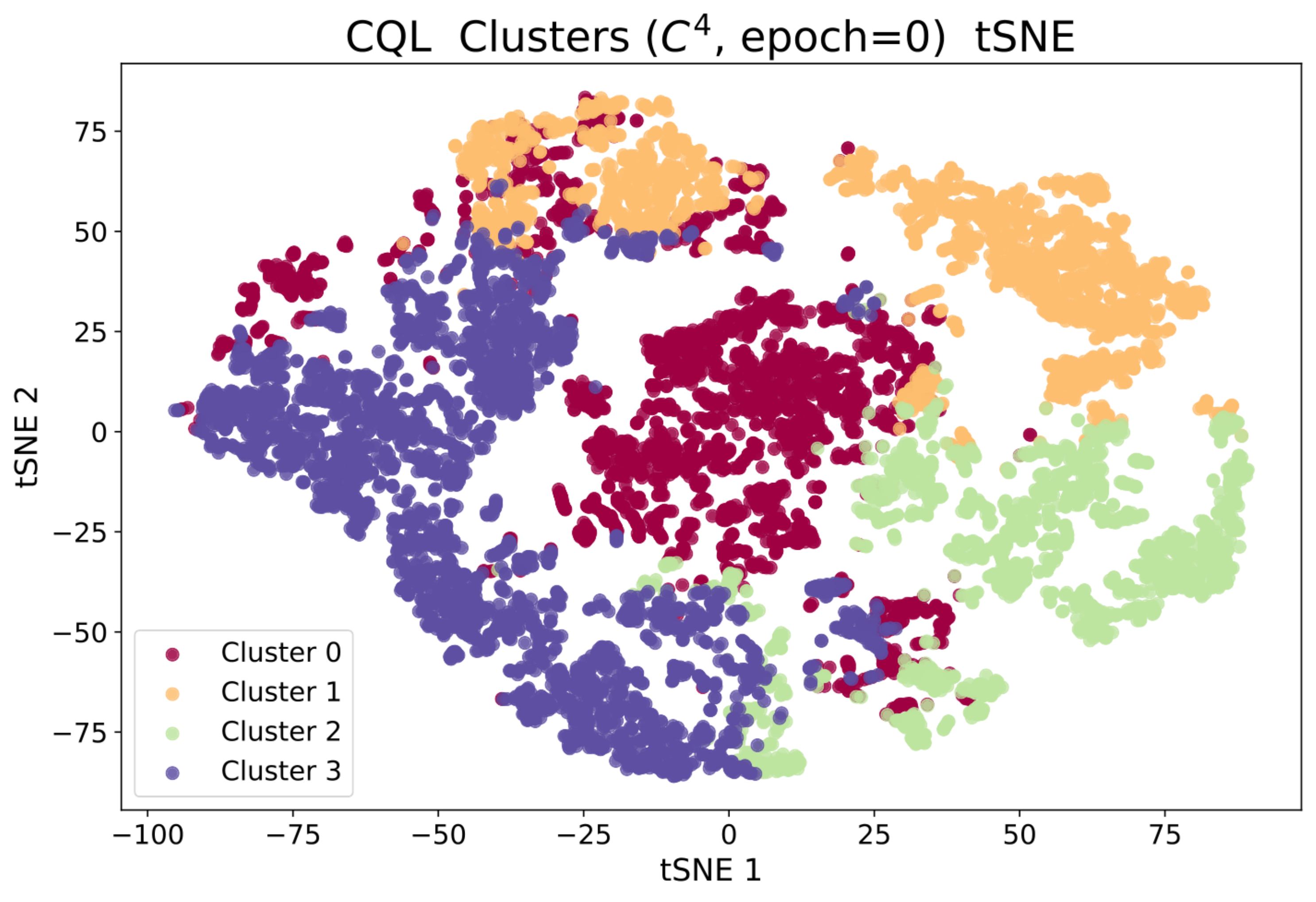}}
\subfigure{\includegraphics[width=0.18\textwidth]{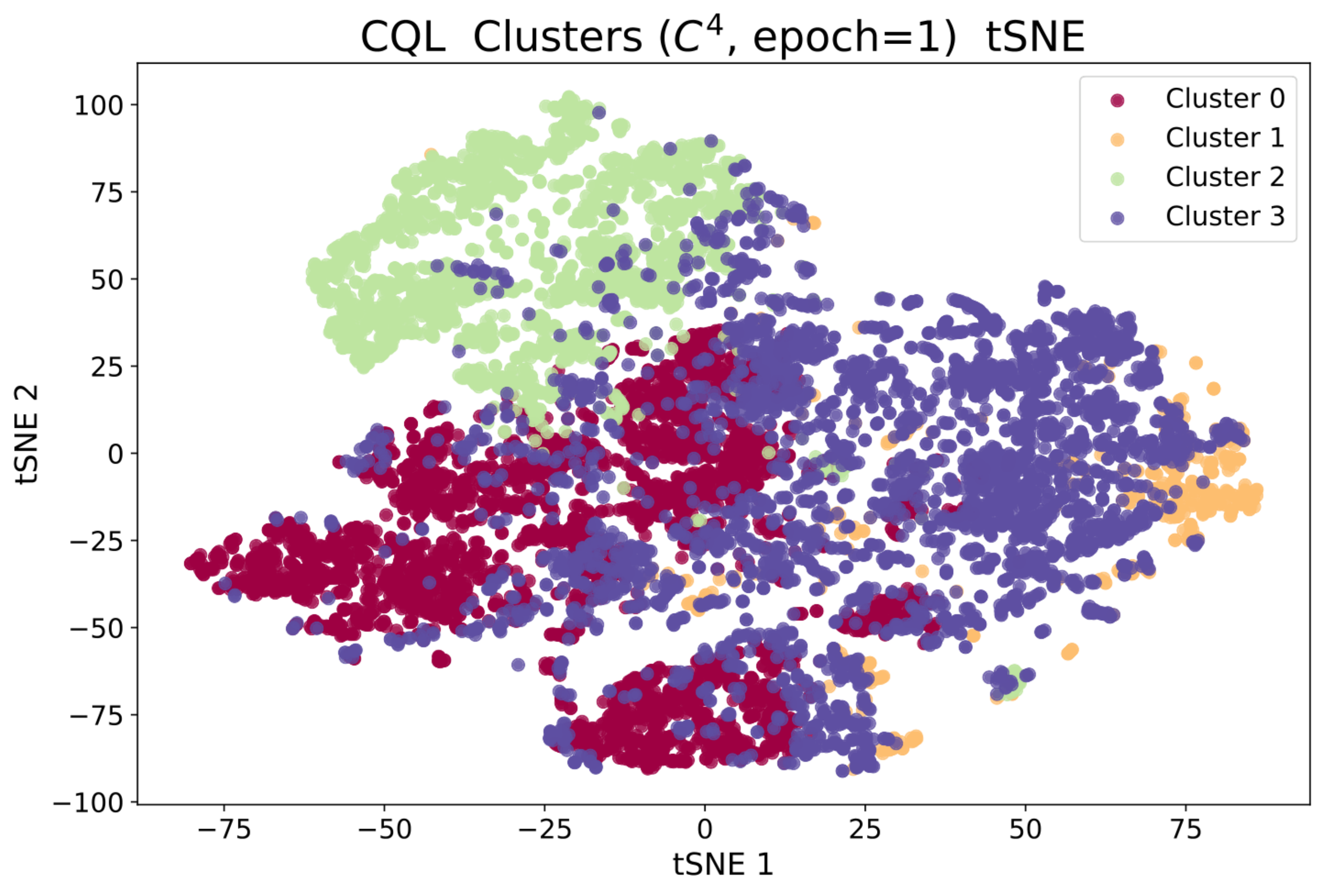}}
\subfigure{\includegraphics[width=0.18\textwidth]{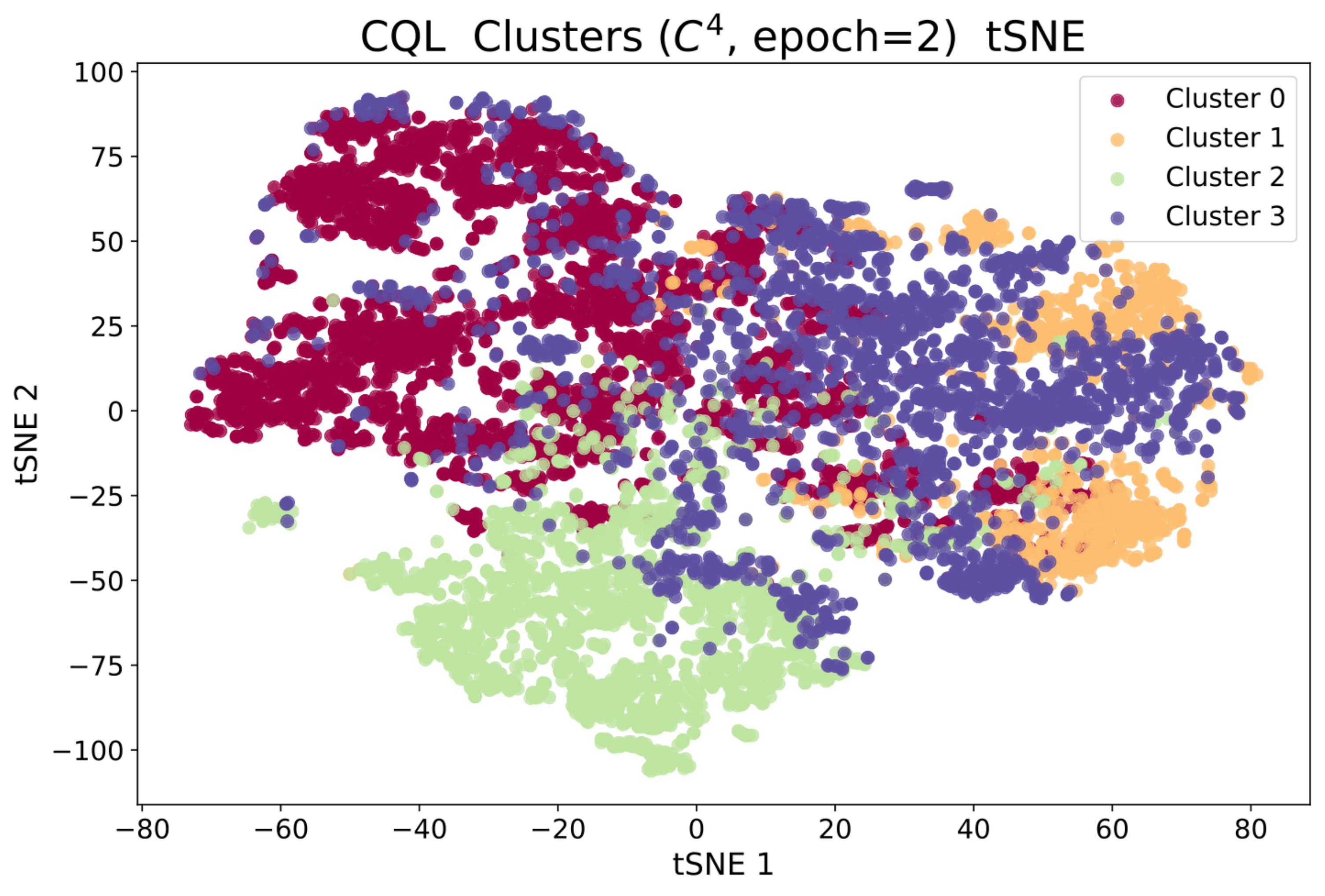}}
\subfigure{\includegraphics[width=0.18\textwidth]{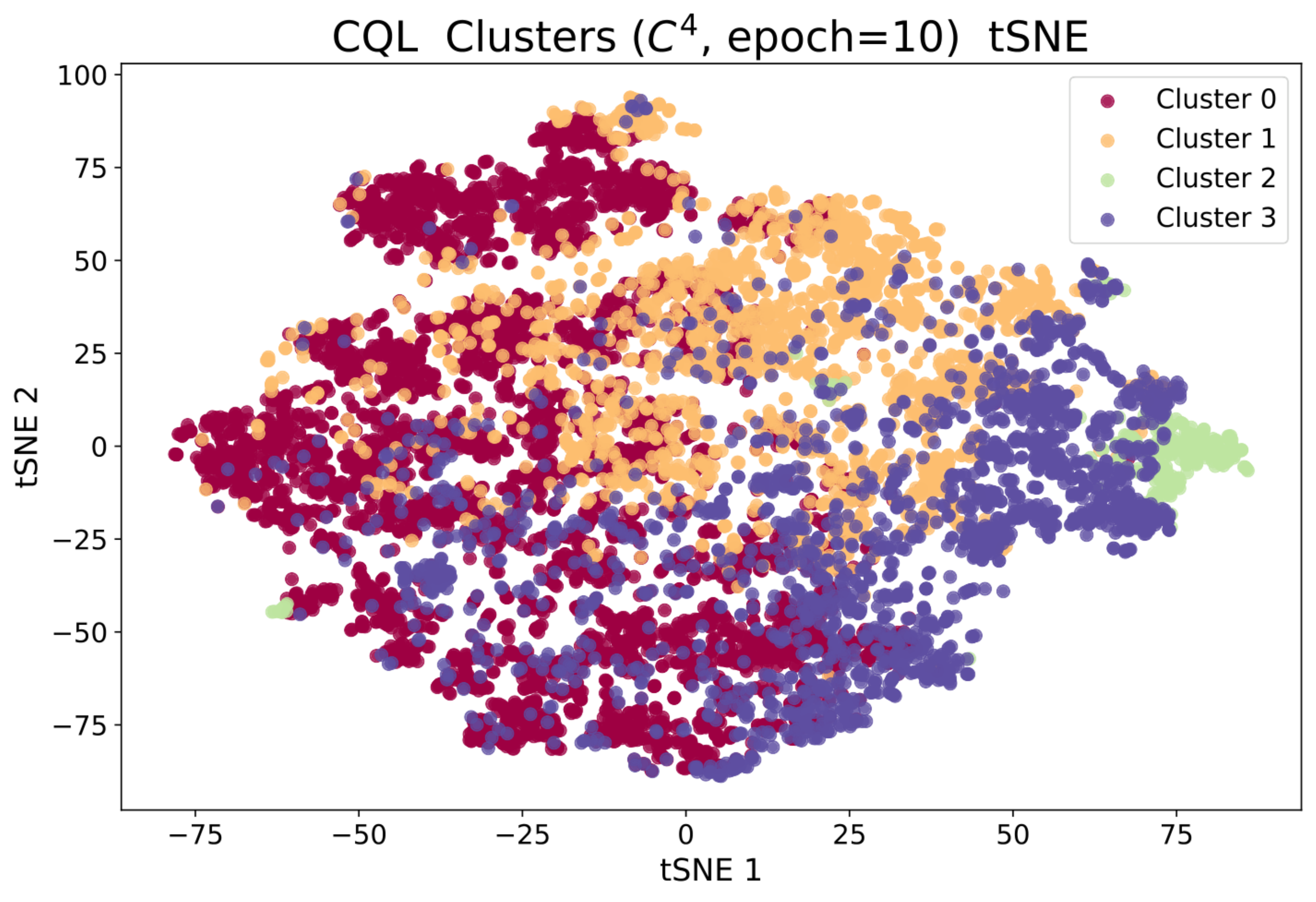}}
\subfigure{\includegraphics[width=0.18\textwidth]{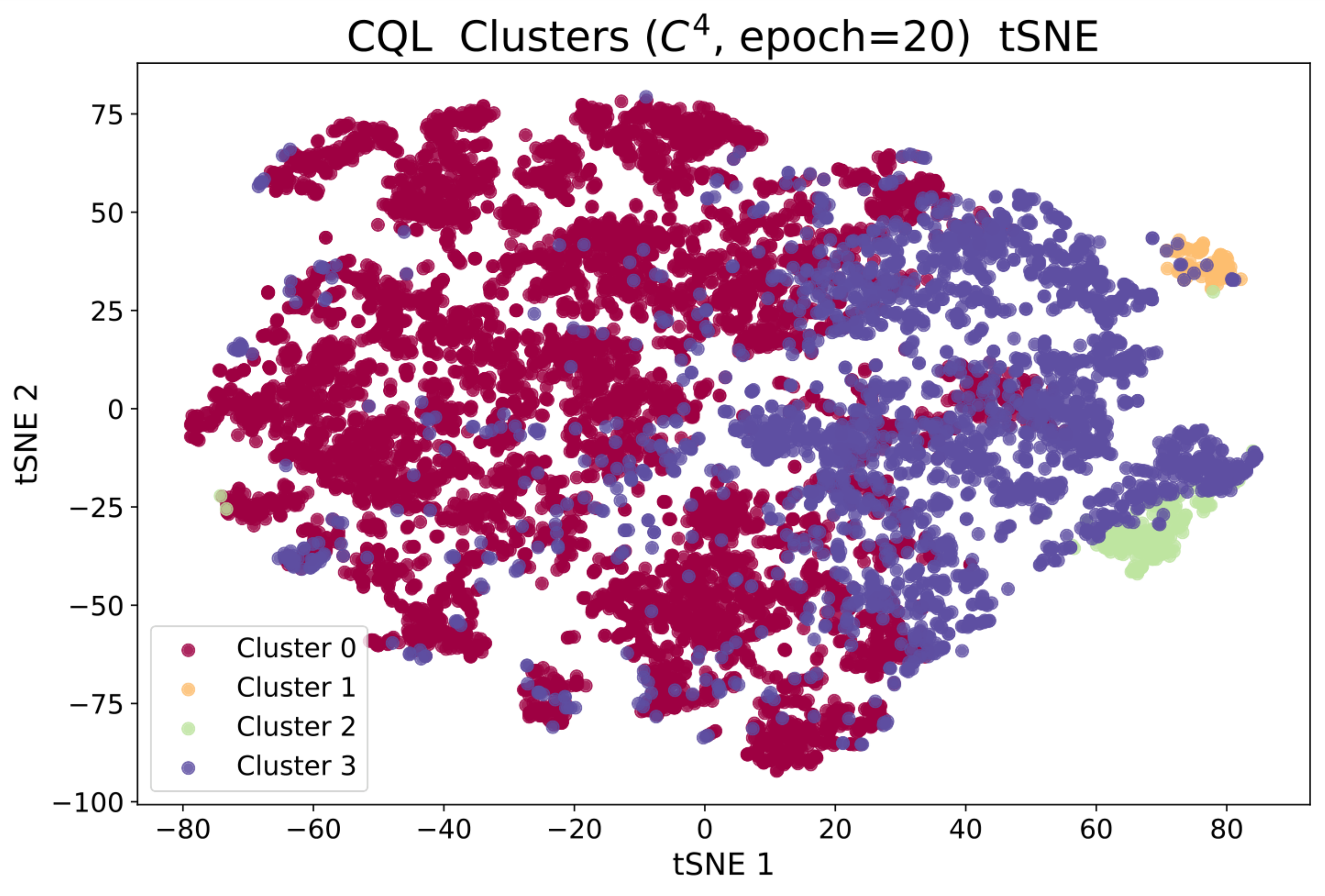}}

\subfigure{\includegraphics[width=0.18\textwidth]{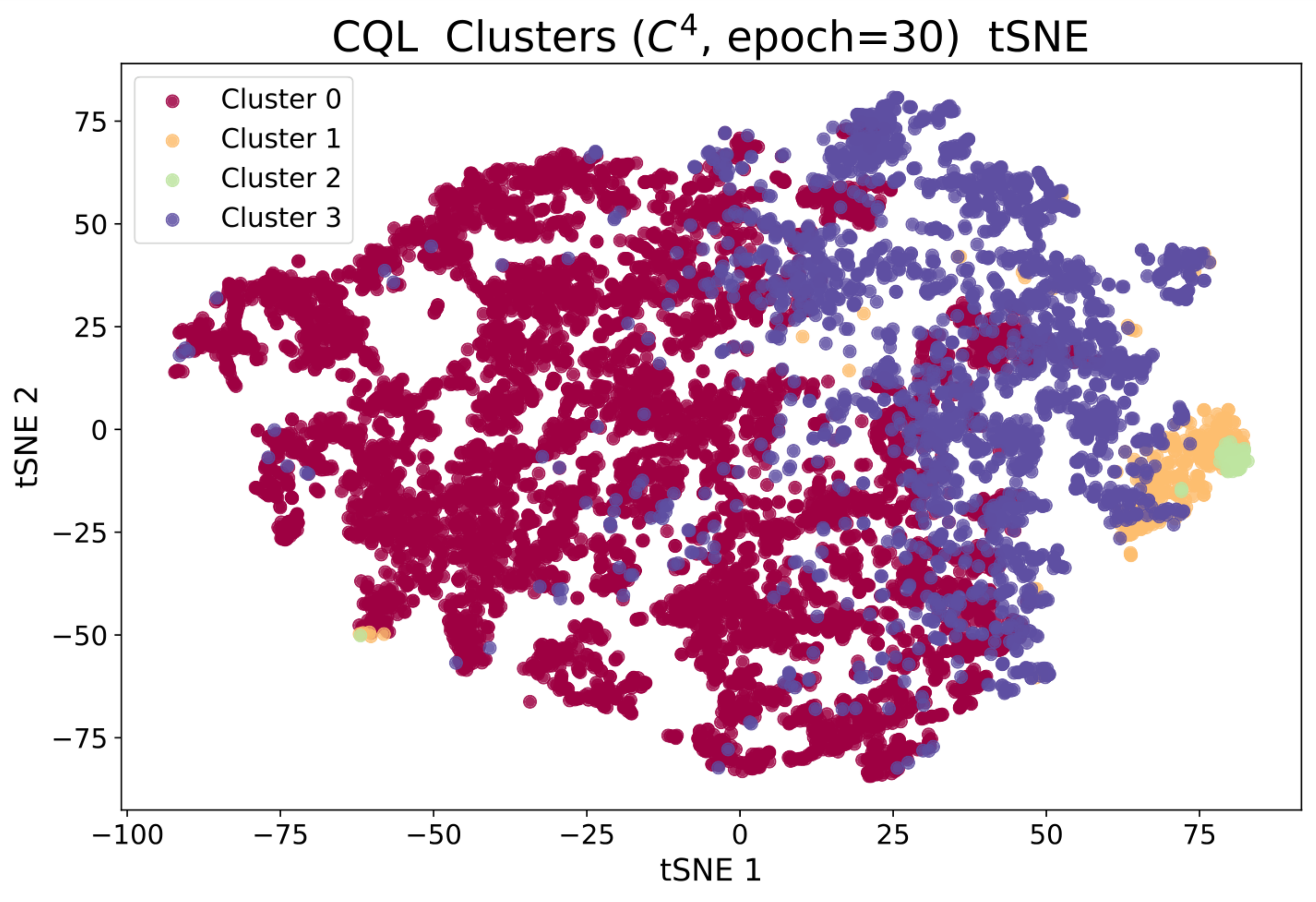}}
\subfigure{\includegraphics[width=0.18\textwidth]{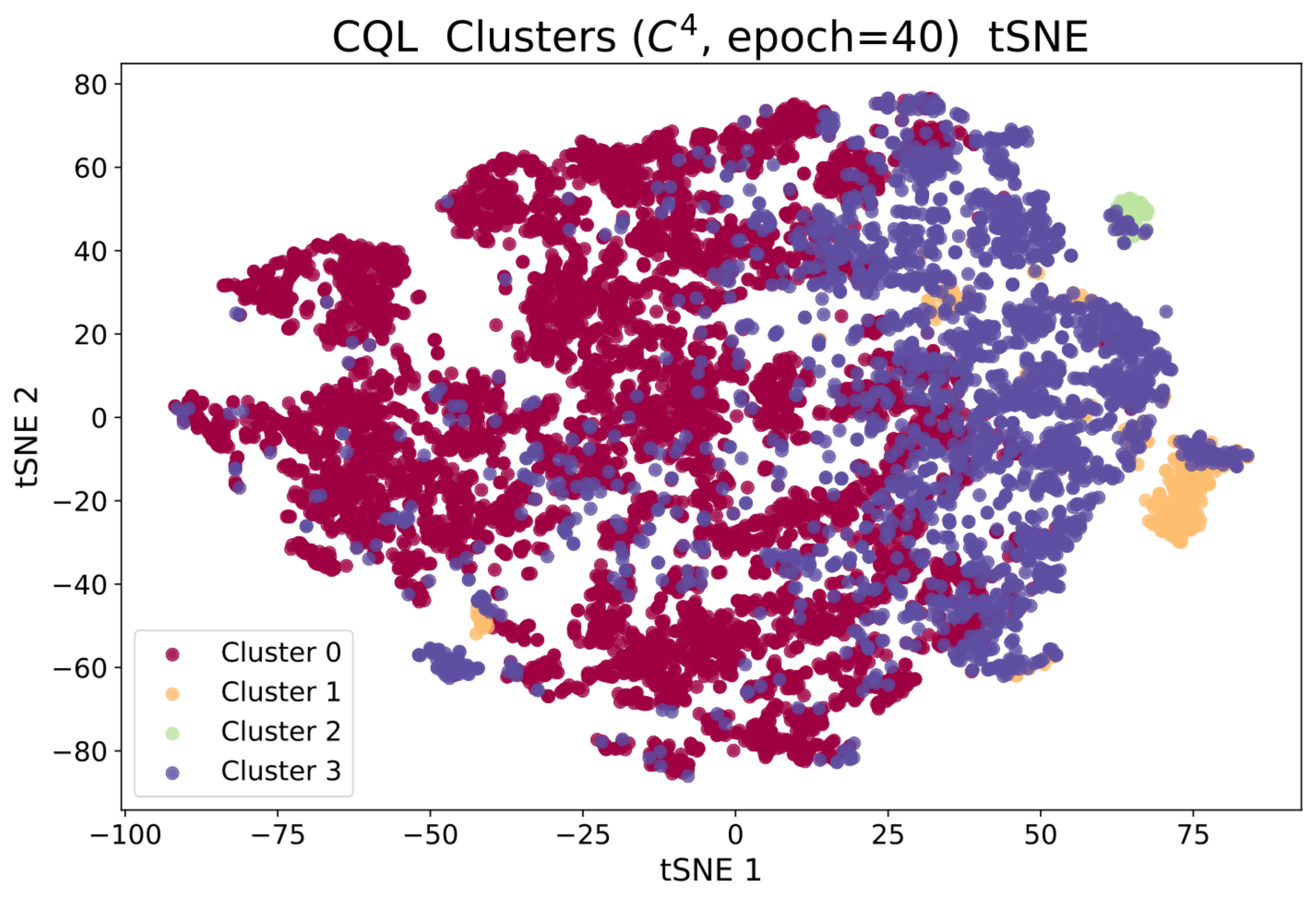}}
\subfigure{\includegraphics[width=0.18\textwidth]{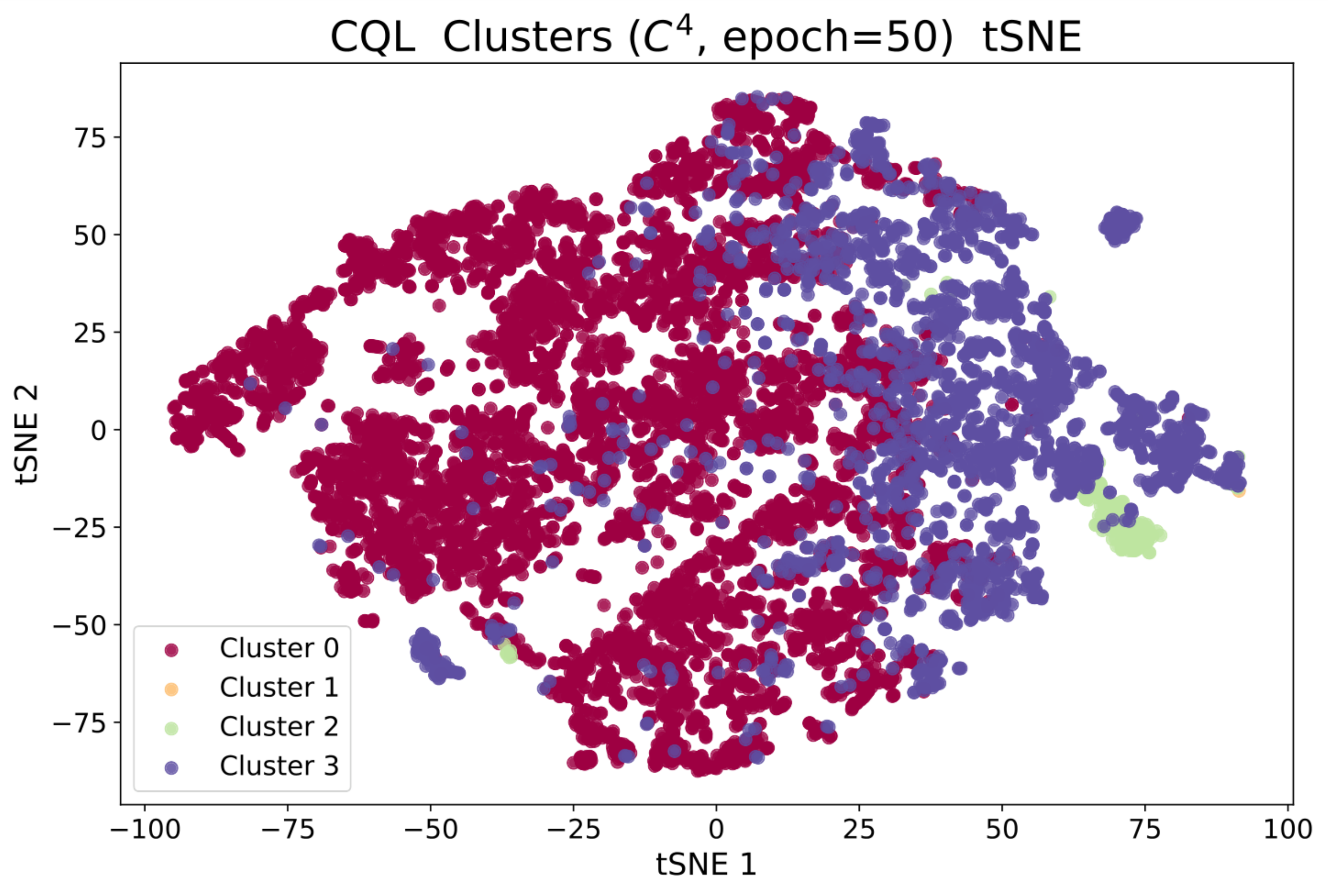}}
\subfigure{\includegraphics[width=0.18\textwidth]{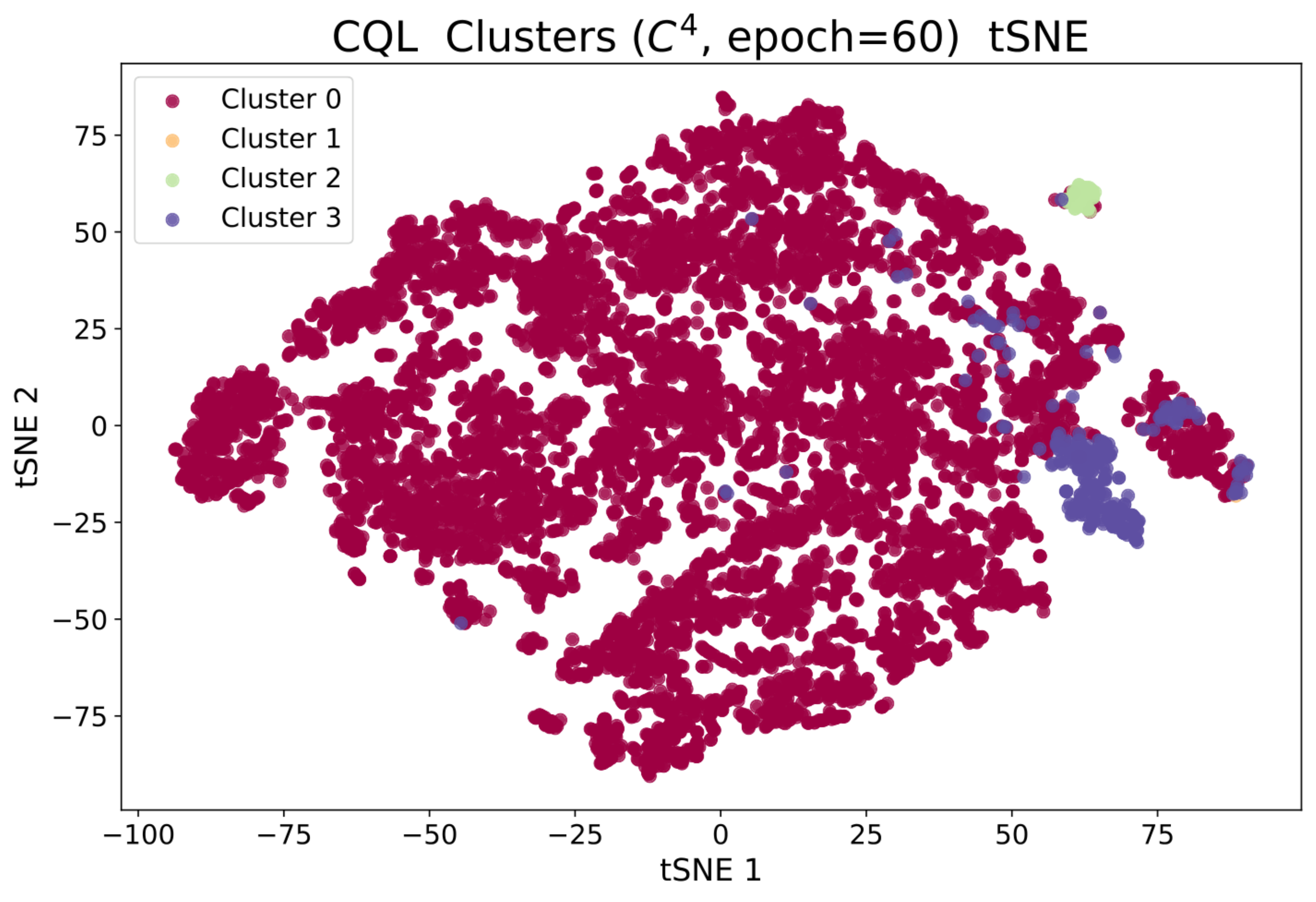}}
\subfigure{\includegraphics[width=0.18\textwidth]{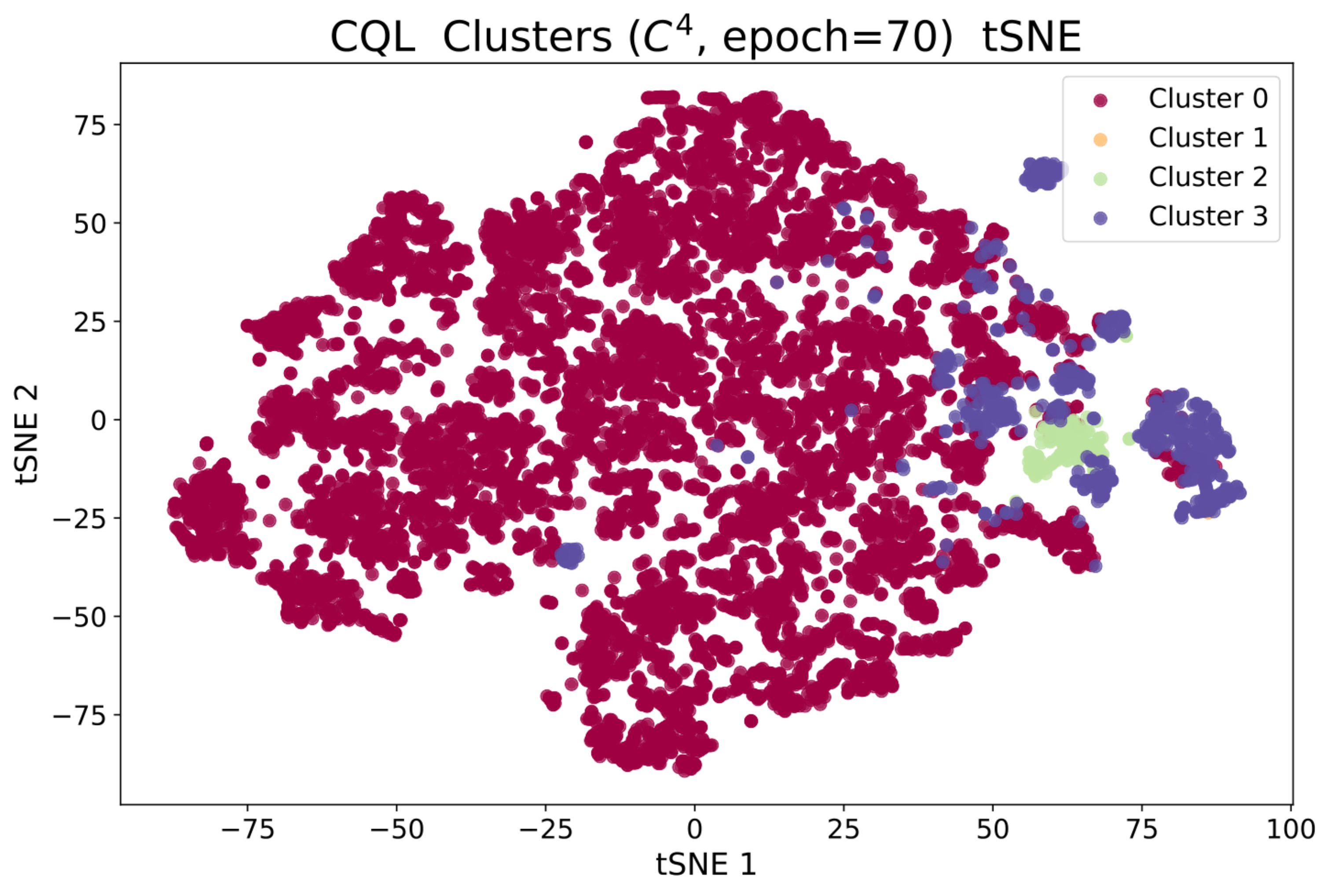}}

\subfigure{\includegraphics[width=0.18\textwidth]{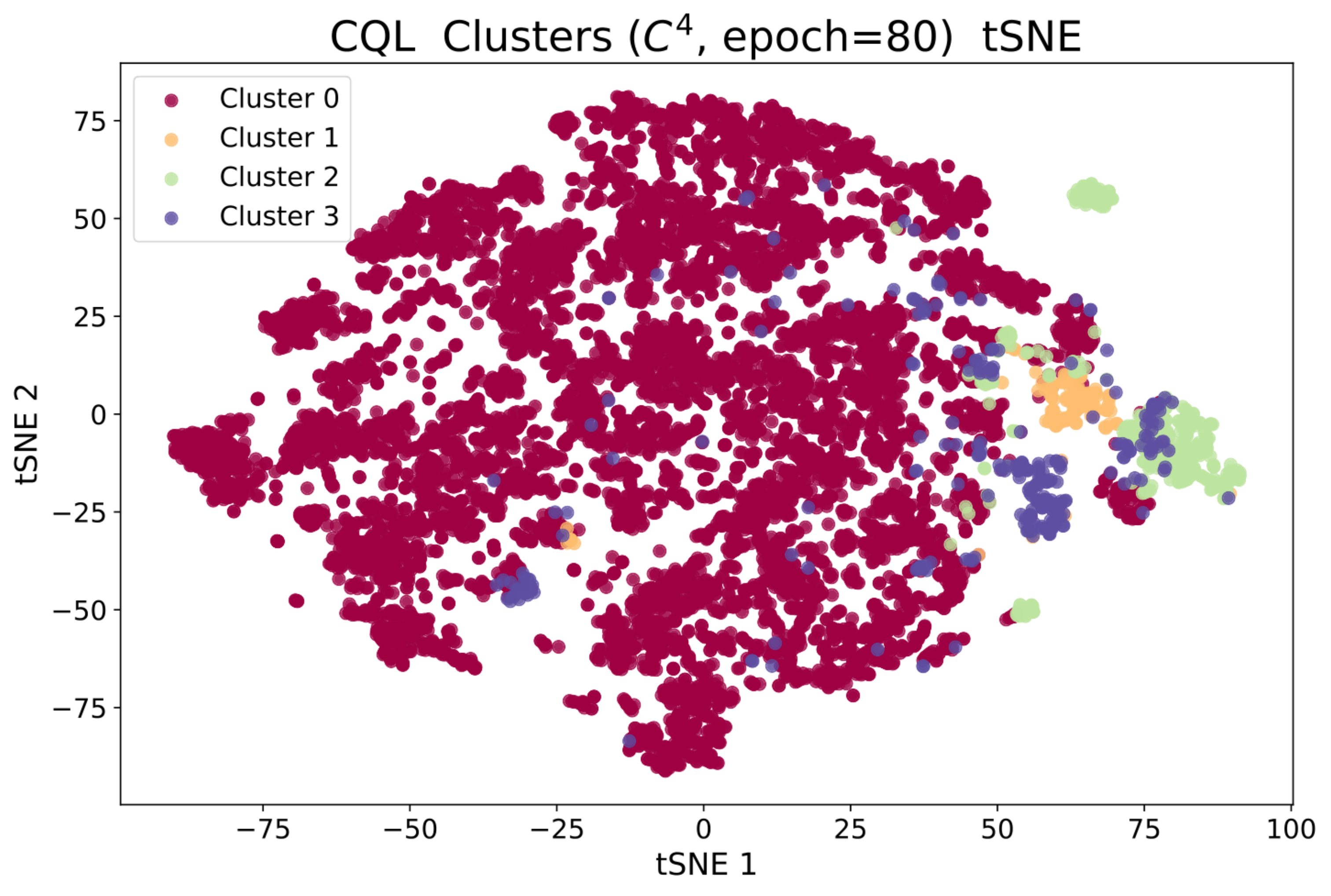}}
\subfigure{\includegraphics[width=0.18\textwidth]{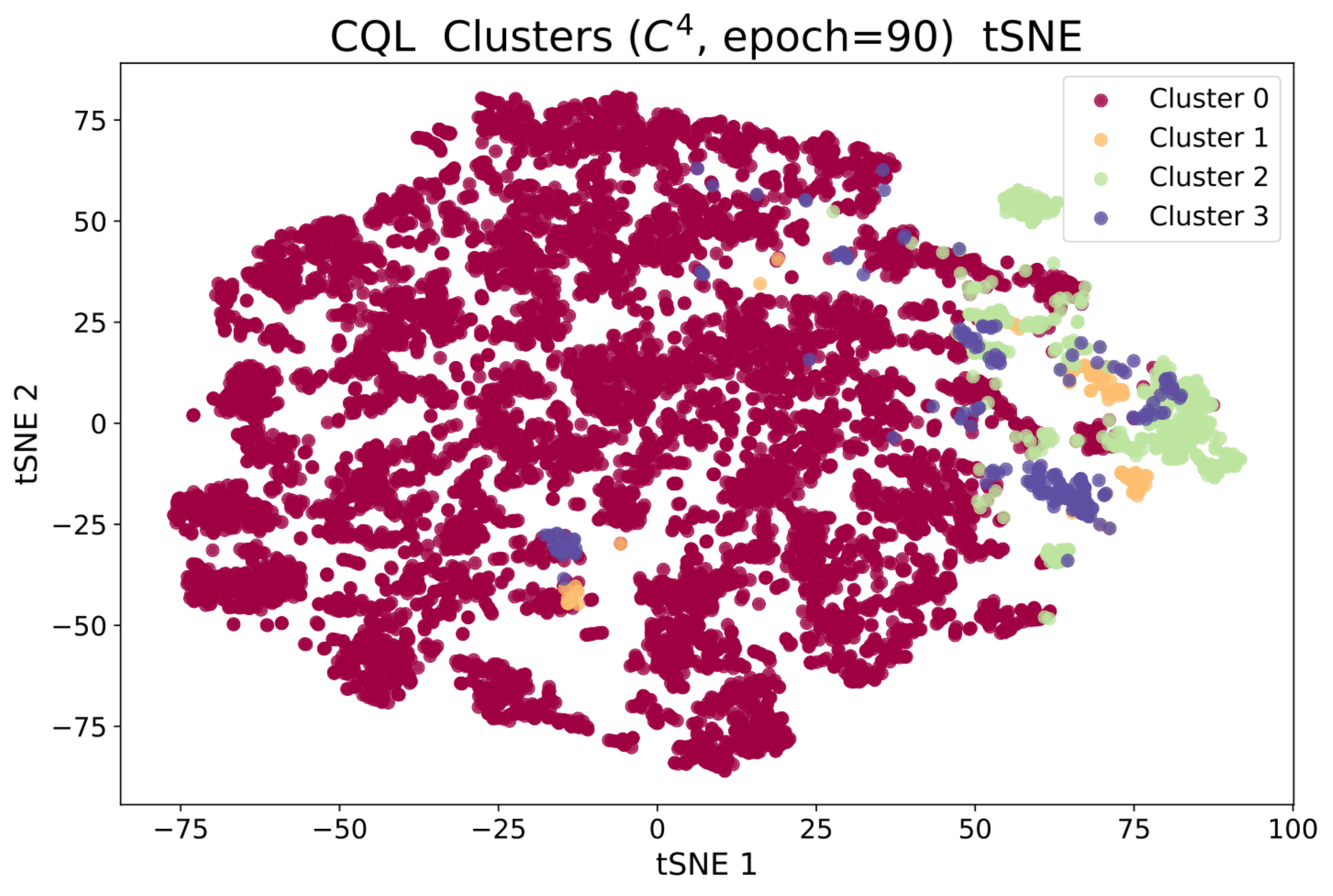}}
\subfigure{\includegraphics[width=0.18\textwidth]{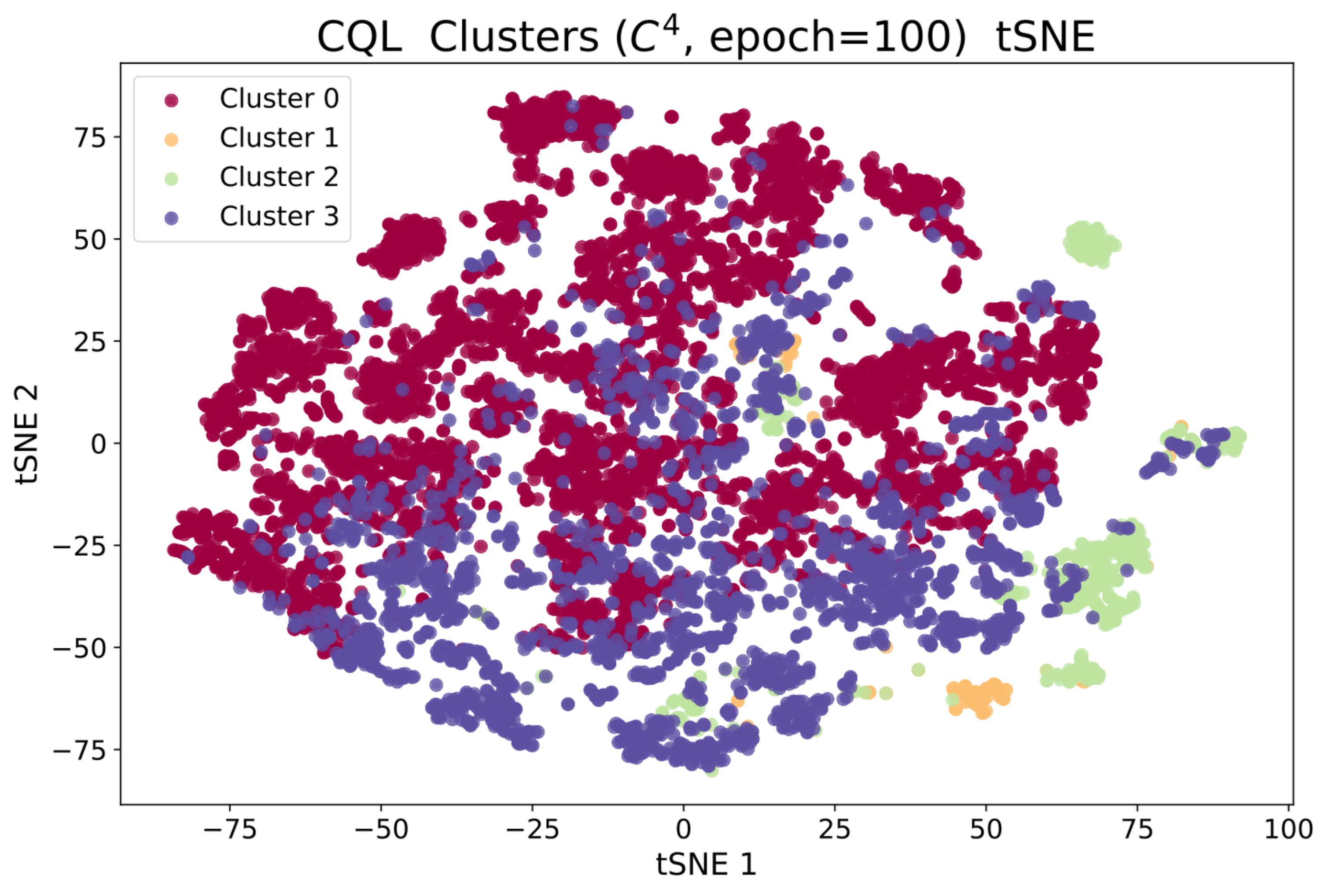}}
\subfigure{\includegraphics[width=0.18\textwidth]{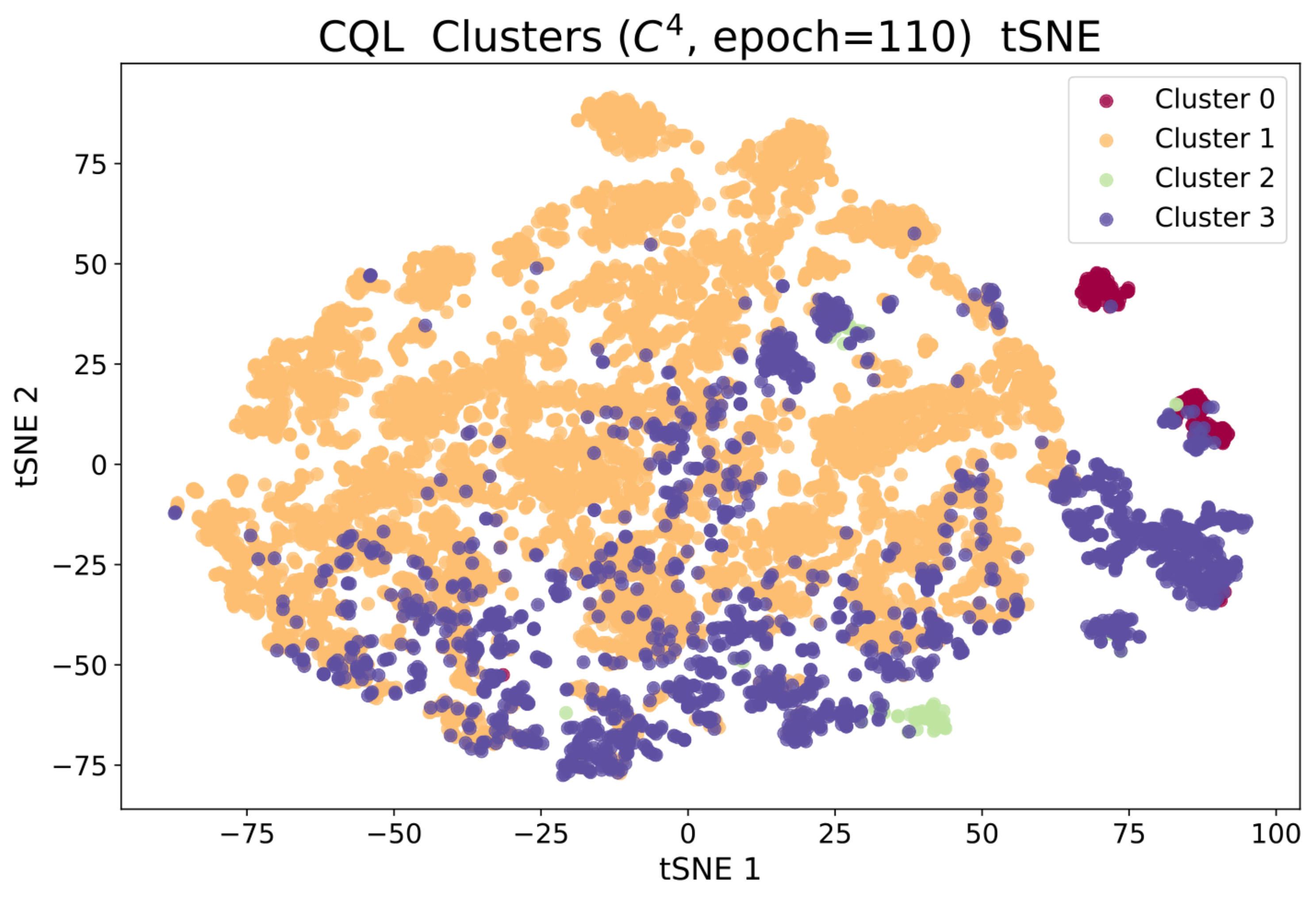}}
\subfigure{\includegraphics[width=0.18\textwidth]{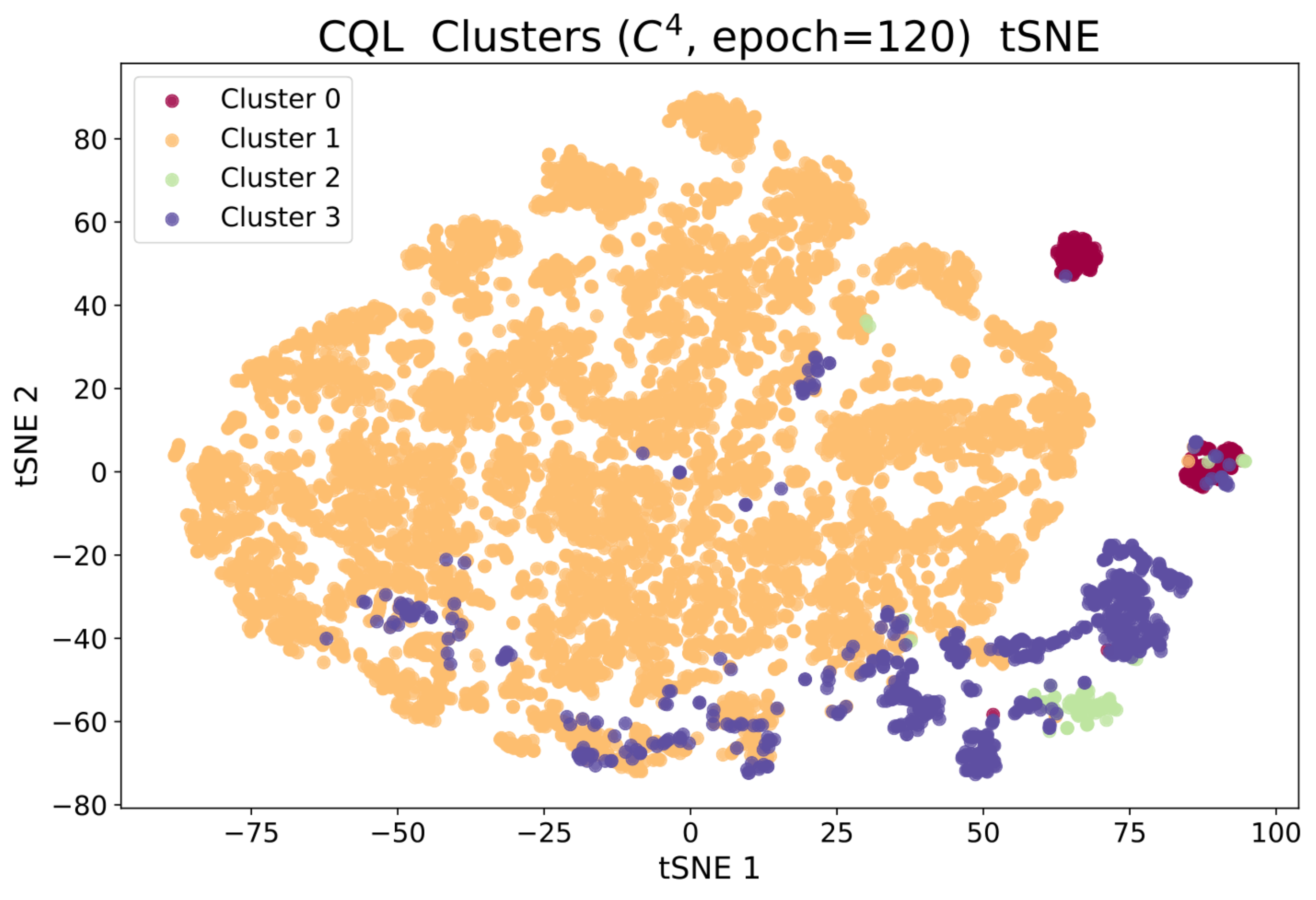}}

\subfigure{\includegraphics[width=0.18\textwidth]{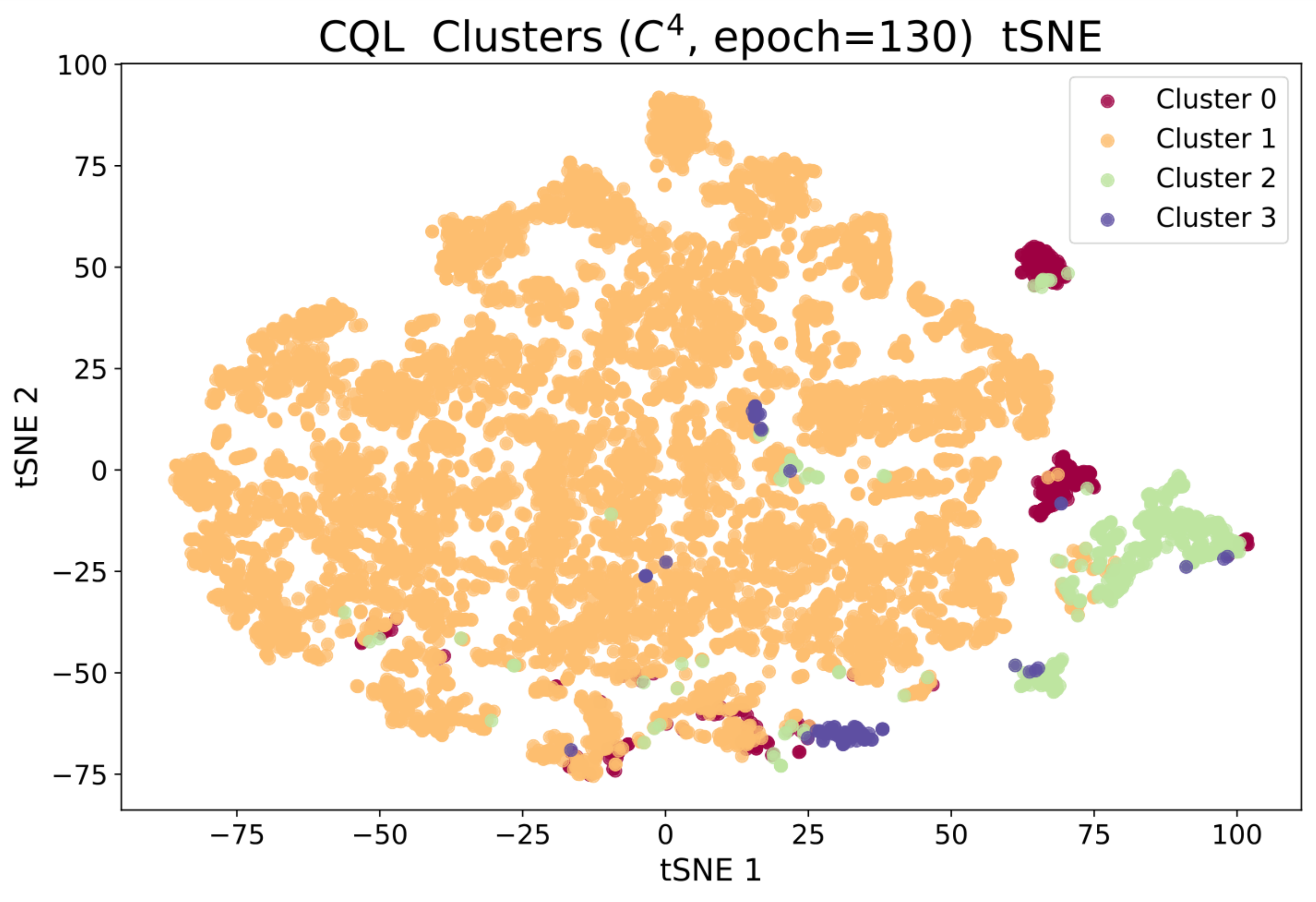}}
\subfigure{\includegraphics[width=0.18\textwidth]{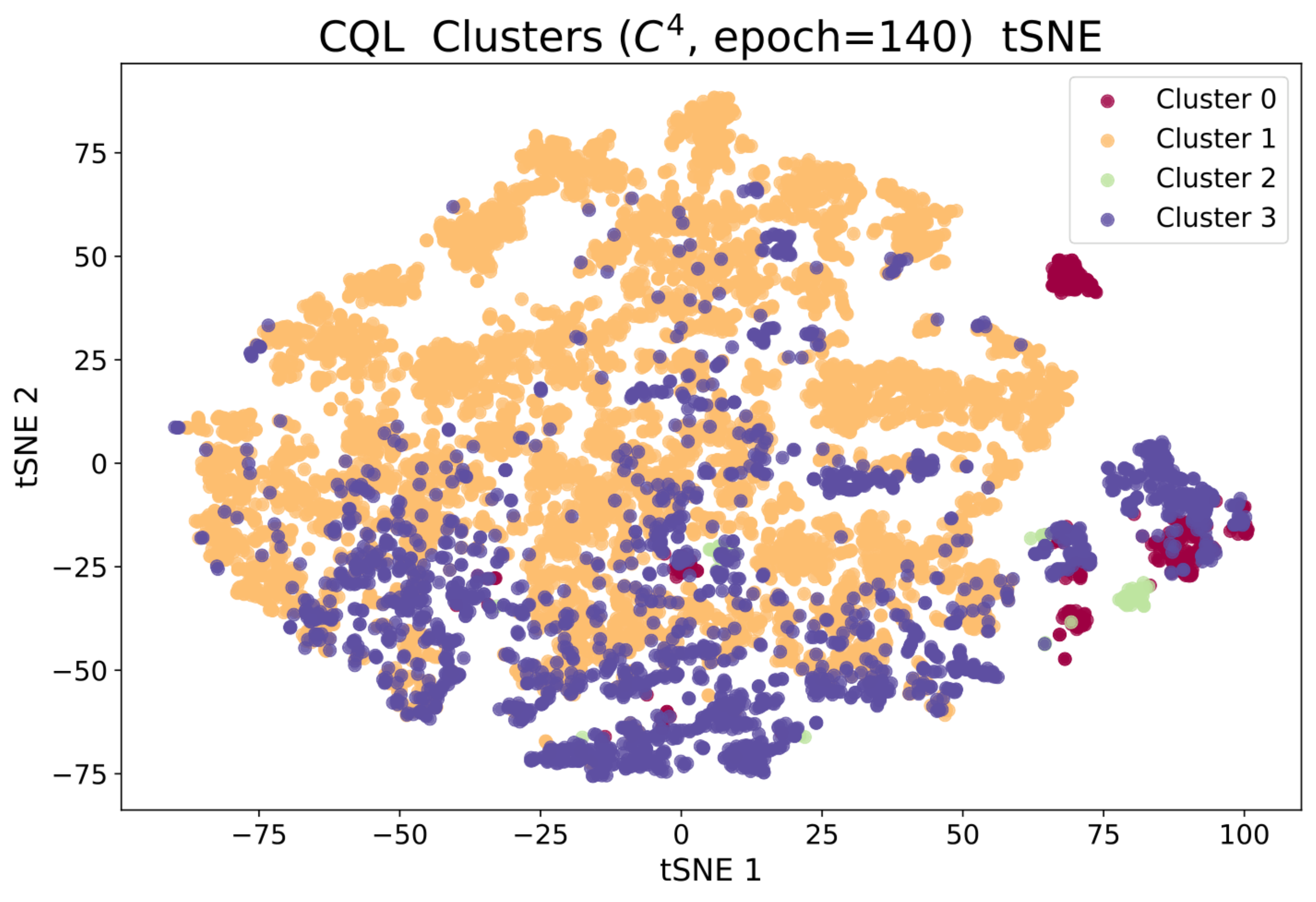}}
\subfigure{\includegraphics[width=0.18\textwidth]{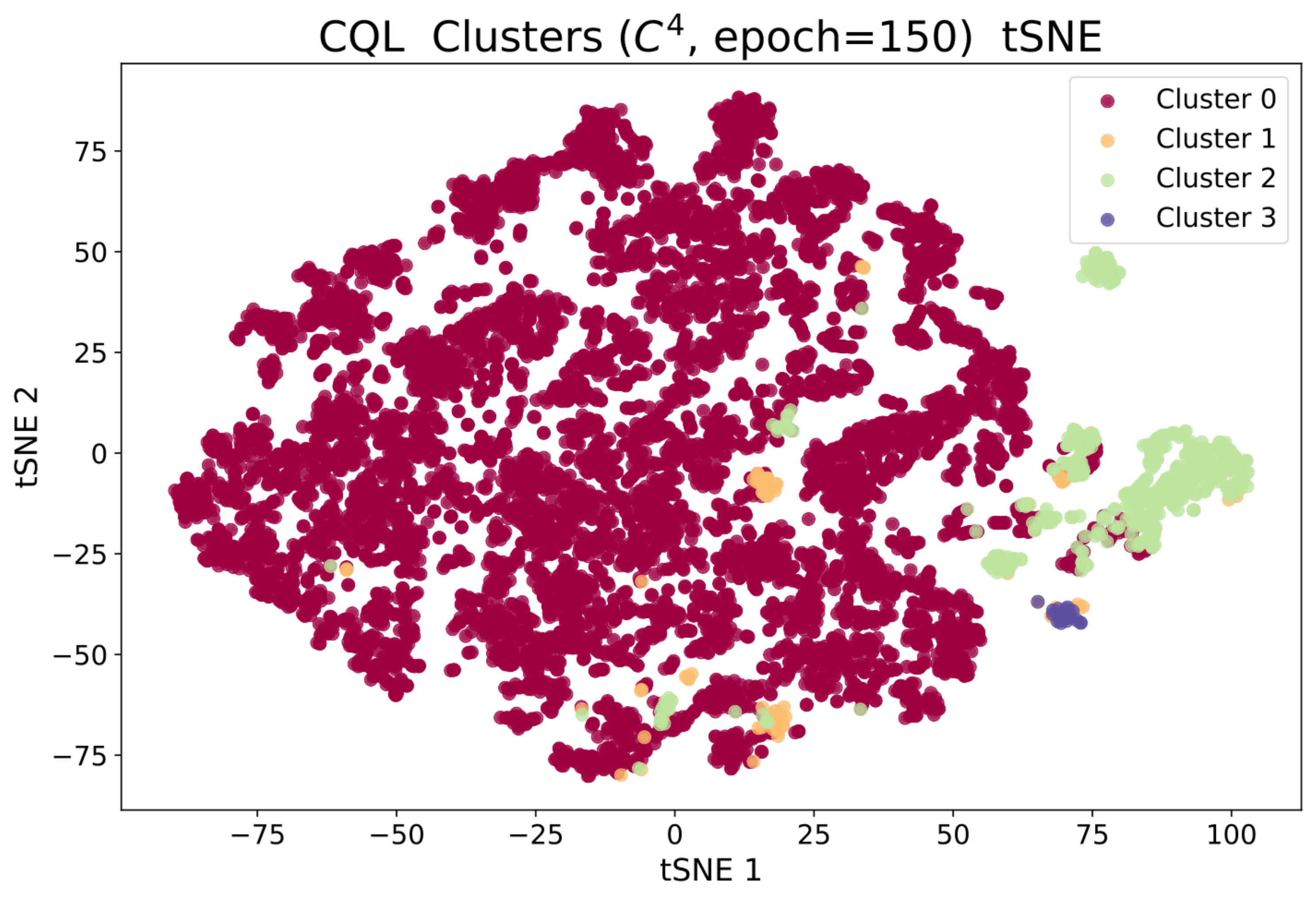}}
\subfigure{\includegraphics[width=0.18\textwidth]{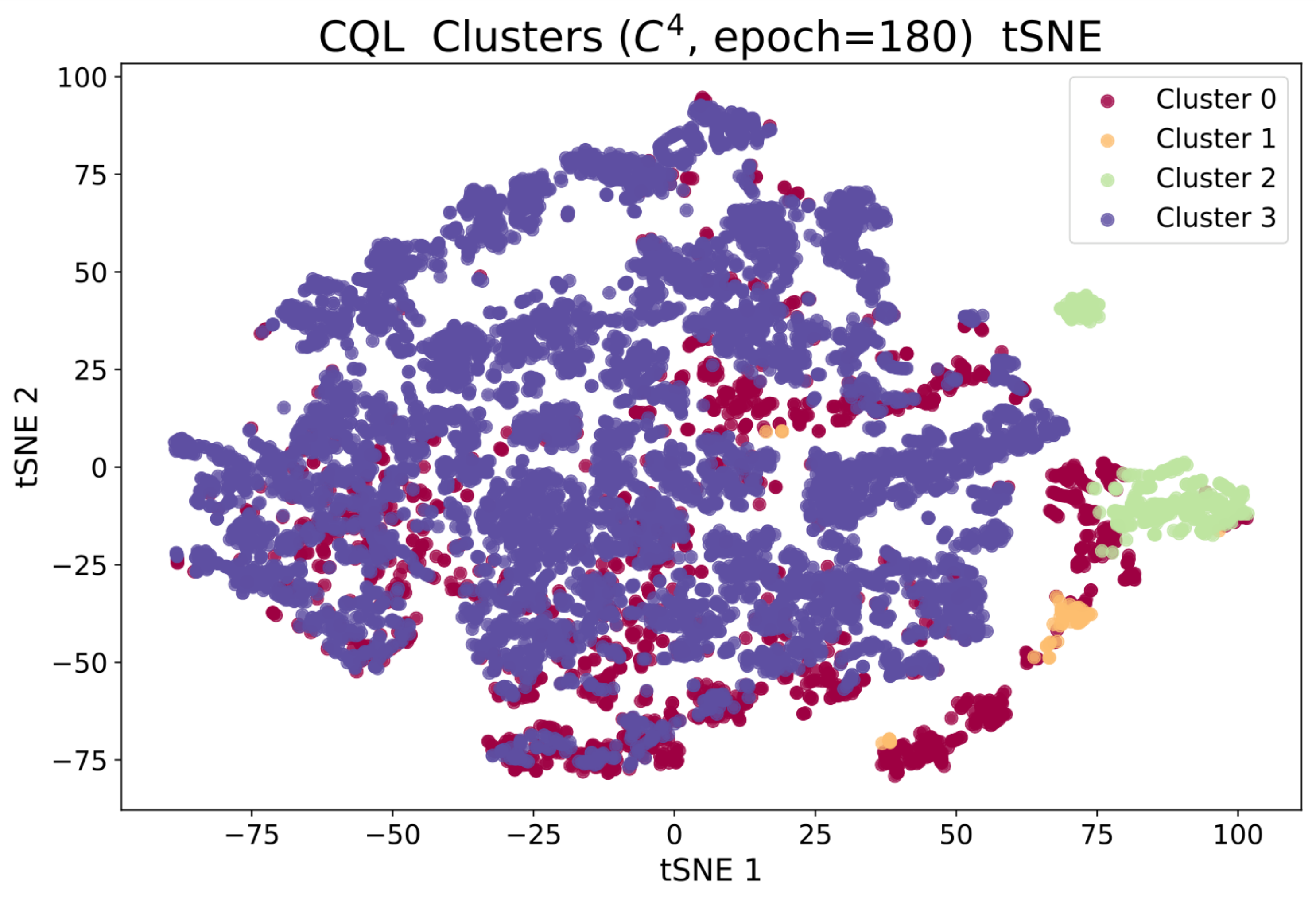}}
\subfigure{\includegraphics[width=0.18\textwidth]{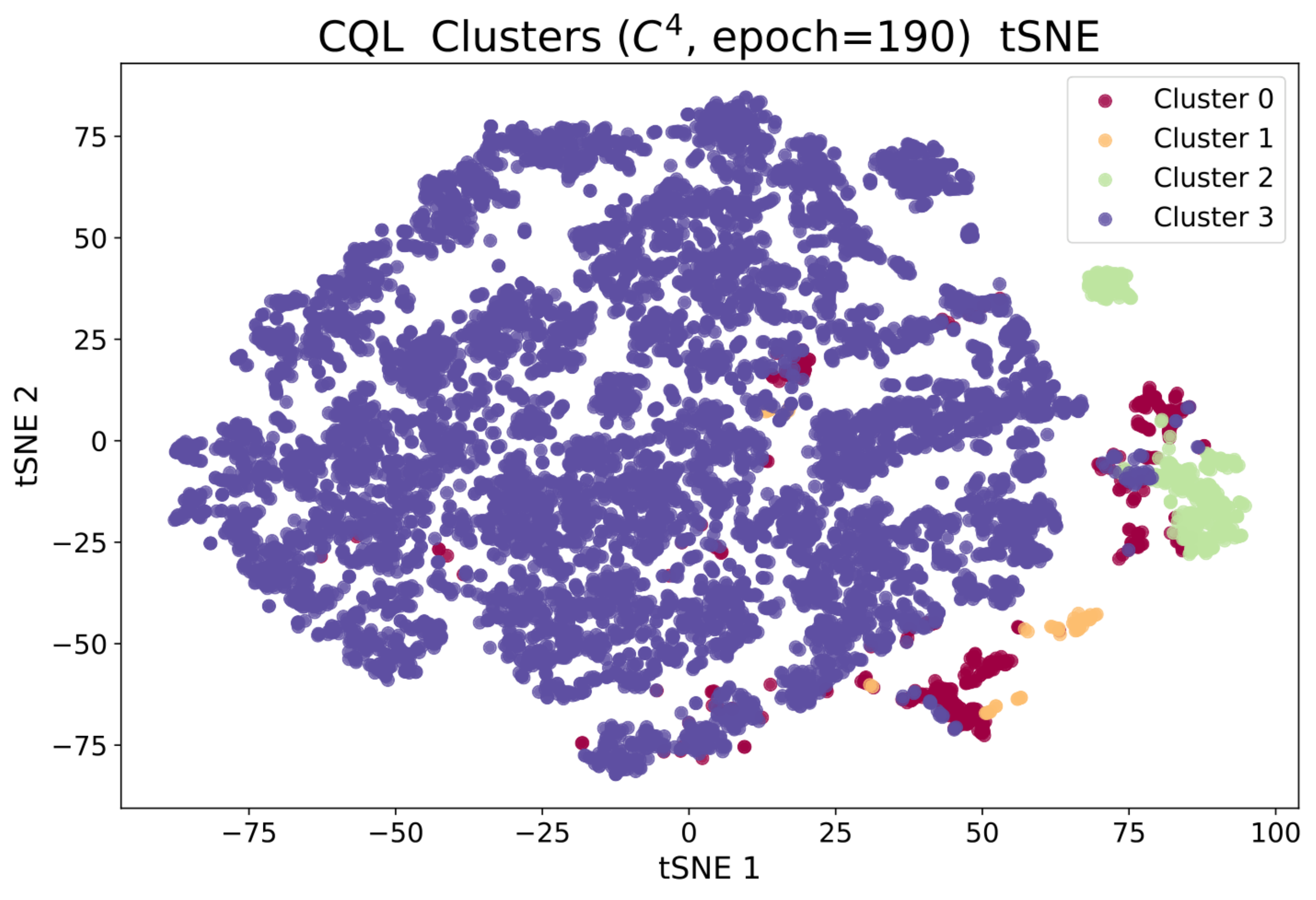}}
\caption{      Clustering visualization for CQL+$C^4$.}
\label{fig:pdf_buffer_cql}
\end{figure}

\begin{figure}[t]
\centering
\subfigure{\includegraphics[width=0.18\textwidth]{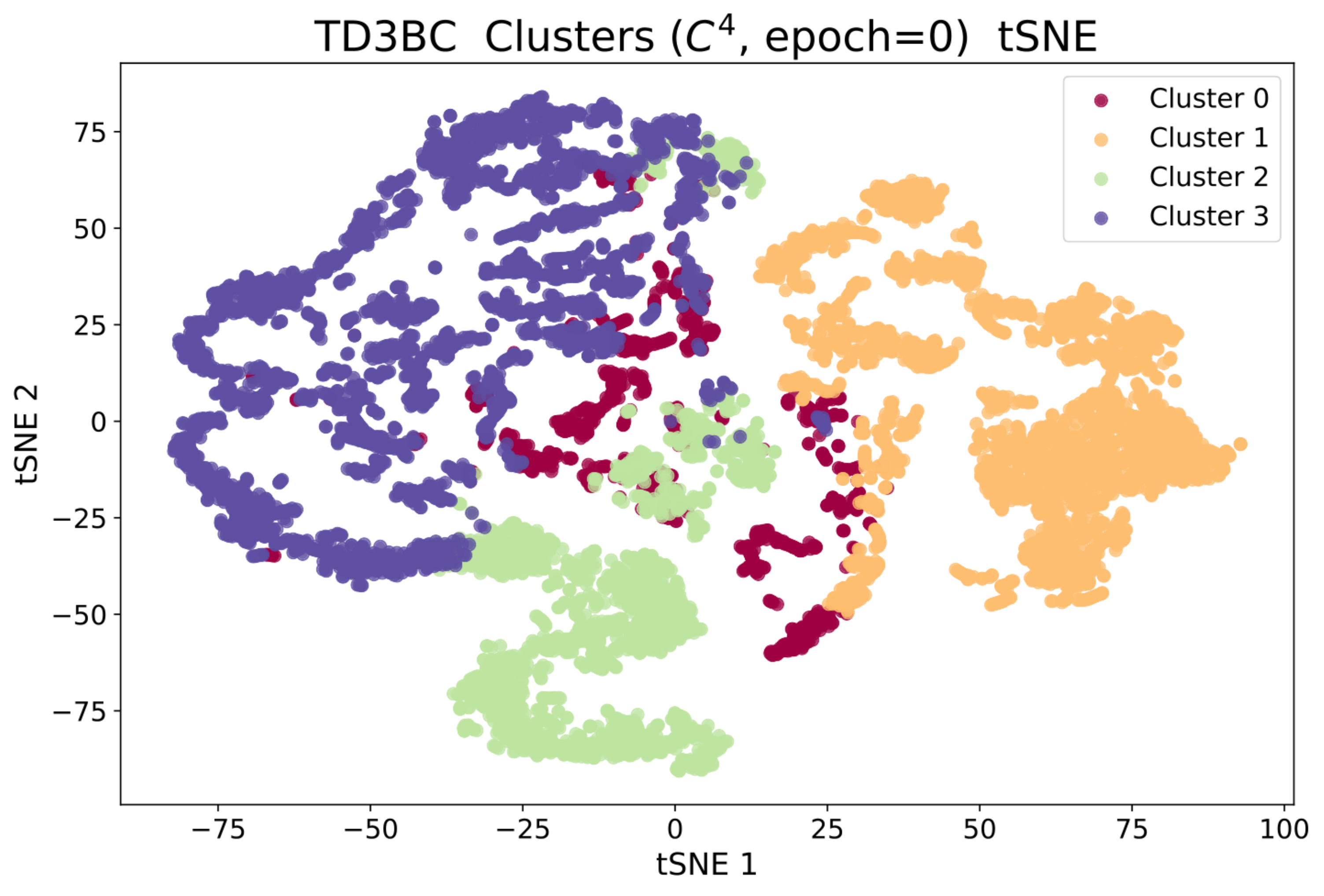}}
\subfigure{\includegraphics[width=0.18\textwidth]{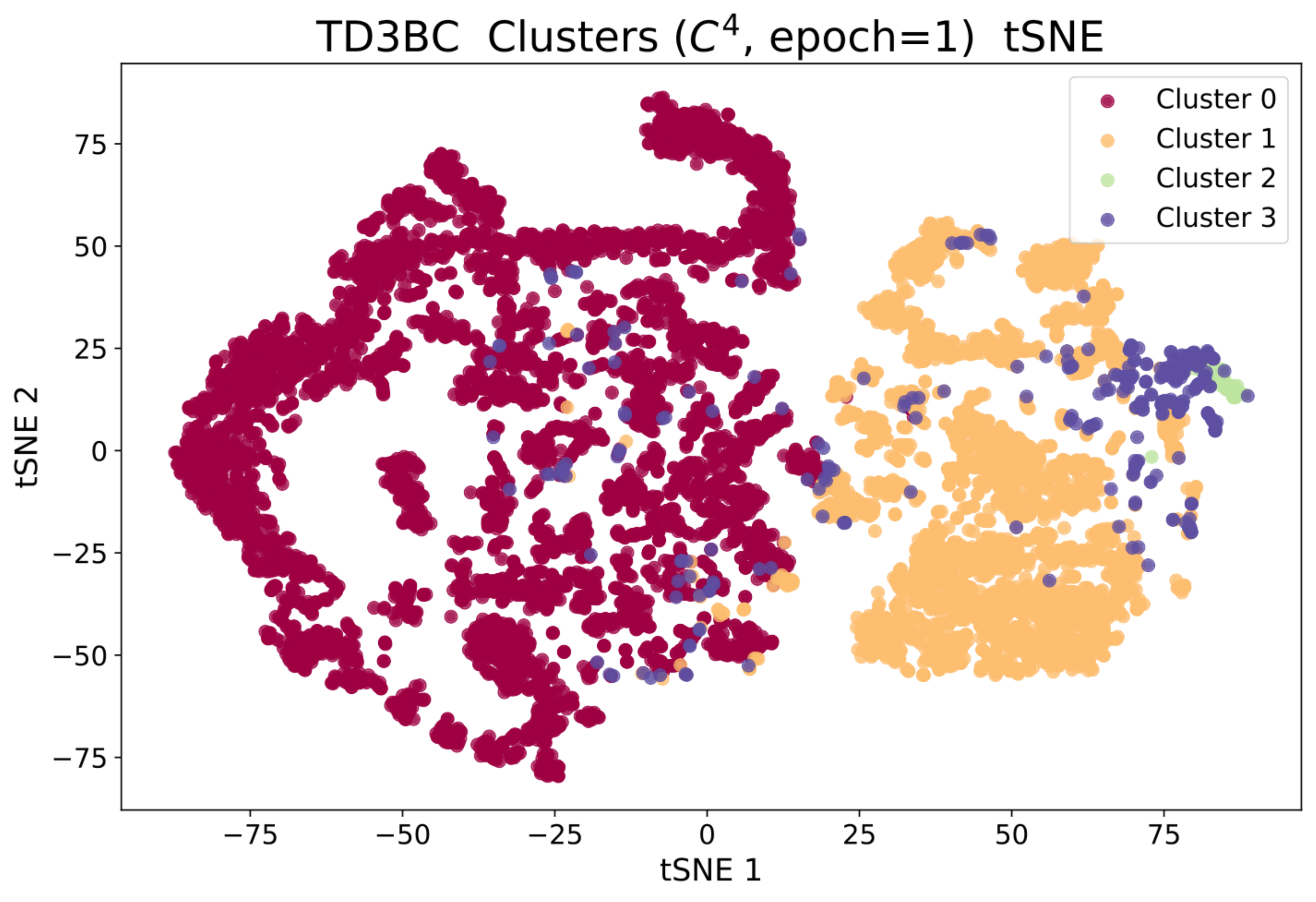}}
\subfigure{\includegraphics[width=0.18\textwidth]{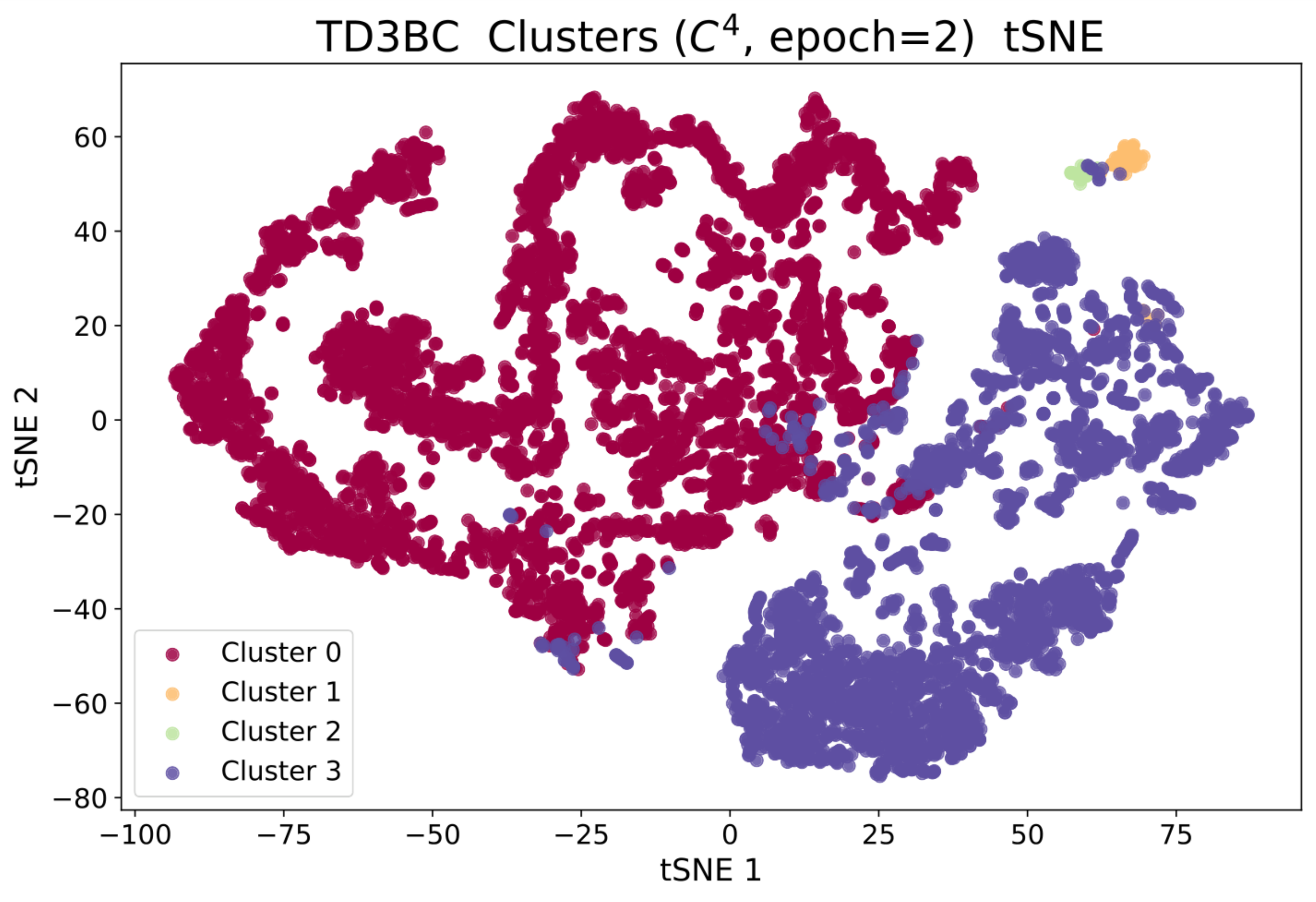}}
\subfigure{\includegraphics[width=0.18\textwidth]{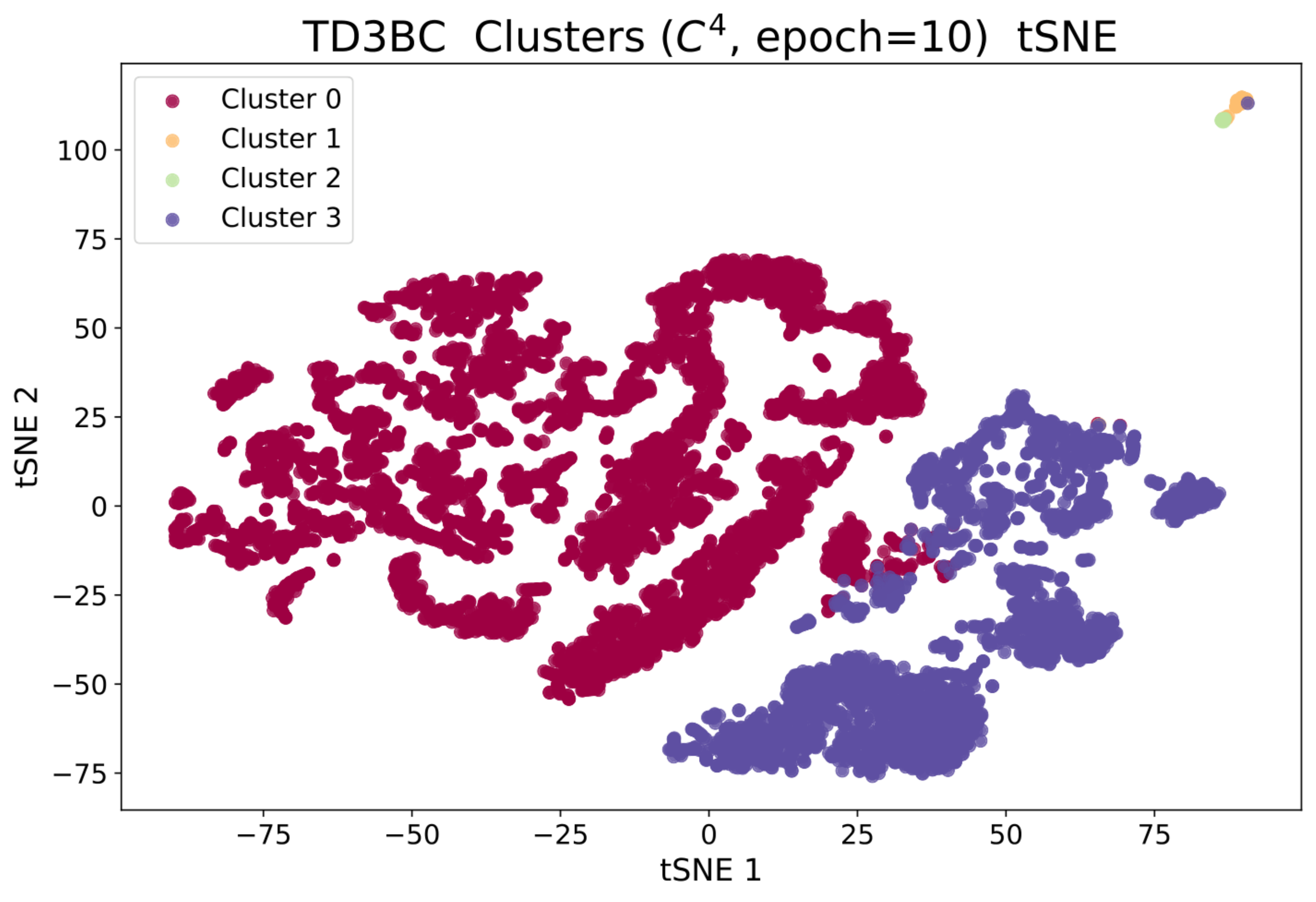}}
\subfigure{\includegraphics[width=0.18\textwidth]{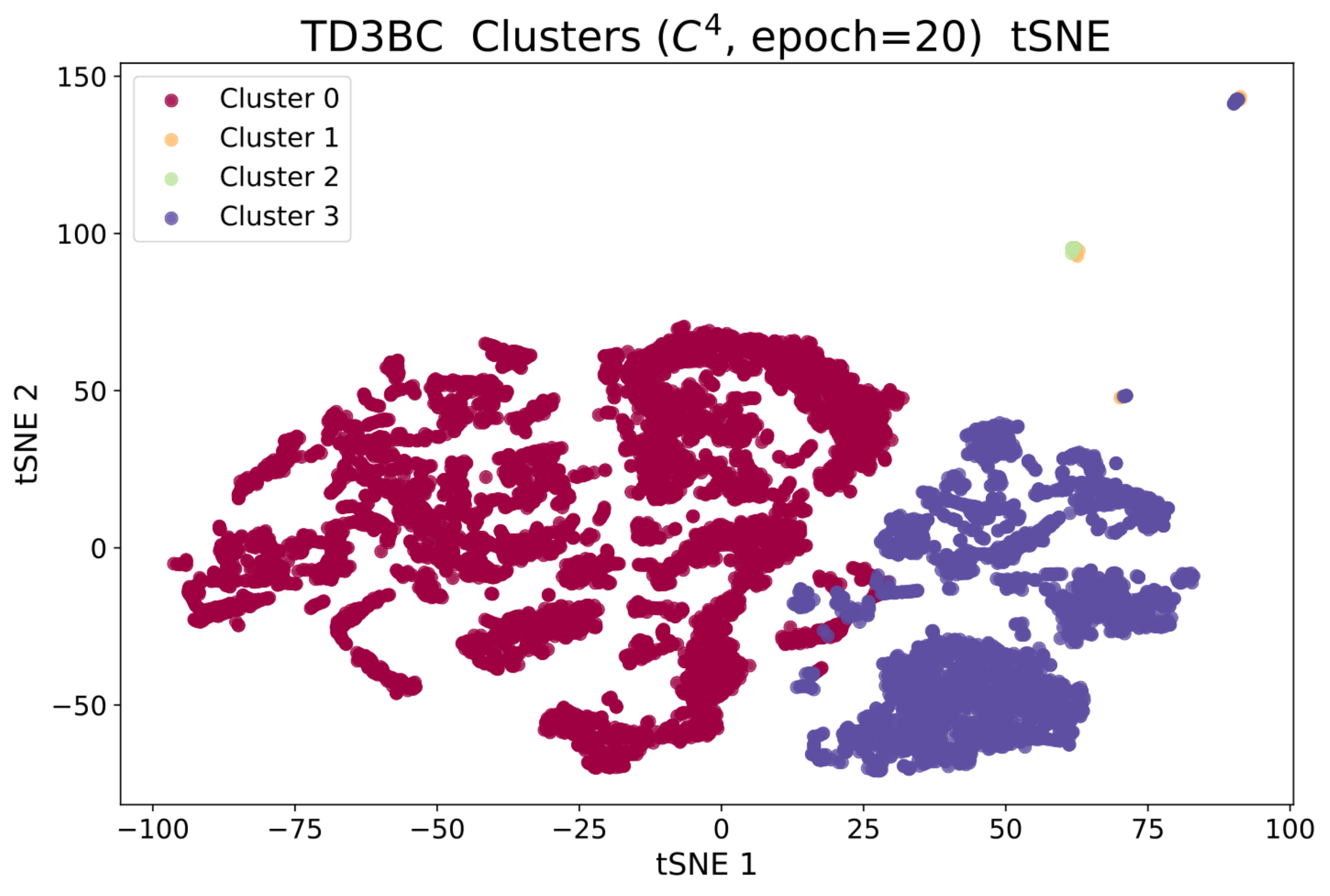}}

\subfigure{\includegraphics[width=0.18\textwidth]{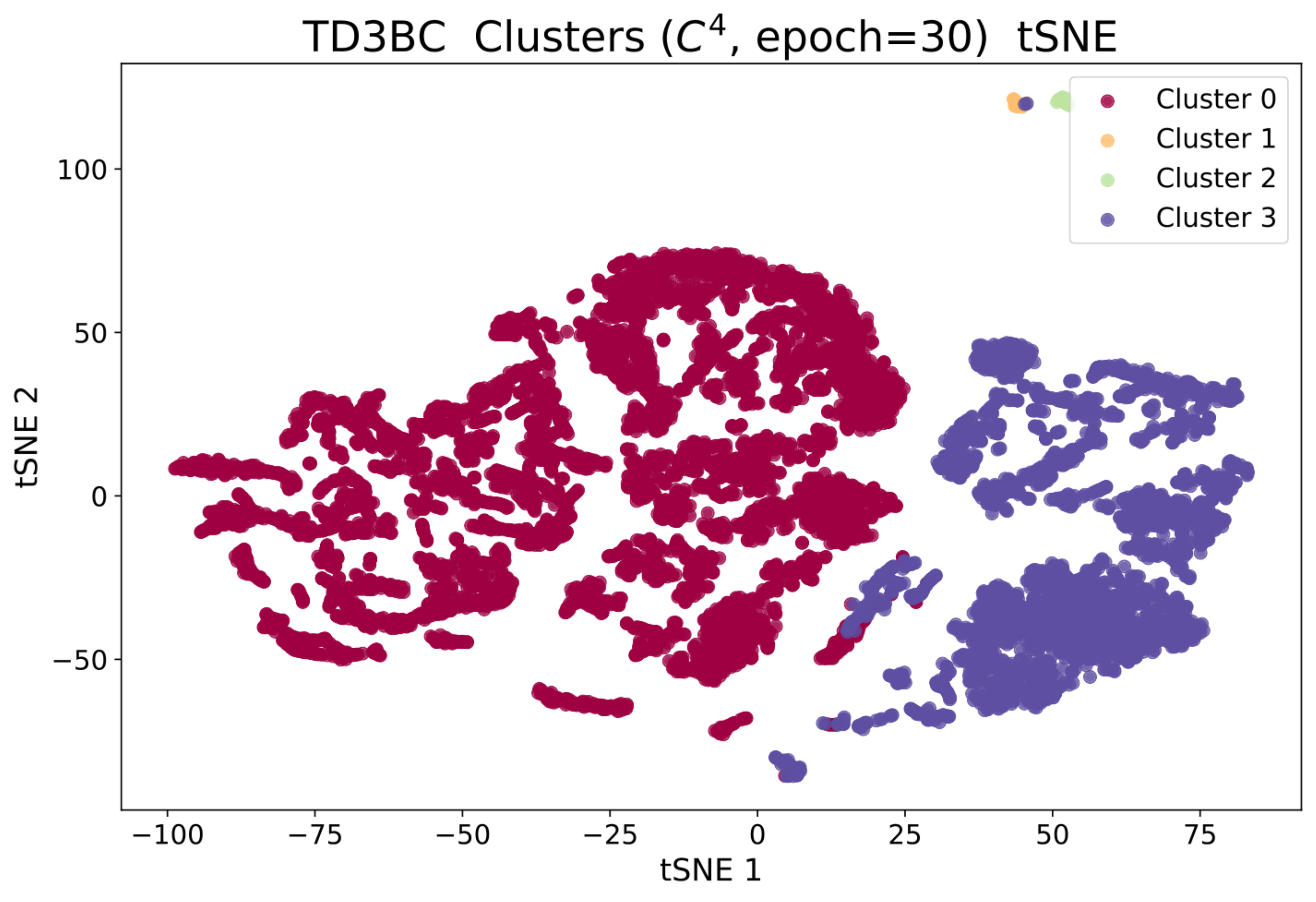}}
\subfigure{\includegraphics[width=0.18\textwidth]{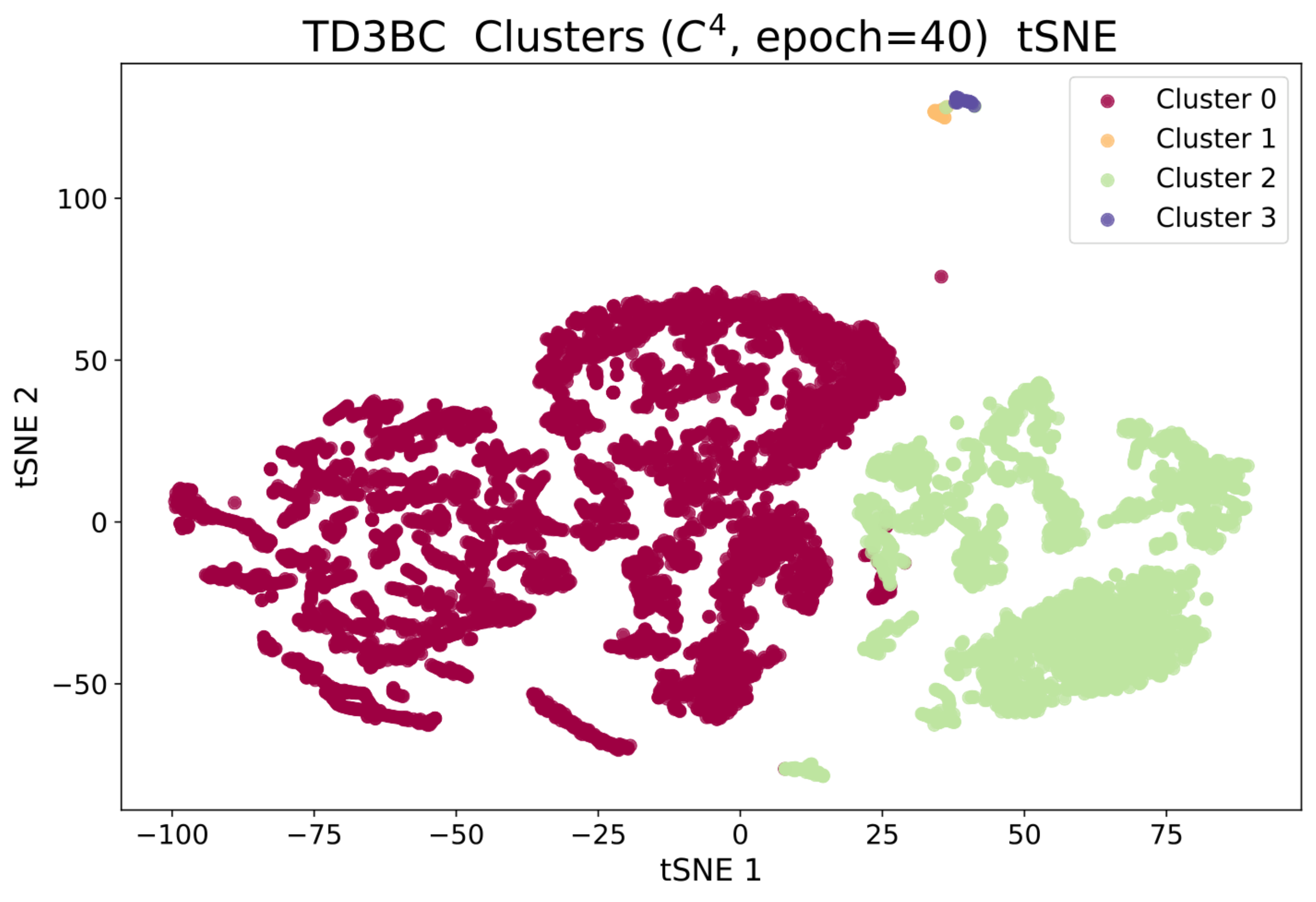}}
\subfigure{\includegraphics[width=0.18\textwidth]{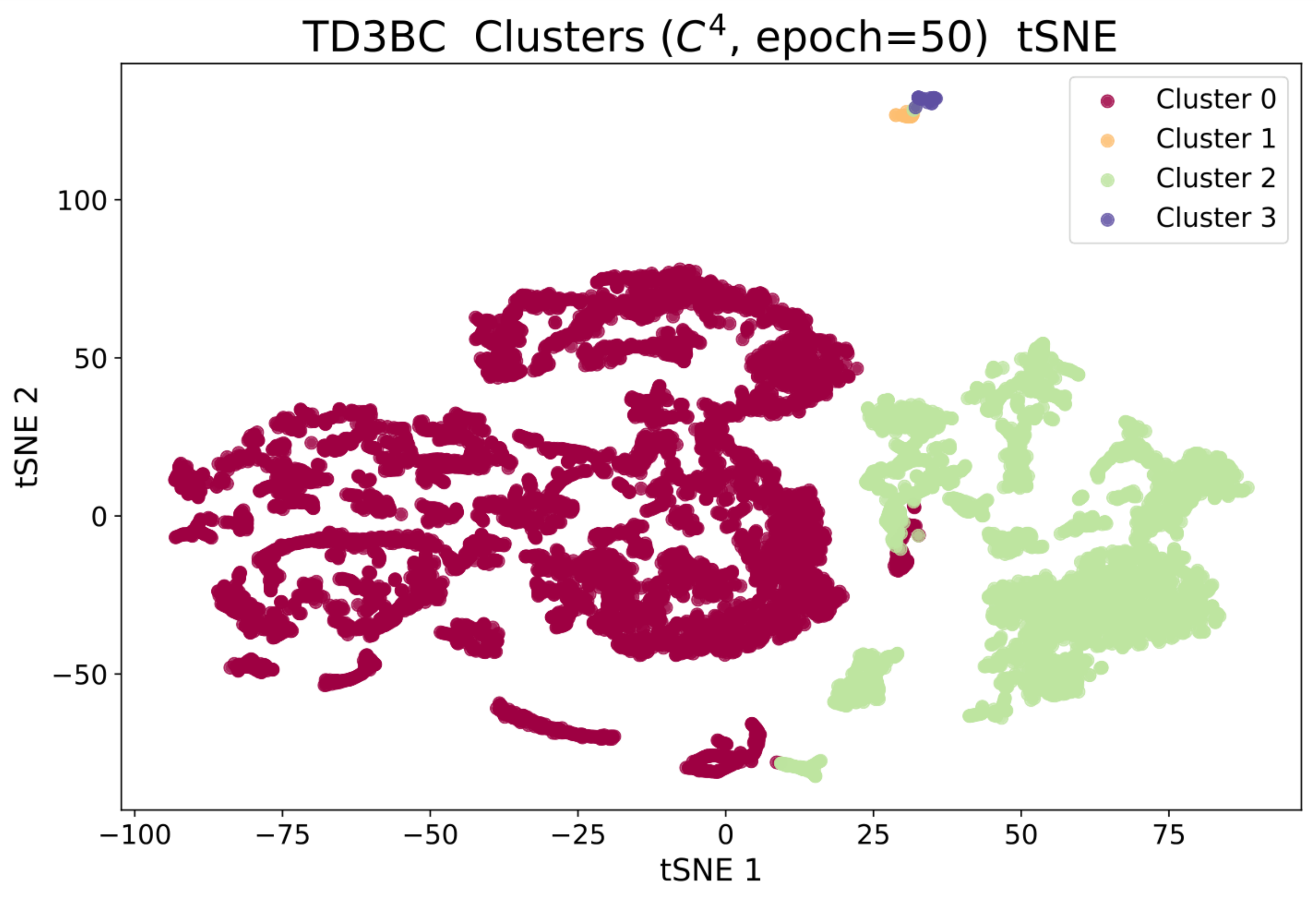}}
\subfigure{\includegraphics[width=0.18\textwidth]{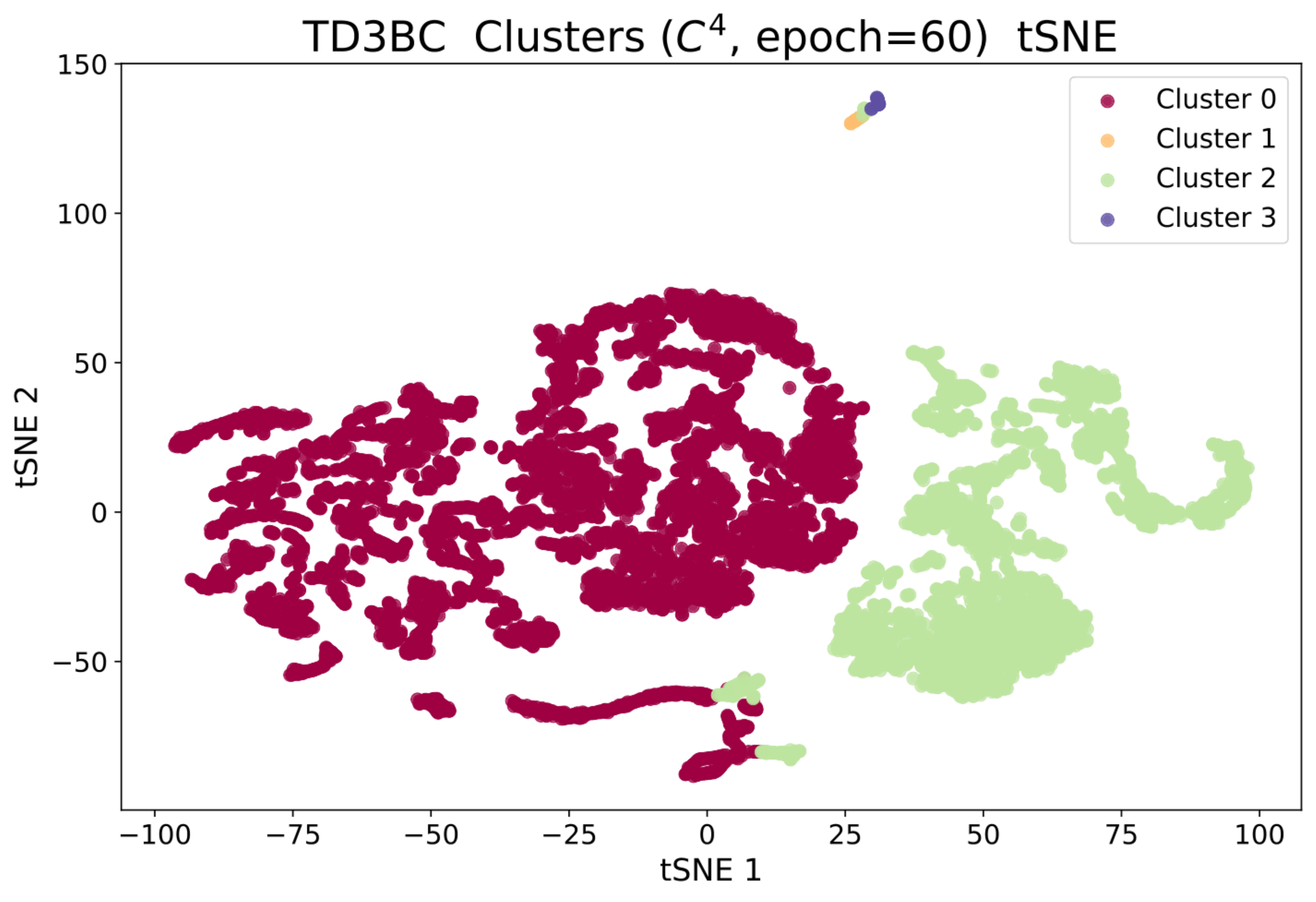}}
\subfigure{\includegraphics[width=0.18\textwidth]{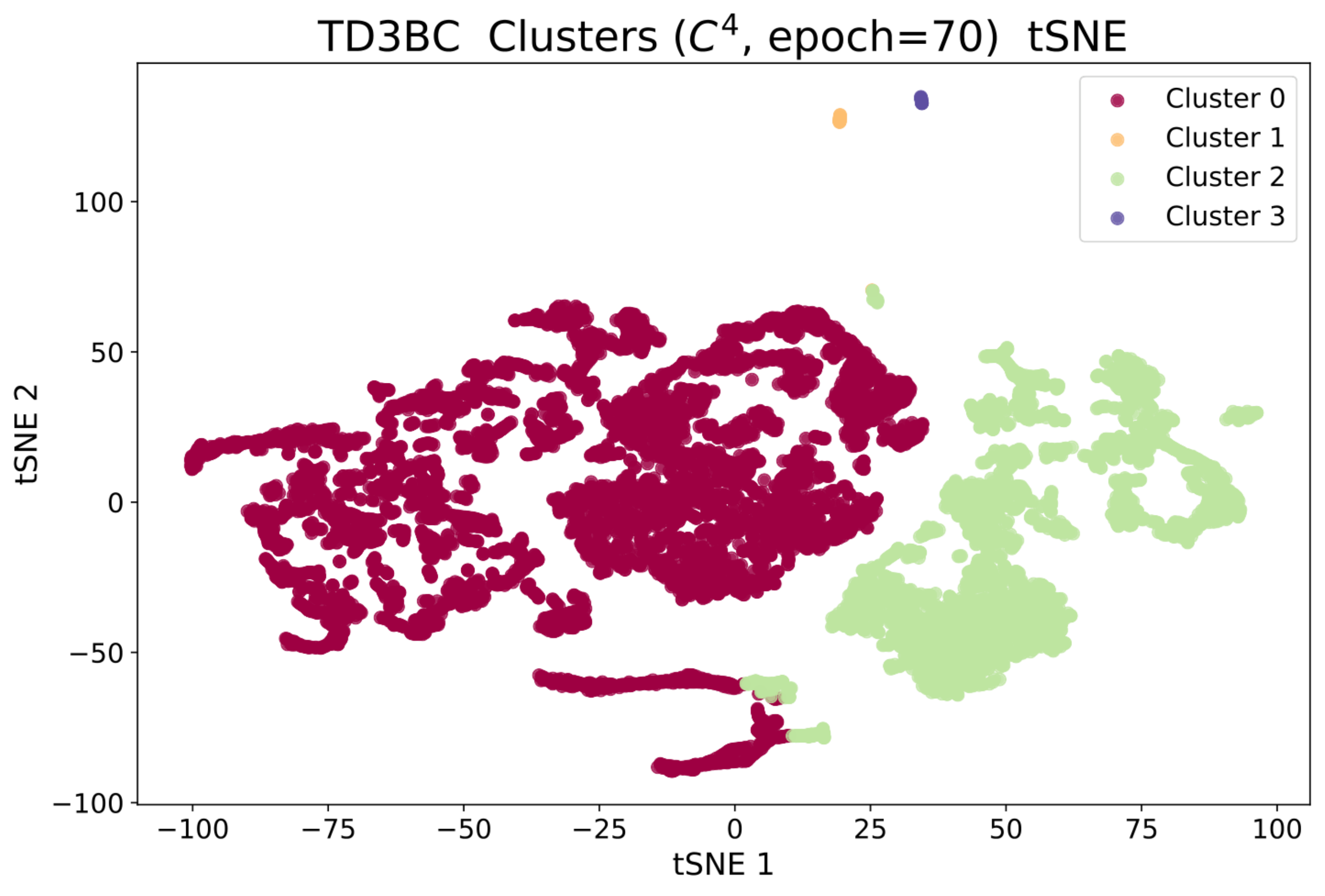}}

\subfigure{\includegraphics[width=0.18\textwidth]{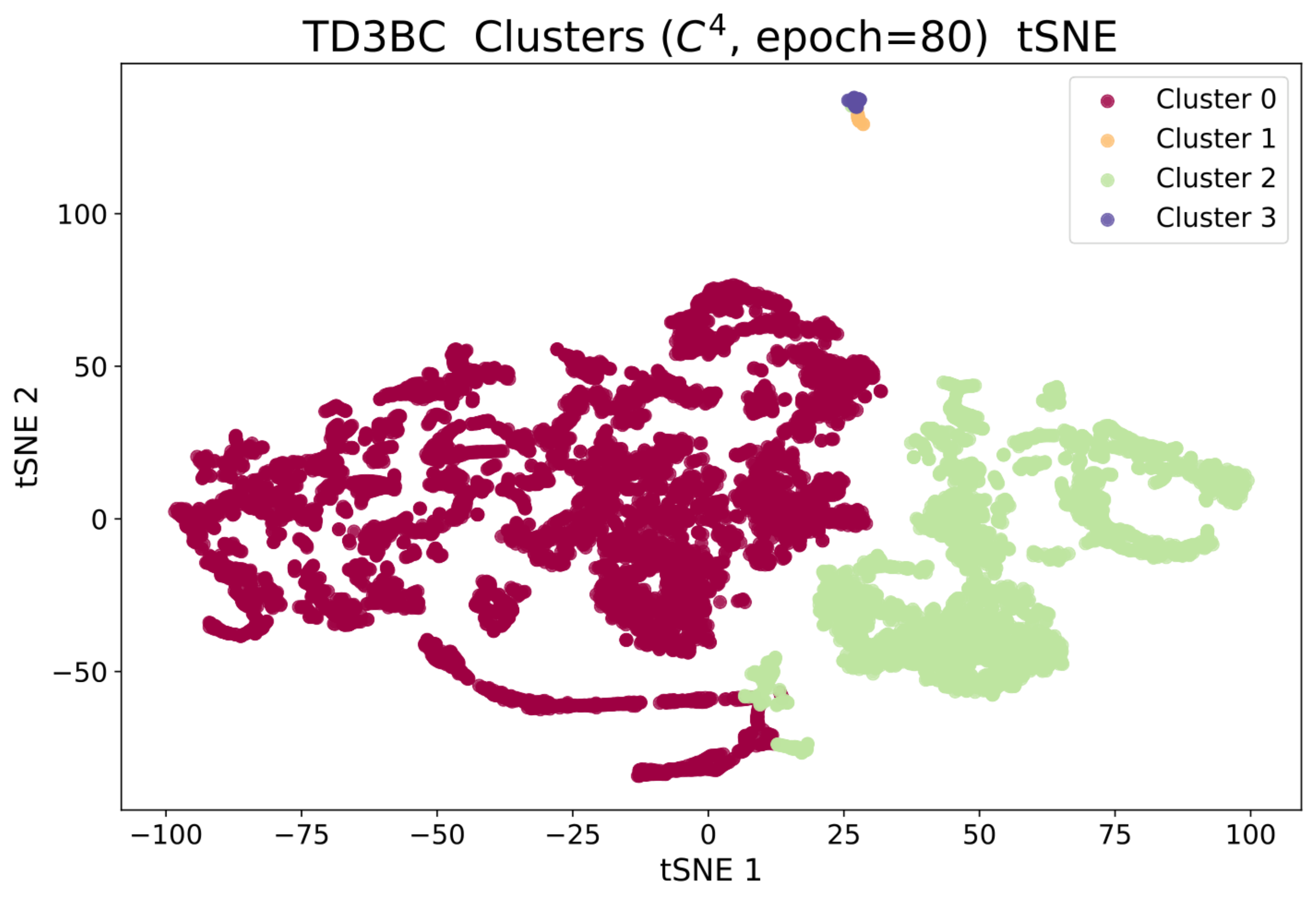}}
\subfigure{\includegraphics[width=0.18\textwidth]{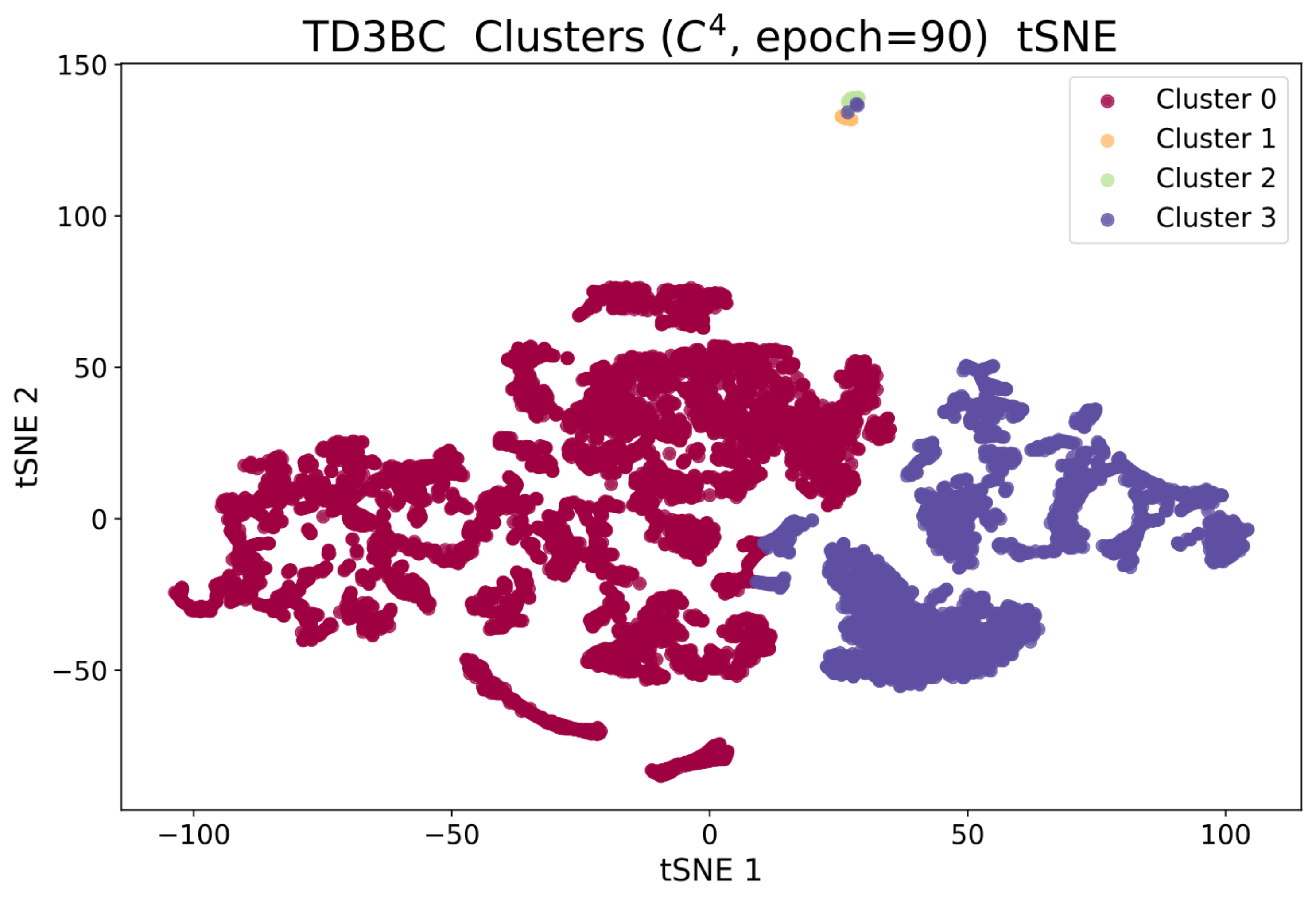}}
\subfigure{\includegraphics[width=0.18\textwidth]{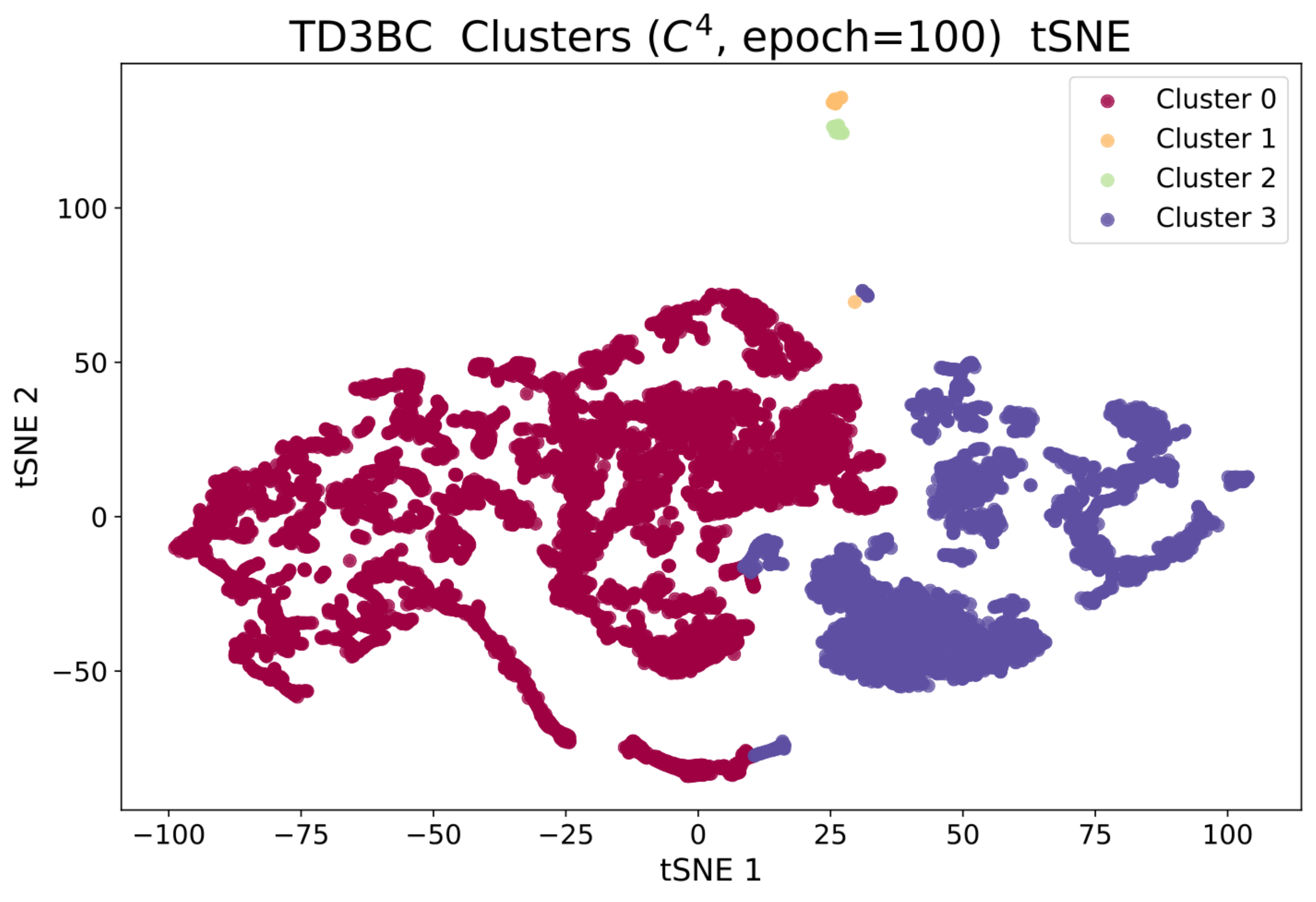}}
\subfigure{\includegraphics[width=0.18\textwidth]{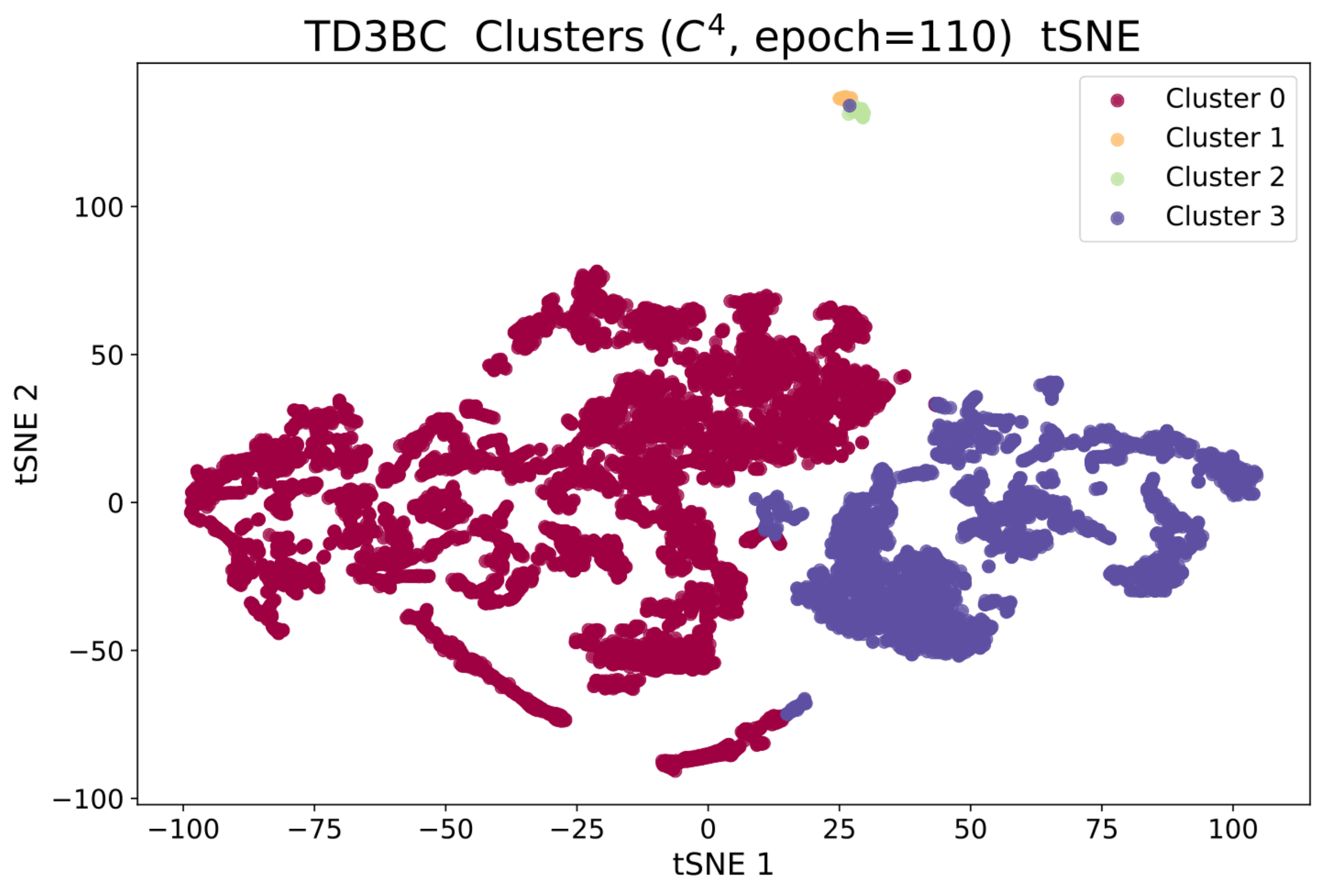}}
\subfigure{\includegraphics[width=0.18\textwidth]{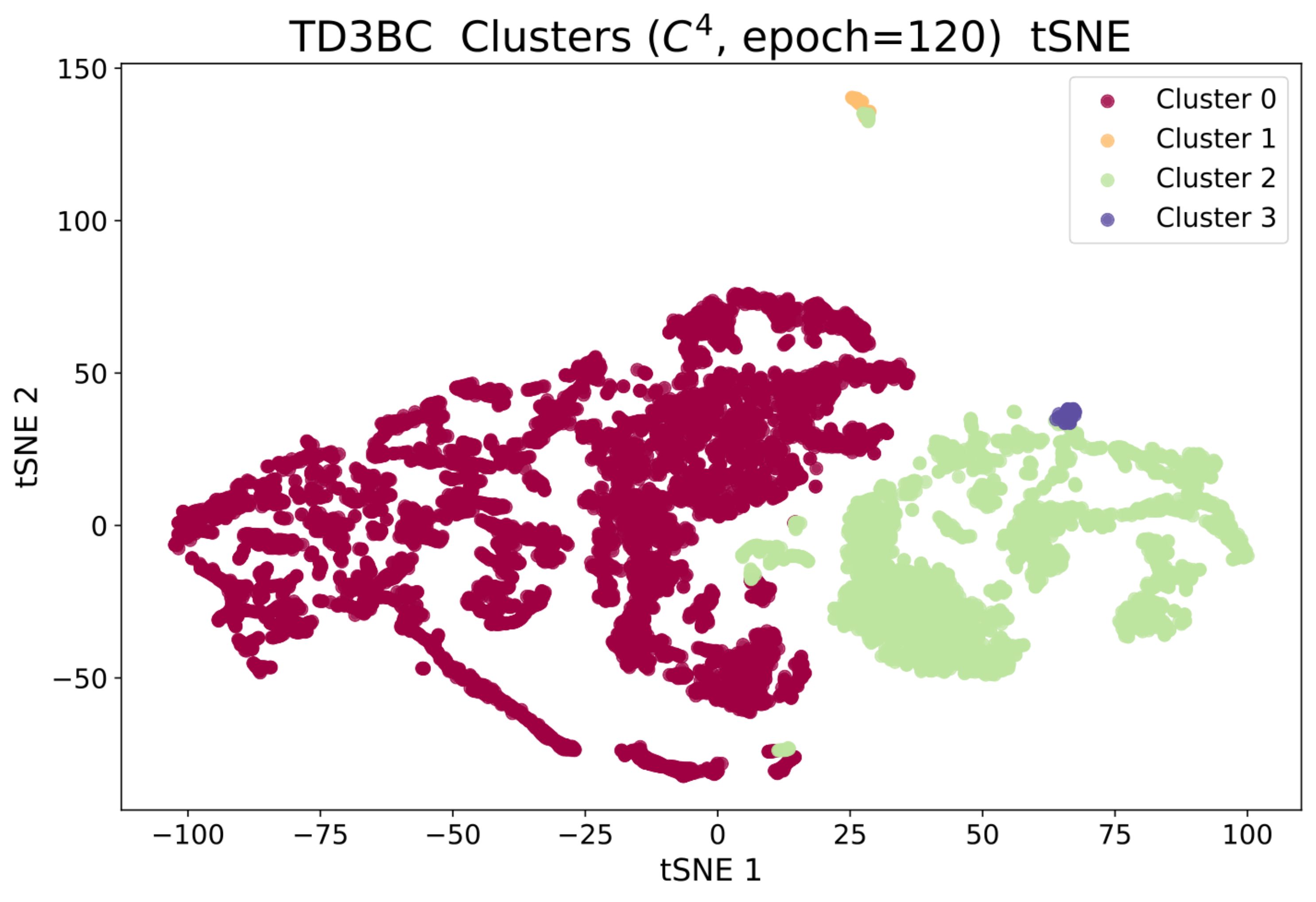}}

\subfigure{\includegraphics[width=0.18\textwidth]{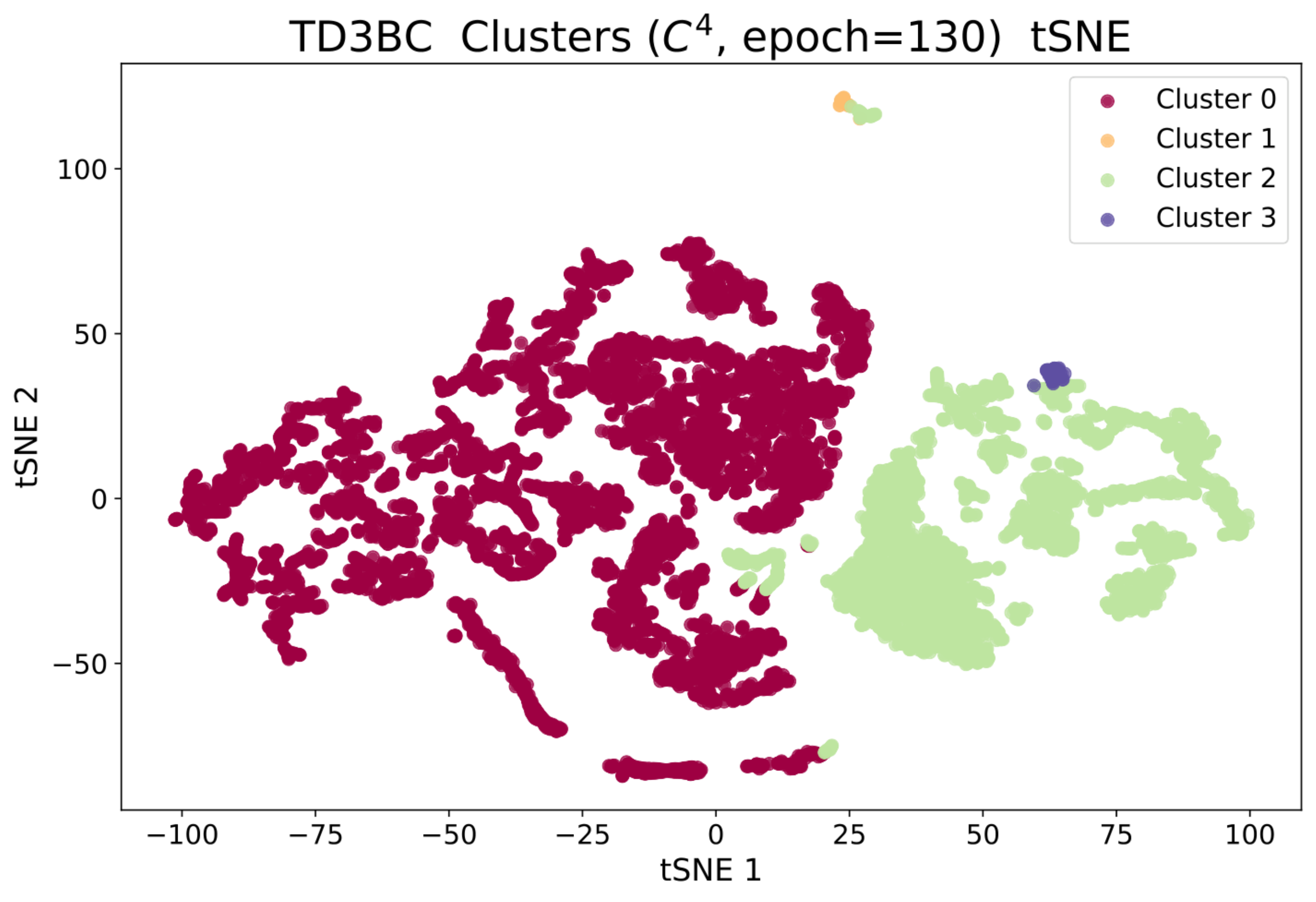}}
\subfigure{\includegraphics[width=0.18\textwidth]{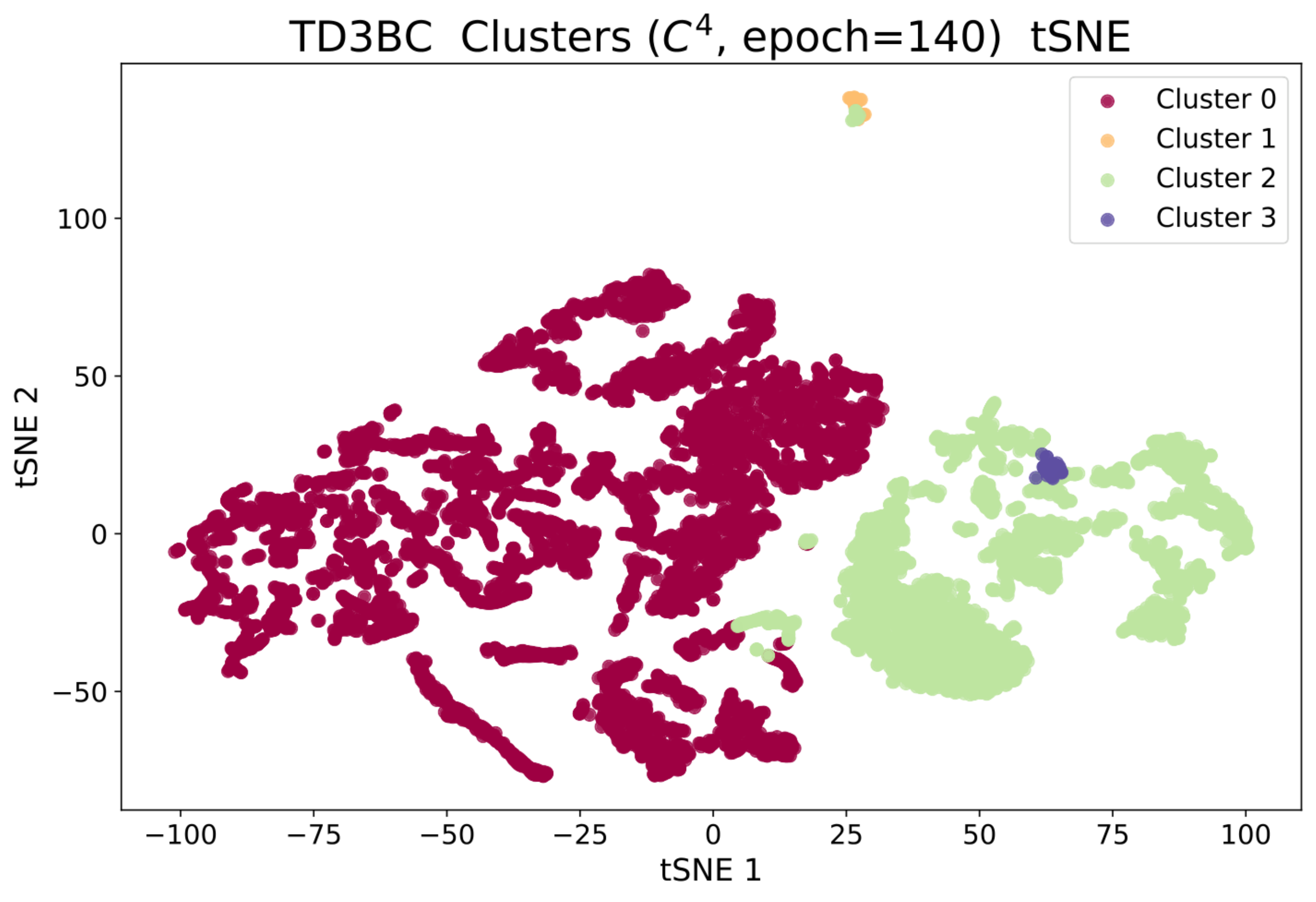}}
\subfigure{\includegraphics[width=0.18\textwidth]{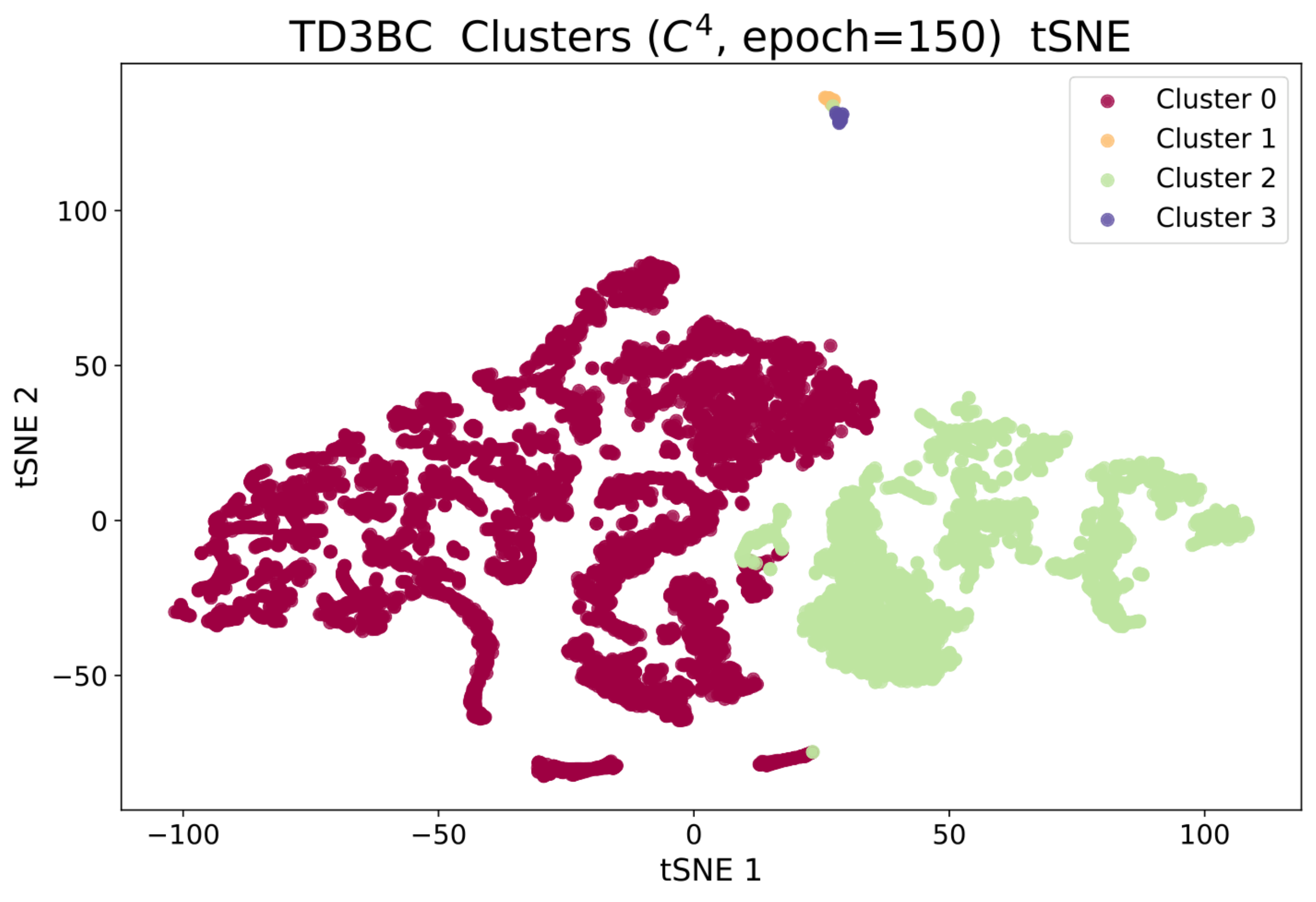}}
\subfigure{\includegraphics[width=0.18\textwidth]{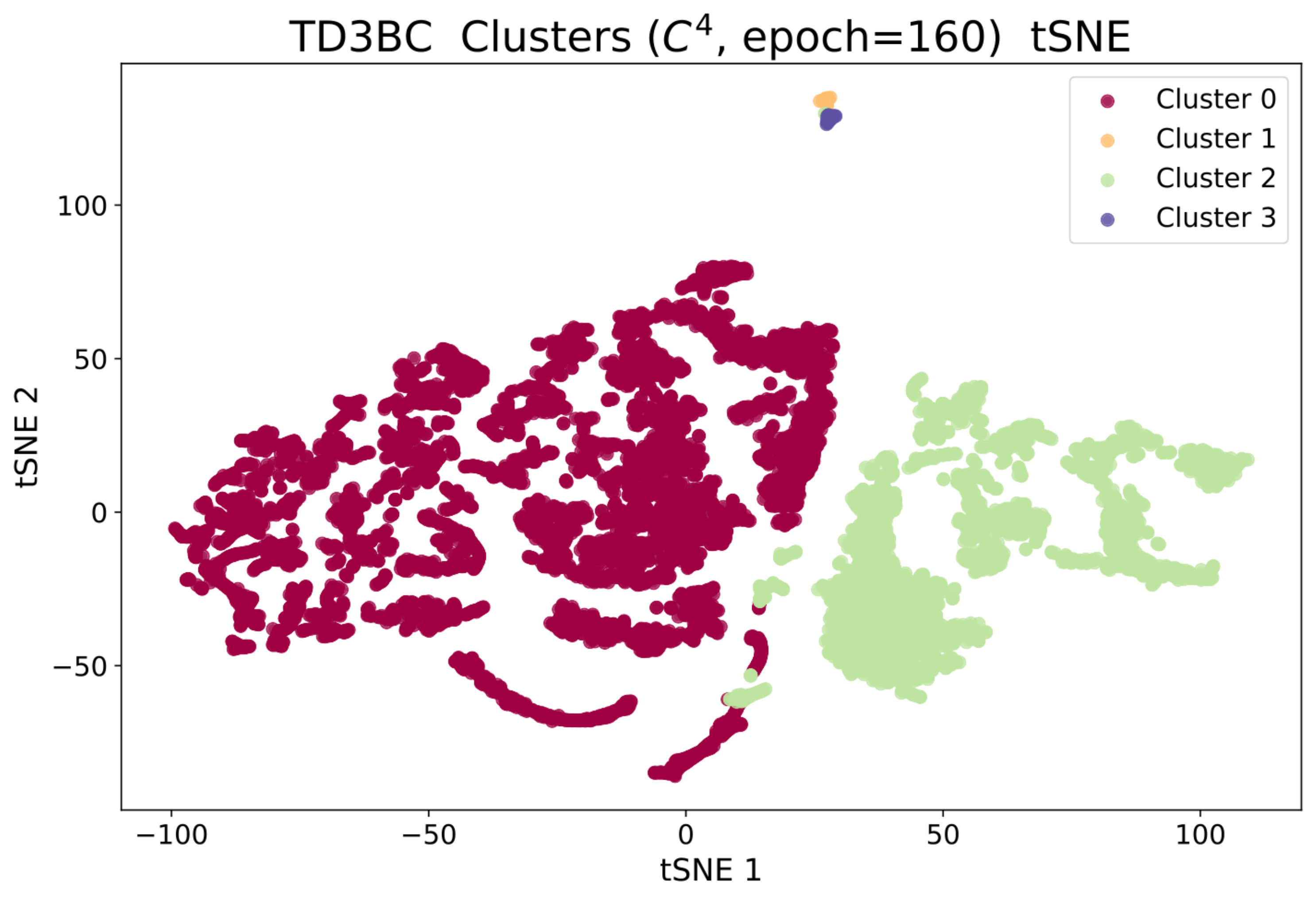}}
\subfigure{\includegraphics[width=0.18\textwidth]{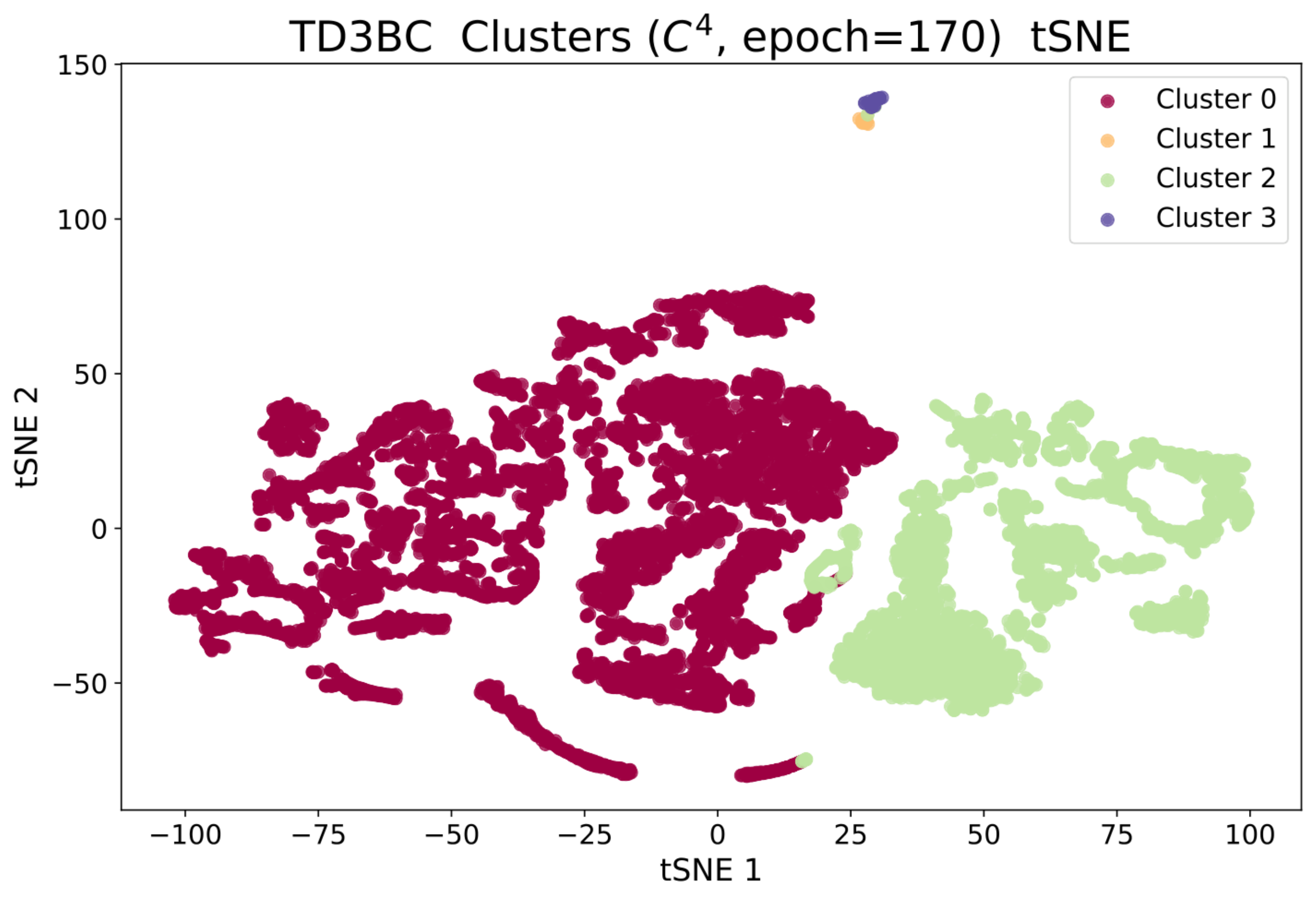}}


\caption{      Clustering visualization for TD3BC+$C^4$.}
\label{fig:pdf_buffer_TD3BC}
\end{figure}

\begin{table*}[t]
\centering
\caption{      Datasets overview.}
\rowcolors{3}{gray!6}{white}
\setlength{\tabcolsep}{6pt}
\renewcommand{\arraystretch}{1.15}
\begin{tabularx}{\textwidth}{
  >{\raggedright\arraybackslash}X  
  >{\raggedright\arraybackslash}p{2.2cm}  
  >{\centering\arraybackslash}p{1.6cm}    
  >{\centering\arraybackslash}p{1.6cm}    
  >{\centering\arraybackslash}p{1.6cm}    
}
\toprule
\textbf{Task Name} & \textbf{\# Samples} & \textbf{\# Traj} & \textbf{random} & \textbf{expert} \\
\midrule
\multicolumn{5}{l}{\textit{Maze2D}}\\
maze2d-open      & $10^6$      & 22148 & 0.01 & 20.7 \\
maze2d-umaze      & $10^6$      & 12460 & 23.9 & 161.9 \\
maze2d-medium      & $2\times 10^6$  & 11889 & 13.1 & 277.4 \\
maze2d-large      & $4\times 10^6$  & 16727 & 6.7 & 274.0 \\
\multicolumn{5}{l}{\textit{AntMaze}}\\
antmaze-umaze      & $10^6$      & 10154 & 0.0 & 1.0 \\
antmaze-umaze-diverse  & $10^6$      & 1035 & 0.0 & 1.0 \\
antmaze-medium-play   & $10^6$      & 10767 & 0.0 & 1.0 \\
antmaze-medium-diverse & $10^6$      & 2958 & 0.0 & 1.0 \\
antmaze-large-play   & $10^6$      & 13516 & 0.0 & 1.0 \\
antmaze-large-diverse  & $10^6$      & 7188 & 0.0 & 1.0 \\
antmaze-ultra-play   & $10^6$      & 10536 & 0.0 & 1.0 \\
antmaze-ultra-diverse  & $10^6$      & 6076 & 0.0 & 1.0 \\
\multicolumn{5}{l}{\textit{Gym-MuJoCo}}\\
hopper-expert         & $10^6$     & 1028 & -20.3 & 3234.3 \\
hopper-medium         & $10^6$     & 2187 & -20.3 & 3234.3 \\
hopper-medium-replay     & 402000     & 2041 & -20.3 & 3234.3 \\
hopper-medium-expert     & 1999906     & 3214 & -20.3 & 3234.3 \\
hopper-full-replay      & $10^6$     & 3515 & -20.3 & 3234.3 \\
halfcheetah-expert      & $10^6$     & 1000 & -280.2 & 12135.0 \\
halfcheetah-medium      & $10^6$     & 1000 & -280.2 & 12135.0 \\
halfcheetah-medium-replay   & 202000     & 202 & -280.2 & 12135.0 \\
halfcheetah-medium-expert   & $2\times 10^6$ & 2000 & -280.2 & 12135.0 \\
halfcheetah-full-replay    & $10^6$     & 1000 & -280.2 & 12135.0 \\
walker2d-expert        & $10^6$     & 1001 & 1.6 & 4592.3 \\
walker2d-medium        & $10^6$     & 1191 & 1.6 & 4592.3 \\
walker2d-medium-replay    & 302000     & 1093 & 1.6 & 4592.3 \\
walker2d-medium-expert    & 1999995 & 2191 & 1.6 & 4592.3 \\
walker2d-full-replay     & $10^6$     & 1888 & 1.6 & 4592.3 \\
ant-expert    & $10^6$   & 1035 & -325.6 & 3879.7 \\
ant-medium    & $10^6$   & 1203 & -325.6 & 3879.7 \\
ant-medium-replay  & 302000   & 485 & -325.6 & 3879.7 \\
ant-medium-expert  & 1999946 & 2237 & -325.6 & 3879.7 \\
ant-full-replay   & $10^6$   & 1319 & -325.6 & 3879.7 \\
\multicolumn{5}{l}{\textit{Adroit}}\\
pen-human     & 5000      & 25 & 96.3 & 3076.8 \\
pen-cloned    & $5\times 10^5$ & 3755 & 96.3 & 3076.8 \\
pen-expert    & 499206 & 5000 & 96.3 & 3076.8 \\
hammer-human   & 11310      & 25 & -274.9 & 12794.1\\
hammer-cloned   & $10^6$     & 3606 & -274.9 & 12794.1\\
hammer-expert   & $10^6$     & 5000 & -274.9 & 12794.1\\
door-human    & 6729      & 25 & -56.5 & 2880.6 \\
door-cloned    & $10^6$     & 4358 & -56.5 & 2880.6 \\
door-expert    & $10^6$     & 5000 & -56.5 & 2880.6 \\
relocate-human  & 9942      & 25 & -6.4 & 4233.9 \\
relocate-cloned  & $10^6$     & 3758 & -6.4 & 4233.9 \\
relocate-expert  & $10^6$     & 5000 & -6.4 & 4233.9 \\
\bottomrule
\end{tabularx}
\label{table:datasets-overview}
\end{table*}




\clearpage
\section{Use of Large Language Models }
In this manuscript, the authors make limited use of large language models, including OpenAI’s GPT-5 and DeepSeek, mainly for proofreading, language polishing, improving textual clarity, and reviewing the logical flow of arguments. In addition, Anthropic’s Claude is used as an auxiliary tool to assist with coding tasks during the research process. The use of these models is restricted to supportive roles in writing and technical implementation; they do not contribute to research design, data collection, data analysis, or the formulation of scientific claims. All substantive content and conclusions remain entirely the responsibility of the authors.

\end{document}